\def\eqref#1{equation~\ref{#1}}
\def\1{\bm{1}}
\DeclareMathAlphabet{\mathsfit}{\encodingdefault}{\sfdefault}{m}{sl}
\SetMathAlphabet{\mathsfit}{bold}{\encodingdefault}{\sfdefault}{bx}{n}
\newcommand{\E}{\mathbb{E}}
\newcommand{\Var}{\mathrm{Var}}
\newcolumntype{C}[1]{>{\centering\arraybackslash}m{#1}} % 세로/가로 중앙
\newtheorem{theorem}{Theorem}[section]
\newtheorem{lemma}[theorem]{Lemma}
\newtheorem{proposition}[theorem]{Proposition}
\newtheorem{corollary}[theorem]{Corollary}
\newtheorem{definition}[theorem]{Definition}
\title{Beyond Gaussian Initializations: \\
Signal Preserving Weight Initialization for Odd-Sigmoid Activations}
\author{Hyunwoo Lee$^1$,\: Hayoung Choi$^1$$^*$,\: Hyunju Kim$^2$\thanks{Corresponding authors.} \\
\textsuperscript{1}Korea Institute for Advanced Study, \:
\textsuperscript{2}Kyungpook National University, \:
\textsuperscript{3}Korea Institute of Energy Technology\\
\textsuperscript{1}\texttt{\{hyunwoolee\}@kias.re.kr}, \:
\textsuperscript{2}\texttt{\{hayoung.choi\}@knu.ac.kr}, \:
\textsuperscript{3}\texttt{hjkim@kentech.ac.kr} 
}
\begin{document}

% \iclrfinalcopy
\maketitle

\begin{abstract}
Activation functions critically influence trainability and expressivity, and recent work has therefore explored a broad range of nonlinearities.
However, widely used Gaussian i.i.d.\ initializations are designed to preserve activation variance under wide or infinite width assumptions.
In deep and relatively narrow networks with sigmoidal nonlinearities, these schemes often drive preactivations into saturation, and collapse gradients.  To address this, we introduce an odd-sigmoid activations and propose an activation aware initialization tailored to any function in this class. Our method remains robust over a wide band of variance scales, preserving both forward signal variance and backpropagated gradient norms even in very deep and narrow networks. Empirically, across standard image benchmarks we find that the proposed initialization is substantially less sensitive to depth, width, and activation scale than Gaussian initializations. In physics informed neural networks (PINNs), scaled odd-sigmoid activations combined with our initialization achieve lower losses than Gaussian based setups, suggesting that diagonal-plus-noise weights provide a practical alternative when Gaussian initialization breaks down.
\end{abstract}

\section{Introduction}

Deep learning has advanced substantially across diverse domains~\citep{lecun2015deep, he2016deep, hwang2022asymmetric}. A central ingredient in
these successes is the choice of activation function, which controls both the expressive
power of a network and the way signals and gradients propagate through depth~\citep{poole2016exponential}.
Recent work has proposed a wide variety of nonlinearities beyond classical sigmoid and 
ReLU (e.g., GELU, Swish, Mish, SELU) to improve trainability, stability, and
accuracy~\citep{hendrycks2016Gaussian,ramachandran2017searching,misra2019mish,
klambauer2017self,murray2022activation,bingham2023efficient,zhang2024deep}. 
However, activation functions and weight initialization are tightly coupled~\citep{he2015delving,lee2024improved}. The scale and shape of the nonlinearity determine how information propagates with depth. As a result, an initialization that is 
not tuned to the chosen activation can drive activations into saturation and cause gradients to vanish or explode, even when the activation itself is reasonable. 
These issues are especially pronounced for sigmoidal activations, which are widely used in sequence models and physics informed neural network~(PINN)~\citep{raissi2019physics}.

\begin{wrapfigure}{r}{0.5\textwidth} % r=오른쪽
  \vspace{-1pt}
  \centering
  \includegraphics[width=\linewidth]{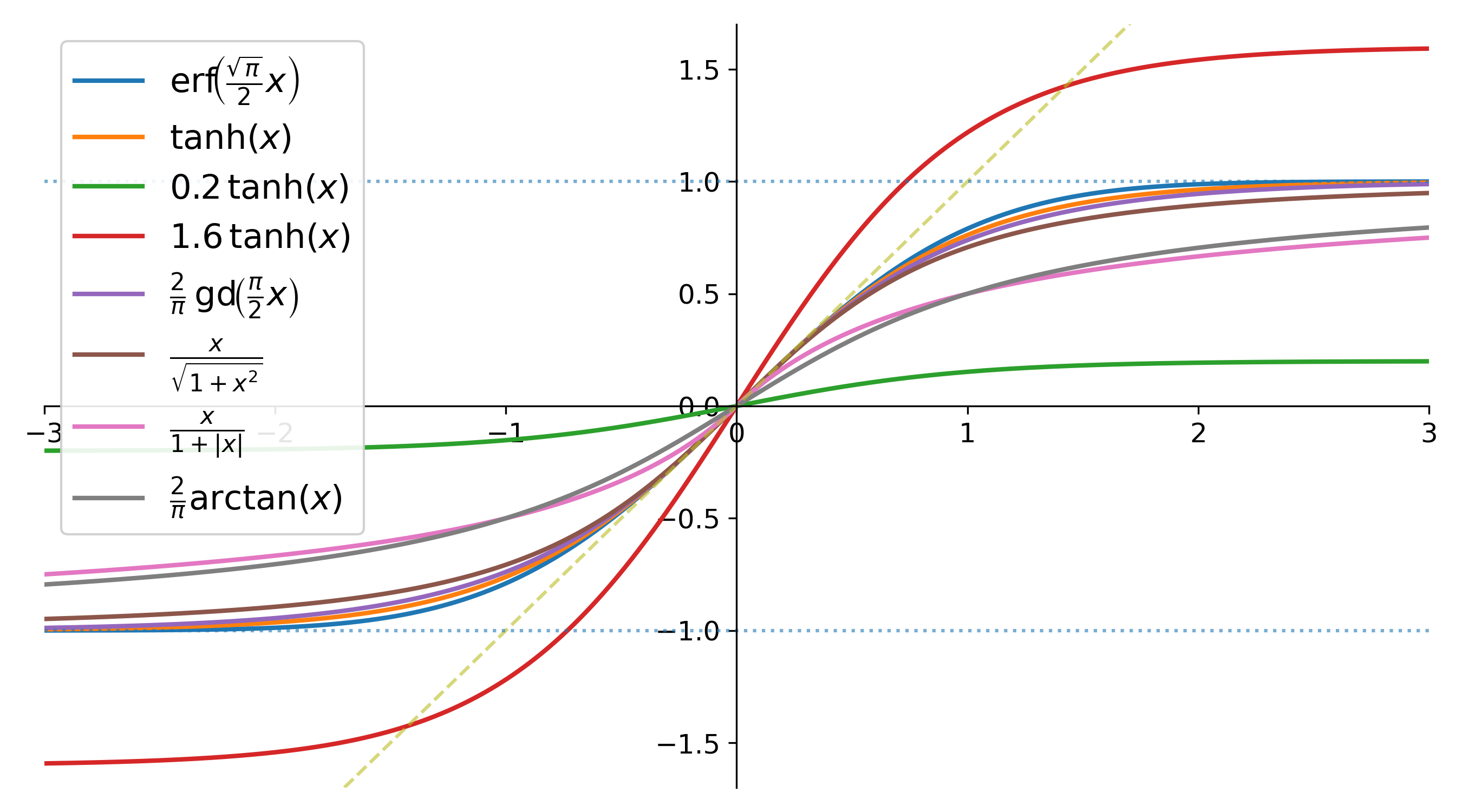}
  \vspace{-1pt}
  \caption{odd–sigmoid activations}
  \label{fig:fig1}
\end{wrapfigure}
Standard Gaussian i.i.d.\ initializations~(Xavier~\citep{glorot2010understanding}, He~\citep{he2015delving}, EOC~\citep{hayou2019impact} are derived under wide or infinite width assumptions and choose a single variance parameter to preserve signal propagation. In practice, we find that this design breaks down in deep and relatively narrow networks and under activation rescaling: small deviations in 
\clearpage
the variance can trigger saturation, or extreme learning rate sensitivity.
Motivated by these observations, we propose an activation aware initialization for fully feedforward neural networks~(FFNNs) with odd–sigmoid activations in the class $\mathcal{F}$~(Figure~\ref{fig:fig1}). For any $f\in\mathcal{F}$ with $\omega := 1/f'(0)$, we study the scalar dynamics of $x \mapsto f(a x)$ and choose the layerwise gains $a$ from a distribution with mean $\omega$ so that trajectories neither collapse to zero nor saturate. This gain design is then implemented via a simple initial weight matrix whose effective gain statistics match the desired behavior. Unlike Gaussian i.i.d. schemes, whose trainability is highly sensitive to the exact variance, the proposed initialization keeps both forward activations and backpropagated gradients in a well conditioned range across depth~\citep{raghu2017expressive}. 
Mean field analysis further shows that the corresponding gradient amplification factor $\chi_\ell$ remains close to $1$ over a significantly wider band of noise scales~\citep{schoenholz2016deep}.

We evaluate the proposed initialization in extensive experiments with odd–sigmoid activations, covering both the regime $\omega \approx 1$ and the more challenging regime $\omega \gg 1$ or $\omega \ll 1$. On standard image classification benchmarks, Gaussian initializations often fail or degrade sharply in deep, narrow, or rescaled-activation networks, whereas our scheme trains stably and is more data-efficient than these Gaussian baselines. We also study scaled and composite activations (e.g., $\alpha f(x)$, $f(\alpha x)$, and positive linear combinations) and observe that our initialization remains stable even for extreme scales (up to $\alpha \approx 10^{9}$). In PINN settings, appropriately scaled odd–sigmoid activations combined with our initialization consistently achieve lower losses than EOC-based baselines and do so without batch normalization (BN)~\citep{ioffe2015batch}, indicating that diagonal–plus–noise weights provide a practical alternative when Gaussian schemes become brittle. Our main contributions are as follows:

\begin{itemize}
  \item We define the odd–sigmoid function class and characterize its properties~(Section~\ref{4.1}).
  \item We view a deep feedforward network as a parallel ensemble of scalar dynamical systems, and use the resulting gain statistics to design a proposed initialization~(Section~\ref{4.2}).
  \item We show that the proposed initialization is more variance robust than standard Gaussian i.i.d.\ initializations and better preserves forward and backward signals~(Section~\ref{section4.3}).
  \item We empirically validate the proposed initialization across a wide range of activations on standard image classification benchmarks and PINNs~(Section~\ref{5.1} and \ref{55}).
\end{itemize}

\section{Related Work}
Classic initialization schemes were originally designed to stabilize layerwise variance, and later work showed that good performance also hinges on well-tuned statistics and momentum, with poor initialization causing saturation, variance collapse, and learning-rate fragility~\citep{Sutskever2013importance,Narkhede2022A}. For sigmoidal networks, follow-up studies adjust the mean or support of the weights or use more spread-out mappings to keep units responsive and accelerate convergence~\citep{Yilmaz2022SuccessfullyAE,Qiao2016MutualInformation,sodhi2014new}. Beyond first-order variance matching, curvature- and correlation-based criteria (Hessian-norm control, depth$\times$width bounds, and propagation analyses in constrained architectures such as input-convex networks and transformers) and architectural tweaks such as ReZero and dynamical-isometry–preserving setups all point toward initializations that are explicitly calibrated to the activation and the network’s depth and shape~\citep{skorski2021revisiting,iyer2023maximal,Hoedt2023PrincipledWI,Noci2022Signal,Bachlechner2021ReZeroIA,lee2024improved,Price2024DeepNN,blumenfeld2020beyond}. Our work makes this activation–initializer coupling explicit by formalizing an odd–sigmoid class and, for any $f$ in this class, deriving a closed-form noise scale that keeps preactivations in a high-gain regime up to a target depth, yielding dispersed forward signals and stable gradients without normalization~\citep{Noci2022Signal,skorski2021revisiting}.

On the activation side, early bounded nonlinearities (Sigmoid, Tanh) suffered from vanishing gradients, whereas ReLU variants improved efficiency but can produce dying neurons~\citep{Adeli2025fNIRS}. Surveys emphasize that no single activation is universally optimal and highlight desirable traits such as continuity, boundedness, monotonicity, symmetry, and smoothness~\citep{Alcaide2025TeLU,Adeli2025fNIRS}, with bounded, odd-symmetric forms helping to center signals and stabilize optimization—directly motivating our initializer. In parallel, adaptive and learned activation families (e.g., parametric activations for DL and PINNs, error-function–corrected ReLU variants, per-neuron self-activating networks, and tailored activations for zeroing neural networks) demonstrate that carefully structured nonlinearities can improve convergence and robustness~\citep{Jagtap2020Adaptive,Wang2023SAF,Ullah2025AHerfReLU,ssrn2025,Liu2025ZNN}, further supporting the need for initialization strategies aligned with activation geometry.

\section{PRELIMINARIES}
In this section, we introduce the notation used throughout the paper, review a simplified elementwise analysis of signal propagation~\citep{woorobust}, and
recall the classical edge of chaos mean field theory~\citep{schoenholz2016deep} and its associated Gaussian i.i.d. EOC initialization~(\citep{hayou2019impact}).

\paragraph{Notation.} We consider a feedforward neural network with $L$ layers. The dataset comprises $K$ pairs $\{(\bm{x}_i,\bm{y}_i)\}_{i=1}^K$, where $\bm{x}_i\in\mathbb{R}^{N_x}$ is the input and $\bm{y}_i\in\mathbb{R}^{N_y}$ is the corresponding target. For $\ell=1,\ldots,L$, the layerwise update is
\[
\bm{h}^{\ell}
= \mathbf{W}^{\ell}\bm{x}^{\ell-1}+\mathbf{b}^{\ell},
\qquad
\bm{x}^{\ell}
= f\!\left(\bm{h}^{\ell}\right)
\in \mathbb{R}^{N_{\ell}},
\]
where $\mathbf{W}^{\ell}\in\mathbb{R}^{N_{\ell}\times N_{\ell-1}}$ is the weight matrix, $\mathbf{b}^{\ell}\in\mathbb{R}^{N_{\ell}}$ is the bias vector, and $f(\cdot)$ denotes an activation function. We write $\mathbf{W}^{\ell}=[w_{ij}^{\ell}]$.

\paragraph{Signal Propagation Analysis.} Following the simplified framework of \citet{woorobust}, we analyze signal propagation in feedforward networks with an elementwise activation $f$.  
For convenience, all layers, including input and output, are assumed to have dimension $n$ (i.e., $N_{\ell}=n$ for all $\ell$).  
A key observation of this approach is that the usual matrix–vector multiplication $\mathbf{W}^{\ell}\bm{x}^{\ell-1}$ can be reformulated elementwise, such that each coordinate is expressed in terms of an effective self-scaling factor.  
Formally, for $\ell=0,\ldots,L-1$ and $i=1,\ldots,n$,  
\begin{equation}\label{eq:prelim-signal}
x_i^{\ell+1}
= f\!\left(a_i^{\ell+1}\,x_i^{\ell}\right),
\qquad
a_i^{\ell+1}
= w_{ii}^{\ell+1}
+ \sum_{j\neq i}\frac{w_{ij}^{\ell+1}x_j^{\ell}}{x_i^{\ell}},
\end{equation}
assuming that $\mathbf{b}^{\ell} = \mathbf{0}$. This elementwise decomposition highlights the role of $a_i^{\ell+1}$ as an effective gain that combines both diagonal and off-diagonal terms and provides a tractable basis for studying signal propagation in deep networks.

\paragraph{Edge of chaos.}
We briefly recall the ``edge of chaos'' viewpoint based on
mean field theory for fully connected networks with standard i.i.d.\ Gaussian
initialization, where the weights are sampled as
$w_{ij}^{\ell}\sim\mathcal N(0,\sigma_w^2/N_{\ell})$~\citep{poole2016exponential, schoenholz2016deep}.
In this setting, the preactivations $h_i^{\ell}$ are modeled as i.i.d.\ Gaussians
with layerwise variance $q^{\ell}$, and both forward signals and backward
gradients admit simple recursions in the infinite width limit. In particular, the mean field analysis of backpropagation shows that the squared gradient norm
obeys
\[
  \chi_{\ell+1}
  \;\approx\;
  \sigma_w^2\,\mathbb E\bigl[f'\bigl(\mathbf{h}^{\ell+1}\bigr)^2\bigr],
\]
where $\mathbf{h}^{\ell+1}$ denotes a typical preactivation at layer $\ell+1$.
The scalar factor $\chi_{\ell+1}$ is the average gradient amplification of
layer $\ell+1$: in the ordered phase ($\chi_{\ell+1}<1$) gradients
vanish exponentially with depth, while in the chaotic phase
($\chi_{\ell+1}>1$) they explode. The critical curve defined by
$\chi_{\ell+1}\approx 1$ is called the edge of chaos and separates
these two regimes. At this edge, information and gradients
can propagate through many layers~\citep{schoenholz2016deep}. 
Building on this perspective, \citep{hayou2019impact} refined the EOC analysis and showed that choosing $(\sigma_w^2,\sigma_b^2)$ on the EOC curve leads to better information propagation and faster training in deep networks.
Throughout this paper, we refer to Gaussian i.i.d.\ initializations with
$(\sigma_w^2,\sigma_b^2)$ on the EOC curve as EOC initialization.

\section{Methodology}\label{4}
We first define the odd–sigmoid function class~(Section~\ref {4.1}). Building on this, we propose a weight initialization for feedforward networks with odd–sigmoid activations~(Section~\ref{4.2}). In Section~\ref{section4.3}, we highlight how the proposed initialization differs from
standard Gaussian i.i.d.\ initializations, and show these differences through
theoretical analysis and numerical experiments. 
The proofs of the theoretical results are provided in Appendix~\ref{Appendix A}, and additional numerical
results are presented in Appendix~\ref{Appendix B}.

\subsection{Odd-Sigmoid Function Class}\label{4.1}
In practical neural networks with sigmoidal activations, both forward signals and backpropagated gradients tend to saturate or vanish as the depth increases. Our goal in this work is to move beyond a single sigmoid and consider a broad class of odd–sigmoid activation functions, and to design a weight initialization that prevents such vanishing for any activation in this class. To this end,
Section~\ref{4.1} introduces the odd–sigmoid function class and establishes its basic properties, which will be used in our subsequent analysis. Formal definitions of the activation functions used in this paper are provided in Appendix~\ref{activation_define}.

% \begin{definition}
% We define the \textbf{Odd--Sigmoid Function Family} $\mathcal{F}$ as the class of functions $f:\mathbb{R}\to\mathbb{R}$ that are continuously differentiable, odd, bounded, whose derivatives are positive on $\mathbb{R}$ and strictly decreasing on $[0,\infty)$.
% \end{definition}

\begin{definition}\label{def:odd-sig}
A function $f:\mathbb{R}\to\mathbb{R}$ is an \textbf{odd-sigmoid function} if it satisfies the following:
\begin{itemize}
    \item[(i)] \textbf{Regularity:} $f \in C^{1}(\mathbb{R})$.
    \item[(ii)] \textbf{Odd symmetry:} $f(-x) = -f(x)$ for all $x \in \mathbb{R}$.
    \item[(iii)] \textbf{Boundedness:} $\sup_{x\in\mathbb{R}} |f(x)| < \infty$.
    \item[(iv)] \textbf{Strict monotonicity:} $f'(x) > 0$ for all $x \in \mathbb{R}$.
    \item[(v)] \textbf{Slope decay:} $f'$ is strictly decreasing on $[0,\infty)$.
\end{itemize}
\end{definition}
Denote the class of all odd-sigmoid functions as $\mathcal{F}$. 
In the following, we establish several basic properties of functions in $\mathcal{F}$.

% Our “Odd-Sigmoid Function” class is a subset of the Tanh-like activation functions of Hayou et al.~\cite{hayou2019impact}: we additionally enforce strict odd symmetry and strictly decreasing slope on the positive half-line, while their Tanh-like class only requires nonnegative derivatives, concavity on $(0,\infty)$, and a mild tail condition on $f'$.

\paragraph{Pitchfork Bifurcation.}
Recall that \(x^\ast\in\mathbb{R}\) is a fixed point of \(f:\mathbb{R}\to\mathbb{R}\) if \(f(x^\ast)=x^\ast\).
For \(f\in\mathcal{F}\) define \(\omega_f:=1/f'(0)>0\) and, unless stated otherwise, write \(\omega\) simply.
Consider \(\phi_a(x):=f(a x)\) for \(a>0\). 
We say that \(\phi_a\) has a pitchfork bifurcation at \(a=\omega\) if it has exactly one fixed point \(\{0\}\) for \(0<a<\omega\) and exactly three fixed points \(\{0,\pm\xi_a\}\) for \(a>\omega\), with the nonzero points occurring as a symmetric pair. We show that this holds for every \(f\in\mathcal{F}\).

\begin{proposition}\label{prop1}
Suppose $f\in\mathcal{F}$ with $\omega:=1/f'(0)$, and for a fixed $a>0$ define $\phi_a(x):=f(ax)$. Then
\begin{itemize}
  \item[(i)] If \( 0<a\le \omega\), then $\phi_a(x)$ has a unique fixed point $x^*=0$.  
  \item[(ii)] If \( a>\omega\), then $\phi_a(x)$ has three distinct fixed points: $x^* = -\xi_a,\;0,\;\xi_a$ such that  $\xi_a > 0$.
\end{itemize}
\end{proposition}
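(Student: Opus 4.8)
The plan is to analyze the fixed-point equation $\phi_a(x)=f(ax)=x$ by exploiting oddness to reduce to the half-line $[0,\infty)$ and then using the slope-decay hypothesis to control the number of crossings. First I would record that $x=0$ is always a fixed point since $f(0)=0$ by oddness, and that by oddness the nonzero fixed points come in symmetric pairs $\pm\xi_a$; hence it suffices to count fixed points in $(0,\infty)$. Define $g_a(x):=f(ax)-x$ on $[0,\infty)$, so $g_a(0)=0$ and the nonzero fixed points are the positive zeros of $g_a$.

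Next I would examine $g_a'(x)=a f'(ax)-1$. By hypothesis (v), $f'$ is strictly decreasing on $[0,\infty)$, so $af'(ax)$ is strictly decreasing in $x$ on $[0,\infty)$; therefore $g_a'$ is strictly decreasing on $[0,\infty)$. Its value at $0$ is $g_a'(0)=af'(0)-1=a/\omega-1$. Also $g_a'(x)\to a\cdot\lim_{t\to\infty}f'(t)-1$, and since $f$ is bounded and $f'$ is positive and decreasing this limit is some $c_\infty\in[0,f'(0))$; in any case $g_a'(x)<af'(0)-1$ for $x>0$ eventually, and the strict monotonicity of $g_a'$ is the key structural fact.

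Now I would split into the two cases. If $0<a\le\omega$, then $g_a'(0)=a/\omega-1\le 0$, and since $g_a'$ is strictly decreasing we get $g_a'(x)<0$ for all $x>0$; hence $g_a$ is strictly decreasing on $(0,\infty)$, so $g_a(x)<g_a(0)=0$ there and there is no positive fixed point, giving (i). If $a>\omega$, then $g_a'(0)=a/\omega-1>0$, so $g_a$ is initially increasing; since $g_a'$ is strictly decreasing it has at most one zero, say at $x_0>0$ (it does have one because $g_a'(x)\to c_\infty a-1$; I would note that boundedness of $f$ forces $f'$ to not stay bounded away from $0$, so $\liminf f'=0$ and thus $g_a'$ eventually becomes negative, so by the intermediate value theorem $g_a'$ has a unique zero $x_0$). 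Then $g_a$ strictly increases on $(0,x_0)$ and strictly decreases on $(x_0,\infty)$, so $g_a(x_0)>g_a(0)=0$. Finally $g_a(x)=f(ax)-x\to-\infty$ as $x\to\infty$ because $f$ is bounded while $x\to\infty$; combined with $g_a(x_0)>0$ and the monotone decrease on $(x_0,\infty)$, the intermediate value theorem gives exactly one zero $\xi_a\in(x_0,\infty)$, and there are none in $(0,x_0]$ since $g_a>0$ there. This yields exactly one positive fixed point, hence exactly three fixed points $-\xi_a,0,\xi_a$, proving (ii).

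The main obstacle is the limiting/transversality argument in case (ii): one must rigorously rule out the degenerate possibility that $g_a'$ stays nonnegative (which would require $\lim_{t\to\infty}f'(t)\ge 1/a>0$), and the clean way to do this is to observe that boundedness of $f$ together with monotone-decreasing $f'>0$ forces $\lim_{t\to\infty}f'(t)=0$ (otherwise $f(t)=f(0)+\int_0^t f'$ would diverge), after which everything else is a routine application of the intermediate value theorem and the sign pattern of $g_a$. I would also double-check the boundary case $a=\omega$ separately, where $g_a'(0)=0$ but $g_a'(x)<0$ for $x>0$, so $g_a$ is still strictly decreasing on $(0,\infty)$ and (i) holds as stated.
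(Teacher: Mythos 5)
Your proposal is correct and follows essentially the same route as the paper: both define $g_a(x)=f(ax)-x$, use the strict decrease of $f'$ on $[0,\infty)$ to make $g_a'$ strictly decreasing, split on the sign of $g_a'(0)=a f'(0)-1$, and in case (ii) establish $\lim_{x\to\infty}f'(x)=0$ from boundedness of $f$ (exactly the integration-contradiction argument the paper isolates as a lemma) before applying the intermediate value theorem to locate the unique positive root $\xi_a$ and invoking oddness for $-\xi_a$. No gaps; the degenerate case you flag ($g_a'$ staying nonnegative) is ruled out the same way in the paper.
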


The following theorem establishes convergence properties of \(x_{n+1}=f(a x_n)\) for all \(x_0>0\).

\begin{theorem}\label{thm1}
Suppose $f\in\mathcal{F}$ with $\omega:=1/f'(0)$, and for a fixed $a>0$ define 
\[
 x_0>0, \qquad x_{n+1}=\phi_a(x_n),\qquad n=0,1,2,\dots.
\]
Then the sequence $\{x_n\}$ converges for every $x_0>0$.
Furthermore, 
\begin{itemize}
    \item[$(1)$] if $0 < a \leq \omega$, then $x_n\to 0$ as $n\to \infty$.
    \item[$(2)$] if $a > \omega$, then $x_n\to \xi_a$ as $n\to \infty$.
\end{itemize}
\end{theorem}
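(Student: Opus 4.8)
The plan is to analyze the one-dimensional dynamical system $x_{n+1}=\phi_a(x_n)$ with $\phi_a(x)=f(ax)$ on the invariant interval $(0,\infty)$, exploiting the convexity/concavity structure forced by Definition~\ref{def:odd-sig}. First I would record the basic properties of $\phi_a$ on $[0,\infty)$: since $f$ is odd, bounded, strictly increasing with $f'$ strictly decreasing on $[0,\infty)$, we have $\phi_a(0)=0$, $\phi_a$ is strictly increasing, strictly concave on $[0,\infty)$, and bounded above by $M:=\sup_x f(x)$. Strict concavity plus $\phi_a(0)=0$ already implies $\phi_a$ has at most one positive fixed point, and comparing the slope at the origin, $\phi_a'(0)=a f'(0)=a/\omega$, to $1$ is exactly the mechanism distinguishing cases (1) and (2); this is where Proposition~\ref{prop1} is invoked to name the fixed points.

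Next I would establish monotonicity of the orbit. Because $\phi_a$ is increasing on $[0,\infty)$, the map preserves order, so the sequence $\{x_n\}$ is automatically monotone: if $x_1\ge x_0$ then $\{x_n\}$ is nondecreasing, and if $x_1\le x_0$ it is nonincreasing (the standard "cobweb" argument for increasing maps). Boundedness is immediate — $0<x_n<\max\{x_0,M\}$ for all $n$ (or just $x_n\le M$ for $n\ge1$) — so by the monotone convergence theorem $x_n\to x_\infty$ for some $x_\infty\in[0,\infty)$, and by continuity of $\phi_a$ the limit is a fixed point of $\phi_a$. This already gives the "converges for every $x_0>0$" claim; it remains to identify which fixed point.

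For case (1), $0<a\le\omega$: the only fixed point in $[0,\infty)$ is $0$ (Proposition~\ref{prop1}(i)), so $x_\infty=0$ and we are done. For case (2), $a>\omega$: the fixed points in $[0,\infty)$ are $0$ and $\xi_a$, so $x_\infty\in\{0,\xi_a\}$, and I must rule out $x_\infty=0$ for $x_0>0$. Here I would use that $\phi_a'(0)=a/\omega>1$ together with concavity to show $\phi_a(x)>x$ for all $x\in(0,\xi_a)$ and $\phi_a(x)<x$ for $x>\xi_a$ (the chord from $0$ to $\xi_a$ lies below the graph on $(0,\xi_a)$ by strict concavity, and the graph stays below its tangent extended, giving the reverse inequality beyond $\xi_a$). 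Consequently, starting from any $x_0\in(0,\xi_a)$ the orbit is strictly increasing and bounded above by $\xi_a$ (order preservation keeps it below the fixed point), hence converges up to $\xi_a$; starting from $x_0=\xi_a$ it is constant; and starting from $x_0>\xi_a$ it is strictly decreasing and bounded below by $\xi_a$, hence converges down to $\xi_a$. In every subcase $x_\infty=\xi_a$.

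The main obstacle — really the only place any care is needed — is the sign analysis of $\phi_a(x)-x$ in case (2), i.e., showing the orbit cannot stall at $0$. The clean way is: strict concavity of $\phi_a$ on $[0,\infty)$ means $g(x):=\phi_a(x)-x$ is strictly concave with $g(0)=g(\xi_a)=0$, so $g>0$ on $(0,\xi_a)$ and $g<0$ on $(\xi_a,\infty)$; no appeal to $\phi_a'(0)>1$ beyond what Proposition~\ref{prop1} already encodes is even required once we know $\xi_a$ exists. One should also note the invariant intervals $(0,\xi_a]$ and $[\xi_a,\infty)$ are genuinely forward-invariant (again by monotonicity of $\phi_a$ and $\phi_a(\xi_a)=\xi_a$), which is what licenses the squeeze to $\xi_a$ rather than merely "to some fixed point." Everything else is the textbook monotone-bounded-orbit argument for an increasing continuous self-map of an interval.
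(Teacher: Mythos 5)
Your proposal is correct and follows essentially the same route as the paper: show the orbit is monotone and bounded, invoke the monotone convergence theorem, and identify the limit as a fixed point using Proposition~\ref{prop1}. The only difference is cosmetic --- you derive the sign of $\phi_a(x)-x$ on $(0,\xi_a)$ and $(\xi_a,\infty)$ from strict concavity of $\phi_a$ (a clean justification of the step the paper states rather tersely), whereas the paper reuses the unimodality analysis of $g(x,a)=f(ax)-x$ already carried out in the proof of Proposition~\ref{prop1}.
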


According to Theorem~\ref{thm1}, when $a>\omega$, for any initial value $x_0>0$ (resp.\ $x_0<0$), the sequence defined by $x_{n+1}=f(a x_n)$ converges to $\xi_a$ (resp.\ $-\xi_a$) as $n\to\infty$. From a signal propagation viewpoint in FFNN~(Equation~\ref{eq:prelim-signal}), this implies that activations do not vanish as depth increases. 
See Figures~\ref{fig6} and \ref{fig111} for the convergence of the iterates \(x_{n+1}=\phi_a( x_n)\) as a function of \(a\) and the initial value \(x_0\). Proposition~\ref{prop2} and Corollary~\ref{cor2} analyze the iteration with a coefficient sequence \(\{a_m\}\) that may vary across steps. They show that excessively large or small \(a_m\) drives the dynamics into saturation, highlighting the importance of appropriately scaling \(\{a_m\}\) when designing the initialization~(Figure~\ref{fig66}).

\begin{corollary}\label{cor1}
Let $f_1,f_2\in\mathcal{F}$ and let $c_1,c_2\ge0$ with $(c_1,c_2)\neq(0,0)$. If $g=c_1 f_1+c_2 f_2$, then $g\in\mathcal{F}$.
Furthermore, it holds that 
\[
\frac{1}{\omega_g}=\frac{c_1}{\omega_{f_1}}+\frac{c_2}{\omega_{f_2}}.%=\frac{1}{h'(0)}=\frac{1}{\alpha f'(0)+\beta g'(0)}.
\]
\end{corollary}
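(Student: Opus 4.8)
The plan is to verify each of the five axioms of Definition~\ref{def:odd-sig} for $g=c_1f_1+c_2f_2$ directly, exploiting that nonnegative linear combinations preserve all the relevant structure, and then compute $\omega_g$ by differentiating at $0$. Since $(c_1,c_2)\neq(0,0)$, at least one coefficient is strictly positive, which will be the crucial observation for getting strict (rather than weak) inequalities.

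First I would handle regularity and odd symmetry: $g\in C^1(\mathbb{R})$ because it is a linear combination of $C^1$ functions, and $g(-x)=c_1f_1(-x)+c_2f_2(-x)=-c_1f_1(x)-c_2f_2(x)=-g(x)$. Boundedness is equally immediate from the triangle inequality: $\sup_x|g(x)|\le c_1\sup_x|f_1(x)|+c_2\sup_x|f_2(x)|<\infty$. For strict monotonicity, note $g'(x)=c_1f_1'(x)+c_2f_2'(x)$; each $f_i'(x)>0$ and each $c_i\ge0$ with at least one $c_i>0$, so $g'(x)>0$ for all $x$. For slope decay on $[0,\infty)$: for $0\le s<t$ we have $f_i'(s)\ge f_i'(t)$ for both $i$ (strictly for at least... actually strictly for each $i$), hence $c_if_i'(s)\ge c_if_i'(t)$ for each $i$, and since some $c_j>0$ gives $c_jf_j'(s)>c_jf_j'(t)$ strictly, summing yields $g'(s)>g'(t)$. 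So $g'$ is strictly decreasing on $[0,\infty)$, establishing $g\in\mathcal{F}$.

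For the formula, $g'(0)=c_1f_1'(0)+c_2f_2'(0)=c_1/\omega_{f_1}+c_2/\omega_{f_2}$, and by definition $\omega_g=1/g'(0)$, so $1/\omega_g=c_1/\omega_{f_1}+c_2/\omega_{f_2}$ as claimed. One should note in passing that $g'(0)>0$ (from strict monotonicity already proved), so $\omega_g$ is well-defined and positive, consistent with the definition of $\omega_f$ for members of $\mathcal{F}$.

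There is no real obstacle here — the proof is a routine axiom-by-axiom check. The only point requiring a moment's care is the slope-decay step: one must be careful to combine a strict inequality from the (at least one) positive coefficient with weak inequalities $c_if_i'(s)\ge c_if_i'(t)$ coming from possibly-zero coefficients, so that the sum is strictly decreasing rather than merely nonincreasing. Everything else follows from linearity and the triangle inequality.
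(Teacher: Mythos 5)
Your proof is correct and follows essentially the same route as the paper's: an axiom-by-axiom verification via linearity, followed by evaluating $g'(0)$ to obtain the formula for $\omega_g$. If anything, your treatment is more careful than the paper's, since you explicitly handle the case where one coefficient vanishes by combining the weak inequalities $c_i f_i'(s)\ge c_i f_i'(t)$ with the strict inequality supplied by the coefficient that is positive.
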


Since \(\mathcal{F}\) is closed under addition, %\(g:=f_1+f_2\in\mathcal{F}\); hence the analysis in \eqref{eq:combine} applies to \(h\) (and, 
more generally, any finite sum \(\sum_{j=1}^M c_jf_j \in\mathcal{F} \) for all $c_j \geq 0$ with $(c_1,\ldots,c_M)\neq \bm{0}$ and \(f_j\in\mathcal{F}\).

\subsection{proposed weight initialization method}\label{4.2}
We build on the theoretical results in Section~\ref{4.1} to design an initialization scheme that keeps forward signals well dispersed and avoids saturation even in very deep networks. We then compare the proposed initialization with standard Gaussian i.i.d.\ schemes~(e.g., Xavier, He, EOC) from both forward and backward pass perspectives, studying signal and gradient propagation through theoretical analysis and experiments.

\begin{figure}[t!]
\centering 
\includegraphics[width=0.63\textwidth]{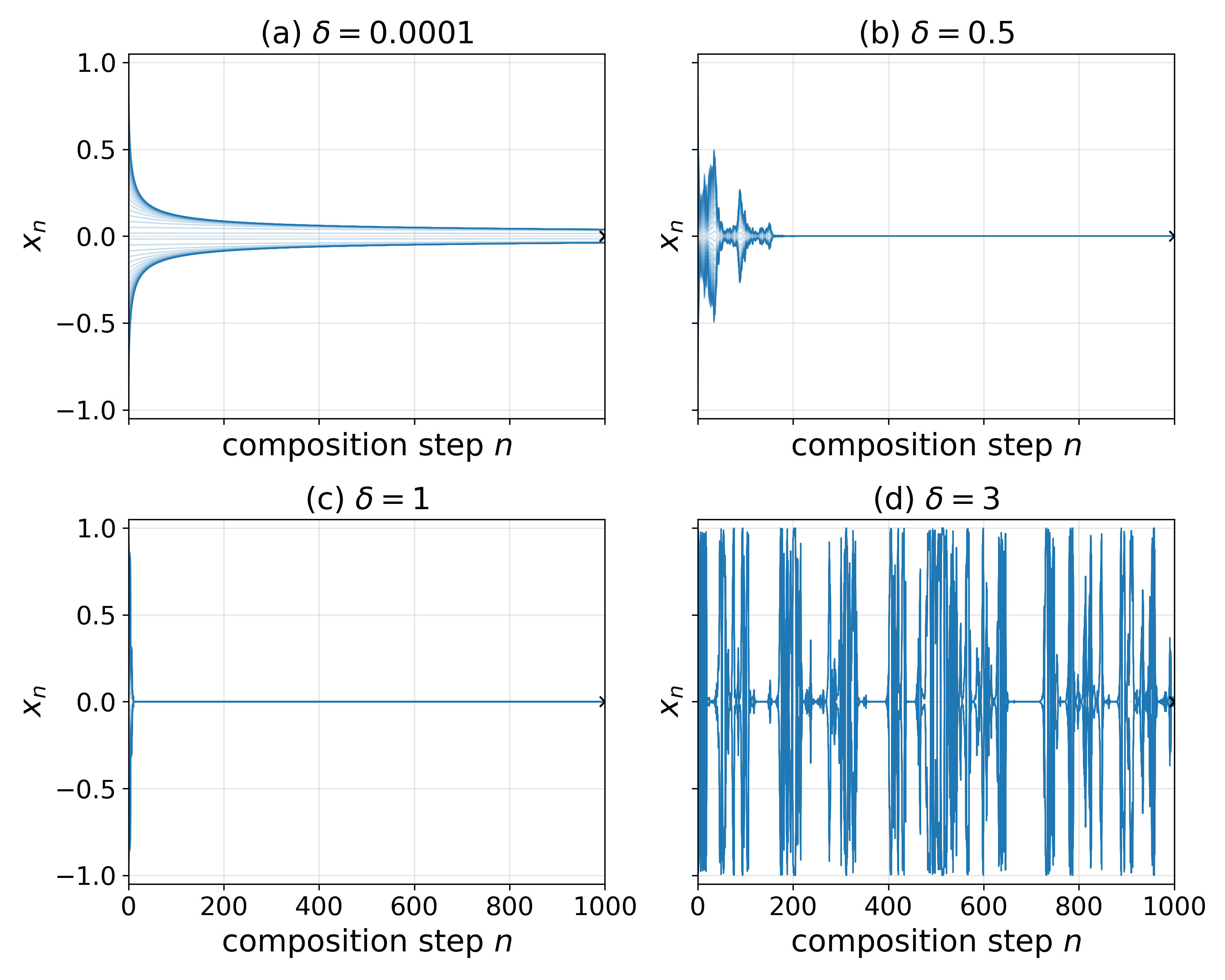}
\caption{
Iterated dynamics of the $\tanh$ activation under randomly varying gains.
For each panel, we fix a sequence $(a_1,\dots,a_{1000})$ sampled i.i.d.\ from
$\mathcal{U}[1-\delta,\,1+\delta]$ and iterate
$x_{n+1} = \tanh(a_{n+1} x_n)$ from 60 initial points $x_0 \in [-1,1]$. Panels (a)--(d) correspond to $\delta=0.0001,\,0.5,\,1,\,3$.}
\label{fig66}
\end{figure}

\paragraph{Proposed Weight Initialization} Consider $f\in \mathcal{F}$ with $\omega := 1/f'(0)>0$ and a target depth $L$.
We initialize each layer as $\mathbf{W}^{\ell} = \mathbf{D}^{\ell} + \mathbf{Z}^{\ell} \in \mathbb{R}^{N_{\ell}\times N_{\ell-1}}$, where $(\mathbf{D}^{\ell})_{ij} = \omega$ if $i \equiv j \pmod{N_{\ell-1}}$ and $0$ otherwise. The noise matrix $\mathbf{Z}^{\ell}$ is sampled with noise scale $\sigma_z$ as
\[
  (\mathbf Z^{\ell})_{ij} \sim \mathcal N\!\Bigl(0,\ \sigma_z^2 / N_{\ell-1}\Bigr).
\]

The value of $\sigma_z$ is determined via a calibrated scalar surrogate model.
Let $p_{\mathrm{real}}=0.4$~(see Section~\ref{b.7}) be the desired negative rate at depth $L$ in the full
network, and let $p_{\mathrm{sur}}(L)$ denote the surrogate target used in the
scalar model. As shown in Appendix~\ref{app_negative}, for shallow depths
the optimal surrogate target remains close to $p_{\mathrm{real}}$, whereas for larger
depths it decays approximately exponentially. In particular, a least-squares fit
for $L\ge 10$ yields
\[
  p_{\mathrm{sur}}(L)\;\\\approx\;2.05\,e^{-0.133L}.
\]
In practice we therefore set
\[
  p(L)
  \;:=\;
  \begin{cases}
    p_{\mathrm{real}}, & L \le L_{th},\\[0.3em]
    p_{\mathrm{sur}}(L), & L > L_{th},
  \end{cases}
\]
with $L_{th}=10$, and use this $p(L)$ in the closed-form calibration
\[
  \sigma^\ast(p(L),L,\omega)
  \;=\;
  -\frac{\omega}{\displaystyle \Phi^{-1}\!\left(\frac{1-(1-2p(L))^{1/L}}{2}\right)},
\]
where $\Phi$ denotes the standard normal cumulative distribution function. We
then set $\sigma_z := \sigma^\ast(p(L),L,\omega)$.

For the learning rate, we use a band proportional to $\omega$; for Adam~\citep{kingma2014adam},
\[
  \eta\ \in\ [\,10^{-5}\omega,\;10^{-3}\omega\,].
\]
We discuss the learning-rate range $\eta$ and the surrogate calibration in
Section~\ref{55} and Appendix~\ref{app_negative}.

\paragraph{Derivation of the initialization.}
Using the elementwise formulation from Section~\ref{4.1}, the usual matrix–vector
product $\mathbf{W}^{\ell}\mathbf{x}^{\ell-1}$ can be rewritten as
\begin{equation}
x_i^{\ell+1}
= f\!\left(a_i^{\ell+1}\,x_i^{\ell}\right),
\qquad
a_i^{\ell+1}
= w_{ii}^{\ell+1}
+ \sum_{j\neq i}\frac{w_{ij}^{\ell+1}x_j^{\ell}}{x_i^{\ell}},
\end{equation}
so that $a_i^{\ell+1}$ plays the role of an effective gain for neuron $i$ in layer $\ell+1$.
Under our proposed initialization, Lemma~\ref{lem1} shows that
$a_i^{\ell+1}$ is approximately Gaussian with mean $\omega$ and a data-dependent variance.

\begin{lemma}\label{lem1}
Using the elementwise formulation in \eqref{eq:prelim-signal} and employing the proposed weight initialization, fix an arbitrary layer $\ell$ and index $i$ such that $x_i^\ell\neq 0$. Then, conditionally on $x^\ell$,
\begin{equation}\label{var_a}
a_i^{\ell+1}\ \sim\ \mathcal N\!\Biggl(\,\omega,\ \frac{\sigma_z^2}{N_\ell}\Bigl(1+\sum_{j\ne i}\Bigl(\frac{x_j^\ell}{x_i^\ell}\Bigr)^2\Bigr)\Biggr).
\end{equation}
Moreover, if $|x_j^\ell|\le M$ for all $j$ and $|x_i^\ell|\ge \varepsilon>0$, then
\[
  \frac{\sigma_z^2}{N_\ell}
  \;\le\;
  \Var\!\bigl(a_i^{\ell+1}\,\big|\,x^\ell\bigr)
  \;\le\;
  \sigma_z^2\,\frac{M^2}{\varepsilon^2}.
\]
\end{lemma}
\medskip

The distribution of the effective gain $a_i^{\ell}$ is crucial for understanding
signal propagation. We first analyze its mean, denoted by $\mu_a$. This mean is
determined by the diagonal entries of the proposed initialization matrix $\mathbf{D}$.
In particular, we are interested in the supercritical regime $\mu_a > \omega$, which is analyzed in Theorem~\ref{thm4.6} below.

\begin{theorem}
\label{thm4.6}
Let $f\in\mathcal F$ be an odd–sigmoid activation with $\omega := 1/f'(0)$, and fix any $\varepsilon>0$.
Consider the feedforward network and proposed initialization,
except that the diagonal element is set to
$a_0 := \omega + \varepsilon,$
and let $a_i^{\ell+1}$ be the effective gain defined in~\eqref{eq:prelim-signal}. Fix a tolerance $\gamma\in(0,1)$ and a finite depth
$L\in\mathbb{N}$. Then there exist a threshold depth
$L_{0}\le L$ and a noise threshold $\sigma_0>0$ such that,
for all $0<\sigma_z\le\sigma_0$,
\begin{equation}\label{eq:gain-var-supercritical}
  \mathbb{P}\Bigl(
    (1-\gamma)\,\sigma_z^2
    \;\le\;
    \Var(a_i^{\ell+1}\mid x^\ell)
    \;\le\;
    (1+\gamma)\,\sigma_z^2
    \quad\text{for all } L_{0}\le \ell<L,\ 1\le i\le N_\ell
  \Bigr)
  \;\ge\; 1-\gamma.
\end{equation}
\end{theorem}

The data dependency of the variance term can be characterized as follows.
For each layer $\ell$ and neuron $i$ with $x_i^\ell(\sigma_z)\neq 0$, define
$
  R_\ell(\sigma_z;i)
  := \frac{\|\mathbf x^\ell(\sigma_z)\|_2^2}{N_\ell\,(x_i^\ell(\sigma_z))^2}.$
Here, $x_i^\ell(\sigma_z)$ denotes the activation at layer $\ell$ under noise scale $\sigma_z$.
Theorem~\ref{thm4.6} implies that when the diagonal mean $\mu_a>\omega$,
there exist a depth threshold $L_0$ and a noise threshold $\sigma_0>0$ such that
$R_\ell(\sigma_z;i)\approx 1$ for all $\ell\ge L_0$ and $\sigma_z\le\sigma_0$.
This means that all neurons in deep layers converge to nearly identical values,
indicating saturation of activations and loss of data dependence~(Figure~\ref{fig6}).
Similarly, for $\mu_a<\omega$, Theorem~\ref{thm4.6} yields the same qualitative behavior. Deeper networks require smaller initialization variance to remain trainable~\citep{poole2016exponential, schoenholz2016deep}. Motivated by this, we set the diagonal mean to the critical value $a_0 = \omega$ when initializing deep networks.

Proposition~\ref{prop2} and Corollary~\ref{cor2} suggest that, for most
gains $a_i^{\ell}$ across layers, if the variance of the gains is either too
small or too large, the activations tend to saturate. 
We define the negative rate at depth $L$ as
$
  \pi_L(\sigma) \;:=\; \mathbb{P}\bigl(x_L < 0 \,\big|\, x_0 > 0\bigr),
$
i.e., the probability that a neuron that starts with a positive activation
ends up with a negative sign at layer $L$. Under the surrogate model introduced below, it suffices to track a single neuron with $x_i^1>0$, since for any $f\in\mathcal{F}$ the activation is sign-preserving and odd. Consequently, the probability that a positive entry becomes negative at layer $L$ is equal to the probability that a negative entry becomes positive, and does not depend on the particular choice of $x_i^1$.

To handle this dependence on the current activations, we approximate the
gain distribution by a scalar surrogate model, $a_i^{\ell+1} \sim \mathcal N(\omega,\sigma_z^2)$, and estimate the negative rate from this Gaussian approximation. Our initialization has a nonzero mean gain $\omega$, which makes sign changes along a given coordinate well defined across layers. We interpret frequent sign flips during forward propagation as a form of information loss and therefore use the surrogate calibration to control the negative rate at the final layer. In practice, we set the noise level so that the empirical negative rate at depth $L$ is close to $p_{\mathrm{real}}=0.4$, preserving most sign information while still retaining sufficient randomness for feature learning.
The next theorem shows how, given a target
negative rate at depth $L$, we can compute the corresponding noise scale
$\sigma^\ast$ in this surrogate model.

\begin{theorem}\label{thm:closed-sigma}
Fix a target $p\in[0,\tfrac12)$, a depth $\ell\in\mathbb N$, and $\omega>0$.
There exists a unique $\sigma^\star=\sigma^\star(p,\ell,\omega)>0$ such that $\pi_\ell(\sigma^\star)=p$, and it is given by
\begin{equation}\label{eq:sigma-star}
\sigma^\star(p,\ell,\omega)\;=\; -\,\frac{\omega}{\Phi^{-1}\!\left(\dfrac{1-(1-2p)^{1/\ell}}{2}\right)}\,.
\end{equation}
\end{theorem}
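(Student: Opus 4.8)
The plan is to make the negative rate $\pi_\ell$ explicit in the scalar surrogate, reduce it to the parity of a binomial count of negative gains, evaluate that parity probability in closed form, and then invert a strictly monotone function. In the surrogate we take $N_\ell=1$, so by Lemma~\ref{lem1} together with the conservative lower-bound variance convention adopted just after it, the effective gains $a_1,\dots,a_\ell$ are i.i.d.\ $\mathcal N(\omega,\sigma^2)$ with $\omega=k$, and the depth-$\ell$ iterate from a fixed $x_0\neq0$ is $x_\ell=\phi_{a_\ell}\circ\cdots\circ\phi_{a_1}(x_0)$. Since $f\in\mathcal F$ is odd, strictly increasing, and $f(0)=0$, we have $\sign(x_m)=\sign(a_m)\,\sign(x_{m-1})$, so the depth-$\ell$ negative event — the sign of $x_\ell$ being opposite to that of $x_0$ — coincides exactly with the event that an odd number of $a_1,\dots,a_\ell$ are negative. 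Writing $q(\sigma):=\mathbb P(a_1<0)=\Phi(-k/\sigma)$, this gives $\pi_\ell(\sigma)=\mathbb P\bigl(\#\{\,m\le\ell:\ a_m<0\,\}\text{ is odd}\bigr)$.

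Next I would invoke the elementary parity identity: a $\mathrm{Binomial}(\ell,q)$ variable is odd with probability $\tfrac12\bigl(1-(1-2q)^\ell\bigr)$, which is half the difference of the binomial expansions of $(q+(1-q))^\ell=1$ and $((1-q)-q)^\ell=(1-2q)^\ell$. Hence
\[
\pi_\ell(\sigma)=\tfrac12\Bigl(1-\bigl(1-2q(\sigma)\bigr)^\ell\Bigr),\qquad q(\sigma)=\Phi(-k/\sigma).
\]

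Then I would establish monotonicity and the range, which already yields existence and uniqueness. The map $\sigma\mapsto-k/\sigma$ is a strictly increasing bijection $(0,\infty)\to(-\infty,0)$, so composing with the strictly increasing continuous $\Phi$ shows $q:(0,\infty)\to(0,\tfrac12)$ is a strictly increasing continuous bijection. Therefore $1-2q(\sigma)\in(0,1)$ is strictly decreasing in $\sigma$, hence $(1-2q(\sigma))^\ell$ is strictly decreasing (using $\ell\ge1$), so $\pi_\ell$ is strictly increasing and continuous on $(0,\infty)$ with $\pi_\ell(\sigma)\to0$ as $\sigma\to0^+$ and $\pi_\ell(\sigma)\to\tfrac12$ as $\sigma\to\infty$. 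Thus $\pi_\ell$ is a continuous strictly increasing bijection $(0,\infty)\to(0,\tfrac12)$, so for each such target there is a unique $\sigma^\star>0$ with $\pi_\ell(\sigma^\star)=p$ (the boundary value $p=0$ being the degenerate limit $\sigma^\star\downarrow0$, where the stated strict positivity is only attained in the limit).

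Finally I would solve for $\sigma^\star$ in closed form. From $\pi_\ell(\sigma^\star)=p$ we get $\bigl(1-2q(\sigma^\star)\bigr)^\ell=1-2p$; since $1-2q(\sigma^\star)>0$ and $1-2p>0$, taking the positive real $\ell$-th root gives $1-2q(\sigma^\star)=(1-2p)^{1/\ell}$, i.e.\ $q(\sigma^\star)=\tfrac12\bigl(1-(1-2p)^{1/\ell}\bigr)\in(0,\tfrac12)$. Unwinding $q(\sigma^\star)=\Phi(-k/\sigma^\star)$ gives $-k/\sigma^\star=\Phi^{-1}\bigl(\tfrac12(1-(1-2p)^{1/\ell})\bigr)$, whose right side is negative because its argument lies in $(0,\tfrac12)$; rearranging yields \eqref{eq:sigma-star} with $\sigma^\star>0$. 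The computations are routine; the only step needing care is the reduction in the first paragraph — being precise that $\pi_\ell$ denotes the \emph{surrogate} negative rate with i.i.d.\ $\mathcal N(\omega,\sigma^2)$ gains (the conservative variance $\sigma_z^2/N_\ell$, not the exact conditional law of Lemma~\ref{lem1}), and that oddness and strict monotonicity of $f$ are exactly what collapse the layerwise sign dynamics to a single Bernoulli parameter $q(\sigma)$.
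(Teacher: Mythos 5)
Your proof is correct and follows essentially the same route as the paper: reduce the negative rate to the sign-flip dynamics (valid because $f$ is odd and strictly increasing), obtain $\pi_\ell(\sigma)=\tfrac12\bigl(1-(1-2q(\sigma))^\ell\bigr)$ with $q(\sigma)=\Phi(-k/\sigma)$, establish that $\pi_\ell$ is a strictly increasing continuous bijection onto $(0,\tfrac12)$, and invert. The only (cosmetic) difference is that you evaluate $\pi_\ell$ directly via the binomial parity identity, whereas the paper's Lemma~\ref{lem:sign-recursion} derives the same formula by solving a first-order linear recursion; your treatment of the monotonicity/range argument and of the degenerate boundary case $p=0$ is in fact more explicit than the paper's.
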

Figure~\ref{negative_rate_f} shows how the negative rate varies with $L$ and $p$.
As shown in Figure~\ref{chi_value}, for the proposed initialization $\chi_{\ell}$ at the $\sigma^\ast(p,L,\omega)$
stays within a few percent of $1$ for both $p=0.01$ and $p=0.49$ over depths
up to $2\times 10^{5}$, suggesting that the calibration preserves
trainable gradient scales. The relationship between the FFNN-level negative rate and our scalar surrogate, as well as the corresponding validation experiments, is detailed in
Appendix~\ref{app_negative}.

\subsection{Comparative Analysis of Gaussian and Proposed Initializations}\label{section4.3}
Gaussian i.i.d.\ schemes such as Xavier, He, and EOC all draw weights
independently from $\mathcal{N}(0,\sigma_w^2/N_{\ell})$, differing only in the
choice of $\sigma_w$. In this section, we contrast our method
with EOC, examining their forward and backward signal propagation both
theoretically and empirically.

\begin{figure}[t!]
\centering 
\includegraphics[width=1\textwidth]{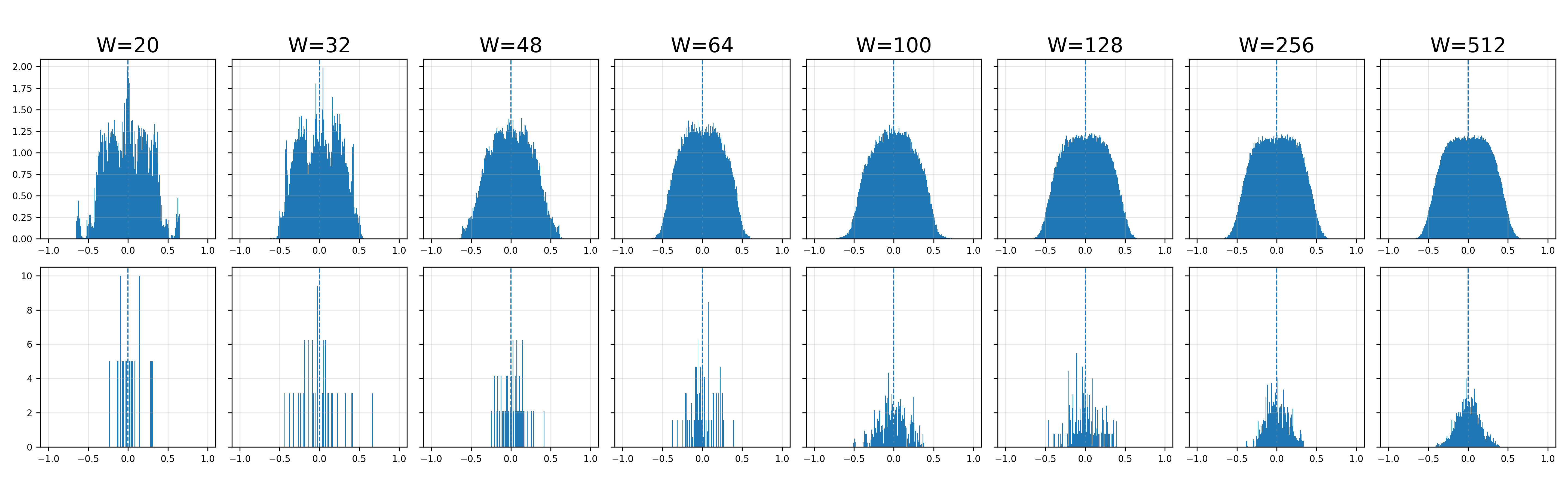}
\caption{Last layer activation histograms for tanh networks with depth $L=1000$ and varying width under the proposed initialization \textbf{(top row)} and the EOC initialization \textbf{(bottom row)}. Each column corresponds to a different hidden width.}
\label{fig_histo_width}
\end{figure}

\paragraph{Forward Pass.}
As shown in Figures~\ref{fig_histo_depth}, \ref{fig_histo_depth0.1}, and \ref{fig_histo_depth10}, our proposed initialization keeps the activations well dispersed even at depth $10{,}000$, whereas the EOC initialization drives them toward saturation near zero.
Moreover, Figures~\ref{fig_histo_width}, \ref{fig_histo_width_0.1}, and \ref{fig_histo_width10} show that, for EOC, signal propagation degrades as the width decreases, reflecting that this initialization is derived under an infinite width (or sufficiently wide) assumption. In contrast, our initialization is obtained by directly
controlling the effective gain $a_i^{\ell}$ and does not rely on any large width
approximation, which leads to stable forward propagation even in deep and
relatively narrow networks. The theoretical motivation for forward signal propagation with odd-sigmoid activations is discussed in Sections~\ref{4.1} and \ref{4.2}.

\paragraph{Backward Pass.}
Although our initialization is derived without mean field assumptions, for the
backward pass we adopt the standard mean field framework as an analytical tool to
study gradient propagation. Let $\mathbf g^\ell = \partial\mathcal L/\partial \mathbf x^\ell$ denote the gradient at depth
$\ell$ for a scalar loss $\mathcal L$.
Under the assumptions, the
backpropagated gradients satisfy the form
$
  \frac{1}{N_\ell}\,\mathbb E\bigl\|\mathbf g^\ell\bigr\|_2^2
  \;\approx\;
  \chi_{\ell+1}\,
  \frac{1}{N_{\ell+1}}\,
  \mathbb E\bigl\|\mathbf g^{\ell+1}\bigr\|_2^2,
$
where $\chi_{\ell+1}$ is the average gradient amplification factor of layer
$\ell+1$.  Values $\chi_{\ell+1}\ll 1$ correspond to vanishing gradients,
whereas $\chi_{\ell+1}\gg 1$ lead to exploding gradients; thus, keeping
$\chi_{\ell+1}$ close to $1$ across layers is essential for stable training
in very deep networks.
For our proposed initialization we show in
Appendix~\ref{chi_app} that
$\chi_{\ell+1}$ can be written as $\chi_{\ell+1}
  \;\approx\;
  (\omega^2 + \sigma_z^2)\,
  \mathbb E\bigl[f'(\mathbf h^{\ell+1})^2\bigr]$.
For comparison, under a standard i.i.d.\ Gaussian initialization, the corresponding factor takes the form $\chi_{\ell+1}^{*}\approx \sigma_w^2\,\mathbb E[f'(\mathbf h^{\ell+1})^2]$.

\begin{wrapfigure}{r}{0.43\textwidth} % r=오른쪽
  \vspace{-1pt}
  \centering
  \includegraphics[width=\linewidth]{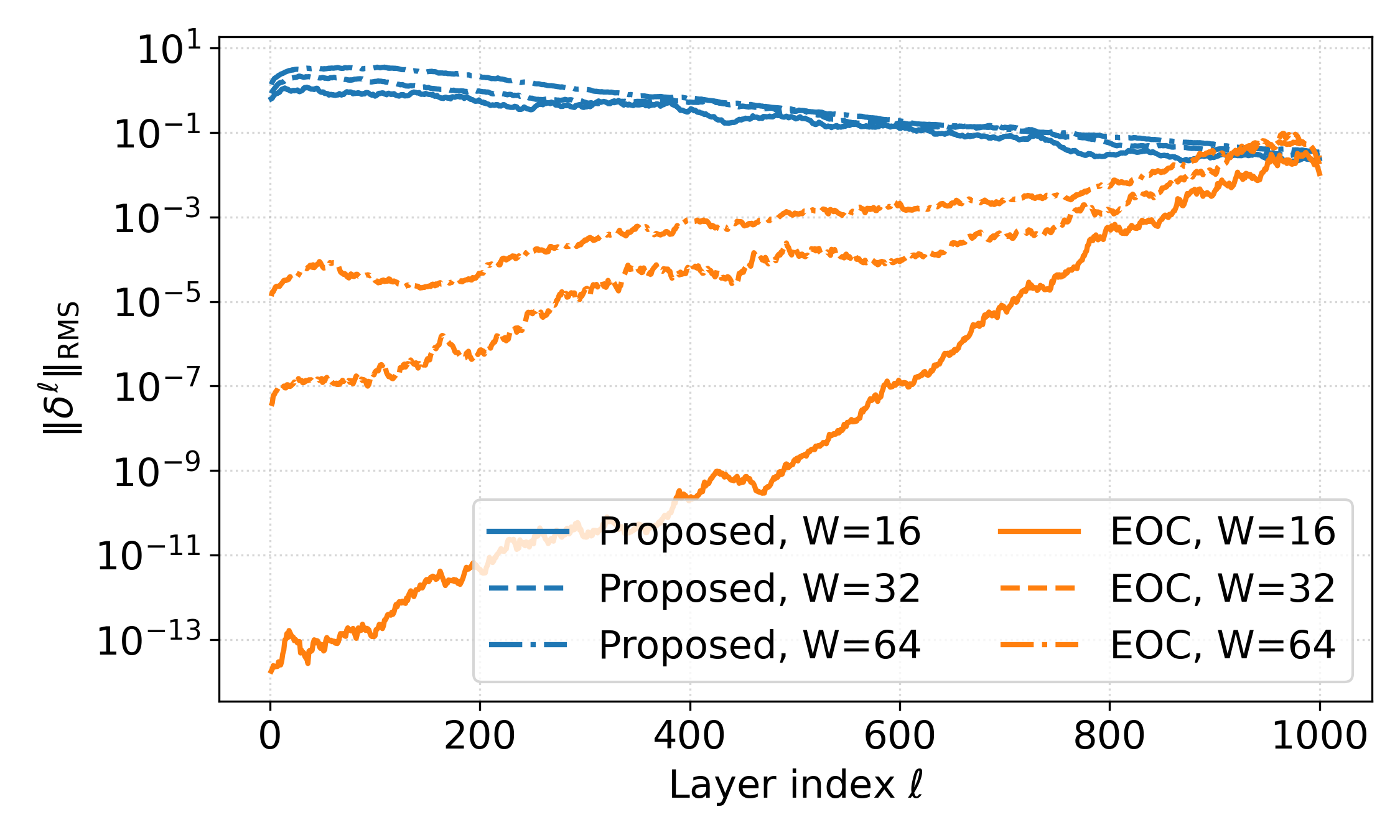}
  \vspace{-1pt}
\caption{
Layerwise gradient norms at initialization for a $\tanh$ FFNN
of depth $L=1000$. We set $\boldsymbol{\delta}^\ell := \partial L / \partial \mathbf{h}^\ell$.}
\label{fig:gradieont_norm}
\end{wrapfigure}
For our proposed initialization, as $\sigma_z$ grows, $\mathbb E[f'(\mathbf h^{\ell+1})^2]$ decreases, so the
product stays close to $1$ over a relatively wide range of $\sigma_z$.  In
contrast, for standard Gaussian i.i.d.\ initialization
$\chi_{\ell+1}^*(\sigma_w)\approx\sigma_w^2\,\mathbb E[f'(\mathbf h^{\ell+1})^2]$
has no analogous $\omega^2$ term and is much more sensitive to perturbations of $\sigma_w$ around its critical value.  
Therefore, $\chi_{\ell+1}$ remains close to $1$ over a wide range of noise scales $\sigma_z$, leading to much more robust training~(Figure~\ref{chi_heatmap_tanh}).
Consistent with this
analysis, Figure~\ref{fig:heatmap_tanh} indicates that our initialization supports accurate training over a much broader region in $(L,\sigma)$ and across activation scales, whereas Gaussian initializations remain effective only in a narrow band of $\sigma$ and at relatively shallow depths. Figure~\ref{fig:gradieont_norm} shows the gradient norm for the quadratic loss $L=\tfrac12\|\mathbf{y}\|_2^2$ as a function of depth. Under our proposed initialization, the gradient norm is effectively preserved
over very deep networks and remains stable across different widths. In contrast, EOC and Gaussian initializations exhibit rapid gradient decay at moderate widths, consistent
with their reliance on infinite width assumptions, whereas our scheme maintains well scaled forward activations and backward gradients
even in finite width regimes.

\begin{figure}[t!]
\centering 
\begin{tabular}{ccc}
\begin{subfigure}[b]{0.26\textwidth}
    \centering
    \includegraphics[width=\textwidth]{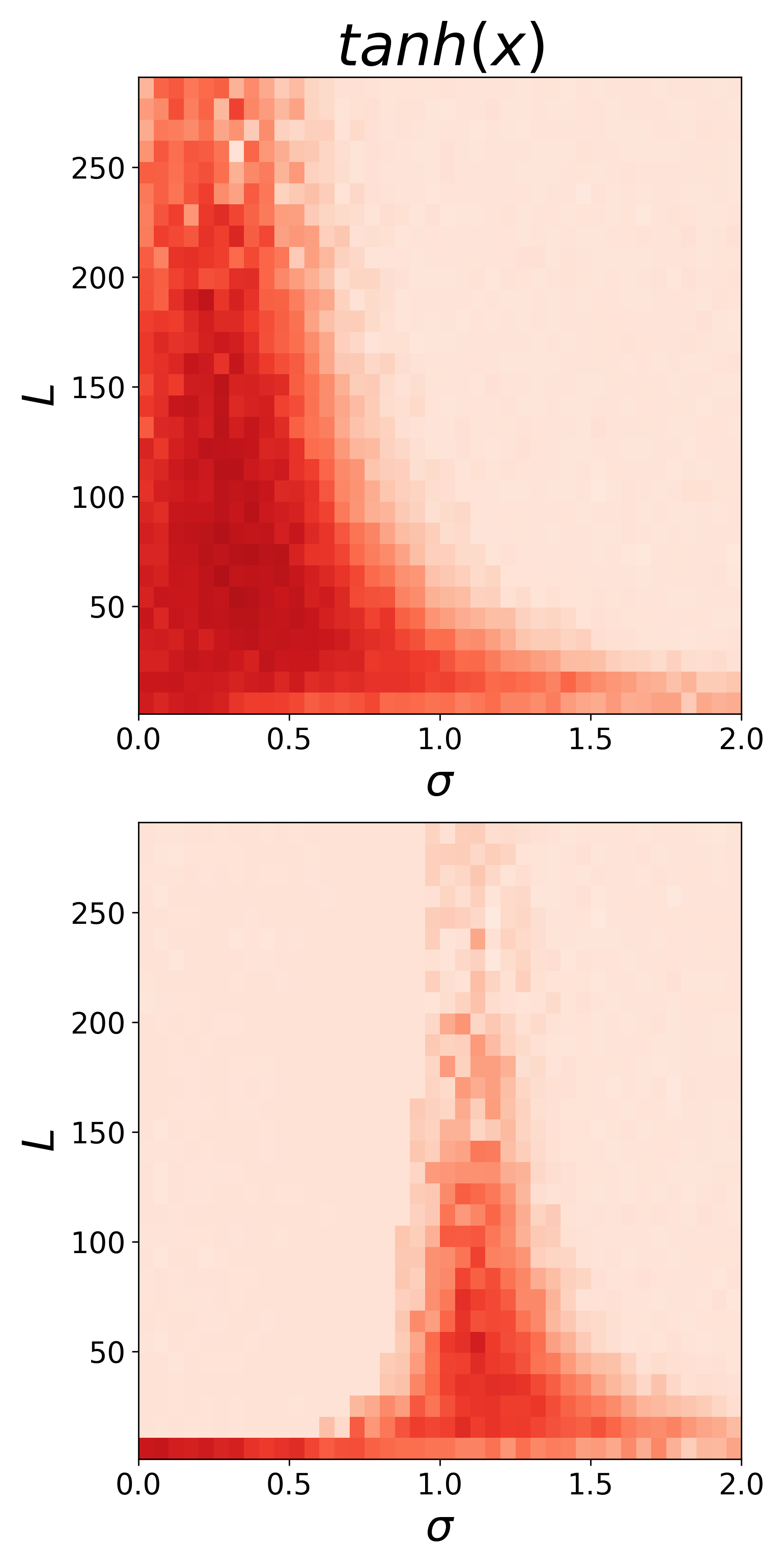}
    % \caption{$(1,1,1,1)$}
\end{subfigure} &
\begin{subfigure}[b]{0.26\textwidth}
    \centering
    \includegraphics[width=\textwidth]{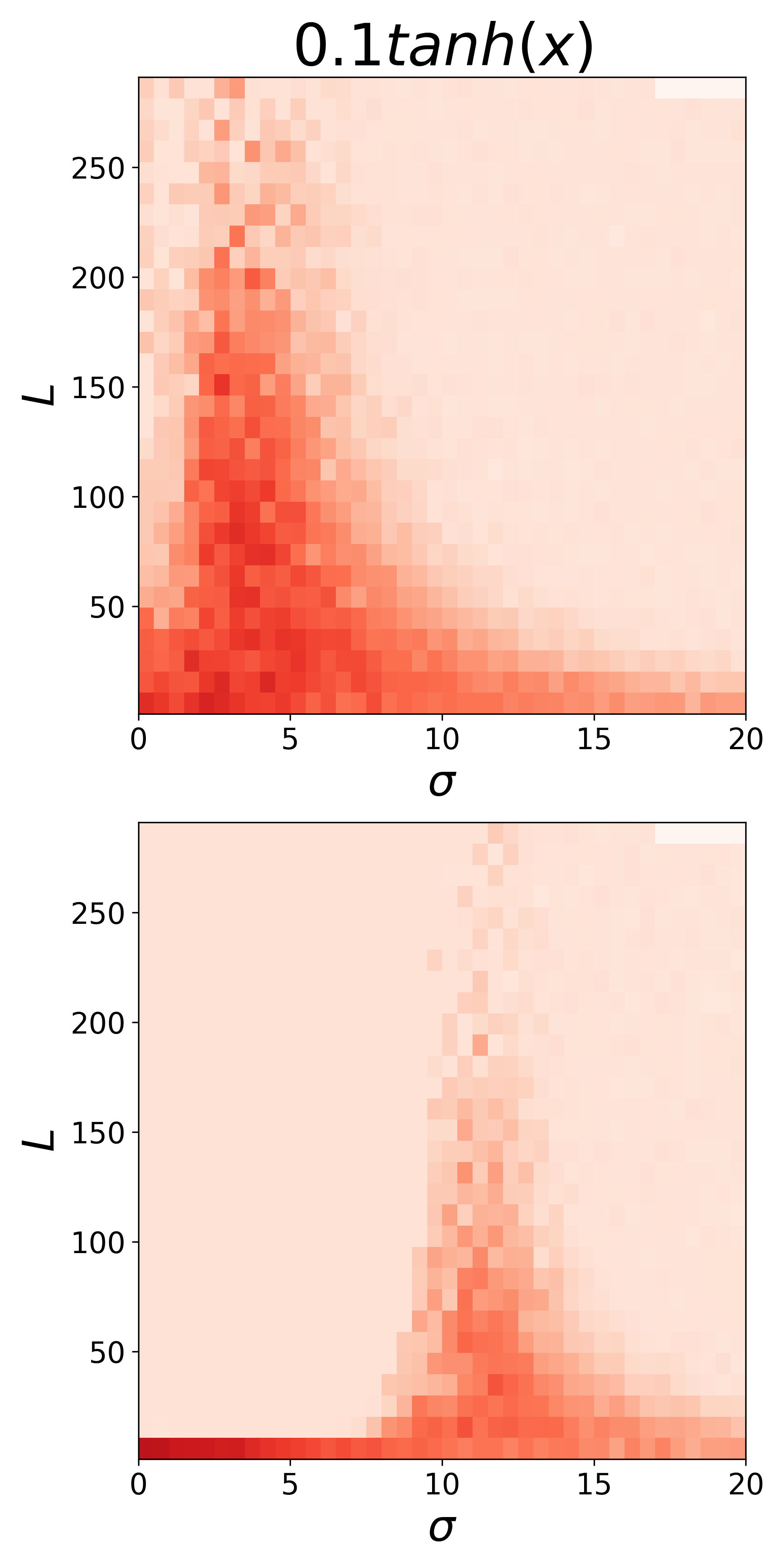}
    % \caption{$(3,4,2,0.1)$}
\end{subfigure} &
\begin{subfigure}[b]{0.26\textwidth}
    \centering
    \includegraphics[width=\textwidth]{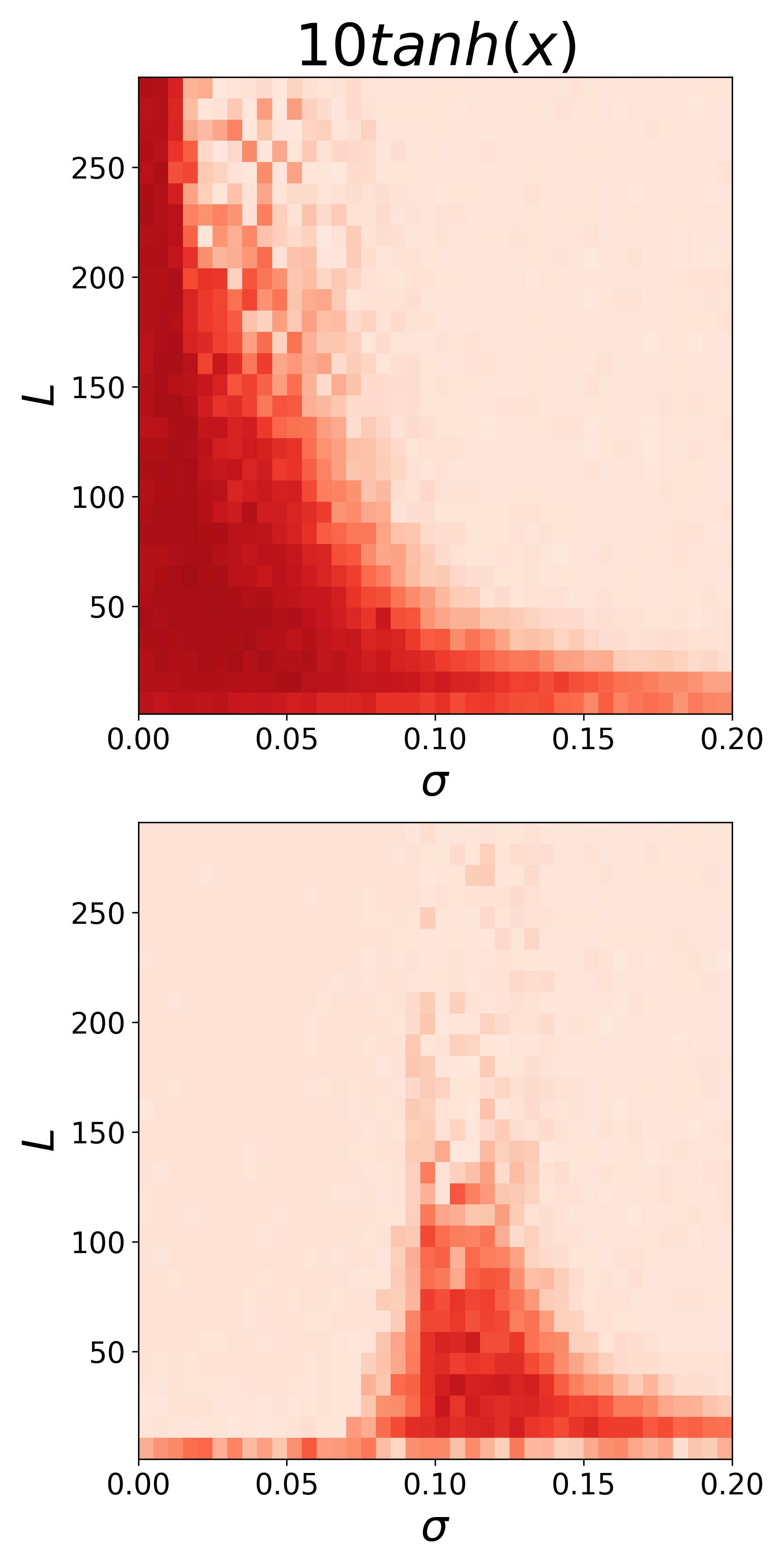}
    % \caption{$(300,300,200,500)$}
\end{subfigure} 
\end{tabular}
\caption{Heatmaps of validation accuracy on MNIST as a function of depth $L$ 
and noise scale $\sigma$ for three activations
$\tanh(x)$, $0.1\tanh(x)$, and $10\tanh(x)$.
Each model is a FFNN with width $128$ trained with Adam.
Each cell shows the mean validation accuracy over 3 runs.
\textbf{(Top row)} Proposed.
\textbf{(Bottom row)} EOC.
Darker colors indicate higher validation accuracy.}

\label{fig:heatmap_tanh}
\end{figure}

\section{Experiments}\label{sec:5}
In this section, we evaluate the proposed initialization method.
Section~\ref{5.1} studies widely used activations with $\omega\approx 1$, comparing our method against Gaussian initializations (Xavier, He, EOC) in terms of data efficiency and the dependence on network width and depth. 
Section~\ref{55} then turns to activations with $\omega \not\approx 1$, where
we analyze performance, the ability to train effectively without
batch normalization, learning-rate sensitivity, and the trainability of
networks across a wide range of activation scales, including experiments on
physicsinformed neural networks~(PINNs). The experimental setting is described in Appendix~\ref{app.nn}.

\subsection{Experiments with $\omega \approx 1$ Activations}\label{5.1}

\begin{table*}[t!]
\centering
\caption{Mean of the best validation accuracies within 50 epochs over 10 independent runs for a 50 layer FFNN (512 units) on MNIST and Fashion MNIST, trained on 100 or 500 sample subsets.}

\label{table3}
\resizebox{13cm}{!}{% 
\begin{tabular}{ll cc cc cc cc cc cc}
\toprule
\multirow{2}{*}{Dataset} & \multirow{2}{*}{Method} & \multicolumn{2}{c}{$\tanh(x)$} & \multicolumn{2}{c}{$\operatorname{erf}(x)$} & \multicolumn{2}{c}{$\arctan(x)$} & \multicolumn{2}{c}{$\operatorname{gd}(x)$} & \multicolumn{2}{c}{$\operatorname{softsign}_2(x)$} & \multicolumn{2}{c}{$\operatorname{softsign}_1(x)+\operatorname{softsign}_2(x)$} \\
\cmidrule(lr){3-14}
 & & 100 & 500 & 100 & 500 & 100 & 500 & 100 & 500 & 100 & 500 & 100 & 500\\
\midrule
\multirow{4}{*}{MNIST}
& Xavier    & 64.00 & 79.85 & 66.15 & 84.63 & 64.58 & 81.30 & 60.87& 84.18 &60.52 &81.38 &32.63&53.97\\
& He   & 41.65 & 72.55 & 21.05 & 48.37 & 51.40 & 77.68 & 42.50 & 72.75 &48.35&76.18 &11.35&10.02\\
% & Orthogonal  & 62.92 & 83.47 & 64.58 & 86.37 & 62.97 & 83.37 & 62.70 & 84.02 &62.35&86.22&47.62&64.32\\
& EOC  & 59.83 & 82.92 & 64.58 & 82.97 & 64.52 & 83.73 & 62.30 & 84.05 &66.57&82.48&59.60&71.57\\
& Proposed  & \textbf{68.23} & \textbf{86.75} & \textbf{66.53} & \textbf{\underline{87.13}} & \textbf{67.63} & \textbf{86.82} & \textbf{66.38} & \textbf{86.23} & \textbf{\underline{69.51}} & \textbf{86.43} &\textbf{65.65}&\textbf{84.02}\\

\midrule
\multirow{4}{*}{FMNIST}
& Xavier    & 67.65 & 74.53 & 68.92 & 76.13 & 66.75 & 73.78 & 67.10 & 72.65 &66.85 &74.47&49.38&69.88\\
& He   & 61.13 & 74.60 & 44.60 & 65.25 & 62.92 & 76.75 & 62.88 & 76.55 &64.97&75.22&11.50&11.70\\
% & Orthogonal  & 64.40 & 78.90 & 67.57 & 78.58 & 63.45 & 78.32 & 64.35 & 78.10 &66.40&78.37 & 68.30&77.20\\
& EOC  & 67.22 & 76.87 & 66.85 & 73.83 & 66.93 & 76.45 & 67.00 & 77.63 &67.08&75.67&65.48&73.07\\
& Proposed  & \textbf{70.67} & \textbf{77.43} & \textbf{70.63} & \textbf{78.17} & \textbf{\underline{71.55}} & \textbf{77.92} & \textbf{68.62} & \textbf{77.80} & \textbf{68.17} & \textbf{\underline{78.33}} &\textbf{67.93}&\textbf{76.43} \\
\bottomrule
\label{table222}
\end{tabular}}
\end{table*}

\paragraph{Dataset Efficiency.}
Table~\ref{table222} presents the best validation accuracy within 50 epochs for a 50 layer FFNN~(512 units), trained on small subsets of size 100 or 500. We compare four initializations across odd–sigmoid activations \(f\in\mathcal{F}\). The proposed initialization attains the top accuracy in all settings, across both datasets and both sample regimes, with the improvements most pronounced at 100 samples. These results indicate that our method is data efficient, achieving higher validation accuracy with limited data. 

\paragraph{Network Size Independence.} 
We assess how independent the proposed initialization is of depth
and width in networks. Figures~\ref{dp1}, \ref{dp2}, \ref{dp3}, and \ref{dp4} show validation accuracy versus depth for FFNNs~(width 64) on MNIST, Fashion MNIST, CIFAR-10, and CIFAR-100, using odd sigmoid activations and four initializations. 
Each panel fixes one activation and shows the best validation
accuracy over 10 epochs for depths $L \in \{20,50,100,150,200\}$. 
Across all datasets, activations, and depths,
the proposed initialization consistently achieves the highest validation
accuracy among all Gaussian initializations. Figures~\ref{wd1} and \ref{wd2} further examine the effect of width in deep networks and show that standard Gaussian initializations struggle when the network is either too narrow or too wide, whereas the
proposed scheme maintains strong performance over a broad range of widths.

\begin{figure}[b!]
\centering 
\begin{tabular}{cccc}
% \begin{subfigure}[b]{0.24\textwidth}
%     \centering
%     \includegraphics[width=\textwidth]{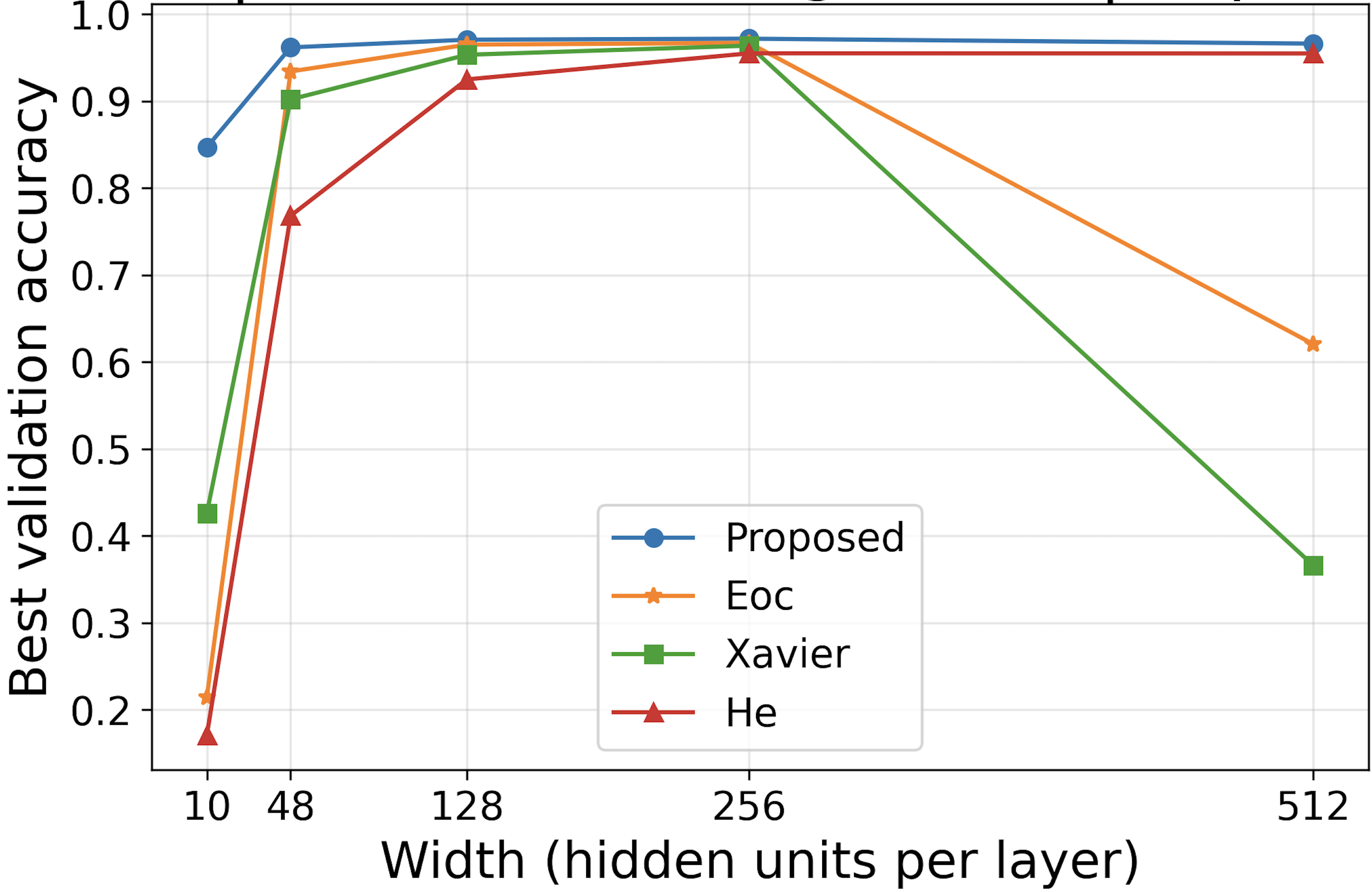}
%     \caption{MNIST}
% \end{subfigure} &
\begin{subfigure}[b]{0.31\textwidth}
    \centering
    \includegraphics[width=\textwidth]{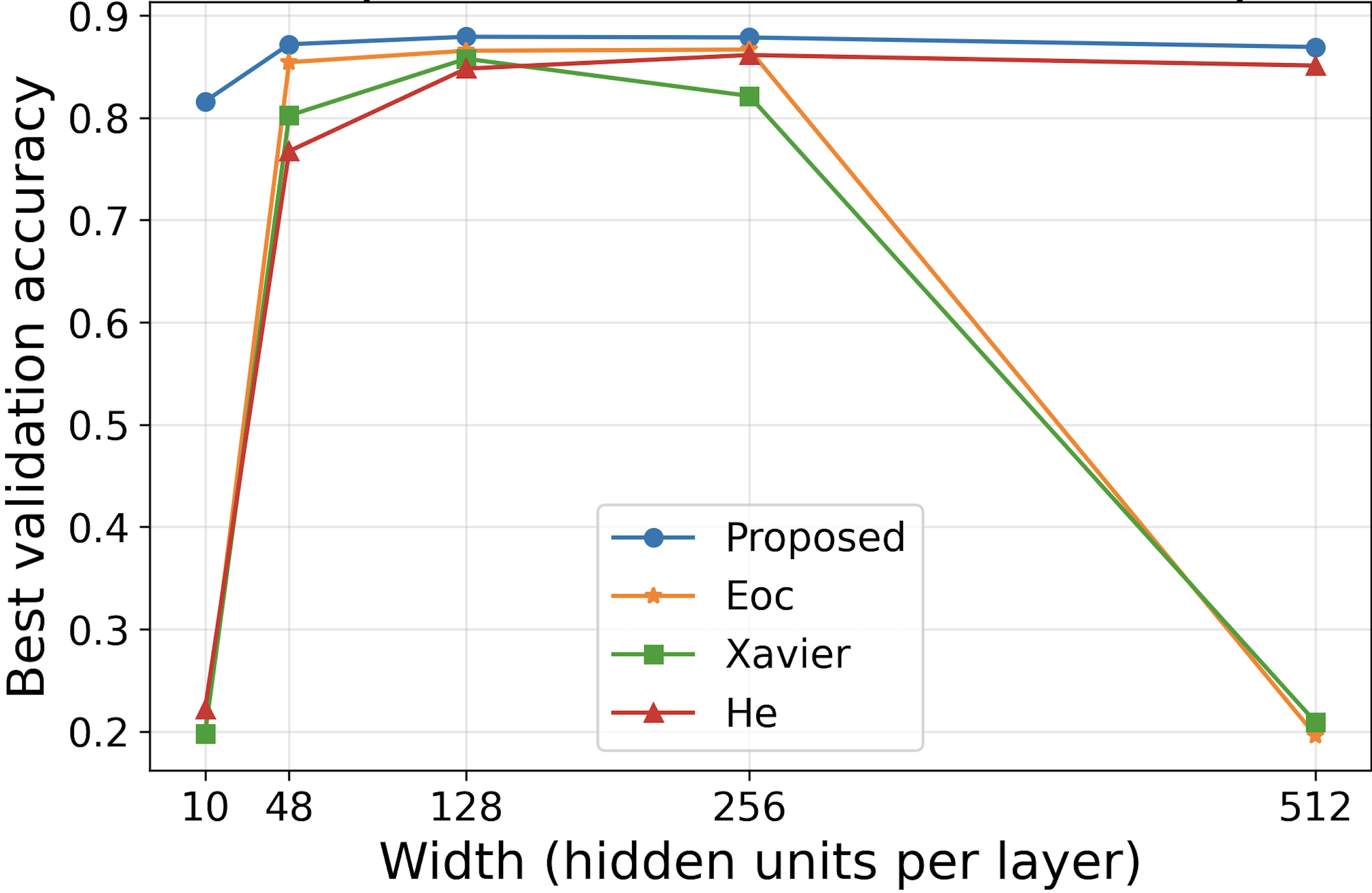}
    \caption{Fashion MNIST}
\end{subfigure} &
\begin{subfigure}[b]{0.31\textwidth}
    \centering
    \includegraphics[width=\textwidth]{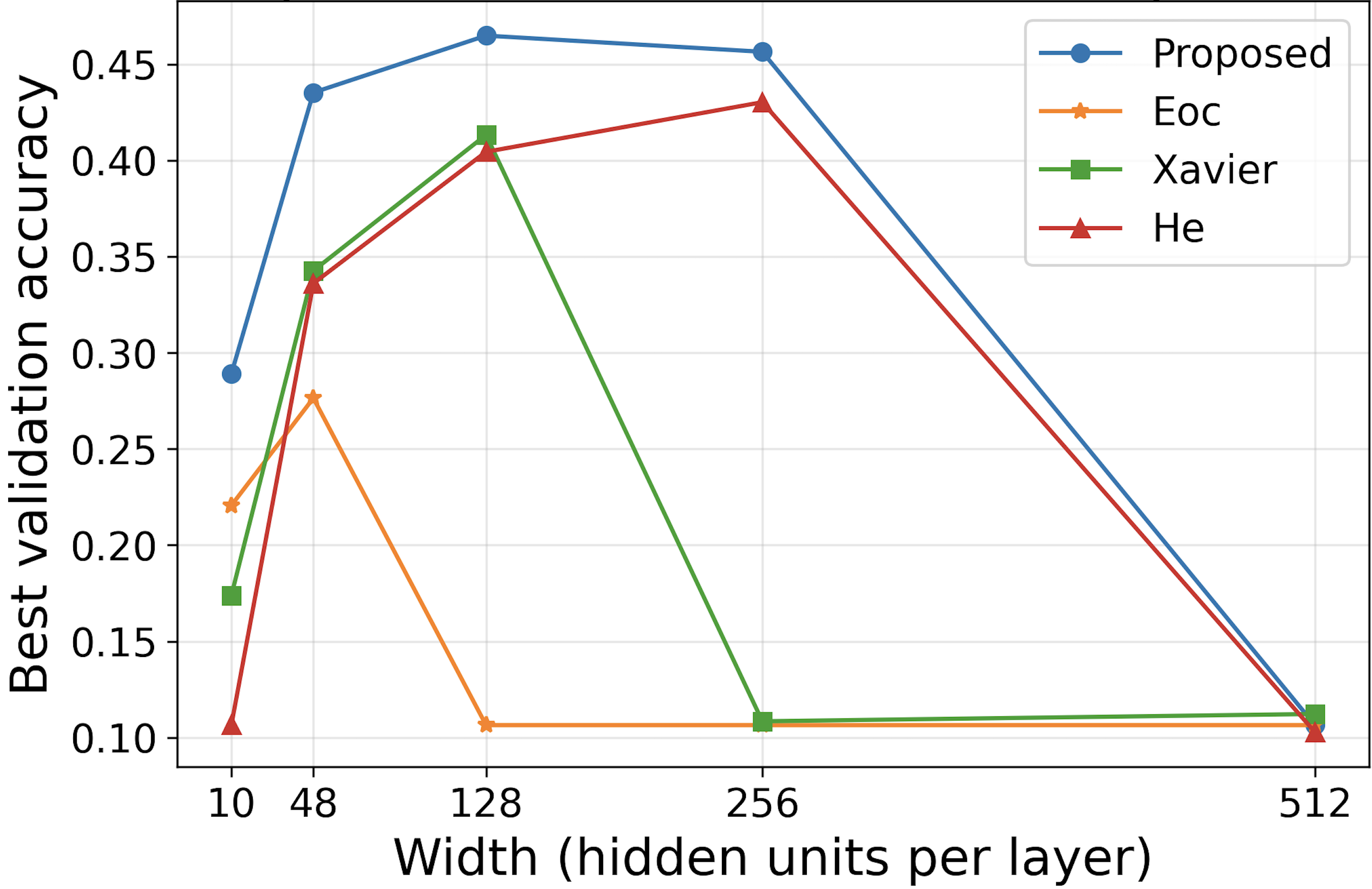}
    \caption{CIFAR 10}
\end{subfigure} &
\begin{subfigure}[b]{0.31\textwidth}
    \centering
    \includegraphics[width=\textwidth]{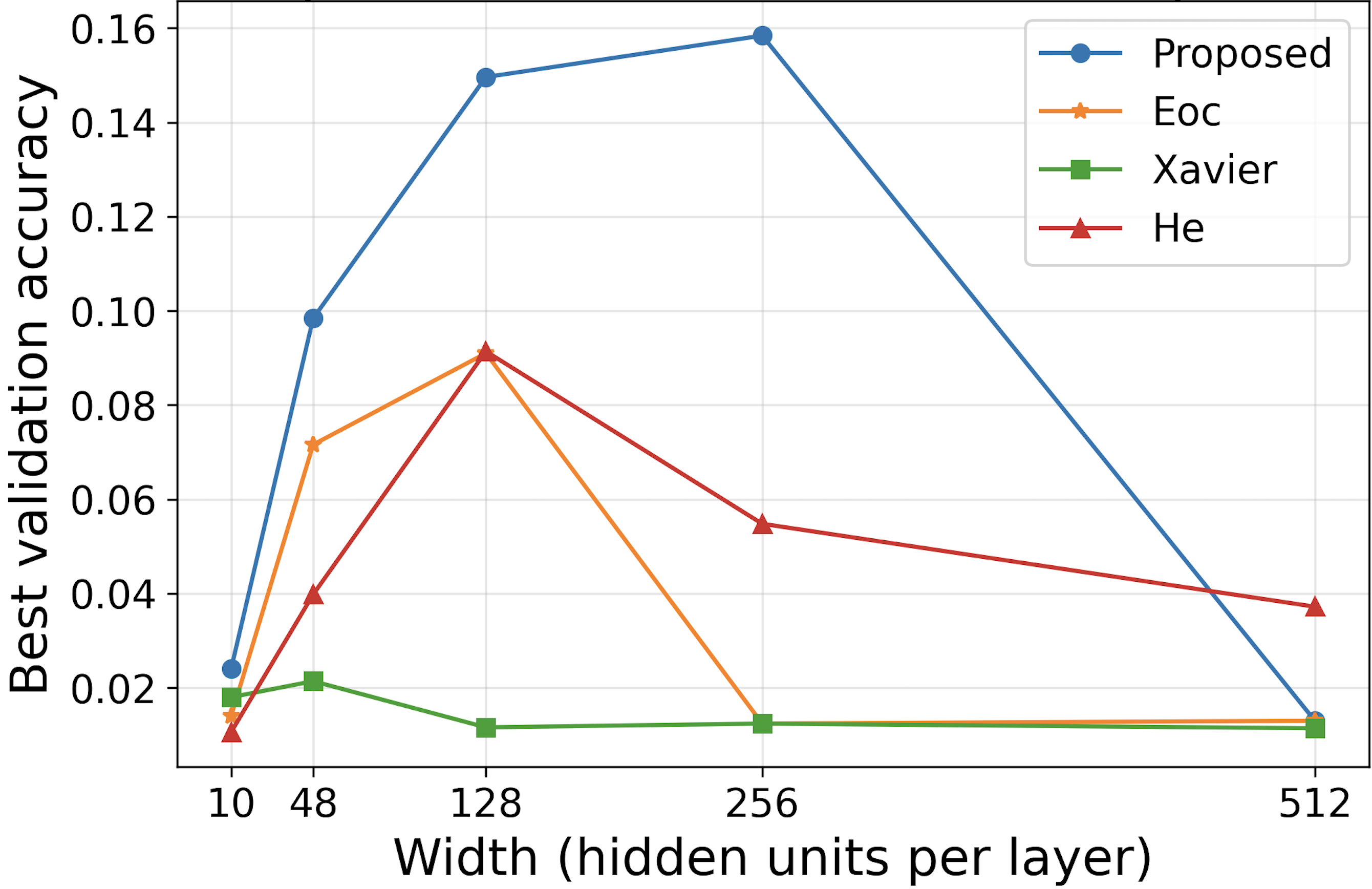}
    \caption{CIFAR 100}
\end{subfigure} 
\end{tabular} 
\caption{Best validation accuracy versus width \(\{10, 48, 128, 150, 200, 512\}\) for
a 100 layer \(\tanh\) FFNN. Each curve
shows the Proposed, EOC, Xavier, and He initialization schemes, and each point
corresponds to the best validation accuracy over 20 training epochs.}
\label{wd1}
\end{figure}

\subsection{Experiments with Activations Far from $\omega \approx 1$}\label{55}
In this section we investigate activations $f,g \in \mathcal{F}$ whose $\omega = 1/f'(0)$ is not close to $1$, including rescaled variants
$\alpha f(x)$, input scaled variants $f(\alpha x)$, and positive linear
combinations $\alpha f(x) + \beta g(x)$ with $\alpha,\beta>0$.

\paragraph{Batch Normalization Free Training.}
For
\begin{equation}\label{eq:combine}
f(x)=\tanh(ax)+\operatorname{erf}(bx)+\frac{x}{1+|cx|}+\operatorname{gd}(dx),
\end{equation}
by Corollary~\ref{cor1} this $f$ is an odd–sigmoid activation. 
Empirical results for various $(a,b,c,d)$ appear in Figure~\ref{batch_norm}. Across all settings, the proposed initialization achieves the highest validation
accuracy even without batch normalization, outperforming Gaussian i.i.d.\
initializations both with and without batch normalization.
We further investigate the effect of applying batch normalization, including
training 800 layer networks, in Figures~\ref{tanh_batch} and \ref{800layers}.

% \begin{table*}[t!]
% \centering
% \caption{Average validation accuracy over 5 epochs for a 20-layer MLP~(512 units) on MNIST and Fashion MNIST, comparing four initializations (Proposed, Xavier, He, Orthogonal). Columns correspond to odd–sigmoid activations \(f\in\mathcal{F}\).}

% \resizebox{11cm}{!}{%
% \begin{tabular}{ll cccccc}
% \toprule
% Dataset & Method & $\tanh(x)$ & $\operatorname{erf}(x)$ & $\arctan(x)$ & $\operatorname{gd}(x)$ & $\operatorname{softsign}_3(x)$ & $\operatorname{softsign}_{1+2}(x)$ \\
% \midrule
% \multirow{4}{*}{MNIST}
% & Xavier      & 94.98 & 96.12 & 94.90 & 94.98 & 95.61 & \textbf{95.62}\\
% & He          & 96.32 & 96.03 & 96.33 & 96.60 & 96.55 & 89.17\\
% % & Orthogonal  & 95.21 & 96.48 & 95.21 & 94.93 & 95.71 & 95.60\\
% & Proposed    & \textbf{96.88} & \underline{\textbf{97.00}} & \textbf{96.87} & \textbf{96.84} & \textbf{96.94} & 95.51\\
% \midrule
% \multirow{4}{*}{FMNIST}
% & Xavier      & 85.35 & 86.17 & 85.31 & 80.24 & 85.66 & 85.81 \\
% & He          & 86.92 & 86.82 & 86.98 & 80.59 & \textbf{86.78} & 81.48 \\
% % & Orthogonal  & 85.99 & 87.10 & 85.70 & 83.01 & 86.00 & \textbf{86.48}\\
% & Proposed    & \textbf{87.01} & \textbf{87.16} & \underline{\textbf{87.20}} & \textbf{83.39} & 86.11 & 86.21 \\
% \bottomrule
% \end{tabular}}
% \label{table333}
% \end{table*}

\paragraph{Learnable Learning Rate.} 
To study optimization stability, we plot learning rate versus validation
accuracy curves on MNIST and Fashion MNIST for 20 layer, width 512 FFNNs with
activations $f(x)=\tfrac{2}{\pi}\arctan(a x)$ and $f(x)=\tanh(\alpha x)$, using
scales $\alpha\in\{10^2,10^1,1,10^{-1},10^{-2},10^{-3}\}$~(Figures~\ref{lrst1}, \ref{lrst2}, \ref{lrst3}, and \ref{lrst4}). 
The proposed method consistently yields a learnable, typically wider LR window across a broad range of \(\alpha\), demonstrating that training remains feasible across diverse \(\omega\) scales. If the learning rate is chosen too large or too small relative to the
$\omega$-scaled band, the first parameter update either destroys the calibrated
noise spread or becomes negligible, so that training effectively fails from the
second pass onward. Motivated by this, we use the practical learning rate range
\[
\eta \in [\,10^{-5}\,\omega,\;10^{-3}\,\omega\,].
\]

\paragraph{Scale Preserving Odd–Sigmoid Activations.}
For $f\in\mathcal{F}$, we investigate whether the proposed and EOC
initializations preserve the activation range of the scaled activation
$\alpha f$~(Figure~\ref{activations_range}). Unlike EOC, the proposed
initialization maintains an activation range proportional to $\alpha$ even as
the network depth increases. Networks with $\alpha\tanh$, $\alpha\arctan$, and
$\alpha\operatorname{softsign_2}$ activations are trained on MNIST and
Fashion MNIST for $\alpha$ ranging from $10^{-2}$ to $10^{9}$, and the proposed
initialization enables stable training across all $\alpha$ scales for every
activation~(Table~\ref{scale_table1}--\ref{scale_table6}). 
Since an $\alpha$ scaled activation directly controls the scale of the output
$y_i$, we also evaluate our initialization on regression problems using
physics informed neural networks~(PINNs). As shown in Figures~\ref{black_sch} and~\ref{burgers}, these results indicate
that scaled odd–sigmoid activations are well suited for regression
tasks, where controlling the output range is important. We defer detailed PINN setups and PDE formulations to Appendix~\ref{scale_pinn}.

\begin{figure}[t!]
\centering 
\includegraphics[width=0.87\textwidth]{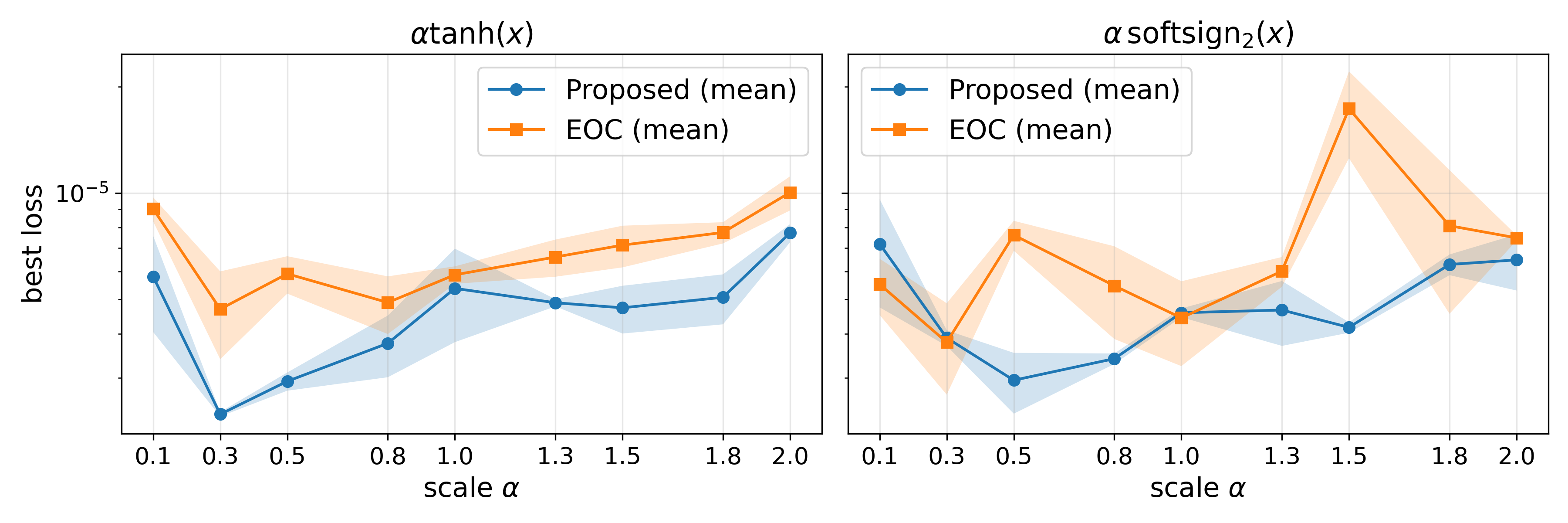}
\caption{
Best PINN loss versus activation scale for the
Black--Scholes PINN (depth $50$, width $64$), comparing the proposed and EOC
initializations. \textbf{(Left)} $a\tanh(x)$ with $a \in \{0.1,0.3,0.5,0.8,1.0,1.3,1.5,1.8,2.0\}$.
\textbf{(Right)} $b\,\operatorname{softsign}_2(x)$ with the same set of scales.
}

\label{black_sch}
\end{figure}

\section{Conclusion}
We introduced an activation aware initialization scheme for FFNNs with odd–sigmoid activations that does not rely on infinite or very wide width assumptions. 
Unlike standard Gaussian i.i.d.\ initializations, the proposed method is more robust to variance choice and maintains stable signal and gradient propagation even in very deep and relatively narrow networks. 
In physics informed neural networks, we further observed that appropriately scaled odd–sigmoid activations, combined with our initialization, can achieve lower PINN losses than standard choices. These results suggest that, with a suitable initialization, scaled odd–sigmoid activations can be used as practical design knobs to improve performance beyond what is possible with conventional Gaussian initializations.

\section*{Ethics Statement}
This research adheres to the ICLR Code of Ethics. 
A large language model was used only to refine the writing and assist in preparing visualizations; all core ideas, methods, and results were developed independently by the authors. We adhere to the highest standards of research integrity and transparency as set out in the ICLR guidelines.

\bibliography{iclr2025_conference}
\bibliographystyle{iclr2025_conference}

\clearpage
\appendix
\section*{Supplementary Material}
The supplementary material is structured as follows.
\begin{itemize}
  % \item Appendix A contains additional background (notation, assumptions, etc.).
  \item Appendix A provides the theoretical results and their proofs.
  \item Appendix B contains additional simulation results without neural networks.
  \item Appendix C reports extra experimental details and results with neural networks.
\end{itemize}

\section{Theoretical Results}\label{Appendix A}
In this section, we provide proofs for the statements presented in Section~\ref{4}. 
\subsection{Theoretical Results for Section~\ref{4.1}}\label{appendix a1}
\begin{lemma}\label{lemma:property1}
If $f\in\mathcal{F}$, then 
the following holds.
\begin{itemize}
    \item[(i)] $f(0)=0$.
    \item[(ii)] $0=\arg\max |f'(x)|$.
    \item[(iii)] $\displaystyle\lim_{x\to\infty} f'(x)=0$.
\end{itemize}
\end{lemma}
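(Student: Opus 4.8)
The plan is to derive all three statements directly from the defining properties of $\mathcal{F}$ in Definition~\ref{def:odd-sig}, with each part being a short deduction rather than anything substantial.

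For (i), I would simply evaluate the odd-symmetry identity at $x=0$: $f(-0)=-f(0)$ gives $f(0)=-f(0)$, hence $f(0)=0$.

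For (ii), the key preliminary observation is that $f'$ is an \emph{even} function. Since $f\in C^1(\mathbb{R})$, differentiating $f(-x)=-f(x)$ with the chain rule yields $-f'(-x)=-f'(x)$, i.e.\ $f'(-x)=f'(x)$. Combined with strict positivity ($f'>0$ everywhere), this gives $|f'(x)|=f'(x)=f'(|x|)$. The slope-decay property then implies $f'(|x|)<f'(0)$ whenever $|x|>0$, so $x=0$ is the unique maximizer of $|f'|$, with maximum value $f'(0)=1/\omega$.

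For (iii), I would use monotone convergence together with a contradiction against boundedness. Since $f'$ is strictly decreasing on $[0,\infty)$ and bounded below by $0$, the limit $\ell:=\lim_{x\to\infty}f'(x)$ exists with $\ell\ge 0$. If $\ell>0$, then $f'(t)\ge \ell$ for every $t\ge 0$ (a decreasing function lies above its limit), so by the fundamental theorem of calculus $f(x)=f(0)+\int_0^x f'(t)\,dt\ge \ell x$, which is unbounded as $x\to\infty$, contradicting $\sup_{x}|f(x)|<\infty$. Hence $\ell=0$. No step here presents a real obstacle; the only points needing a little care are the passage from "$f$ odd and $C^1$" to "$f'$ even" and the integral estimate in (iii), both standard.
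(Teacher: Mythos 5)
Your proposal is correct and follows essentially the same route as the paper: oddness at $x=0$ for (i), evenness of $f'$ plus slope decay for (ii), and monotone convergence followed by an integral lower bound contradicting boundedness for (iii). The only cosmetic difference is that you bound $f'(t)\ge\ell$ on all of $[0,\infty)$ (a decreasing function stays above its limit), whereas the paper uses $f'(x)\ge\ell/2$ beyond some threshold $R$; both yield the same contradiction.
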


\begin{proof}
(i) It is trivial that odd symmetry implies $f(0)=0$. 
(ii) Since $f'$ is an even function, by Definition~\ref{def:odd-sig}(v)
$f'(x)$ has the maximum value at $x=0$.
(iii) Since $f'$ is strictly decreasing on $[0,\infty)$ and $f'(x)>0$ for all $x\in \mathbb{R}$,
the Monotone Convergence Theorem for real functions implies that 
$\ell := \lim_{x \to \infty} f'(x)$
exists for some $\ell \in [0,\infty)$.
%$\ell:=\lim_{x\to\infty} f'(x)$ exists in $[0,\infty)$. 
Suppose that  $\ell>0$.
Then there exists $R > 0$ such that 
\[
f'(x) \ge \frac{\ell}{2} \quad \text{for all } x \ge R.
\]
% then for some
% $R$ and all $x\ge R$ we have $f'(x)\ge \ell/2$, and hence for $x>R$,
Integrating from $R$ to $x$, we obtain
$$f(x)-f(R)=\int_R^x f'(t)\,dt \;\ge\; \frac{\ell}{2}(x-R) \quad \text{for all }x>R.$$
Hence $f(x) \to \infty$ as $x \to \infty$, which contradicts the boundedness of $f$.
Therefore $\ell=0$.
\end{proof}

Lemma~\ref{lemma:property1} summarizes the basic regularity and saturation
properties of odd-sigmoid activations in $\mathcal{F}$. In particular, it shows
that every $f\in\mathcal{F}$ has a unique global slope maximum at the origin and
becomes arbitrarily flat in the tails. These simple but structural features will
be crucial for understanding how the gain parameter $a$ reshapes the fixed-point
structure of the scalar map $x\mapsto f(a x)$.

\medskip

\paragraph{Proposition~\ref{prop1}}
Suppose $f\in\mathcal{F}$ with $\omega:=1/f'(0)$, and for a fixed $a>0$ define $\phi_a(x):=f(ax)$. Then
\begin{itemize}
  \item[(i)] If \( 0<a\le \omega\), then $f(ax)$ has a unique fixed point $x^*=0$.  
  \item[(ii)] If \( a>\omega\), then $f(ax)$ has three distinct fixed points: $x^* = -\xi_a,\;0,\;\xi_a$ such that  $\xi_a > 0$.
\end{itemize}

\begin{proof}
For $a>0$, consider $g(x,a):=f(ax)-x$.
We have $g(0,a)=0$ and $g'(x,a)=a\,f'(ax)-1$.

Case (i): \(0<a\le \omega\).
For $x>0$ we have $ax>0$, and $f'(ax)<f'(0)$; hence
\[
g'(x,a)=a\,f'(ax)-1 < a\,f'(0)-1 \;\le\; 0,
\]
for all $x>0$. Thus $g(\cdot,a)$ is strictly decreasing on
$(0,\infty)$, $g(0,a)=0$, and $\lim_{x\to\infty} g(x,a)=L-x=-\infty$,
so $g(x,a)<0$ for all $x>0$ and there is no positive root.
Since $g(\cdot,a)$ is odd, there is no negative root either. Hence $x=0$ is the unique solution.

Case (ii): $a>\omega$.
Then $g'(0,a)=a f'(0)-1>0$. $g'(\cdot,a)$ is strictly decreasing on $[0,\infty)$,
and by the Lemma~\ref{lemma:property1}
\[
\lim_{x\to\infty} g'(x,a)=a\lim_{x\to\infty} f'(ax)-1=-1<0.
\]
By the intermediate value theorem, there exists a unique $\hat{x}>0$ with $g'(\hat{x},a)=0$.
Hence $g(\cdot,a)$ is strictly increasing on $[0,\hat{x}]$ and strictly decreasing on
$[\hat{x},\infty)$. Since $g(0,a)=0$ and $g'(0,a)>0$, we have $g(x,a)>0$ for all
$0<x\le \hat{x}$. Using (3), $\lim_{x\to\infty} g(x,a)=-\infty$; because $g$ is strictly
decreasing on $[\hat{x},\infty)$ and continuous, there exists a unique $\xi_a>\hat{x}$ with
$g(\xi_a,a)=0$. Thus, there is exactly one positive, nonzero root. Oddness of $g$
yields the symmetric negative root $-\xi_a$, so the full set of real solutions is
$\{-\xi_a,\,0,\,\xi_a\}$.
\end{proof}

Proposition~\ref{prop1} provides a precise characterization of how the fixed
points of the scalar map $x\mapsto f(a x)$ bifurcate as the gain $a$ crosses
the critical value $\omega = 1/f'(0)$. For $a\le\omega$ the origin is the only
fixed point, while for $a>\omega$ a symmetric nonzero pair $\pm\xi_a$ emerges,
exhibiting a classical pitchfork structure. We next lift this
static picture to the dynamical setting, showing that the iterates of
$x_{n+1} = f(a x_n)$ converge to the corresponding fixed point for every
positive initial condition.

\medskip

\paragraph{Theorem~\ref{thm1}}
Suppose $f\in\mathcal{F}$ with $\omega:=1/f'(0)$, and for a fixed $a>0$ define 
\[
 x_0>0, \qquad x_{n+1}=\phi_a(x_n),\qquad n=0,1,2,\dots.
\]
Then the sequence $\{x_n\}$ converges for every $x_0>0$.
Furthermore, 
\begin{itemize}
    \item[$(1)$] if $0 < a \leq \omega$, then $x_n\to 0$ as $n\to \infty$.
    \item[$(2)$] if $a > \omega$, then $x_n\to \xi_a$ as $n\to \infty$.
\end{itemize}

\begin{proof}
(1) Since $f$ is odd and $f'$ is strictly decreasing on $[0,\infty)$, we have $0<f'(x)<f'(0)$ for all $x\neq 0$; hence, for any $a\in\bigl(0,\omega\bigr)$ and any $x_n>0$, it follows that $x_{n+1}=f(a x_n)<x_n$ for all $n\in\mathbb{N}$. By the monotone convergence theorem, it converges to the fixed point $x^*=0$. \\
(2)  Let $x_0<\xi_a$. Since $\phi'(x)$ is decreasing for $x\geq 0$, with $\xi_a$ is the unique fixed point for $x>0$, it holds that $x_n<x_{n+1} <\xi_a$ for all $n\in\mathbb{N}$. 
Thus, by the monotone convergence theorem, the sequence converges to the fixed point $\xi_a$. 
The proof is similar when $x_0 >\xi_a$. By the monotone convergence theorem, the sequence also converges to the fixed point $\xi_a$.
\end{proof}

% \subsection{Proof of Corollary~\ref{cor1}}\label{Appendix A.3}
\medskip
\paragraph{Corollary~\ref{cor1}}
Let $f_1,f_2\in\mathcal{F}$ and let $c_1,c_2\ge0$ with $(c_1,c_2)\neq(0,0)$. If $g=c_1 f_1+c_2 f_2$, then $g\in\mathcal{F}$.
Furthermore, it holds that 
\[
\frac{1}{\omega_g}=\frac{c_1}{\omega_{f_1}}+\frac{c_2}{\omega_{f_2}}.%=\frac{1}{h'(0)}=\frac{1}{\alpha f'(0)+\beta g'(0)}.
\]
\begin{proof}
Let $g = c_1 f_1 + c_2 f_2$. Since $c_1,c_2\ge0$, the linear combination
preserves oddness and bounded saturation by linearity. Strict monotonicity on
$\mathbb{R}$ follows from $g'(x) = c_1 f_1'(x) + c_2 f_2'(x) > 0$, and slope
decay on $[0,\infty)$ is preserved because a positive linear combination of
strictly decreasing functions is strictly decreasing. Thus $g\in\mathcal{F}$.
Evaluating at the origin gives
\[
  g'(0) = c_1 f_1'(0) + c_2 f_2'(0)
        = \frac{c_1}{\omega_{f_1}} + \frac{c_2}{\omega_{f_2}},
\]
so by definition $1/\omega_g = g'(0)$, which yields the claimed identity.
\end{proof}

In practice this means that we can build
richer odd-sigmoid activations by mixing simpler ones without losing the
pitchfork structure described above. We now move from the constant–gain setting to the more general case where the layerwise gains $(a_n)$ are allowed to vary with depth.

\clearpage 

\begin{proposition}\label{prop2}
Let$f\in \mathcal{F}$ and $\{a_n\}_{n=1}^\infty$ be a positive real sequence, i.e., $a_n>0$ for all $n\in \mathbb{N}$, such that only finitely many elements are greater than $\omega=1/f'(0)$. 
For a positive sequence $\{a_n\}_{n\ge1}$, set
\[
\Phi^m \;:=\; \phi_{a_m}\circ \phi_{a_{m-1}}\circ \cdots \circ \phi_{a_1}.
\]
Then for any $x\in \mathbb{R}$ %$\Phi^a_m(x) \to 0$ as $m \to \infty$ 
$$
\lim_{m \to \infty} \Phi^m(x) = 0.%\lim_{n \to \infty} \Phi^a_n(x_2) = 0.
$$
\end{proposition}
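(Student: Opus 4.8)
The plan is to reduce the variable-gain iteration to a finite composition of ``bad'' maps followed by an infinite tail of ``good'' contracting maps, and then exploit the contraction established in Theorem~\ref{thm1}(1). First I would use the hypothesis that only finitely many $a_n$ exceed $\omega$: let $N$ be an index beyond which $a_n\le\omega$ for all $n>N$. Since each $\phi_{a_n}$ is odd, continuous, and maps $\mathbb{R}$ into the bounded range of $f$, the point $y_N:=\Phi_N(x)$ is a well-defined real number, and by symmetry it suffices to treat $y_N\ge 0$ (the case $y_N\le 0$ follows by oddness, and $y_N=0$ is immediate since $0$ is a fixed point of every $\phi_{a_n}$).

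Next I would analyze the tail iteration $z_0:=y_N\ge 0$, $z_{k+1}=\phi_{a_{N+k+1}}(z_k)$ where every gain satisfies $a_{N+k+1}\le\omega$. The key monotonicity fact, already used in the proof of Theorem~\ref{thm1}(1), is that for $0<a\le\omega$ and $x>0$ one has $0<\phi_a(x)=f(ax)<x$ (because $f(ax)/x<f'(0)\cdot a\le 1$ by slope decay and oddness, using $f(t)<f'(0)t$ for $t>0$); also $\phi_a$ preserves nonnegativity. Hence $\{z_k\}$ is nonincreasing and bounded below by $0$, so it converges to some limit $z_\infty\ge 0$.

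The remaining step is to show $z_\infty=0$. Here the obstacle is that the gains $a_{N+k}$ vary, so I cannot directly invoke the single-map fixed-point argument; a gain $a_n$ could approach $0$, or approach $\omega$, and one must rule out a spurious positive limit. I would argue by contradiction: if $z_\infty>0$, then for large $k$ all $z_k$ lie in a compact interval $[z_\infty,z_0]$ bounded away from $0$; on this interval, and for any gain $a\in(0,\omega]$, the ratio $\phi_a(x)/x = f(ax)/x$ is bounded above by $f(\omega z_\infty)/z_\infty<1$ when $a=\omega$ — but for smaller $a$ this bound could degrade toward $1$. To handle this cleanly I would instead use that $x - \phi_a(x) = x - f(ax) \ge x - f(\omega x)$ for all $a\le \omega$ (since $f$ is increasing), so $z_k-z_{k+1}\ge z_k - f(\omega z_k) \ge z_\infty - f(\omega z_\infty) =: c > 0$ (strict positivity of $c$ because $0$ is the unique positive-or-zero fixed point of $f(\omega\,\cdot)$ by Proposition~\ref{prop1}(i), so $f(\omega z_\infty)<z_\infty$). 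Then $z_0 - z_k = \sum_{j=0}^{k-1}(z_j-z_{j+1}) \ge kc \to\infty$, contradicting $z_k\ge 0$. Therefore $z_\infty=0$, i.e.\ $\Phi_m(x)\to 0$, completing the proof. The main subtlety, as noted, is that naive per-step contraction factors do not work uniformly; routing the argument through the $a=\omega$ comparison map $f(\omega\,\cdot)$ and its uniform gap $c$ is what makes it go through.
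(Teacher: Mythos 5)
Your argument is correct, and it reaches the conclusion by a genuinely different route from the paper's. Both proofs begin by discarding the finite prefix of gains exceeding $\omega$ and reducing, via oddness, to a nonnegative starting value for the tail. From there the paper sandwiches the tail between the two constant-gain chains obtained by replacing every $a_n$ with $0$ (below) and with $\omega$ (above) for $n>N$, invokes the gain-monotonicity of full compositions (cf.\ Lemma~\ref{lem:mono}) to justify the sandwich, and then applies Theorem~\ref{thm1}(1) plus the squeeze theorem. You instead work directly with the variable-gain tail: you show it is nonincreasing (via $f(az)<z$ for $0<a\le\omega$, $z>0$, which you justify correctly from slope decay), hence convergent, and then rule out a positive limit by a uniform-decrement contradiction built on the single-step comparison $f(az)\le f(\omega z)$. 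The paper's route is shorter because it reuses Theorem~\ref{thm1}; yours is more self-contained and only needs a one-step comparison rather than monotonicity of entire compositions. One step you should make explicit: the inequality $z_k-f(\omega z_k)\ \ge\ z_\infty-f(\omega z_\infty)$ requires that $g(x):=x-f(\omega x)$ be nondecreasing on $(0,\infty)$; this does hold, since $g'(x)=1-\omega f'(\omega x)\ge 1-\omega f'(0)=0$ by slope decay, but it is not free (alternatively, take $c:=\min_{[z_\infty,z_0]}g>0$ by continuity and compactness, which you gesture at but do not invoke). A slightly shorter finish is also available to you: pass to the limit in $z_{k+1}\le f(\omega z_k)$ to get $z_\infty\le f(\omega z_\infty)$, contradicting $f(\omega z_\infty)<z_\infty$ whenever $z_\infty>0$.
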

\begin{proof}
Let $N:=\max\{n:\ a_n>\omega\}$ (take $N=0$ if the set is empty) and define
\[
b_n=\begin{cases}a_n,& n\le N,\\ 0,& n>N,\end{cases}
\qquad
c_n=\begin{cases}a_n,& n\le N,\\ \omega,& n>N.\end{cases}
\]
Let $\hat\Phi^m:=\phi_{b_m}\circ\cdots\circ\phi_{b_1}$ and
$\tilde\Phi^m:=\phi_{c_m}\circ\cdots\circ\phi_{c_1}$.
For $x\ge0$ and $n>N$, by definition of $\mathcal{F}$, we obtain
\[
\hat\Phi^m(x)\ \le\ \Phi^m(x)\ \le\ \tilde\Phi^m(x).
\]
Oddness yields the same in absolute value for all $x\in\mathbb R$:
\[
|\hat\Phi^m(x)|\ \le\ |\Phi^m(x)|\ \le\ |\tilde\Phi^m(x)|.
\]
By Proposition~\ref{thm1}, $\hat\Phi^m(x)\to 0$ and $\tilde\Phi^m(x)\to 0$. The squeeze theorem gives $\Phi^m(x)\to 0$.
\end{proof}

\medskip

\begin{corollary}\label{cor2}
Let $\epsilon>0$ be given and set $\omega:=1/f'(0)$.
Suppose that 
$\{a_n\}_{n=1}^\infty$ be a positive real sequence such that only finitely many elements are lower than $\omega+\epsilon$. Then for any $x\in\mathbb{R}\setminus \{0\}$
\[
\liminf_{m\to\infty}\,|\Phi^m(x)|\ \ge\ \xi_{\omega+\epsilon}.
\]
\end{corollary}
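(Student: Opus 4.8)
The plan is to run a one-sided analogue of the argument behind Proposition~\ref{prop2}: rather than squeezing $\Phi_m$ between two iterations that both drive the signal to $0$, I would dominate $\Phi_m$ from \emph{below} by an autonomous iteration with the supercritical gain $\alpha:=\omega+\epsilon>\omega$, whose positive fixed point is exactly $\xi_{\omega+\epsilon}$ (it exists and is positive by Proposition~\ref{prop1}(ii)), and then invoke Theorem~\ref{thm1}(2).

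First I would reduce to the case $x>0$. Each $\phi_a$ with $a>0$ is odd and strictly increasing, hence sign-preserving, so $\Phi_m$ is odd and $\Phi_m(t)>0$ for every $t>0$; therefore $|\Phi_m(x)|=\Phi_m(|x|)$ for $x\neq0$, and it is enough to show $\liminf_m\Phi_m(x)\ge\xi_{\omega+\epsilon}$ when $x>0$. Next, set $N:=\max\{n:\ a_n<\alpha\}$ (with $N:=0$ if the set is empty); the hypothesis gives $N<\infty$ and $a_j\ge\alpha$ for all $j>N$. Fixing $x>0$ and writing $y:=\Phi_N(x)>0$ (with $\Phi_0:=\mathrm{id}$), for $m>N$ I would factor
\[
\Phi_m(x)=\bigl(\phi_{a_m}\circ\cdots\circ\phi_{a_{N+1}}\bigr)(y),
\]
and use that $f$ strictly increasing implies $\phi_{a_j}(t)=f(a_jt)\ge f(\alpha t)=\phi_\alpha(t)$ for $t>0$, while each $\phi_{a_j}$ is strictly increasing on $(0,\infty)$. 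A short induction on the number of factors --- pushing the pointwise inequality through the nondecreasing outer map at each step, the arguments staying positive by sign preservation --- then yields
\[
\Phi_m(x)\ \ge\ \underbrace{\phi_\alpha\circ\cdots\circ\phi_\alpha}_{m-N\ \text{times}}(y)\ =\ \Phi^{\alpha}_{m-N}(y)\qquad(m>N).
\]

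To finish, since $\alpha>\omega$ and $y>0$, Theorem~\ref{thm1}(2) gives $\Phi^{\alpha}_{k}(y)\to\xi_\alpha=\xi_{\omega+\epsilon}$, so taking $\liminf_{m\to\infty}$ in the last display and recalling $|\Phi_m(x)|=\Phi_m(|x|)$ delivers the claim. The main (and really only) point needing care is the monotone-composition comparison: one must carry the induction so that each intermediate value lies in $(0,\infty)$ and the inequality is propagated by a nondecreasing map. I expect no genuine obstacle beyond this bookkeeping --- indeed the dominating iteration actually \emph{converges}, so the resulting bound is a true limit of lower bounds, not merely a $\liminf$ --- because every other ingredient is handed to us directly by Proposition~\ref{prop1} and Theorem~\ref{thm1}.
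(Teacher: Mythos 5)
Your proposal is correct and follows essentially the same route as the paper's proof: both replace the gains after the finite exceptional index $N$ by the constant supercritical gain $\omega+\epsilon$, compare the two compositions via monotonicity of $f$ (the paper's Lemma~\ref{lem:mono}), and conclude from the convergence of the autonomous iteration to $\xi_{\omega+\epsilon}$ in Theorem~\ref{thm1}(2). Your write-up is in fact more explicit than the paper's terse argument about the sign-preservation/oddness reduction and the inductive propagation of the inequality, but there is no substantive difference in approach.
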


\begin{proof}
Let $N:=\max\{n:\ a_n<\omega+\epsilon\}$ (take $N=0$ if the set is empty) and define
\[
b_n=\begin{cases}
a_n,& n\le N,\\[2pt]
\omega+\epsilon,& n>N.
\end{cases}
\qquad
\hat{\Phi}_m:=\phi_{b_m}\circ\cdots\circ\phi_{b_1},\qquad
\Phi^m:=\phi_{a_m}\circ\cdots\circ\phi_{a_1},
\]
where $\phi_a(x):=f(ax)$.
By definition of $\mathcal{F}$, 
\[
|\hat{\Phi}^m(x)|\ \le\ |\Phi^m(x)|\qquad(\forall x\in\mathbb R,\ \forall m).
\]
Taking $\liminf$ in the inequality yields
\[
\liminf_{m\to\infty} |\Phi^m(x)|\ \ge\ \xi_{\omega+\epsilon}.
\]
\end{proof}

Proposition~\ref{prop2} and Corollary~\ref{cor2} together describe two opposite extremes of how layerwise gains affect the one-dimensional dynamics. Roughly speaking, Proposition~\ref{prop2} says that if, after some finite depth, all gains $a_n$ stay below the critical value $\omega$, then the composed map
$\Phi^m$ always drives the signal back to zero, no matter what happened in the
earlier layers. In contrast, Corollary~\ref{cor2} shows that if the gains are
eventually bounded away from $\omega$ by a fixed margin $\varepsilon>0$, then
the compositions cannot collapse to zero: for any nonzero input, the iterates
stay at least as large (in absolute value) as the positive fixed point $\xi_{\omega+\varepsilon}$.

\clearpage
\subsection{Theoretical Results for Section~\ref{4.2}}

\paragraph{Lemma~\ref{lem1}}
Using the elementwise formulation in \eqref{eq:prelim-signal} and employing the proposed weight initialization, fix an arbitrary layer $\ell$ and index $i$ such that $x_i^\ell\neq 0$. Then, conditionally on $x^\ell$,
\[
a_i^{\ell+1}\ \sim\ \mathcal N\!\Biggl(\,\omega,\ \frac{\sigma_z^2}{N_\ell}\Bigl(1+\sum_{j\ne i}\Bigl(\frac{x_j^\ell}{x_i^\ell}\Bigr)^2\Bigr)\Biggr).
\]
Moreover, if $|x_j^\ell|\le M$ for all $j$ and $|x_i^\ell|\ge \varepsilon>0$, then
\[
  \frac{\sigma_z^2}{N_\ell}
  \;\le\;
  \Var\!\bigl(a_i^{\ell+1}\,\big|\,x^\ell\bigr)
  \;\le\;
  \sigma_z^2\,\frac{M^2}{\varepsilon^2}.
\]

\begin{proof}
From \eqref{eq:prelim-signal} and the proposed initialization, we write
\[
  W^{\ell+1} \;=\; D^{\ell+1} + Z^{\ell+1},
\]
where the diagonal of $D^{\ell+1}$ equals $\omega$ and $Z^{\ell+1}$ has independent entries
\(
  (Z^{\ell+1})_{ij} \sim \mathcal N(0,\sigma_z^2/N_\ell).
\)
The pre-activation at coordinate $i$ reads
\[
  s_i^{\ell+1}
  \;=\;
  \sum_{j=1}^{N_\ell}\bigl((D^{\ell+1})_{ij} + (Z^{\ell+1})_{ij}\bigr)\,x_j^\ell
  \;=\;
  \omega\,x_i^\ell + (Z^{\ell+1})_{ii}\,x_i^\ell
  + \sum_{j\ne i}(Z^{\ell+1})_{ij}\,x_j^\ell.
\]
On the event $x_i^\ell\neq 0$, we define the effective gain
\[
  a_i^{\ell+1} := \frac{s_i^{\ell+1}}{x_i^\ell}
  \;=\;
  \omega + (Z^{\ell+1})_{ii}
  + \sum_{j\ne i}(Z^{\ell+1})_{ij}\,\frac{x_j^\ell}{x_i^\ell}.
\]
Conditionally on $x^\ell$, the coefficients $\{x_j^\ell/x_i^\ell\}$ are deterministic, whereas the random variables $\{(Z^{\ell+1})_{ij}\}_{j=1}^{N_\ell}$ are independent, centered Gaussians with variance $\sigma_z^2/N_\ell$.
Therefore $a_i^{\ell+1}\mid x^\ell$ is a linear combination of independent Gaussians, hence Gaussian:
\[
  a_i^{\ell+1}\,\big|\,x^\ell \;\sim\;
  \mathcal{N}\!\Biggl(
    \omega,\;
    \frac{\sigma_z^2}{N_\ell}
    \Bigl(
      1 + \sum_{j\neq i}\Bigl(\frac{x_j^\ell}{x_i^\ell}\Bigr)^2
    \Bigr)
  \Biggr).
\]
This yields the conditional variance
\[
  \Var\!\bigl(a_i^{\ell+1}\mid x^\ell\bigr)
  = \frac{\sigma_z^2}{N_\ell}\Bigl(
      1 + \sum_{j\neq i}\Bigl(\frac{x_j^\ell}{x_i^\ell}\Bigr)^2
    \Bigr).
\]
Since the summation term is nonnegative, we immediately obtain the conditional lower bound
\(
  \Var(a_i^{\ell+1}\mid x^\ell) \ge \sigma_z^2/N_\ell
\)
and hence the unconditional lower bound
\(
  \Var(a_i^{\ell+1}) \ge \sigma_z^2/N_\ell.
\)

For the upper bound, note that
\[
  1 + \sum_{j\neq i}\Bigl(\frac{x_j^\ell}{x_i^\ell}\Bigr)^2
  \;=\;
  \frac{(x_i^\ell)^2 + \sum_{j\neq i}(x_j^\ell)^2}{(x_i^\ell)^2}
  \;=\;
  \frac{\|x^\ell\|_2^2}{(x_i^\ell)^2},
\]
whence
\[
  \Var\!\bigl(a_i^{\ell+1}\mid x^\ell\bigr)
  = \frac{\sigma_z^2}{N_\ell}\,\frac{\|x^\ell\|_2^2}{(x_i^\ell)^2}.
\]
If $|x_j^\ell|\le M$ for all $j$, then
\(
  \|x^\ell\|_2^2 = \sum_{j=1}^{N_\ell}(x_j^\ell)^2 \le N_\ell M^2,
\)
and if $|x_i^\ell|\ge\varepsilon>0$, we obtain
\[
  \frac{\|x^\ell\|_2^2}{(x_i^\ell)^2}
  \;\le\;
  \frac{N_\ell M^2}{\varepsilon^2},
\]
which implies
\[
  \Var\!\bigl(a_i^{\ell+1}\mid x^\ell\bigr)
  \;\le\;
  \frac{\sigma_z^2}{N_\ell}\,\frac{N_\ell M^2}{\varepsilon^2}
  \;=\;
  \sigma_z^2\,\frac{M^2}{\varepsilon^2}.
\]
The unconditional upper bound follows from
\(
  \Var(a_i^{\ell+1})
  = \E[\Var(a_i^{\ell+1}\mid x^\ell)]
    + \Var(\E[a_i^{\ell+1}\mid x^\ell]),
\)
and the fact that $\E[a_i^{\ell+1}\mid x^\ell]=\omega$ is constant and the first term is bounded by the conditional upper bound.
\end{proof}

\noindent
Lemma~\ref{lem1} shows that, under our structured initialization, the effective gain $a_i^{\ell+1}$ is approximately Gaussian with mean $\omega$ and a variance that scales like
$\sigma_z^2$ times a data dependent factor. In particular, the lower bound
$\Var(a_i^{\ell+1}\mid x^\ell)\ge\sigma_z^2/N_\ell$ guarantees a nontrivial amount of gain noise at every layer, while the upper bound prevents the variance from blowing up when the
activations remain bounded away from zero. These properties will be crucial when we study
how the gain variance behaves as depth and noise scale vary.

\medskip

\paragraph{Theorem~\ref{thm4.6}}
Let $f\in\mathcal F$ be an odd–sigmoid activation with $\omega := 1/f'(0)$, and fix any $\varepsilon>0$.
Consider the feedforward network and proposed initialization,
except that the diagonal element is set to
$a_0 := \omega + \varepsilon,$
and let $a_i^{\ell+1}$ be the effective gain defined in~\eqref{eq:prelim-signal}. Fix a tolerance $\gamma\in(0,1)$ and a finite depth
$L\in\mathbb{N}$. Then there exist a threshold depth
$L_{0}\le L$ and a noise threshold $\sigma_0>0$ such that,
for all $0<\sigma_z\le\sigma_0$,
\begin{equation}\label{eq:gain-var-supercritical}
  \mathbb{P}\Bigl(
    (1-\gamma)\,\sigma_z^2
    \;\le\;
    \Var(a_i^{\ell+1}\mid x^\ell)
    \;\le\;
    (1+\gamma)\,\sigma_z^2
    \quad\text{for all } L_{0}\le \ell<L,\ 1\le i\le N_\ell
  \Bigr)
  \;\ge\; 1-\gamma.
\end{equation}

\begin{proof}
We first consider $\sigma_z=0$.
In this case we set $\mathbf{W}^{\ell} = \mathbf{D}^{\ell} \in \mathbb{R}^{N_{\ell}\times N_{\ell-1}}$ with $(\mathbf{D}^{\ell})_{ij} = a_0$ if $i \equiv j \ (\mathrm{mod}\ N_{\ell-1})$ and $0$ otherwise, so that the layerwise update reduces to
\[
  x^{\ell+1} = f(a_0 x^\ell).
\]
Hence each coordinate $x_i^\ell$ evolves independently according to the scalar recurrence $x_{n+1} = f(a_0 x_n)$.
Since $a_0 f'(0) > 1$, Theorem~\ref{thm1} implies that
this map has three fixed points $\{0,\pm\xi_{a_0}\}$ and, for any $x_0\neq 0$,
\[
  x_n \to \pm\xi_{a_0}\quad\text{as }n\to\infty,
\]
with the sign determined by $\mathrm{sign}(x_0)$. For every $\delta>0$ and every initial value $x_0\neq 0$
there exists an integer $N_i(\delta)$ such that
\[
  |x_n(x_0)| \in [\xi_{a_0}-\delta,\ \xi_{a_0}+\delta]
  \qquad\text{for all } n\ge N_i(\delta).
\]
Since each layer contains only finitely many neurons, we can take
\[
  L_0(\delta)
  \;:=\;
  \max_{1\le i\le N_\ell} N_i(\delta),
\]
so that
\begin{equation}\label{eq:det-band}
  |x_i^\ell(0)|
  \;\in\;
  [\xi_{a_0}-\delta,\ \xi_{a_0}+\delta]
  \qquad\text{for all } 1\le i\le N_\ell,\ \ell\ge L_0(\delta),
\end{equation}
where $x^\ell(0)$ denotes the activations at depth $\ell$ in the
noiseless case $\sigma_z=0$.

From these bounds we obtain, for all $\ell\ge L_{0}(\delta)$,
\[
  N_\ell (\xi_{a_0}-\delta)^2
  \;\le\;
  \|x^\ell(0)\|_2^2
  \;\le\;
  N_\ell (\xi_{a_0}+\delta)^2,
  \qquad
  (x_i^\ell(0))^2 \in [(\xi_{a_0}-\delta)^2,(\xi_{a_0}+\delta)^2].
\]
Therefore the deterministic ratio defined in Lemma~\ref{lem1}
\[
  R_\ell(0;i)
  := \frac{\|x^\ell(0)\|_2^2}{N_\ell (x_i^\ell(0))^2}
\]
satisfies
\[
  R_\ell(0;i)
  \in
  \Biggl[
    \frac{(\xi_{a_0}-\delta)^2}{(\xi_{a_0}+\delta)^2},\;
    \frac{(\xi_{a_0}+\delta)^2}{(\xi_{a_0}-\delta)^2}
  \Biggr]
  \quad\text{for all } \ell\ge L_{\mathrm{0}}(\delta), i.
\]
As $\delta\rightarrow 0$, this interval shrinks to $1$.
Hence, given any $\gamma\in(0,1)$ we can choose $\delta>0$ such that
\[
  \bigl|R_\ell(0;i) - 1\bigr| \le \frac{\gamma}{2}
  \quad\text{for all } \ell\ge L_{\mathrm{0}}(\delta),\ i.
\]

Now we consider $\sigma_z>0$.
Recall that in our initialization we write
\[
  \mathbf Z^{(\ell)} = \frac{\sigma_z}{\sqrt{N_{\ell-1}}}\,\mathbf G^{(\ell)},
  \qquad
  (\mathbf G^{(\ell)})_{ij}\sim\mathcal N(0,1)\ \text{i.i.d.},
\]
so that the randomness is entirely carried by the Gaussian matrices
$\mathbf G^{(1)},\dots,\mathbf G^{(L)}$.
For any deterministic family of matrices
$\widehat{\mathbf G}^{(1)},\dots,\widehat{\mathbf G}^{(L)}$ we define,
for $\sigma_z\ge 0$, the corresponding activations
$\widehat{\mathbf x}^\ell(\sigma_z)$ recursively by
\begin{align*}
  \widehat{\mathbf x}^0(\sigma_z) &:= \mathbf x^0, \\
  \widehat{\mathbf x}^{\ell}(\sigma_z)
  &:= f\Bigl(
        \bigl(a_0 \mathbf D^{\ell} +
              \tfrac{\sigma_z}{\sqrt{N_{\ell-1}}}\widehat{\mathbf G}^{(\ell)}\bigr)
        \widehat{\mathbf x}^{\ell-1}(\sigma_z)
      \Bigr),
  \qquad \ell=1,2,\dots,L,
\end{align*}
where $f$ is applied coordinatewise.
For each fixed choice of $(\widehat{\mathbf G}^{(1)},\dots,\widehat{\mathbf G}^{(L)})$,
each layer $\ell$ and each coordinate $i$, this defines a map
\[
  \sigma_z \;\longmapsto\; \widehat{x}_i^\ell(\sigma_z).
\]
Since $f$ is continuous and, for fixed $\widehat{\mathbf G}^{(\ell)}$, the map
\(
  \sigma_z \mapsto
  \bigl(a_0 \mathbf D^{(\ell)} +
        \tfrac{\sigma_z}{\sqrt{N_{\ell-1}}}\widehat{\mathbf G}^{(\ell)}\bigr)
        \widehat{\mathbf x}^{\ell-1}(\sigma_z)
\)
is affine in $\sigma_z$, it follows by induction on $\ell$ that
\begin{equation}\label{eq:cont-x-sigma}
  \sigma_z \;\longmapsto\; \widehat{x}_i^\ell(\sigma_z)
  \quad\text{is continuous on $[0,\sigma_1]$ for any finite $\sigma_1>0$.}
\end{equation}

For each $\ell$ and $i$ with $x_i^\ell(\sigma_z)\neq 0$, define
\[
  R_\ell(\sigma_z;i)
  := \frac{\|\mathbf x^\ell(\sigma_z)\|_2^2}{N_\ell\,(x_i^\ell(\sigma_z))^2},
  \qquad
  R_\ell(0;i)
  := \frac{\|\mathbf x^\ell(0)\|_2^2}{N_\ell\,(x_i^\ell(0))^2},
\]
so that $R_\ell(0;i)$ is exactly the deterministic ratio considered above.
On the event that $x_i^\ell(\sigma_z)\neq 0$ for all $\ell\le L$ and all
$0\le\sigma_z\le\sigma_1$, \eqref{eq:cont-x-sigma} implies that
$\sigma_z\mapsto R_\ell(\sigma_z;i)$ is continuous.
Hence, for each fixed pair $(\ell,i)$ and any $\gamma\in(0,1)$ there exists
$\sigma_0(\ell,i)>0$ such that
\begin{equation}\label{eq:cont-R}
  \bigl|R_\ell(\sigma_z;i) - R_\ell(0;i)\bigr|
  \;\le\; \frac{\gamma}{2}
  \qquad\text{for all } 0\le\sigma_z\le\sigma_0(\ell,i).
\end{equation}
Combining \eqref{eq:cont-R} with the deterministic bound
$\bigl|R_\ell(0;i)-1\bigr|\le\gamma/2$ (valid for all $\ell\ge L_0(\delta)$
from~\eqref{eq:det-band}), we obtain
\[
  \bigl|R_\ell(\sigma_z;i) - 1\bigr|
  \;\le\;
  \bigl|R_\ell(\sigma_z;i) - R_\ell(0;i)\bigr|
  +
  \bigl|R_\ell(0;i) - 1\bigr|
  \;\le\; \gamma
\]
for all $0\le\sigma_z\le\sigma_0(\ell,i)$ and all $\ell\ge L_0(\delta)$.

Since we only consider a finite set of layers $\ell<L$ and indices
$1\le i\le N_\ell$, we may define, for each fixed
$(\widehat{\mathbf G}^{(1)},\dots,\widehat{\mathbf G}^{(L)})$,
\[
  \sigma_0(\widehat{\mathbf G}^{(1)},\dots,\widehat{\mathbf G}^{(L)})
  := \min_{L_0(\delta)\le \ell < L}
     \min_{1\le i\le N_\ell} \sigma_0(\ell,i) \;>\; 0,
\]
so that the bound
\begin{equation}\label{eq:R-close-1}
  \bigl|R_\ell(\sigma_z;i) - 1\bigr|
  \;\le\; \gamma
\end{equation}
holds simultaneously for all $L_0(\delta)\le\ell<L$ and all $1\le i\le N_\ell$
whenever $0\le\sigma_z\le\sigma_0(\widehat{\mathbf G}^{(1)},\dots,\widehat{\mathbf G}^{(L)})$.

Now regard $(\mathbf G^{(1)},\dots,\mathbf G^{(L)})$ as random Gaussian
matrices and set
\[
  S := \sigma_0(\mathbf G^{(1)},\dots,\mathbf G^{(L)}).
\]
By the construction above we have $S>0$ almost surely. Hence, for a given
$\gamma\in(0,1)$ we can choose a deterministic constant $\sigma_0>0$ such that
$\mathbb P(S\ge\sigma_0)\ge 1-\gamma$ (for example, take $\sigma_0$ to be the
$(1-\gamma)$–quantile of $S$). On the event $\{S\ge\sigma_0\}$ the estimate
\eqref{eq:R-close-1} therefore holds for all $0\le\sigma_z\le\sigma_0$,
all $L_0(\delta)\le\ell<L$ and all $1\le i\le N_\ell$.

Finally, Lemma~4.5 (with $a_0$ in place of $\omega$) gives the conditional
variance formula
\[
  \Var(a_i^{\ell+1}\mid x^\ell)
  = \frac{\sigma_z^2}{N_\ell}\sum_{j=1}^{N_\ell}
    \Bigl(\frac{x_j^\ell(\sigma_z)}{x_i^\ell(\sigma_z)}\Bigr)^2
  = \sigma_z^2\,R_\ell(\sigma_z;i).
\]
Combining this with \eqref{eq:R-close-1} and the choice of~$\sigma_0$ yields, for all
$0<\sigma_z\le\sigma_0$ and all $L_0(\delta)\le\ell<L$, $1\le i\le N_\ell$,
\[
  (1-\gamma)\,\sigma_z^2
  \;\le\;
  \Var(a_i^{\ell+1}\mid x^\ell)
  \;\le\;
  (1+\gamma)\,\sigma_z^2,
\]
on an event of probability at least $1-\gamma$.
\end{proof}

\clearpage

\noindent
The previous results describe how the magnitude of the effective gains behaves across layers~(Figures~\ref{fig6} and \ref{negative_rate_f}).
To control the sign dynamics, we now turn to a scalar surrogate model in which the gains are
i.i.d.\ Gaussian with mean $\omega$ and variance $\sigma^2$. In this simplified setting, the only
quantity that matters is the probability that the scalar iterate becomes negative at a given depth.
The next lemma provides a closed-form recursion for this negative rate. 

\medskip

\begin{lemma}\label{lem:sign-recursion}
Let $f\in\mathcal{F}$ and $x_0>0$, for every $j\ge1$,
\begin{equation}\label{eq:pi-closed}
\pi_j=\tfrac12\Big(1-(1-2p_-)^{\,j}\Big),\qquad 
p_-=\mathbb P(A_1<0)=\Phi\!\Big(-\frac{k}{\sigma}\Big).
\end{equation}
\end{lemma}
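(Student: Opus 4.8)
The plan is to reduce everything to a sign-counting argument that exploits the structure of $\mathcal{F}$. Since every $f\in\mathcal{F}$ is odd and strictly increasing (Definition~\ref{def:odd-sig}(ii),(iv)), it is sign-preserving: $\sign(f(y))=\sign(y)$ for all $y\in\mathbb{R}$, and $f(y)=0$ only at $y=0$. A trivial induction using $x_0\neq 0$ and $\mathbb{P}(A_j=0)=0$ shows $X_{j-1}\neq 0$ almost surely, so the recursion $X_j=f(A_jX_{j-1})$ gives $\sign(X_j)=\sign(A_j)\,\sign(X_{j-1})$ a.s. Because $X_0=x_0>0$, unrolling yields $\sign(X_j)=\prod_{i=1}^{j}\sign(A_i)$ a.s.; hence $X_j<0$ exactly when an odd number of $A_1,\dots,A_j$ are negative.

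Next I would identify $p_-$. Since $A_1\sim\mathcal N(\omega,\sigma^2)$ (with $k=\omega$ in the statement's notation), standardizing gives $p_-=\mathbb{P}(A_1<0)=\mathbb{P}\!\big((A_1-\omega)/\sigma<-\omega/\sigma\big)=\Phi(-\omega/\sigma)$, which is the second identity in \eqref{eq:pi-closed}.

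For the first identity, set $B_i:=\mathbf{1}\{A_i<0\}$, i.i.d.\ Bernoulli$(p_-)$, and $S_j:=\sum_{i=1}^{j}B_i$, so that $\pi_j=\mathbb{P}(S_j\text{ odd})$ by the previous paragraph. The quickest route is the parity generating function: by independence $\mathbb{E}[(-1)^{S_j}]=\prod_{i=1}^{j}\mathbb{E}[(-1)^{B_i}]=(1-2p_-)^j$, and since $\mathbb{P}(S_j\text{ odd})=\tfrac12\big(1-\mathbb{E}[(-1)^{S_j}]\big)$, we get $\pi_j=\tfrac12\big(1-(1-2p_-)^j\big)$. Alternatively, more in the paper's style, one conditions on $X_{j-1}$ and uses independence of $A_j$ to obtain the one-step recursion $\pi_j=(1-\pi_{j-1})p_-+\pi_{j-1}(1-p_-)=p_-+(1-2p_-)\pi_{j-1}$ with $\pi_0=0$; subtracting the fixed point $\tfrac12$ gives $\pi_j-\tfrac12=(1-2p_-)(\pi_{j-1}-\tfrac12)$, hence $\pi_j-\tfrac12=-\tfrac12(1-2p_-)^j$, the claim.

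There is no genuinely hard step; the only point requiring a little care is the measure-zero bookkeeping — confirming $X_{j-1}\neq 0$ a.s.\ at every depth so that the multiplicative sign rule is valid — which follows immediately from $x_0\neq 0$, the fact that $f$ vanishes only at the origin, and $\mathbb{P}(A_j=0)=0$. I would write this as a one-line induction and then present the generating-function computation as the main body.
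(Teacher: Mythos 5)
Your proof is correct and rests on the same key observation as the paper's: because $f$ is odd and strictly increasing (hence sign-preserving and vanishing only at the origin), the sign of $X_j$ is the product of the signs of the gains, reducing the claim to the parity of i.i.d.\ Bernoulli indicators with success probability $p_-=\Phi(-\omega/\sigma)$. The paper finishes by solving the one-step linear recursion $\pi_j=p_-+(1-2p_-)\pi_{j-1}$ (exactly your stated alternative), while your primary route via $\mathbb{E}[(-1)^{S_j}]=(1-2p_-)^j$ is an equivalent and equally valid computation.
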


\begin{proof}
Since $f$ is odd and strictly increasing, $\operatorname{sign}(f(u))=\operatorname{sign}(u)$ for all $u$, hence
\[
\{X_j<0\}=\{A_jX_{j-1}<0\}=\big(\{A_j<0\}\cap\{X_{j-1}>0\}\big)\,\cup\,\big(\{A_j>0\}\cap\{X_{j-1}<0\}\big),
\]
up to null sets (because $\mathbb P(A_j=0)=0$ for Gaussian $A_j$).
Independence of $A_j$ and $X_{j-1}$ yields
\[
\pi_j=\mathbb P(A_j<0)\,\mathbb P(X_{j-1}>0)+\mathbb P(A_j>0)\,\mathbb P(X_{j-1}<0)
=p_-(1-\pi_{j-1})+(1-p_-)\pi_{j-1}.
\]
Thus $\pi_j=(1-2p_-)\pi_{j-1}+p_-$. With $\pi_0=\mathbb P(X_0<0)=0$, the first-order linear recursion solves to
\[
\pi_j-\tfrac12=(1-2p_-)\big(\pi_{j-1}-\tfrac12\big)\;\Longrightarrow\;
\pi_j-\tfrac12=(1-2p_-)^{\,j}\big(\pi_0-\tfrac12\big)= -\tfrac12(1-2p_-)^{\,j}.
\]
The value of $p_-$ follows from $A_1\sim\mathcal N(\omega,\sigma^2)$.
\end{proof}

\medskip

\paragraph{Theorem~\ref{thm:closed-sigma}}
Fix a target $p\in[0,\tfrac12)$, a depth $\ell\in\mathbb N$, and $\omega>0$.
There exists a unique $\sigma^\star=\sigma^\star(p,\ell,\omega)>0$ such that $\pi_\ell(\sigma^\star)=p$, and it is given by
\begin{equation}
\sigma^\star(p,\ell,\omega)\;=\; -\,\frac{\omega}{\Phi^{-1}\!\left(\dfrac{1-(1-2p)^{1/\ell}}{2}\right)}\,.
\end{equation}

\begin{proof}
$\pi_\ell(\cdot)$ is continuous and strictly increasing from $0$ (at $\sigma\downarrow 0$) to $1/2$ (as $\sigma\uparrow\infty$).
Hence for any $p\in[0,1/2)$ there exists a unique $\sigma^\star>0$ with $\pi_\ell(\sigma^\star)=p$.

To obtain the explicit form, set $q:=p_-(\sigma^\star)=\Phi(-\omega/\sigma^\star)\in(0,1/2)$.
From Lemma~\ref{lem:sign-recursion} we have
\[
p=\pi_\ell(\sigma^\star)=\tfrac12\Big(1-(1-2q)^\ell\Big)\quad\Longrightarrow\quad
q=\frac{1-(1-2p)^{1/\ell}}{2}.
\]
Applying the inverse CDF $\Phi^{-1}$ to $q=\Phi(-\omega/\sigma^\star)$ yields
\[
-\frac{k}{\sigma^\star}=\Phi^{-1}(q)\quad\Longrightarrow\quad
\sigma^\star=-\,\frac{k}{\Phi^{-1}(q)}.
\]
Since $q\in(0,1/2)$, the quantile $\Phi^{-1}(q)<0$, so the right-hand side is positive.
\end{proof}

\clearpage
\subsection{Theoretical Results for Section~\ref{section4.3}}\label{chi_app}

Before turning to the empirical comparisons in Section~\ref{section4.3}, we
connect our gain calibration to gradient propagation. In the main text we
argued that the trainability of very deep networks is governed by how the
norm of the backpropagated gradient evolves with depth. In this appendix we
make this precise by computing, under standard mean-field assumptions, the
layerwise gradient amplification factor $\chi_\ell$ for our structured
initialization and contrasting it with the Gaussian i.i.d.\ case. This will
justify the claims in Section~\ref{section4.3} about the robustness of the
proposed scheme to the choice of variance.

\medskip

Let $\mathbf{g}^\ell := \partial \mathcal{L} / \partial \mathbf{x}^\ell \in
\mathbb{R}^{N_\ell}$ denote the gradient at layer $\ell$ for a loss
$\mathcal{L}$. By the chain rule and the layerwise relation
$\mathbf{h}^{\ell+1} = \mathbf{W}^{\ell+1}\mathbf{x}^{\ell} + \mathbf{b}^{\ell+1}$,
we can write
\[
  \mathbf{g}^\ell
  = \bigl(\mathbf{W}^{\ell+1}\bigr)^\top
    \bigl(f'(\mathbf{h}^{\ell+1}) \odot \mathbf{g}^{\ell+1}\bigr)
  = \bigl(\mathbf{J}^{\ell+1}\bigr)^\top \mathbf{g}^{\ell+1},
\]
where $\odot$ denotes the Hadamard product and
\[
  \mathbf{J}^{\ell+1}
  := \operatorname{diag}\bigl(f'(\mathbf{h}^{\ell+1})\bigr)\,\mathbf{W}^{\ell+1}
\]
is the Jacobian matrix of layer $\ell+1$.

In the wide layer regime ($N_\ell\to\infty$) and under standard mean field
assumptions given $\mathbf{h}^{\ell+1}$), the squared gradient norms satisfy the approximate recursion
\begin{equation}\label{eq:grad-recursion-chi}
  \frac{1}{N_\ell}\,\mathbb{E}\bigl\|\mathbf{g}^\ell\bigr\|_2^2
  \;\approx\;
  \chi_{\ell+1}\,
  \frac{1}{N_{\ell+1}}\,\mathbb{E}\bigl\|\mathbf{g}^{\ell+1}\bigr\|_2^2,
\end{equation}
where the layerwise amplification factor $\chi_{\ell+1}$ is defined by
\[
  \chi_{\ell+1}
  :=
  \frac{1}{N_{\ell+1}}\,
  \mathbb{E}\bigl\|\mathbf{J}^{\ell+1}\mathbf{u}\bigr\|_2^2,
\]
with $\mathbf{u}\sim\mathcal{N}(\mathbf{0},\mathbf{I}_{N_{\ell+1}})$ independent
of $\mathbf{J}^{\ell+1}$. Iterating~\eqref{eq:grad-recursion-chi} over
$\ell = 0,\dots,L-1$ gives
\[
  \frac{1}{N_0}\,\mathbb{E}\bigl\|\mathbf{g}^0\bigr\|_2^2
  \;\approx\;
  \Bigl(\prod_{\ell=0}^{L-1}\chi_{\ell+1}\Bigr)
  \frac{1}{N_L}\,\mathbb{E}\bigl\|\mathbf{g}^L\bigr\|_2^2.
\]

We now compute $\chi_{\ell+1}$ for the proposed structured initialization
$\mathbf{W}^\ell = \omega \mathbf{D}^\ell + \mathbf{Z}^\ell$. By
definition of $\chi_{\ell+1}$,
\[
  \chi_{\ell+1}
  = \frac{1}{N_{\ell+1}}\,\mathbb{E}\bigl\|\mathbf{J}^{\ell+1}\mathbf{u}\bigr\|_2^2
  = \frac{1}{N_{\ell+1}}\,
    \mathbb{E}\sum_{i=1}^{N_{\ell+1}}
    \biggl(
      \sum_{j=1}^{N_\ell}
        w_{ij}^{\ell+1} f'\bigl(h_i^{\ell+1}\bigr) u_j
    \biggr)^2.
\]
Conditioning on $\mathbf{W}^{\ell+1}$ and $\mathbf{h}^{\ell+1}$, the inner
sum is a centred Gaussian in $\mathbf{u}$ with variance
\[
  \sum_{j=1}^{N_\ell} \bigl(w_{ij}^{\ell+1}\bigr)^2\,f'\bigl(h_i^{\ell+1}\bigr)^2.
\]
Taking the expectation over $\mathbf{u}$ and then over the weights and
preactivations yields
\begin{align*}
  \chi_{\ell+1}
  &= \frac{1}{N_{\ell+1}}\,
     \mathbb{E}\sum_{i=1}^{N_{\ell+1}}
       f'\bigl(h_i^{\ell+1}\bigr)^2
       \sum_{j=1}^{N_\ell}\bigl(w_{ij}^{\ell+1}\bigr)^2 \\[0.5ex]
  &\approx
     \mathbb{E}\Bigl[
       f'\bigl(h_1^{\ell+1}\bigr)^2\,
       \sum_{j=1}^{N_\ell}\bigl(w_{1j}^{\ell+1}\bigr)^2
     \Bigr],
\end{align*}
where we used exchangeability of the rows of $\mathbf{W}^{\ell+1}$ and of the
coordinates of $\mathbf{h}^{\ell+1}$.

For the proposed initialization,
$(\mathbf{Z}^{\ell+1}_{ij})\sim\mathcal{N}(0,\sigma_z^2/N_\ell)$, so
\[
  \sum_{j=1}^{N_\ell}\bigl(w_{1j}^{\ell+1}\bigr)^2
  = \omega^2 + \sum_{j=1}^{N_\ell}\bigl(Z_{1j}^{\ell+1}\bigr)^2.
\]
Taking expectation over $\mathbf{Z}^{\ell+1}$ and using
$\mathbb{E}\bigl[(Z_{1j}^{\ell+1})^2\bigr]=\sigma_z^2/N_\ell$ gives
\[
  \mathbb{E}\Bigl[\sum_{j=1}^{N_\ell}\bigl(w_{1j}^{\ell+1}\bigr)^2\Bigr]
  =
  \omega^2
  + \sum_{j=1}^{N_\ell}\mathbb{E}\bigl[(Z_{1j}^{\ell+1})^2\bigr]
  = \omega^2 + \sigma_z^2.
\]
Moreover, by construction of the initialization, $\mathbf{W}^{\ell+1}$ and
$\mathbf{h}^{\ell+1}$ are independent, and the coordinates $h_i^{\ell+1}$ are
exchangeable.  Hence
\[
  \mathbb{E}\Bigl[
    f'\bigl(h_1^{\ell+1}\bigr)^2\,
    \sum_{j=1}^{N_\ell}\bigl(w_{1j}^{\ell+1}\bigr)^2
  \Bigr]
  \;\approx\;
  \mathbb{E}\bigl[f'(h^{\ell+1})^2\bigr]\,
  \mathbb{E}\Bigl[\sum_{j=1}^{N_\ell}\bigl(w_{1j}^{\ell+1}\bigr)^2\Bigr]
  =
  \bigl(\omega^2+\sigma_z^2\bigr)\,
  \mathbb{E}\bigl[f'(h^{\ell+1})^2\bigr].
\]
Thus,
\begin{equation}\label{eq:chi-proposed}
  \chi_{\ell+1}
  \;\approx\;
  \bigl(\omega^2 + \sigma_z^2\bigr)\,
  \mathbb{E}\bigl[f'(h^{\ell+1})^2\bigr].
\end{equation}

\medskip

\noindent
Equation~\eqref{eq:chi-proposed} shows that, for the proposed initialization,
the mean-field amplification factor $\chi_{\ell+1}$ depends on the combined
scale $\omega^2+\sigma_z^2$ rather than on a bare variance parameter alone.
In particular, increasing the noise level $\sigma_z$ decreases
$\mathbb{E}[f'(h^{\ell+1})^2]$ while increasing the prefactor
$\omega^2+\sigma_z^2$, so that $\chi_{\ell+1}$ remains close to one over a
much wider range of $\sigma_z$ than in the Gaussian i.i.d.\ case, where
$\chi_{\ell+1}^\star \approx \sigma_w^2\,\mathbb{E}[f'(h^{\ell+1})^2]$ has no
analogous $\omega^2$ term. This analytic behavior underlies the empirical
observations in Section~\ref{section4.3} that our initialization preserves
gradient norms more effectively across depth and is substantially more robust to variance misspecification.

\clearpage 

\section{Additional Experimental Results without Trained Networks}\label{Appendix B}
\subsection{Definitions of Activation Functions}\label{activation_define}
This section introduces the activation functions considered in this paper. The following functions belong to the odd-sigmoid function class.

\paragraph{Gudermannian function}
The Gudermannian function $\operatorname{gd}:\mathbb{R}\to\mathbb{R}$ is defined by
\[
\operatorname{gd}(x) = \int_{0}^{x}\frac{dt}{\cosh t} 
= 2\arctan\!\left(\tanh\!\left(\tfrac{x}{2}\right)\right).
\]

\paragraph{Error Function.}
The error function $\operatorname{erf}:\mathbb{R}\to\mathbb{R}$ is defined as
\[
\operatorname{erf}(x) = \frac{2}{\sqrt{\pi}} \int_{0}^{x} e^{-t^{2}}\,dt.
\]

\paragraph{Softsign-Type Functions.}
For $k \geq 1$, we define the generalized softsign function
\[
\operatorname{softsign}_k(x) \;=\; \frac{x}{\bigl(1+|x|^k\bigr)^{1/k}}, 
\qquad x\in\mathbb{R}.
\]
This family interpolates between several commonly used smooth odd activations:
\[
\operatorname{softsign}_1(x) = \frac{x}{1+|x|}, 
\quad
\operatorname{softsign}_2(x) = \frac{x}{\sqrt{1+x^2}}, 
\quad
\operatorname{softsign}_3(x) = \frac{x}{\bigl(1+|x|^3\bigr)^{1/3}}.
\]
We also use the combined variant
\[
\operatorname{softsign}_{1+3}(x)
:= \operatorname{softsign}_1(x)+\operatorname{softsign}_3(x).
\]
\paragraph{Hyperbolic Tangent.}
The hyperbolic tangent function $\tanh:\mathbb{R}\to\mathbb{R}$ is defined by
\[
\tanh(x) = \frac{e^x - e^{-x}}{e^x + e^{-x}}.
\]

\paragraph{Arctangent.}
The (scaled) arctangent function $\arctan:\mathbb{R}\to\mathbb{R}$ is given by
\[
\arctan(x) = \int_{0}^{x} \frac{dt}{1+t^2}.
\]
In practice, we often use the normalized form $\tfrac{2}{\pi}\arctan(x)$ so that its range matches that of $\tanh(x)$.

\clearpage
\subsection{Properties of the Odd-Sigmoid Functions}
This section empirically investigates the properties of the iteration \(x_{n+1}=f(a x_n)\). 
When the gain $a$ is fixed across all iterations, Figure~\ref{fig6} shows how the dynamics depend on the initial data: for any nonzero input, the iterates converge to the same nonzero fixed point $\xi_a$. Figure~\ref{fig111} further demonstrates that this limit depends only on the gain, not on the input, by comparing distinct gains $a\neq a'>0$ and observing convergence to different fixed points $\xi_a$ and $\xi_{a'}$. These results indicate that, by appropriately choosing $a$, one can control how long the signal remains informative across layers and thus preserve information up to a desired depth.

\begin{figure}[h!]
\centering 
\includegraphics[width=0.8\textwidth]{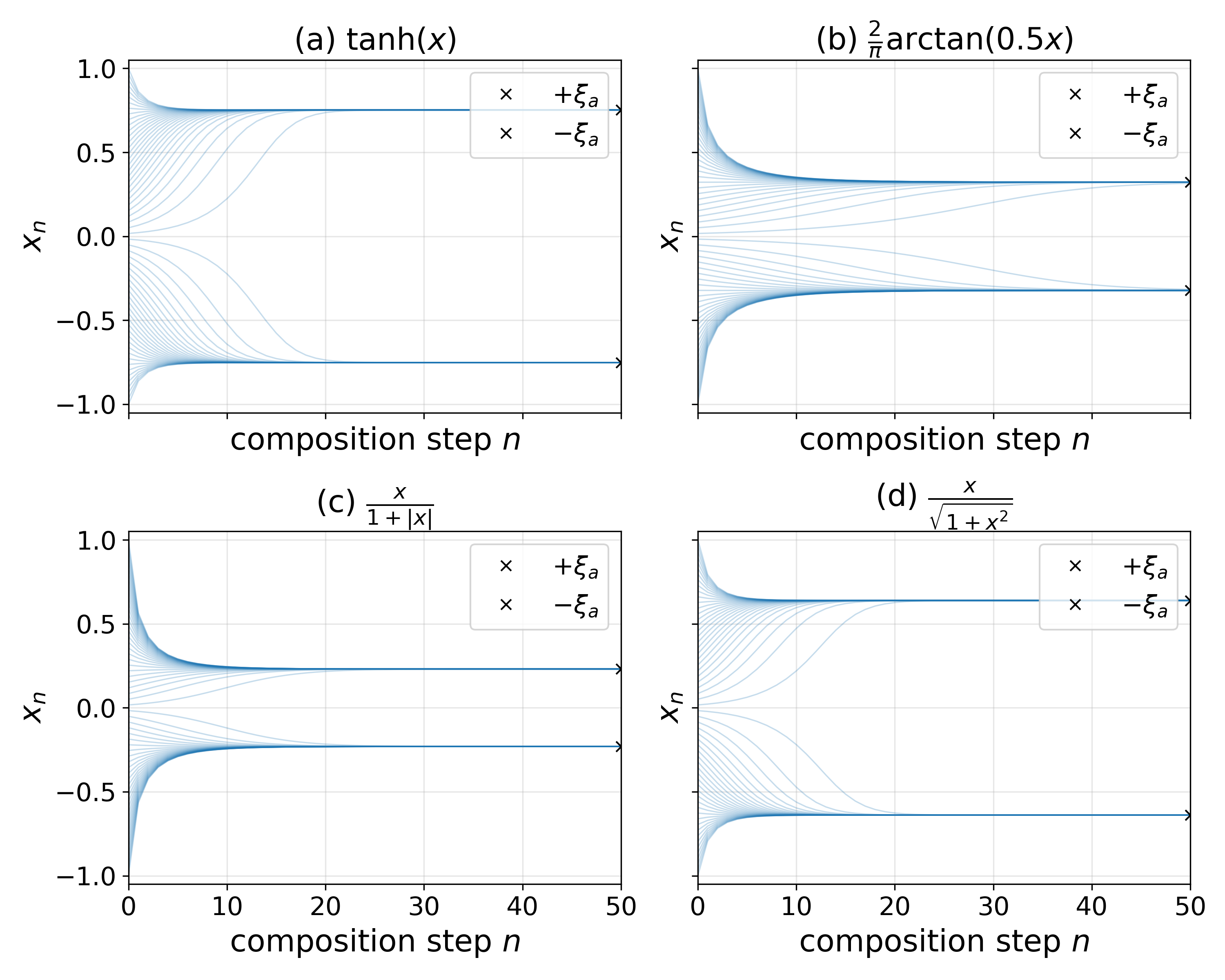}
\caption{
Iterated scalar dynamics for four odd-sigmoid activations under a fixed supercritical gain.
For each activation $f$, we compute $\omega = 1/f'(0)$ and set $a = \omega + 0.3$.
We then iterate the one--dimensional map $x_{n+1} = f(a x_n)$ for $n=0,\dots,50$
starting from 60 initial values $x_0 \in [-1,1]$.
The limiting fixed points $\pm\xi_a$ satisfying $f(a \xi_a) = \xi_a$ are approximated
by iterating from $\pm 1$ and are marked at $n=50$ with ``$\times$''.
}
\label{fig6}
\end{figure}

\medskip

\begin{figure}[h!]
\centering 
\includegraphics[width=0.8\textwidth]{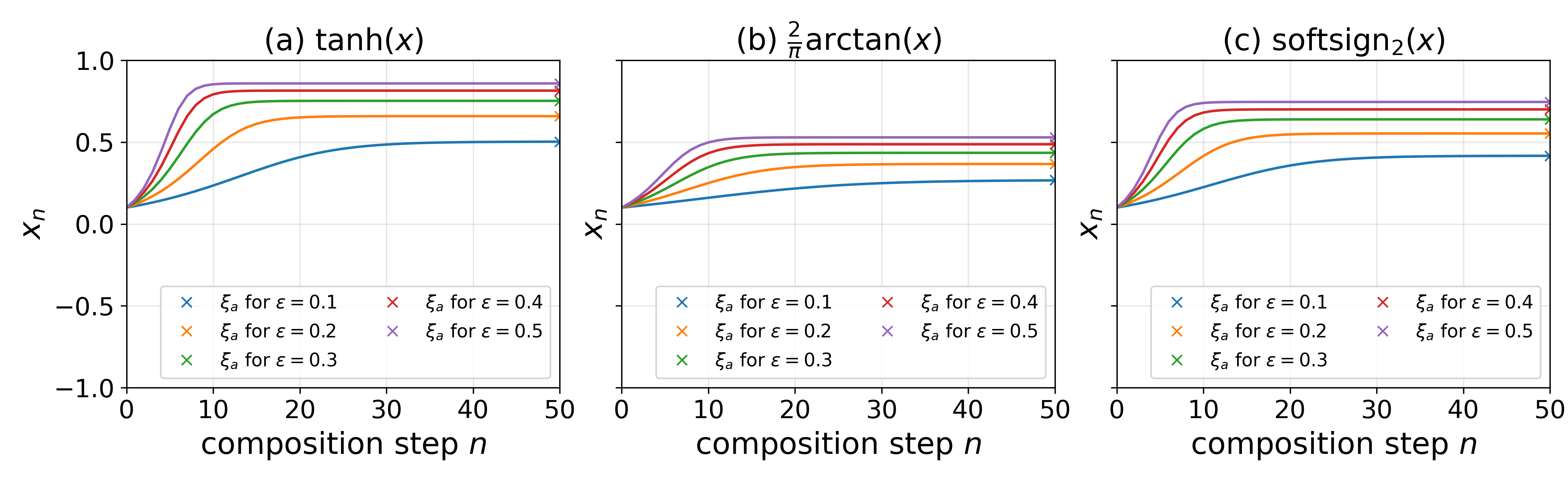}
\caption{Convergence of the iteration \(x_{n+1}=f(a x_n)\) for odd–sigmoid \(f\) with \(a=\omega+\epsilon\) (\(\epsilon\in\{0.1,0.2,0.3,0.4,0.5\}\)); curves show \(x_n\) up to \(n=50\) from \(x_0=0.1\), and ‘\(\times\)’ marks the positive fixed point \(\xi_a\) solving \(f(a\xi_a)=\xi_a\).}
\label{fig111}
\end{figure}

\clearpage
\subsection{Negative Rate Function}
We calibrate the noise scale $\sigma_z$ in the surrogate model by targeting a desired negative rate at a specified depth. For the scalar recursion with gains
$A_j\sim\mathcal N(\omega,\sigma^2)$ and depth $L$, let $\pi_L(\sigma)$ denote the
probability that the iterate is negative at layer $L$. Given a target
$p\in[0,\tfrac12)$, Theorem~\ref{thm:closed-sigma} yields a unique
$\sigma^\ast(p,L,\omega)$ such that $\pi_L(\sigma^\ast)=p$. Figure~\ref{negative_rate_f} illustrates
this calibration: panel (a) shows the closed-form scale $\sigma^\ast(p,L,\omega)$
as a function of $p$ for $L=100$ and $\omega=1$, while panel (b) plots
$\pi_L(\sigma)$ over $(L,\sigma)$ for $\omega=1$. As expected, $\pi_L(\sigma)\to 0$
as $\sigma\to 0$ (no sign flips) and $\pi_L(\sigma)\to\tfrac12$ as
$\sigma\to\infty$ (full symmetry), so the negative rate traces a narrow band
between these two extremes.

Figure~\ref{chi_value} evaluates the resulting calibration from a mean-field perspective.
For $f(x)=\tanh(x)$ and $\omega=1$, we choose $\sigma_z=\sigma^\ast(p,L,\omega)$
for target negative rates $p=0.01$ and $p=0.49$, and compute the corresponding
gradient amplification factor $\chi_\ell \approx (\omega^2+\sigma_z^2)\,\mathbb{E}[f'(h^\ell)^2]$
across depths. The curves show that $\chi_\ell$ remains very close to $1$ over a
wide range of depths, up to $L=2\times 10^5$, for both target negative rates,
indicating that the negative-rate calibration keeps gradients in a trainable
regime even in extremely deep networks.

\begin{figure}[h!]
\centering 
\begin{tabular}{ccc}
\begin{subfigure}[b]{0.486\textwidth}
    \centering
    \includegraphics[width=\textwidth]{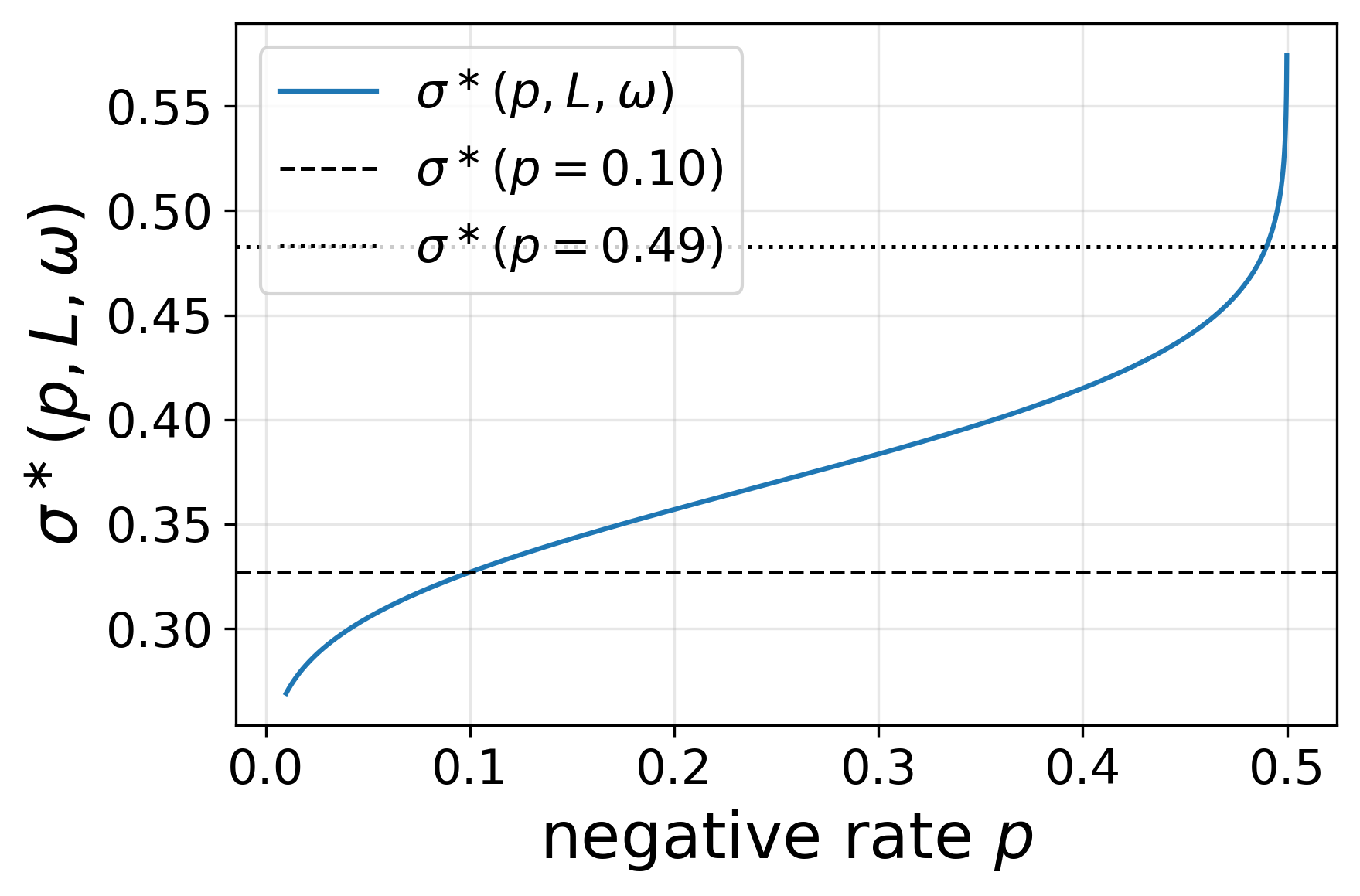}
    \caption{$L=100$}
\end{subfigure} &
\begin{subfigure}[b]{0.45\textwidth}
    \centering
    \includegraphics[width=\textwidth]{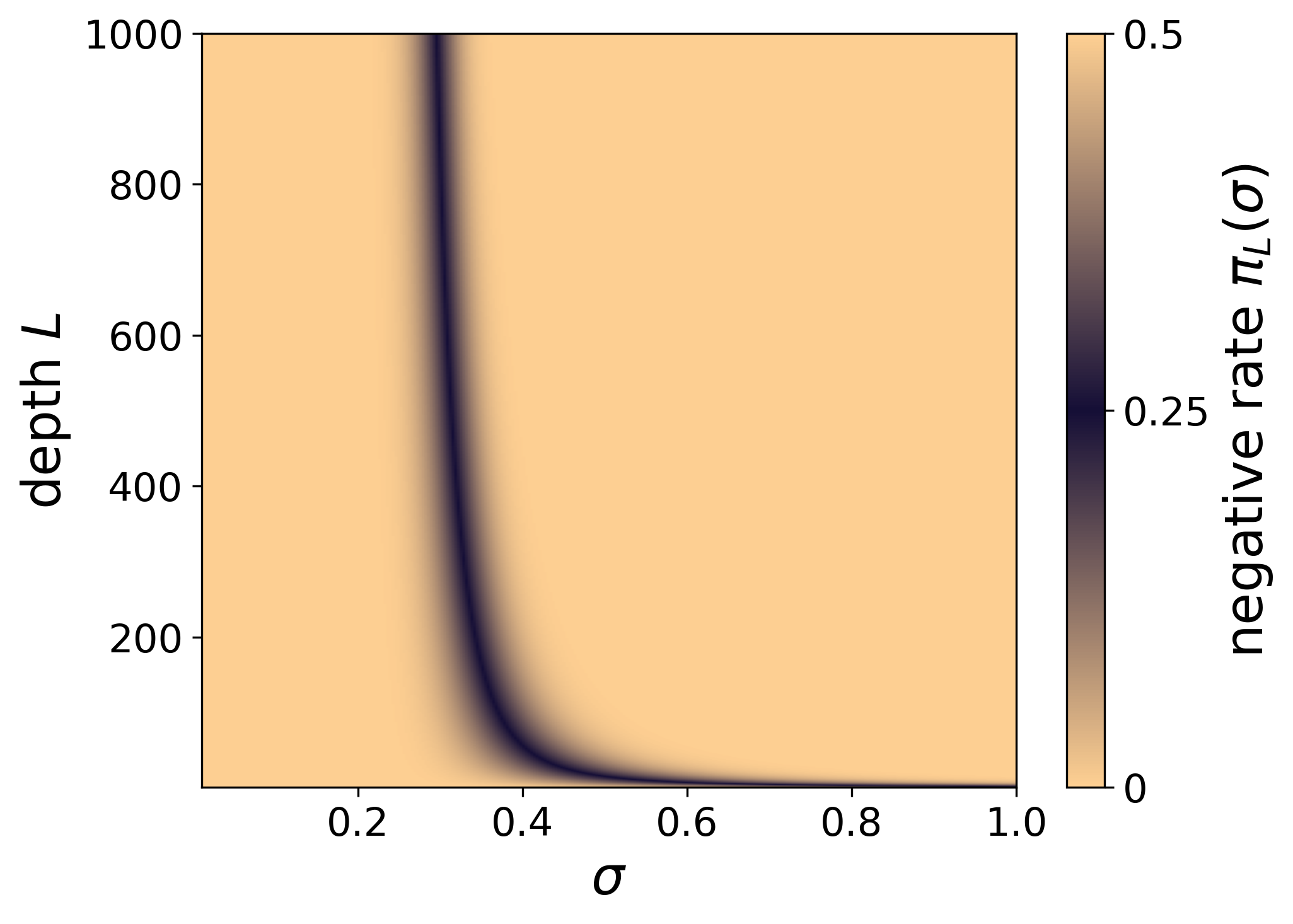}
    \caption{$L = 1,2,\dots,1000$}
\end{subfigure} 
\end{tabular}
\caption{%
 Closed-form characterization of the scalar surrogate.
    \textbf{(a)} Closed form scale $\sigma^\ast(p,L,\omega)$ as a function of the target negative rate $p$ for $L=100$ and $\omega=1$, with reference lines at $p=0.10$ and $p=0.49$.
    \textbf{(b)} Closed form scalar surrogate negative rate $\pi_L(\sigma)$ for $\omega=1$, network depths $L = 1,2,\dots,1000$, and $\sigma\in[0.01,1.0]$.}
\label{negative_rate_f}
\end{figure}

\begin{figure}[h!]
\centering 
\includegraphics[width=0.6\textwidth]{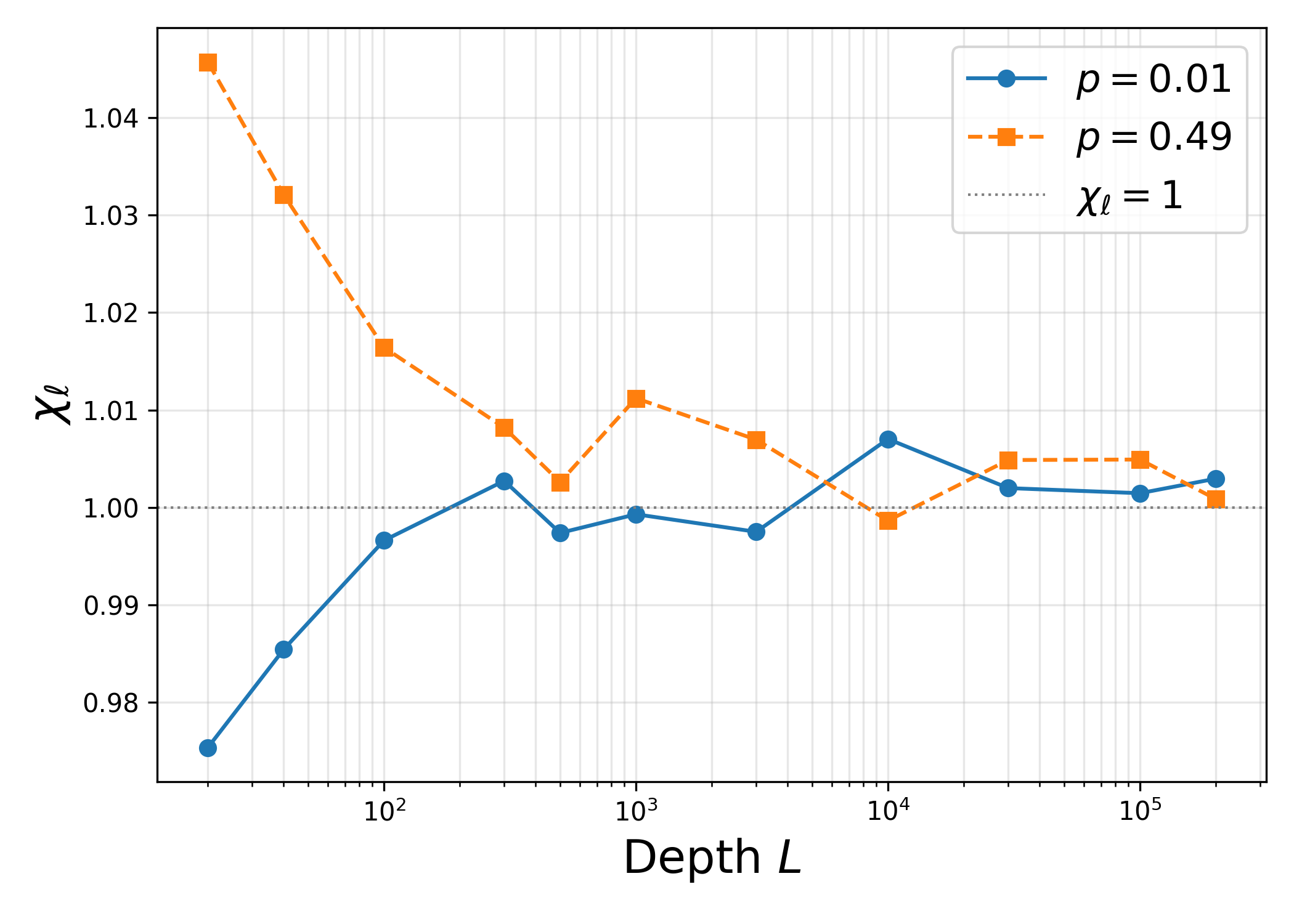}
\caption{$\chi_\ell$ for the proposed initialization with $f(x)=\tanh(x)$, evaluated at the closed form noise scales
$\sigma^\ast(p,L,\omega)$ corresponding to target negative rates
$p=0.01$ and $p=0.49$.
For each depth $\ell$, we estimate
$\chi_{\ell}(\sigma_z) \approx (\omega^2+\sigma_z^2)\,\mathbb{E}[f'(h^{\ell})^2]$.}
\label{chi_value}
\end{figure}

\clearpage 

\subsection{Negative Rate Surrogate Analysis}\label{app_negative}

We begin from the elementwise representation of a fully connected network with
odd–sigmoid activation $f\in\mathcal{F}$. For each layer $\ell$ and neuron
index $i$, the forward update can be written as
\begin{equation}\label{eq:app-prelim-signal}
x_i^{\ell+1}
= f\!\left(a_i^{\ell+1}\,x_i^{\ell}\right),
\qquad
a_i^{\ell+1}
= w_{ii}^{\ell+1}
+ \sum_{j\neq i}\frac{w_{ij}^{\ell+1}x_j^{\ell}}{x_i^{\ell}},
\end{equation}
so that $a_i^{\ell+1}$ plays the role of an effective scalar gain for neuron
$i$ in layer $\ell+1$. Under our proposed initialization
$W^{\ell+1} = \omega D^{\ell+1} + Z^{\ell+1}$ with $(Z^{\ell+1}_{ij})
\sim \mathcal N(0,\sigma_z^2/N_\ell)$, Lemma~\ref{lem1} shows that, conditional
on $x^\ell$,
\begin{equation}\label{eq:app-var-a}
a_i^{\ell+1}\ \sim\ \mathcal N\!\Biggl(\,\omega,\ \frac{\sigma_z^2}{N_\ell}
\Bigl(1+\sum_{j\ne i}\Bigl(\frac{x_j^\ell}{x_i^\ell}\Bigr)^2\Bigr)\Biggr).
\end{equation}
To handle this dependence on the current activations, we approximate the gain
distribution by a scalar surrogate model $a_i^{\ell+1} \sim
\mathcal N(\omega,\sigma_z^2)$ and estimate sign statistics from this Gaussian
approximation.

We define the negative rate at depth $L$ as
\[
  \pi_L(\sigma)
  \;:=\;
  \mathbb{P}\bigl(x_L < 0 \,\big|\, x_0 > 0\bigr),
\]
that is, the probability that a neuron whose initial activation is positive
ends up with a negative sign at layer $L$. Because every $f\in\mathcal{F}$ is
odd and strictly increasing, the sign of $x_i^\ell$ is entirely controlled by
the product of the effective gains along the path, and the probability that a
positive entry becomes negative at depth $L$ equals the probability that a
negative entry becomes positive. It therefore suffices to track a single scalar
chain starting from $x_0>0$.

Our initialization has a nonzero mean gain $\omega$, which makes sign changes
along a given coordinate well defined across layers. We interpret frequent sign
flips during the forward pass as a form of information loss, and we therefore
use the surrogate calibration to control the negative rate at the final layer.
In the scalar surrogate, for a given depth $L$ and gain variance $\sigma^2$ we can compute $\pi_L(\sigma)$ in closed form. We then define, for a target
$p\in(0,\tfrac12)$, the calibrated scale
$\sigma^\ast(p,L,\omega)$ as the unique solution to $\pi_L(\sigma^\ast)=p$
(see Theorem~\ref{thm:closed-sigma}). In practice, we choose a desired
“real” negative rate $p_{\mathrm{real}}=0.4$ and select $\sigma_z$ so that the empirical FFNN-driven negative rate at depth $L$ is close to $p_{\mathrm{real}}$,
which preserves most sign information while still leaving enough randomness for learning.

\begin{figure}[h!]
\centering 
\includegraphics[width=0.63\textwidth]{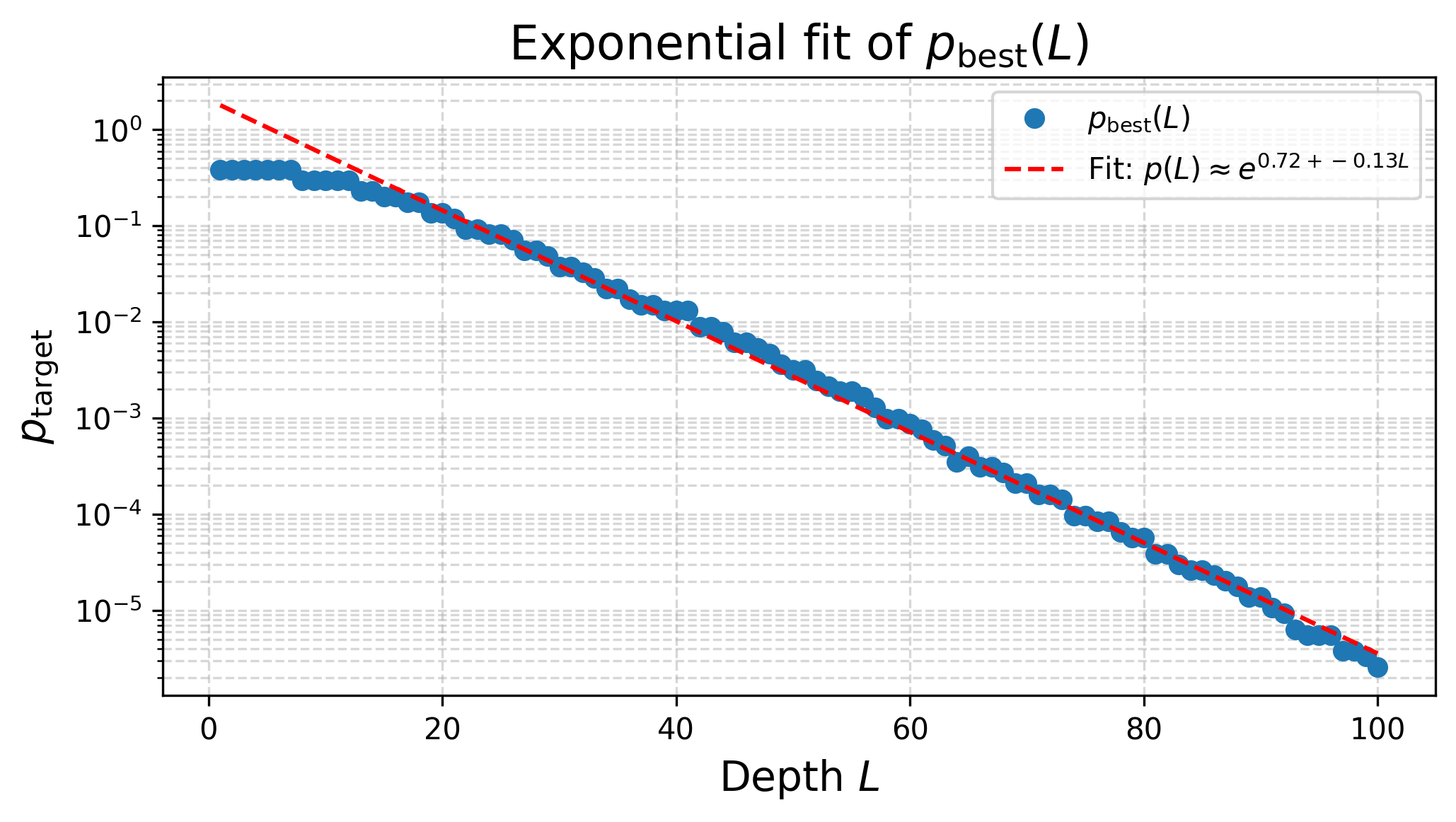}
\caption{Best surrogate target $p_{\mathrm{target}}(L)$ producing an FFNN-driven negative rate
$\tilde\pi_L \approx 0.4$ as a function of depth $L$ for a $\tanh$ network with width
$64$. The curve shows that $p_{\mathrm{target}}(L)$ stays near $0.3$–$0.4$ for shallow
depths and then decays approximately exponentially with $L$, providing an empirical
calibration rule for choosing the surrogate negative rate in deep networks.
}
\label{surro_real}
\end{figure}

To understand how the scalar surrogate relates to the actual network, we
compare $\pi_L(\sigma^\ast)$ with the empirical negative rate obtained from an
initialized FFNN. For each $p_{\mathrm{target}}$ and depth $L$, we first compute
$\sigma^\ast(p_{\mathrm{target}},L,\omega)$, initialize an $L$-layer $\tanh$
network with weights $W^\ell = \omega I + (\sigma^\ast/\sqrt{N})G^\ell$, extract
the effective gains $a_i^\ell$, and form FFNN-driven scalar chains
$y_0 = 1$, $y_{\ell+1} = \tanh(a^\ell y_\ell)$. Measuring
$\tilde\pi_L = \mathbb{P}(y_L<0)$ and comparing it to the analytic
$\pi_L(\sigma^\ast)$ reveals that the discrepancy grows with depth (see
Figure~\ref{surrogate_real}), as expected from the increasing
influence of higher-order correlations. Nevertheless, the relationship between
the surrogate target $p_{\mathrm{target}}$ and the FFNN-driven negative rate
$\tilde\pi_L$ remains systematic.

\begin{figure}[t!]
\centering 
\begin{tabular}{ccc}
\begin{subfigure}[b]{0.30\textwidth}
    \centering
    \includegraphics[width=\textwidth]{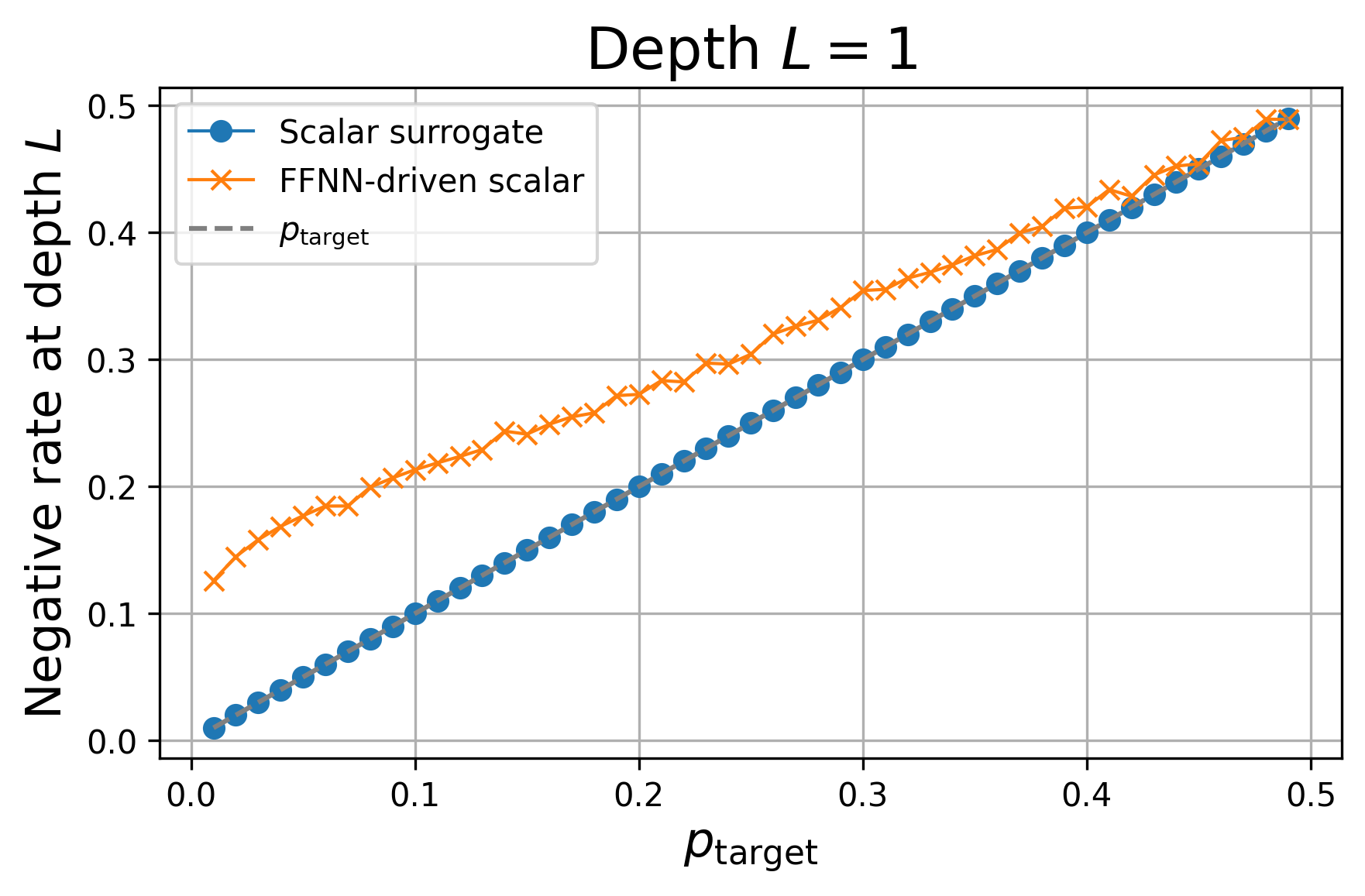}
    \caption{$L=1$}
\end{subfigure} &
\begin{subfigure}[b]{0.30\textwidth}
    \centering
    \includegraphics[width=\textwidth]{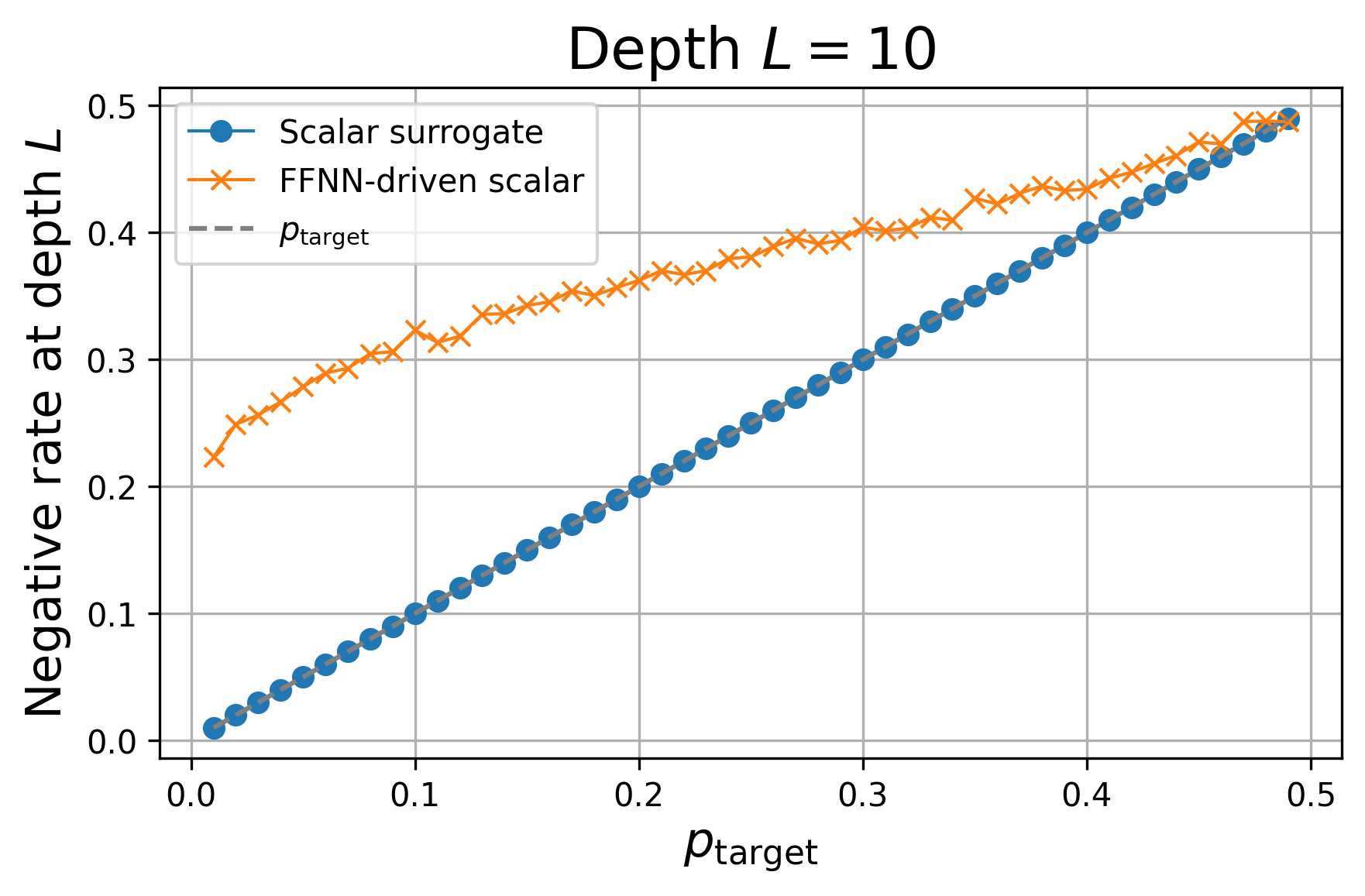}
    \caption{$L=10$}
\end{subfigure} &
\begin{subfigure}[b]{0.30\textwidth}
    \centering
    \includegraphics[width=\textwidth]{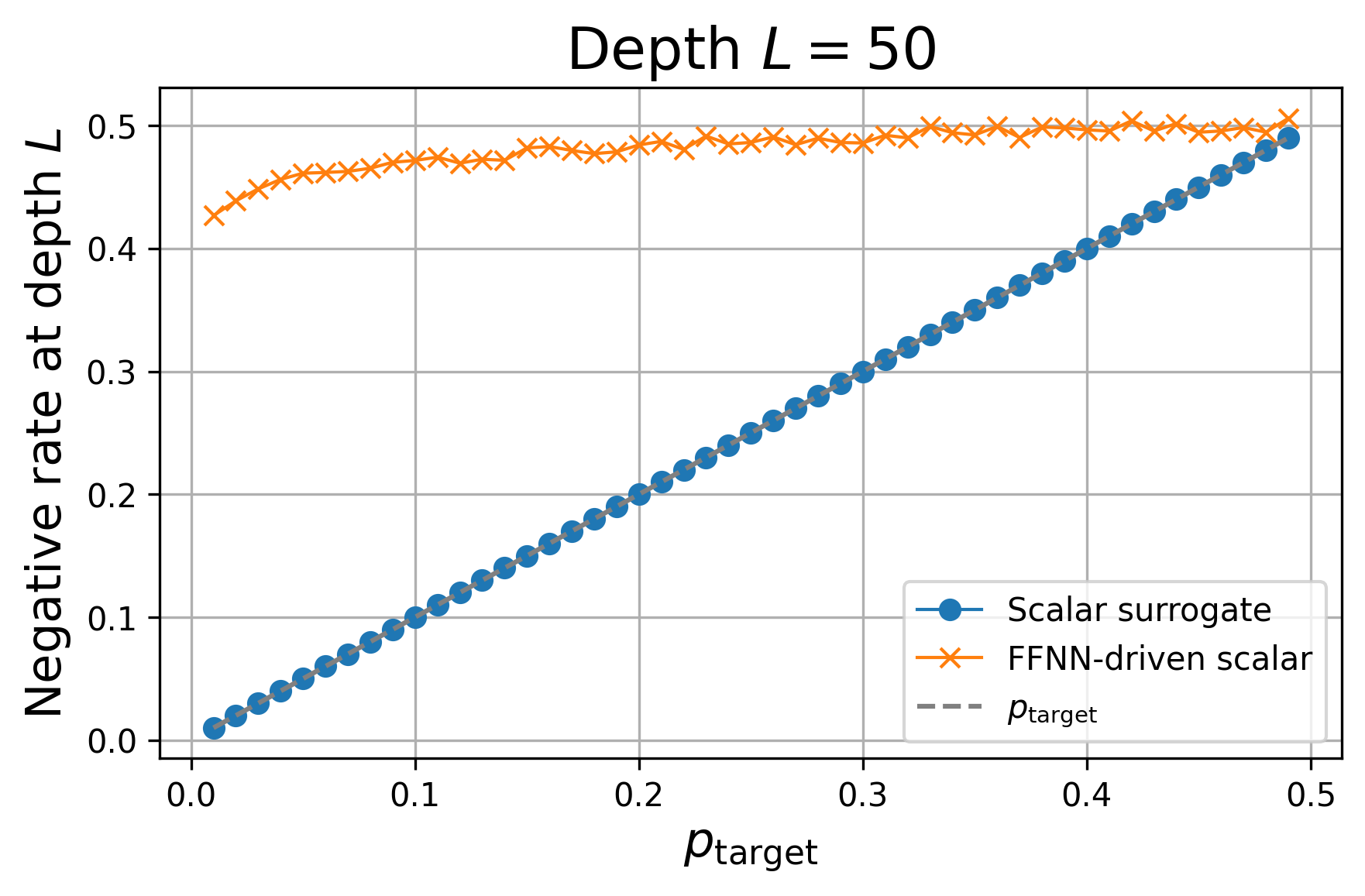}
    \caption{$L=50$}
\end{subfigure} 
\end{tabular}
\caption{%
 Negative rate at depth $L=1, 10, 50$ as a function of the target negative rate
$p_{\mathrm{target}}$. The curve $\pi_1(\sigma^\star)$ denotes the scalar
surrogate prediction at the calibrated scale $\sigma^\star(p_{\mathrm{target}})$,
$\tilde\pi_1$ is the empirical negative rate obtained from the FFNN-driven scalar
chains, and the dashed line shows the identity $p_{\mathrm{target}}$. }
\label{surrogate_real}
\end{figure}

\begin{figure}[b!]
\centering 
\begin{tabular}{ccc}
\begin{subfigure}[b]{0.486\textwidth}
    \centering
    \includegraphics[width=\textwidth]{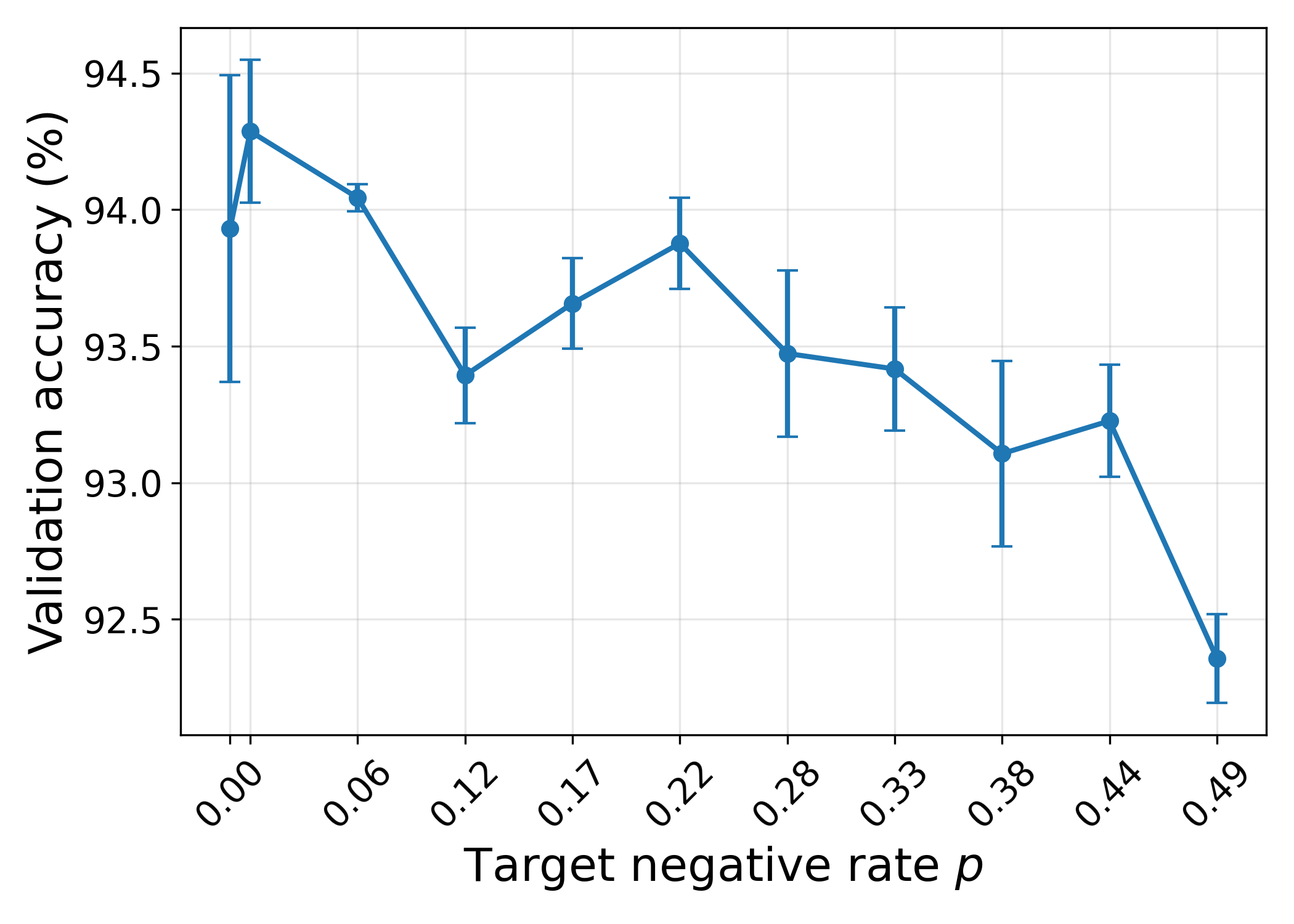}
    \caption{MNIST}
\end{subfigure} &
\begin{subfigure}[b]{0.486\textwidth}
    \centering
    \includegraphics[width=\textwidth]{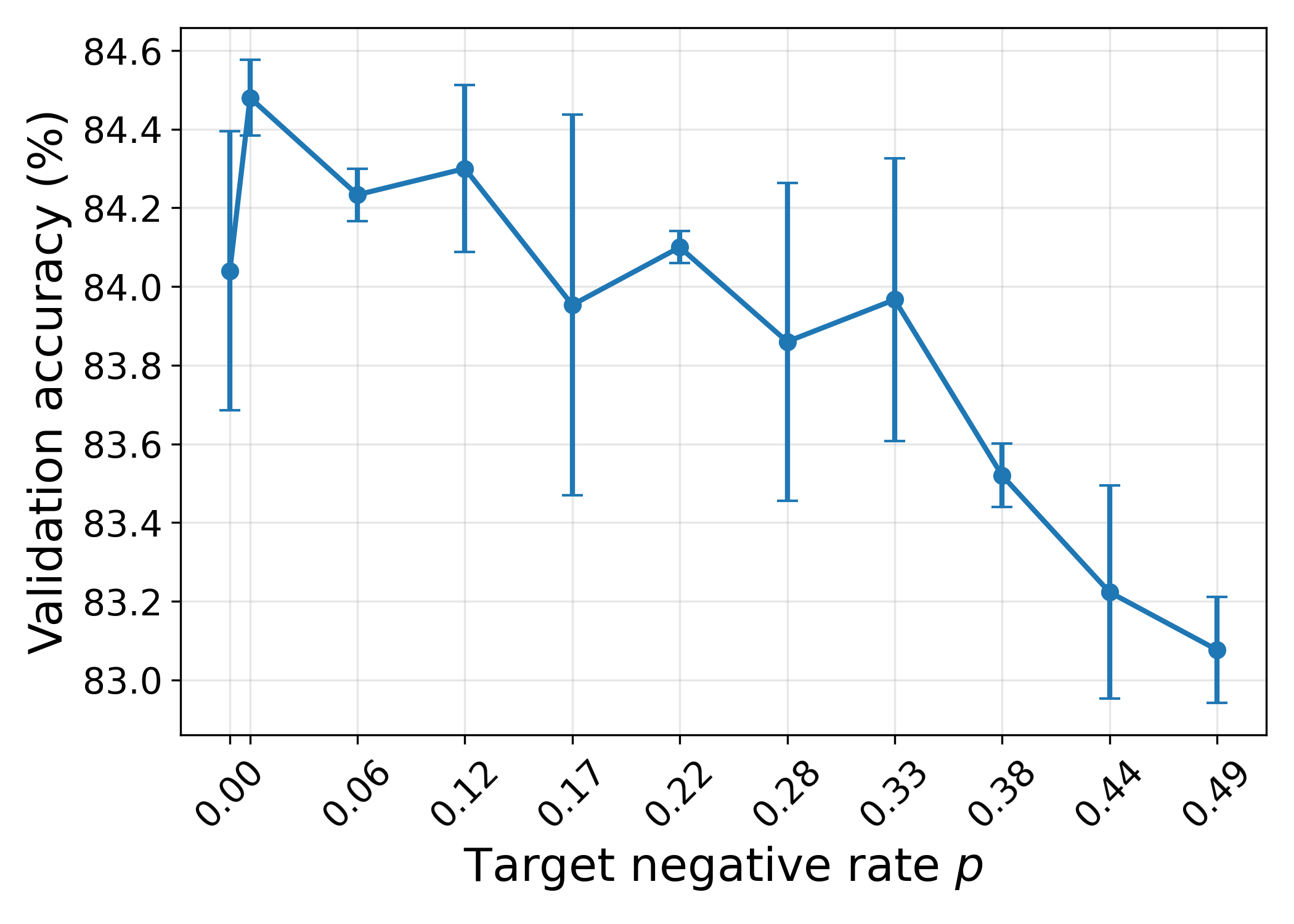}
    \caption{MNIST}
\end{subfigure} 
\end{tabular}
\caption{%
 Validation accuracy as a function of the target negative rate $p$
for a 50-layer fully connected network (width $64$) with activation
$f(x)=\tanh(x)$ under the proposed initialization. Each point shows the
mean $\pm$ standard deviation over $5$ runs, where each run is trained
for 600 iterations, with
$\sigma^\ast(p,L,\omega)$ computed from the scalar surrogate calibration.
}
\label{fsweep}
\end{figure}

In particular, for each depth $L$ we can invert this relationship numerically:
we search over $p_{\mathrm{target}}$ and find the value $p_{\mathrm{sur}}(L)$
such that the FFNN-driven negative rate $\tilde\pi_L$ is as close as possible
to $p_{\mathrm{real}}=0.4$~(see Section~\ref{b.7} ).

The resulting sequence $p_{\mathrm{sur}}(L)$ is
reported in Figure~\ref{surro_real}. For small depths $L$, the surrogate
target remains near $0.3$–$0.4$, but for larger $L$ it decays approximately
exponentially. Fitting a simple model
$\log p_{\mathrm{sur}}(L) \approx c_0 + c_1 L$ on the tail (e.g., $L\ge 10$)
yields
\[
  p_{\mathrm{sur}}(L) \;\approx\; C e^{-\alpha L},
\]
for some $C>0$ and $\alpha>0$. This empirical law expresses how aggressively the
surrogate target negative rate must shrink with depth in order for the actual
network to maintain a fixed, moderate negative rate at its final layer.

These calibration insights are consistent with our training experiments.
Figures~\ref{fsweep} and~\ref{fsweep1} plot validation
accuracy as a function of the surrogate target $p$ for deep ($L=50$) and
shallow ($L=3$) networks. For the deep case, performance is maximized when
$p_{\mathrm{sur}}$ is on the order of $0.01$, in line with the exponentially
decayed target predicted by the surrogate analysis. In contrast, for shallow
networks the best performance is attained near $p_{\mathrm{sur}}\approx 0.4$,
matching the regime where the fitted decay has not yet taken effect. Finally,
Figures~\ref{fsweep50} and~\ref{fsweep100} show that, for depths $L=50$ and $L=100$, the learning curves are optimized near the
$\sigma^\ast$ values obtained from the calibrated surrogate, with accuracy
degrading as we move away from these scales. Together, these results validate
that the negative-rate surrogate provides a useful, quantitatively accurate
guideline for choosing the noise level $\sigma_z$ across depths, and that the
resulting diagonal–plus–noise initialization indeed preserves signal sign
statistics in deep networks.

\begin{figure}[h!]
\centering 
\begin{tabular}{ccc}
\begin{subfigure}[b]{0.486\textwidth}
    \centering
    \includegraphics[width=\textwidth]{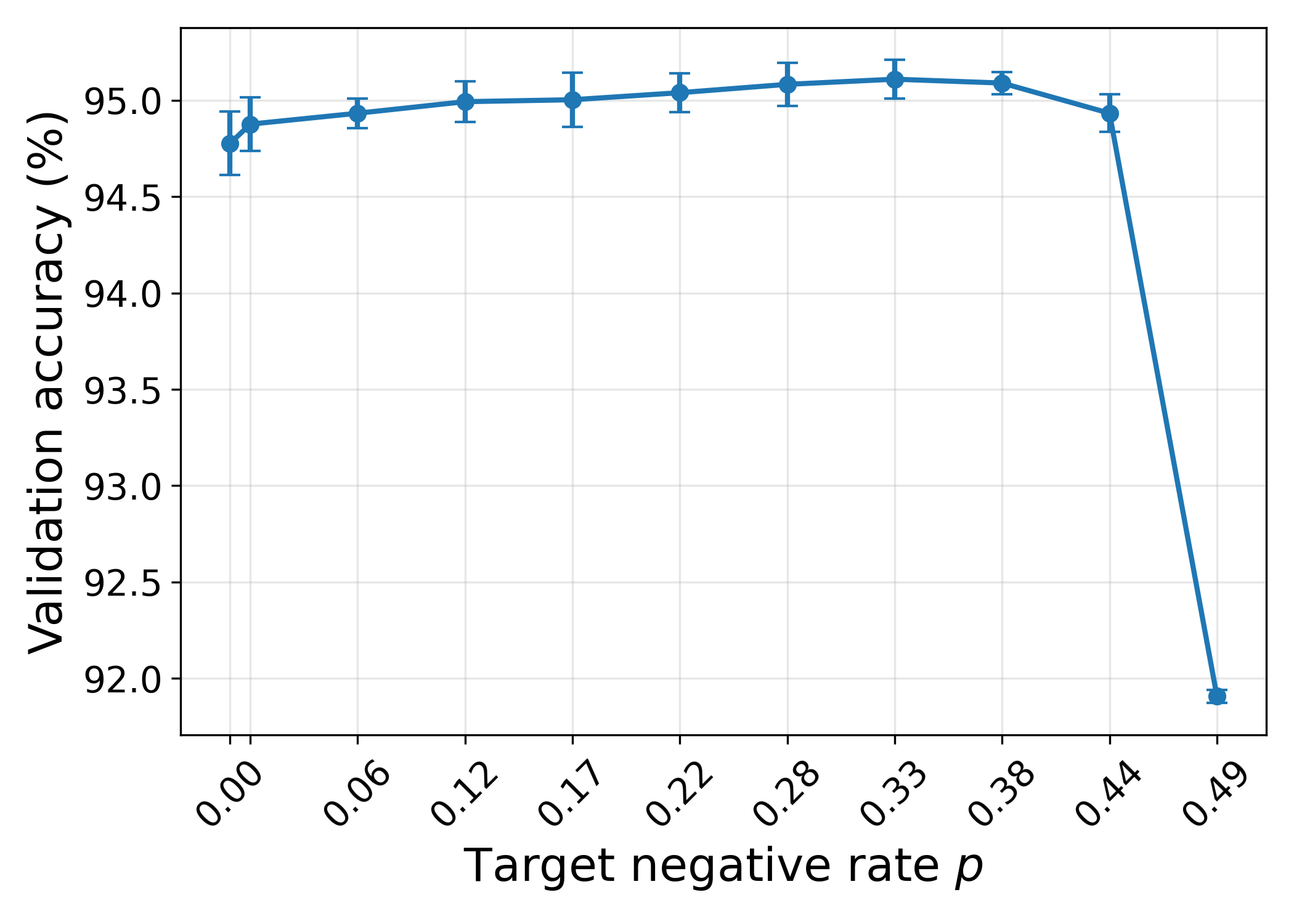}
    \caption{MNIST}
\end{subfigure} &
\begin{subfigure}[b]{0.486\textwidth}
    \centering
    \includegraphics[width=\textwidth]{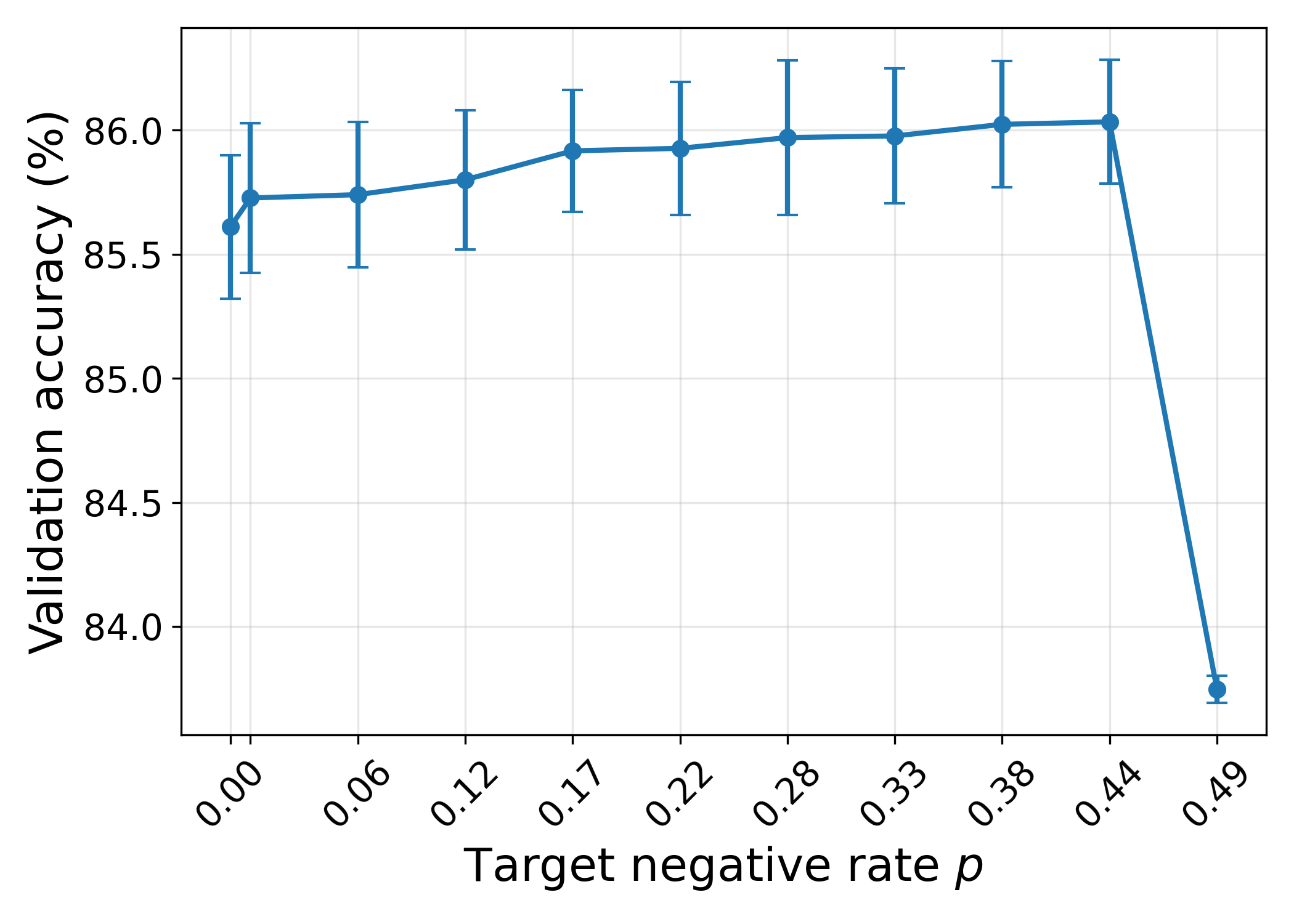}
    \caption{FMNIST}
\end{subfigure} 
\end{tabular}
\caption{%
 Validation accuracy as a function of the target negative rate $p$
for a 3-layer fully connected network (width $512$) with activation
$f(x)=\tanh(x)$ under the proposed initialization. Each point shows the
mean $\pm$ standard deviation over $5$ runs, where each run is trained
for 600 iterations, with
$\sigma^\ast(p,L,\omega)$ computed from the scalar surrogate calibration.
}
\label{fsweep1}
\end{figure}

\begin{figure}[h!]
\centering 
\begin{tabular}{ccc}
\begin{subfigure}[b]{0.32\textwidth}
    \centering
    \includegraphics[width=\textwidth]{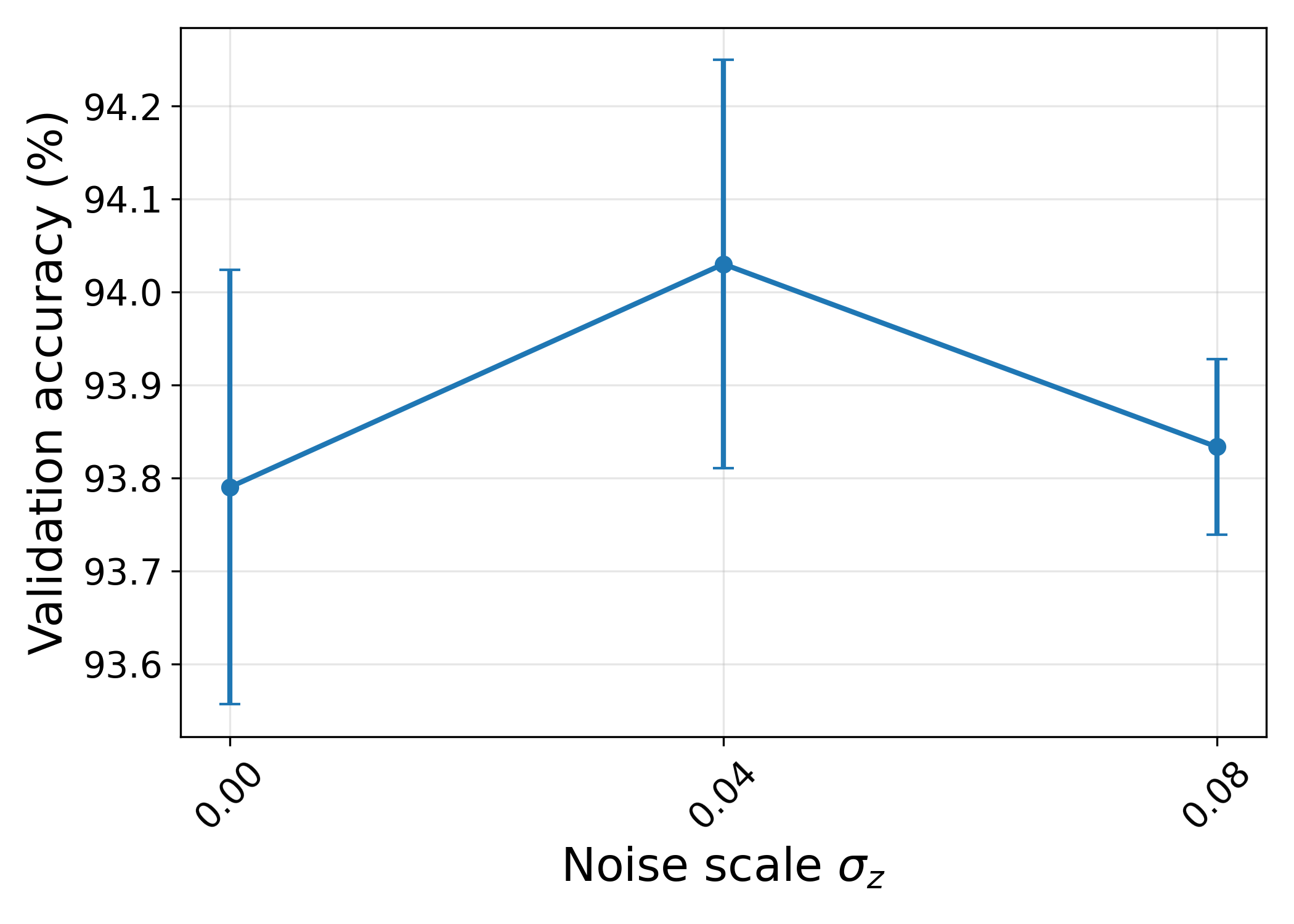}
    \caption{MNIST}
\end{subfigure} &
\begin{subfigure}[b]{0.32\textwidth}
    \centering
    \includegraphics[width=\textwidth]{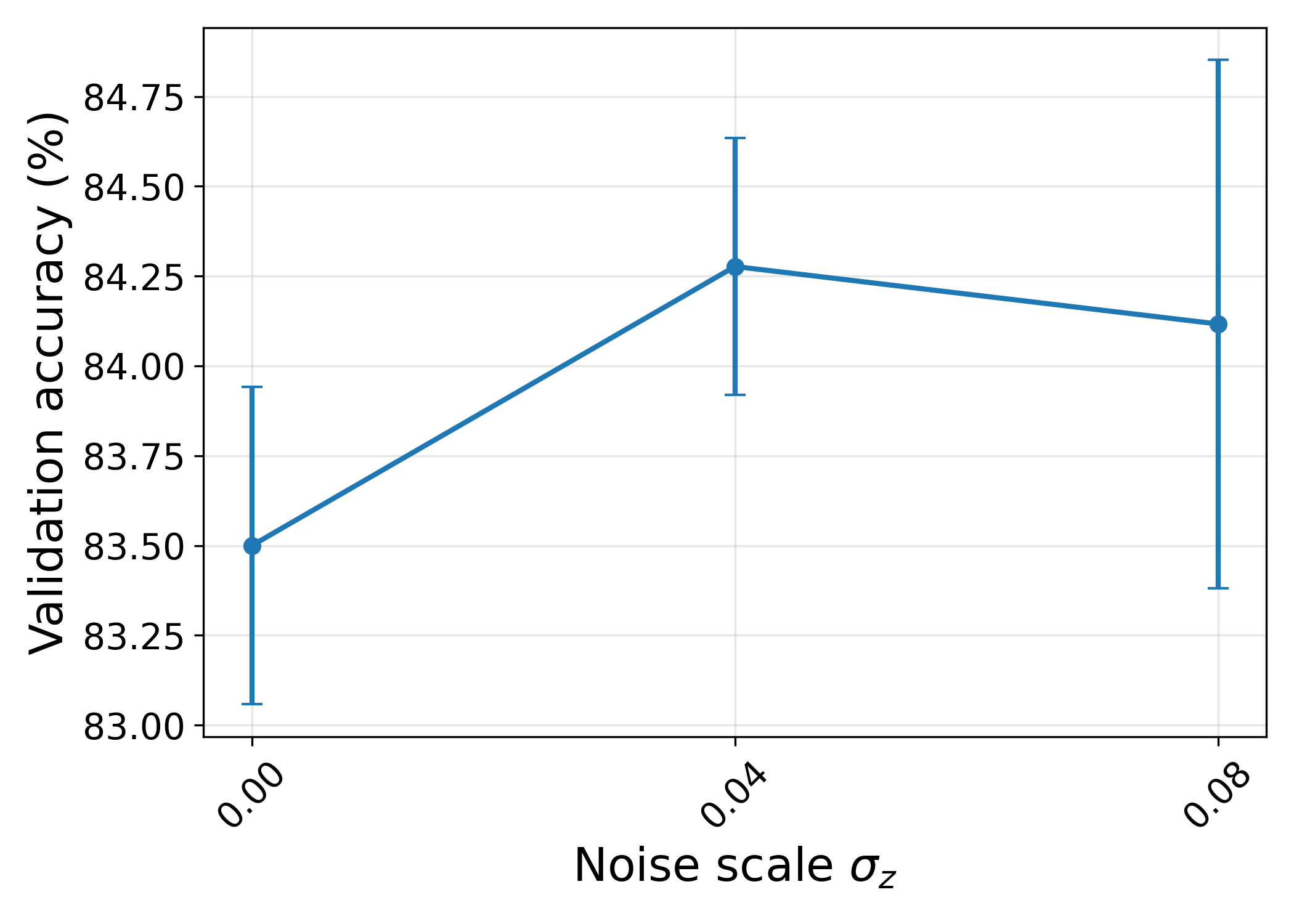}
    \caption{FMNIST}
\end{subfigure} &
\begin{subfigure}[b]{0.32\textwidth}
    \centering
    \includegraphics[width=\textwidth]{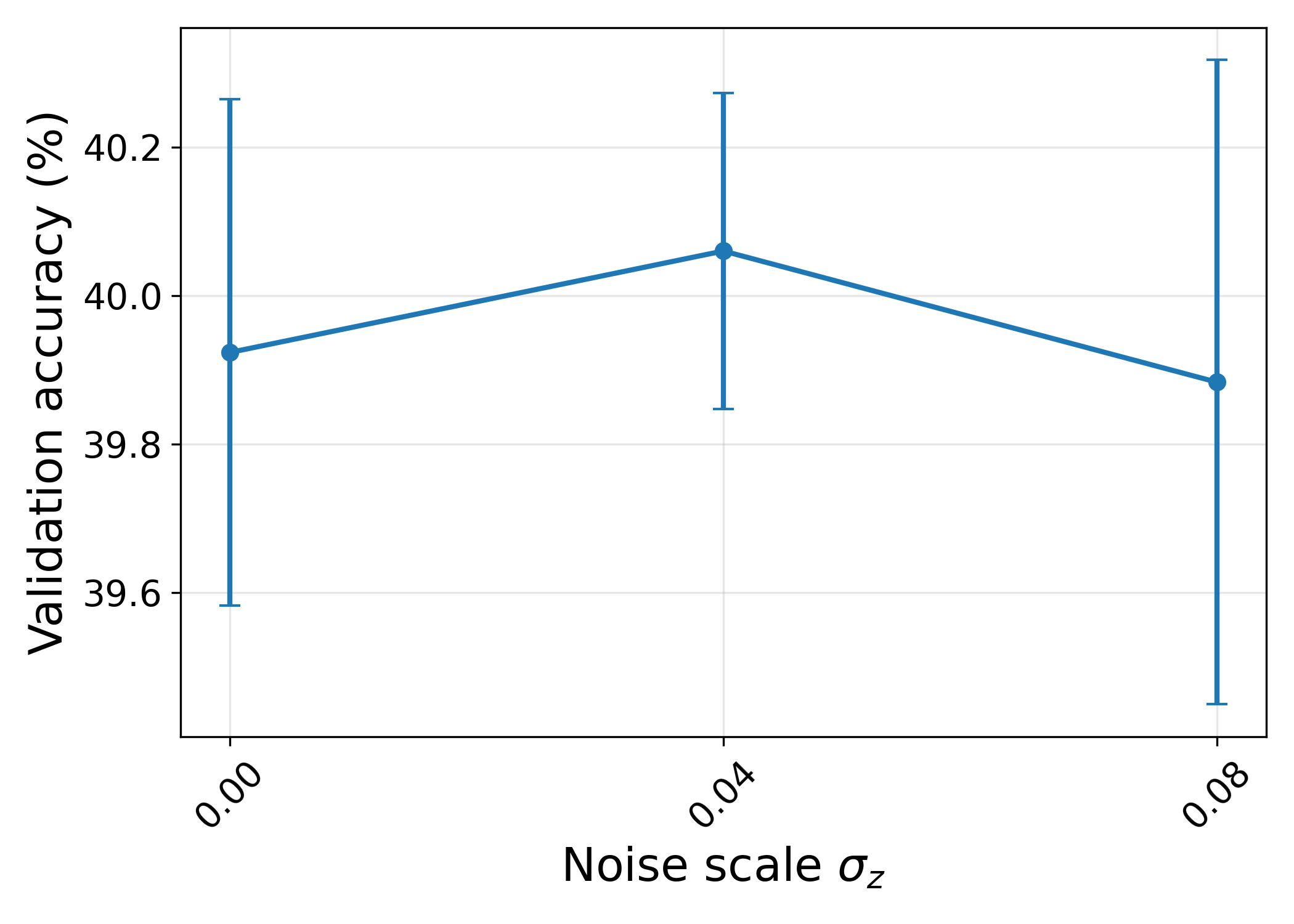}
    \caption{CIFAR-10}
\end{subfigure} 
\end{tabular}
\caption{%
 Validation accuracy as a function of the noise scale $\sigma_z$ for 50 layer fully connected networks~(width $64$) with activation $f(x)=\tanh(x)$ under the
proposed initialization, on MNIST, Fashion MNIST, and CIFAR-10. For each
$\sigma_z\in\{0,0.04,0.08\}$, we report the mean $\pm$ standard deviation over 5 training runs.
}
\label{fsweep50}
\end{figure}

\begin{figure}[h!]
\centering 
\begin{tabular}{ccc}
\begin{subfigure}[b]{0.32\textwidth}
    \centering
    \includegraphics[width=\textwidth]{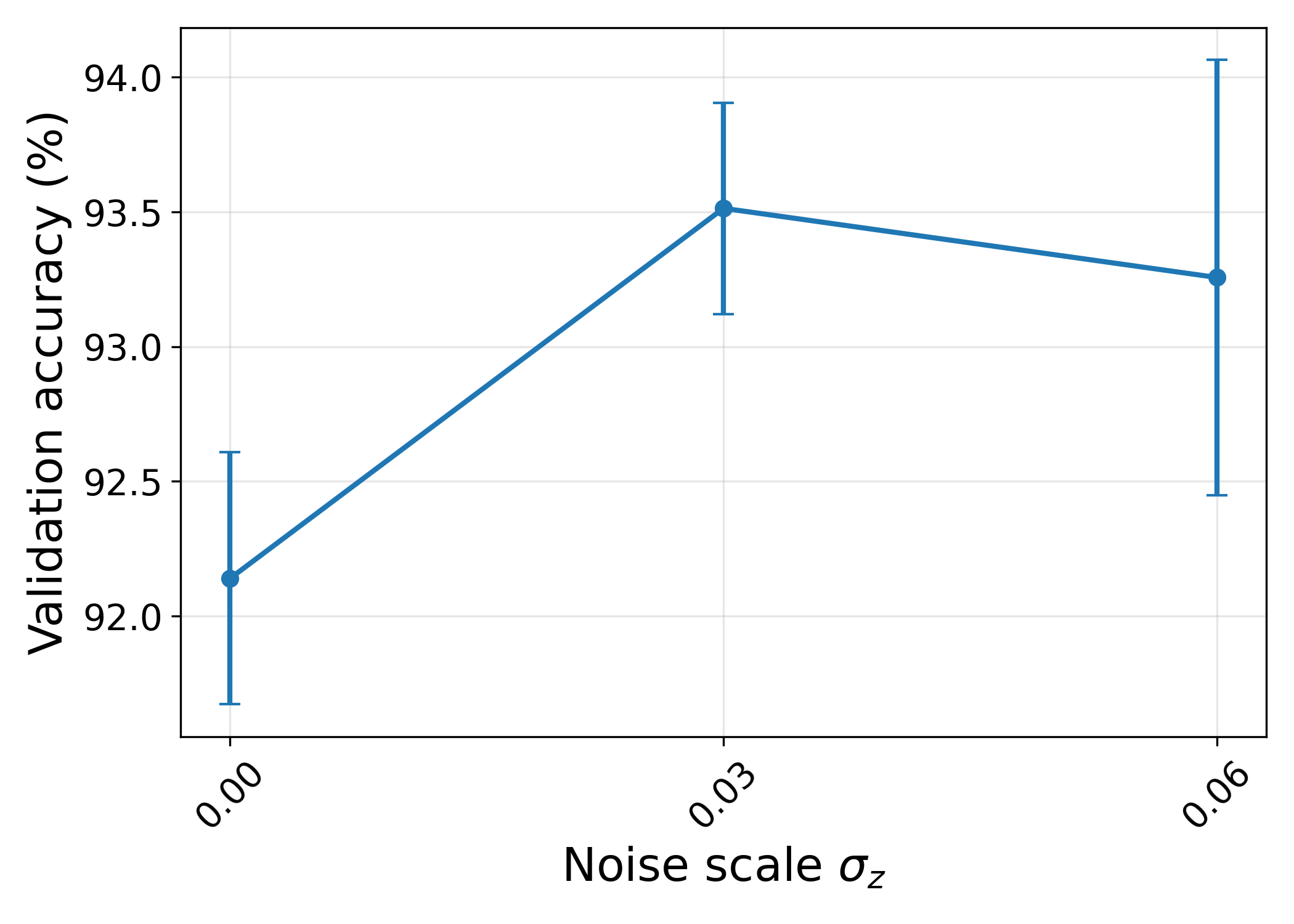}
    \caption{MNIST}
\end{subfigure} &
\begin{subfigure}[b]{0.32\textwidth}
    \centering
    \includegraphics[width=\textwidth]{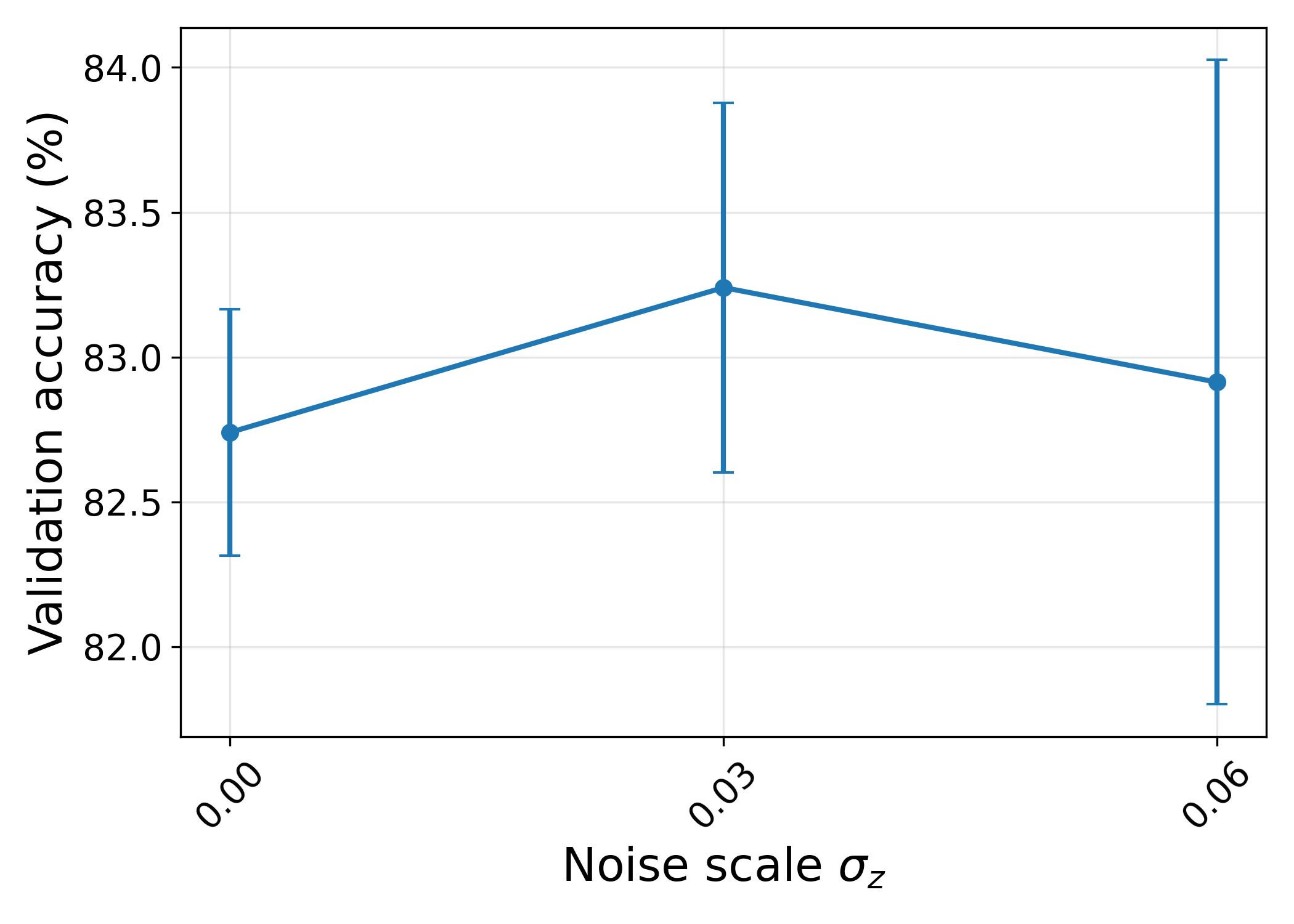}
    \caption{FMNIST}
\end{subfigure} &
\begin{subfigure}[b]{0.32\textwidth}
    \centering
    \includegraphics[width=\textwidth]{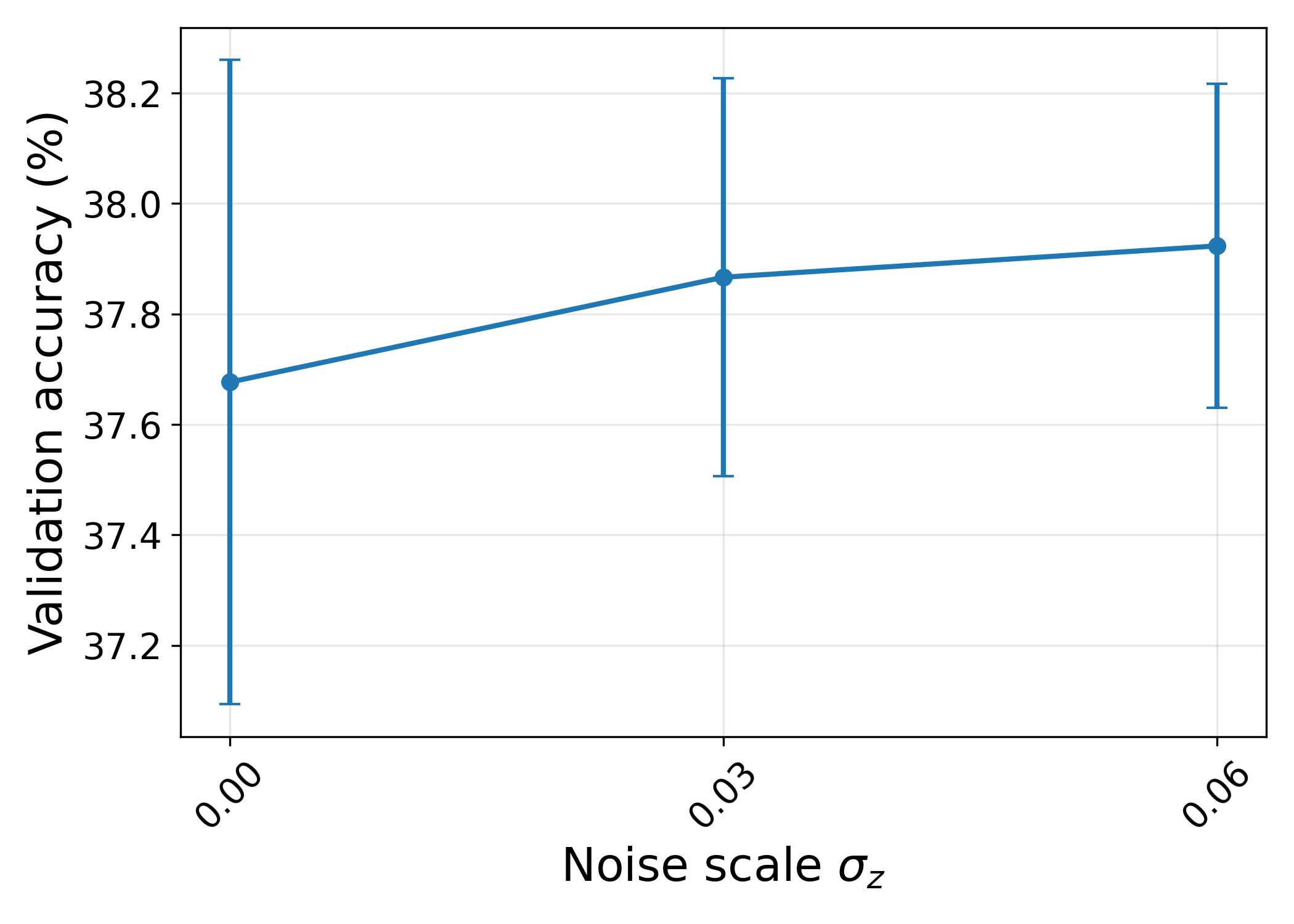}
    \caption{CIFAR-10}
\end{subfigure} 
\end{tabular}
\caption{%
 Validation accuracy as a function of the noise scale $\sigma_z$ for 100 layer fully connected networks~(width $64$) with activation $f(x)=\tanh(x)$ under the
proposed initialization, on MNIST, Fashion MNIST, and CIFAR-10. For each
$\sigma_z\in\{0,0.03,0.06\}$, we report the mean $\pm$ standard deviation over 5 training runs.
}
\label{fsweep100}
\end{figure}

\clearpage

\subsection{forward signal propagation}
In Section~\ref{section4.3} we study forward signal propagation under the proposed
initialization. Figure~\ref{negative_rate_f} (a) reports how well the
last-layer activation distribution is preserved as the depth $L$ increases (with
fixed width $W=64$), while panel (b) varies the width $N_\ell$ at fixed depth
$L=1000$ to test whether the distribution remains well spread even in relatively
narrow networks. In both settings, the proposed initialization maintains a
dispersed last-layer distribution up to depth $L=10{,}000$ and down to width
$N_\ell=20$, whereas the EOC (Gaussian i.i.d.) initialization quickly collapses
toward a more concentrated distribution.

To summarize how well the last-layer distribution is dispersed, we employ a
normalized histogram-entropy score
\begin{equation}\label{spread}
\mathrm{Spread}(x)
=\frac{-\sum_{i=1}^{B} p_i \log p_i}{\log B},
\qquad
p_i=\int_{\text{bin }i} \!\! \hat{f}_x(t)\,dt,\;\; \sum_{i=1}^{B} p_i=1,
\end{equation}
where $\hat{f}_x$ is the empirical density over $[-1,1]$ using $B$ bins.
Values close to $1$ indicate a highly dispersed (near-uniform) last-layer
distribution, whereas values near $0$ correspond to a highly concentrated
distribution.

\medskip

\begin{figure}[h!]
\centering 
\begin{tabular}{ccc}
\begin{subfigure}[b]{0.45\textwidth}
    \centering
    \includegraphics[width=\textwidth]{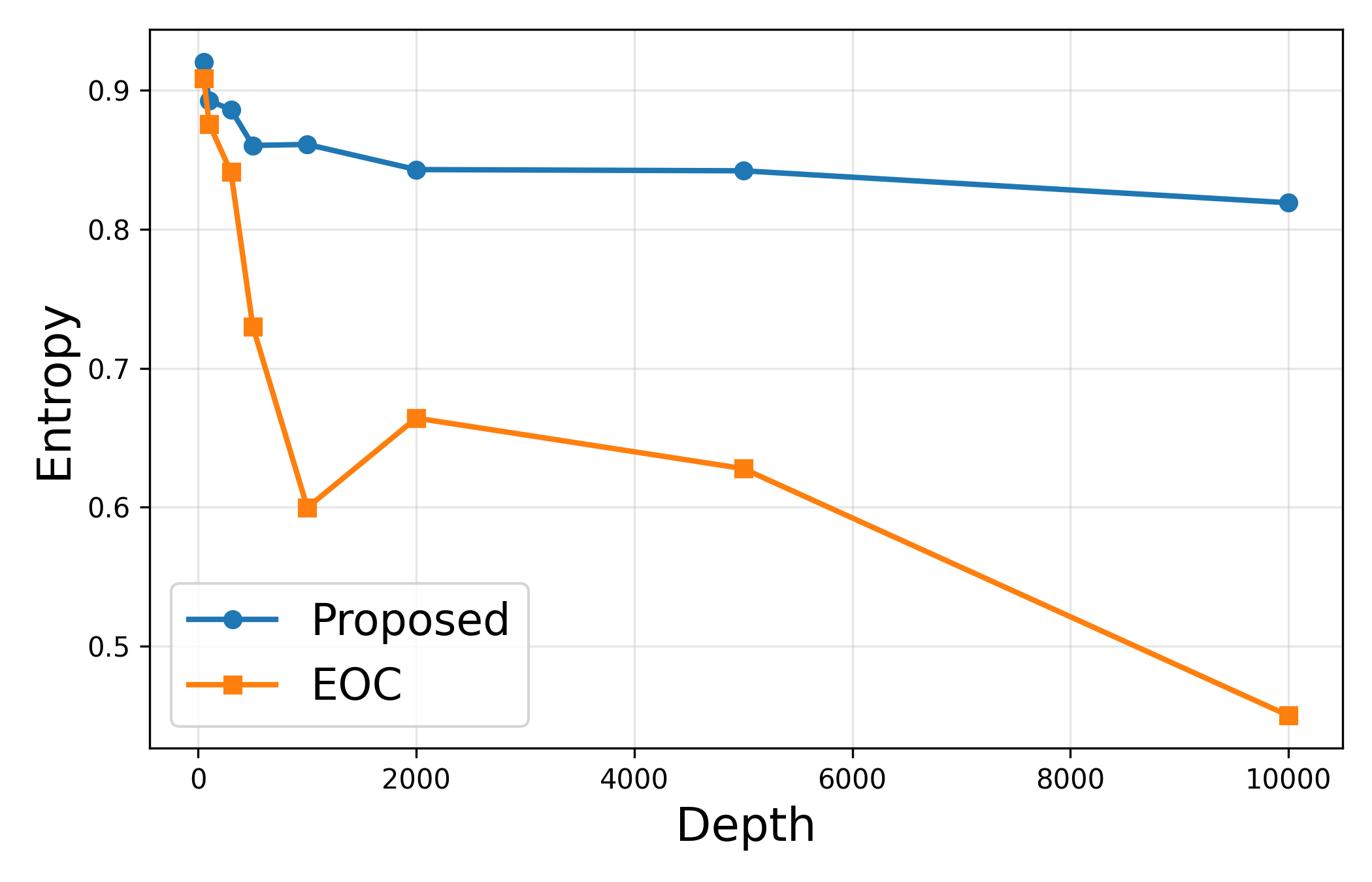}
    \caption{Depth}
\end{subfigure} &
\begin{subfigure}[b]{0.45\textwidth}
    \centering
    \includegraphics[width=\textwidth]{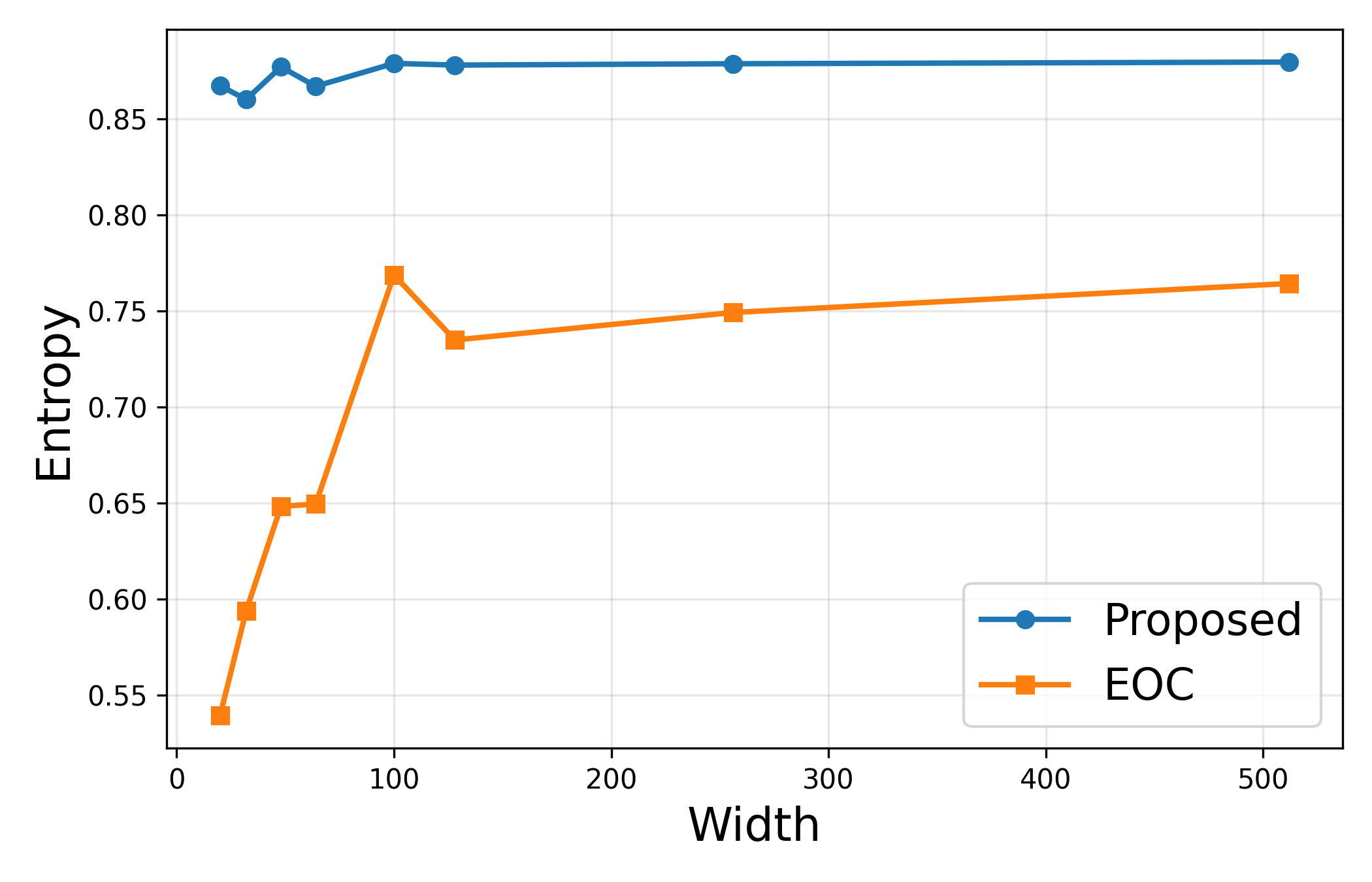}
    \caption{Width}
\end{subfigure} 
\end{tabular}
\caption{
 Entropy of the last-layer activation distribution for tanh networks under the
proposed initialization and the EOC initialization. \textbf{(a)} Entropy as a function
of depth $L$ with fixed width $W=64$. \textbf{(b)} Entropy as a function of width
$N_{\ell}$ with fixed depth $L=1000$, using widths
$W \in \{20, 32, 48, 64, 100, 128, 256, 512\}$ as shown in the panels. The
entropy is defined in Equation~\ref{spread}.
}
\label{negative_rate_f1}
\end{figure}

Figures~\ref{fig_histo_depth}, \ref{fig_histo_depth0.1}, and \ref{fig_histo_depth10} visualize the last-layer activation histograms obtained after the initial forward pass in 1000-layer FFNNs of width $64$ with activations $0.1\tanh(x)$, $\tanh(x)$, and $10\tanh(x)$, respectively. Under the proposed initialization, the activation distribution
remains well dispersed in all three cases, whereas Gaussian i.i.d.\
initializations quickly collapse toward a narrow band around zero. Figures~\ref{fig_histo_width_0.1} and~\ref{fig_histo_width10} show a similar comparison as the width is reduced: even for very narrow networks, the proposed scheme preserves a spread-out last layer distribution, while Gaussian initializations drive the last layer activations to saturate near zero.

\clearpage

\begin{figure}[h!]
\centering 
\includegraphics[width=1\textwidth]{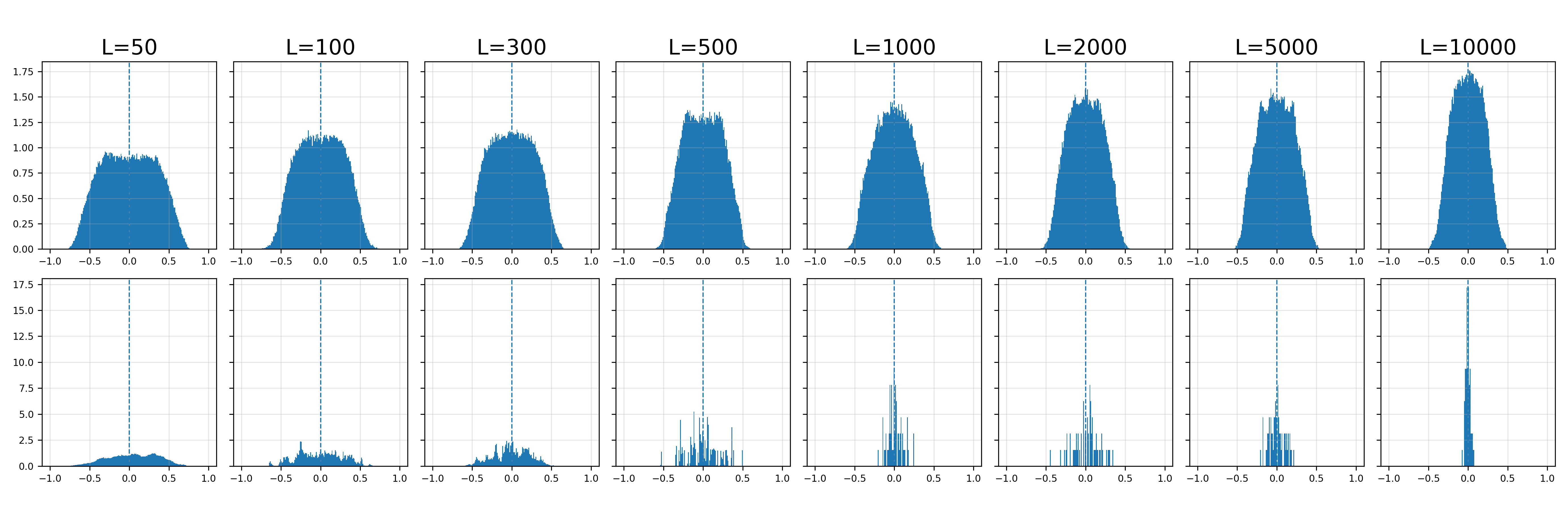}
\caption{Last layer activation histograms for $\tanh$ networks with width $W=64$ and varying depth under the proposed initialization (top row) and the EOC initialization (bottom row). Each column corresponds to a different depth $L$.}
\label{fig_histo_depth}
\end{figure}

\begin{figure}[h!]
\centering 
\includegraphics[width=1\textwidth]{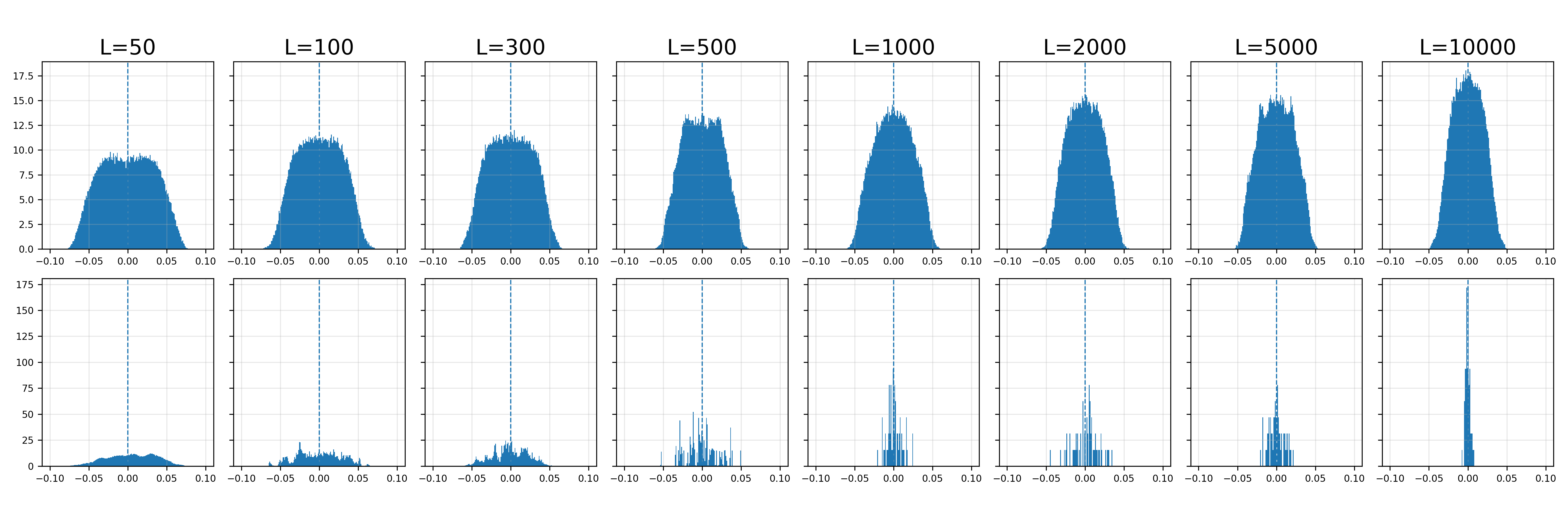}
\caption{Last layer activation histograms for $0.1\tanh$ networks with width $W=64$ and varying depth under the proposed initialization (top row) and the EOC initialization (bottom row). Each column corresponds to a different depth $L$.}
\label{fig_histo_depth0.1}
\end{figure}

\begin{figure}[h!]
\centering 
\includegraphics[width=1\textwidth]{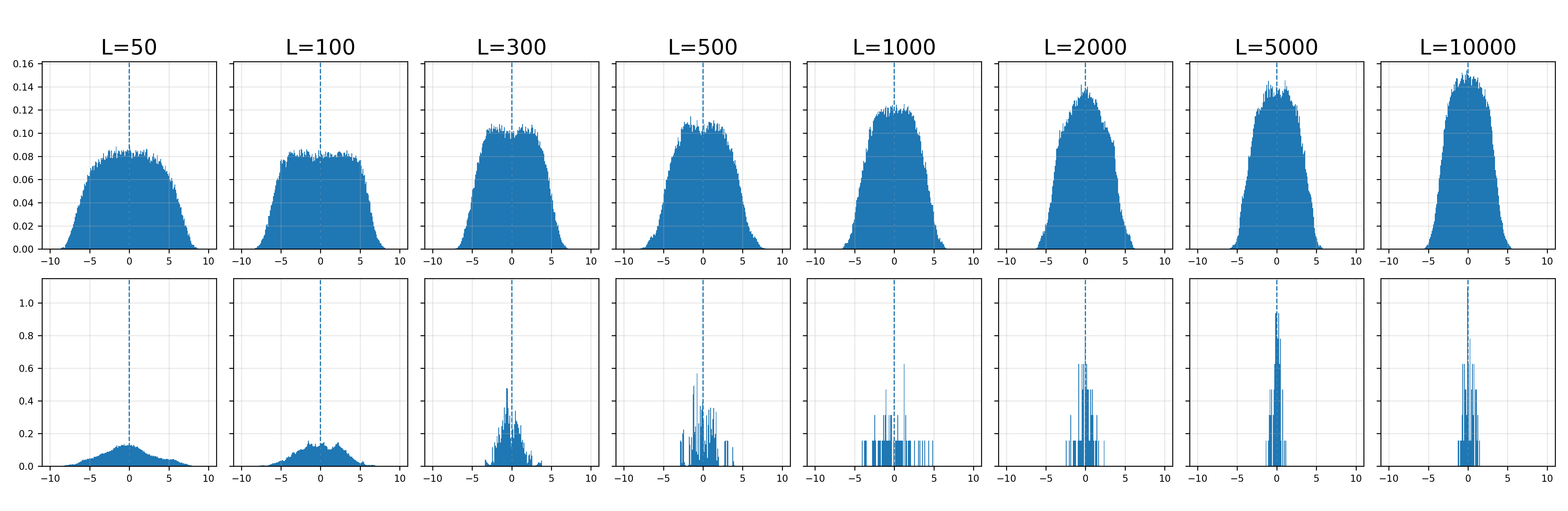}
\caption{Last layer activation histograms for $10\tanh$ networks with width $W=64$ and varying depth under the proposed initialization (top row) and the EOC initialization (bottom row). Each column corresponds to a different depth $L$.}
\label{fig_histo_depth10}
\end{figure}

\clearpage

\begin{figure}[h!]
\centering 
\includegraphics[width=1\textwidth]{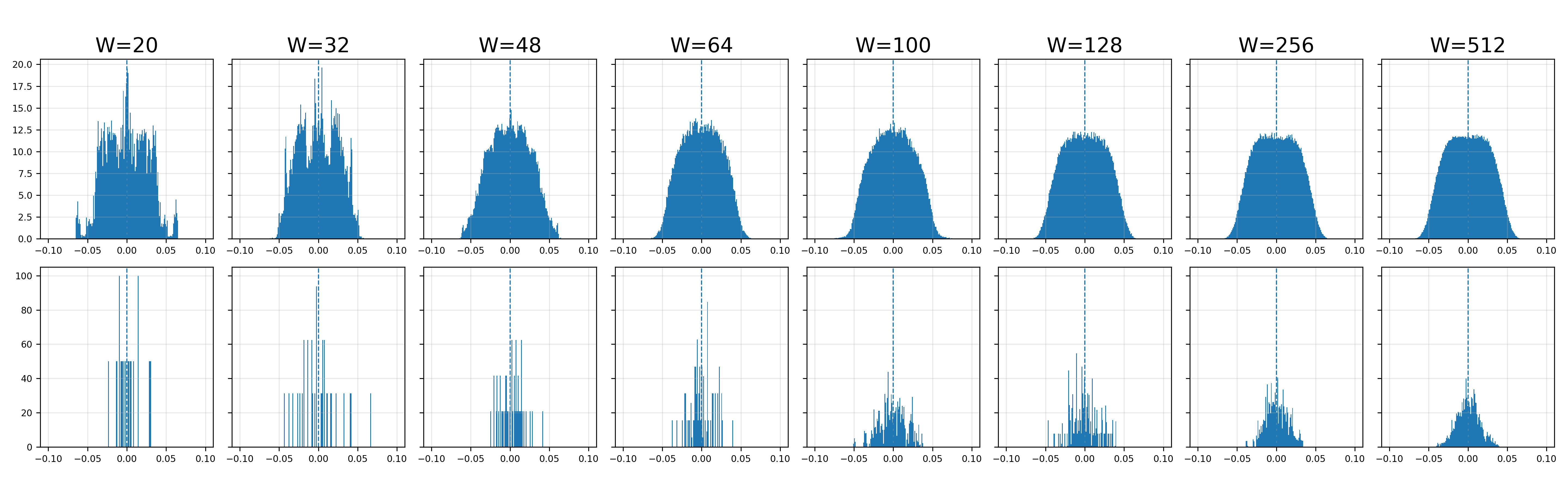}
\caption{Last layer activation histograms for $0.1\tanh$ networks with depth $L=1000$ and varying width under the proposed initialization (top row) and the EOC initialization (bottom row). }
\label{fig_histo_width_0.1}
\end{figure}

\begin{figure}[h!]
\centering 
\includegraphics[width=1\textwidth]{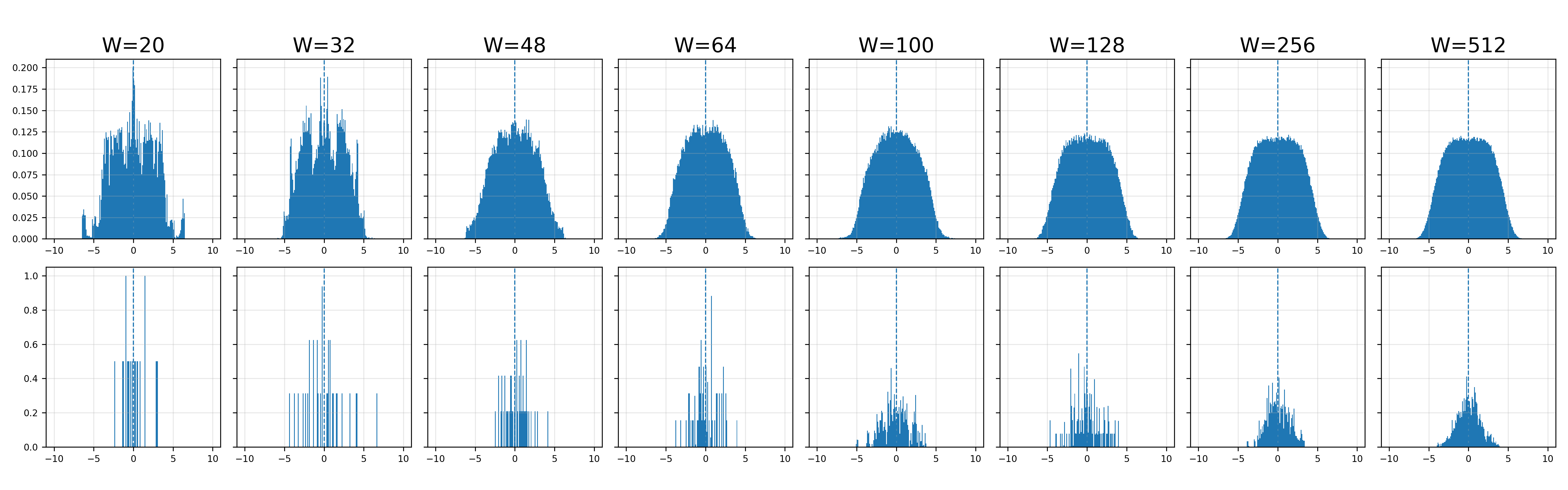}
\caption{Last layer activation histograms for $10\tanh$ networks with depth $L=1000$ and varying width under the proposed initialization (top row) and the EOC initialization (bottom row).}
\label{fig_histo_width10}
\end{figure}

\subsection{backward signal propagation}
Figure~\ref{chi_heatmap_tanh} visualizes $|\chi_L(\sigma)-1|$ for the proposed
initialization and a Gaussian initialization with $\tanh$ activations, as a
function of depth $L$ and scale $\sigma$. For the proposed scheme, the region
where $\chi_L(\sigma)\approx 1$ occupies a broad band in the $(L,\sigma)$ plane,
whereas for the Gaussian initialization it is confined to a narrow strip around
a single variance. This indicates that our initialization keeps $\chi_L$ close
to $1$ over a much wider range of noise scales and depths, and is therefore more
robust and better suited for stable training in deep, narrow networks.

\begin{figure}[h!]
\centering 
\includegraphics[width=1\textwidth]{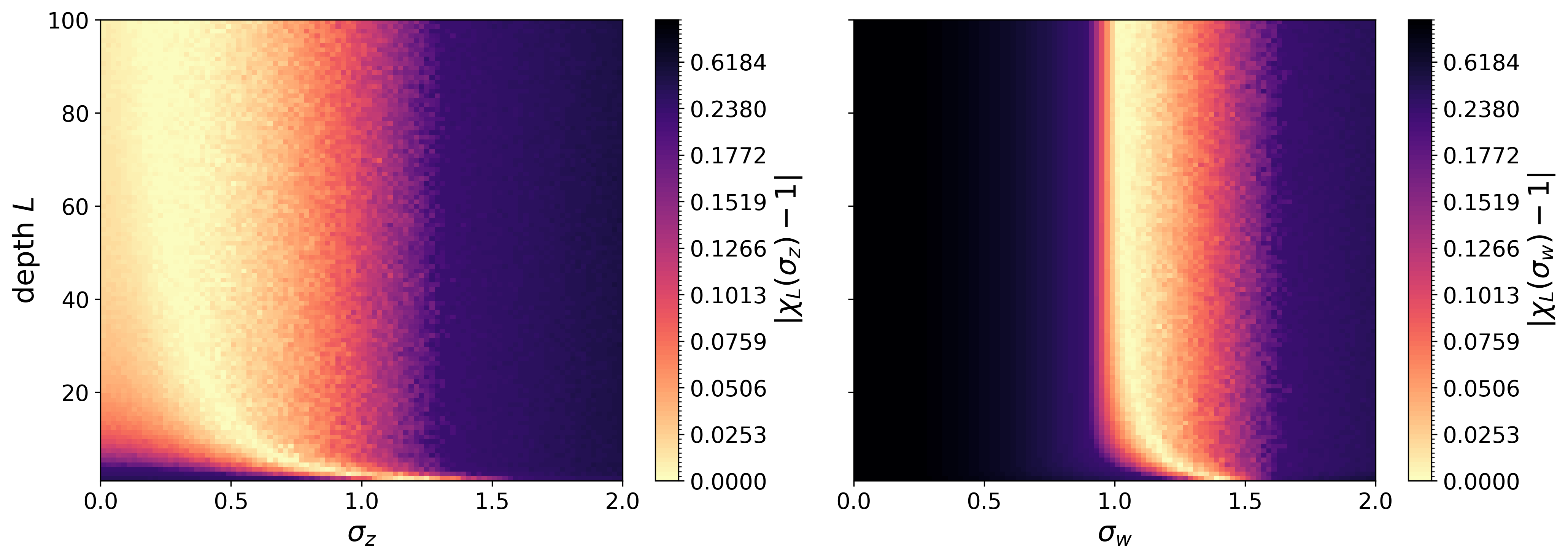}
\caption{Heatmaps of the deviation $\lvert\chi_L(\sigma)-1\rvert$ for the proposed
initialization \textbf{(left)} and a Gaussian i.i.d.\ initialization
\textbf{(right)} with $\tanh$ activations, as a function of depth $L$ and scale $\sigma$.
Brighter bands indicate near critical regimes where forward and backward signals
are approximately preserved.}
\label{chi_heatmap_tanh}
\end{figure}

\subsection{Choice of the target negative rate $p_{\mathrm{real}} = 0.4$.}\label{b.7}
Because $f \in \mathcal{F}$ is odd and strictly increasing, the sign of each coordinate $x_i^\ell$ is entirely determined by the product of the effective gains along that coordinate. We therefore use the coordinate-wise sign flip probability
\[
  \tilde\pi_L := \mathbb{P}(x_L < 0 \mid x_0 > 0)
\]
as a simple indicator for how much ``sign information'' about the input is preserved at depth $L$. Two extreme regimes are undesirable. If $\tilde\pi_L \approx 0$, almost all coordinates preserve their initial sign, so the network behaves nearly like an identity map at initialization; this preserves information but yields very limited expressiveness and weak exploration of the odd--sigmoid nonlinearity. At the other extreme, if $\tilde\pi_L \approx 0.5$, the final sign is essentially a fair coin flip regardless of the initial sign, meaning that the directional information carried by the input has been almost completely randomized and we interpret this as a form of information loss.

In practice, we therefore target an intermediate regime in which most coordinates keep their initial sign, but a non-negligible fraction flip so that the representation can change meaningfully. Concretely, we fix a desired ``real'' negative rate $p_{\mathrm{real}} = 0.4$ and choose $\sigma_z$ so that the empirical FFNN-driven negative rate at depth $L$ satisfies $\tilde\pi_L \approx p_{\mathrm{real}}$. This calibration preserves the sign of the majority of coordinates ($\approx 60\%$) while still allowing a substantial minority ($\approx 40\%$) to flip, which provides enough randomness for learning without completely destroying the initial sign structure.

We do not claim that $p_{\mathrm{real}} = 0.4$ is an information-theoretically optimal value. Rather, it is an empirically grounded target: across a wide range of depths, widths, datasets, and activation scales, we consistently observe that the $\sigma_z$ obtained from the $p_{\mathrm{real}} = 0.4$ calibration yields the best or near-best validation performance, with accuracy degrading as we move to significantly smaller or larger negative rates (see Figures~\ref{fsweep}, \ref{fsweep1}, \ref{fsweep50}, and~\ref{fsweep100}). This suggests that the proposed negative rate criterion provides a practically useful operating point for odd--sigmoid networks.

\clearpage
\section{Additional Experimental Results with Neural Networks}\label{app.nn}
\paragraph{Experimental Setting.}
We evaluate the proposed initialization using the Adam optimizer with a batch size of 128 and reserve 15\% of the training data for validation. All
experiments are implemented in PyTorch without skip connections and without
learning rate decay. We set the learning rate to
$10^{-4},\omega$ and we use the same learning rate for all Gaussian i.i.d.\ baseline initializations~(Xavier, He, and EOC) for a fair comparison.

\subsection{Network Size Independence}

\begin{figure}[h!]
\centering 
\begin{tabular}{ccc}
\begin{subfigure}[b]{0.30\textwidth}
    \centering
    \includegraphics[width=\textwidth]{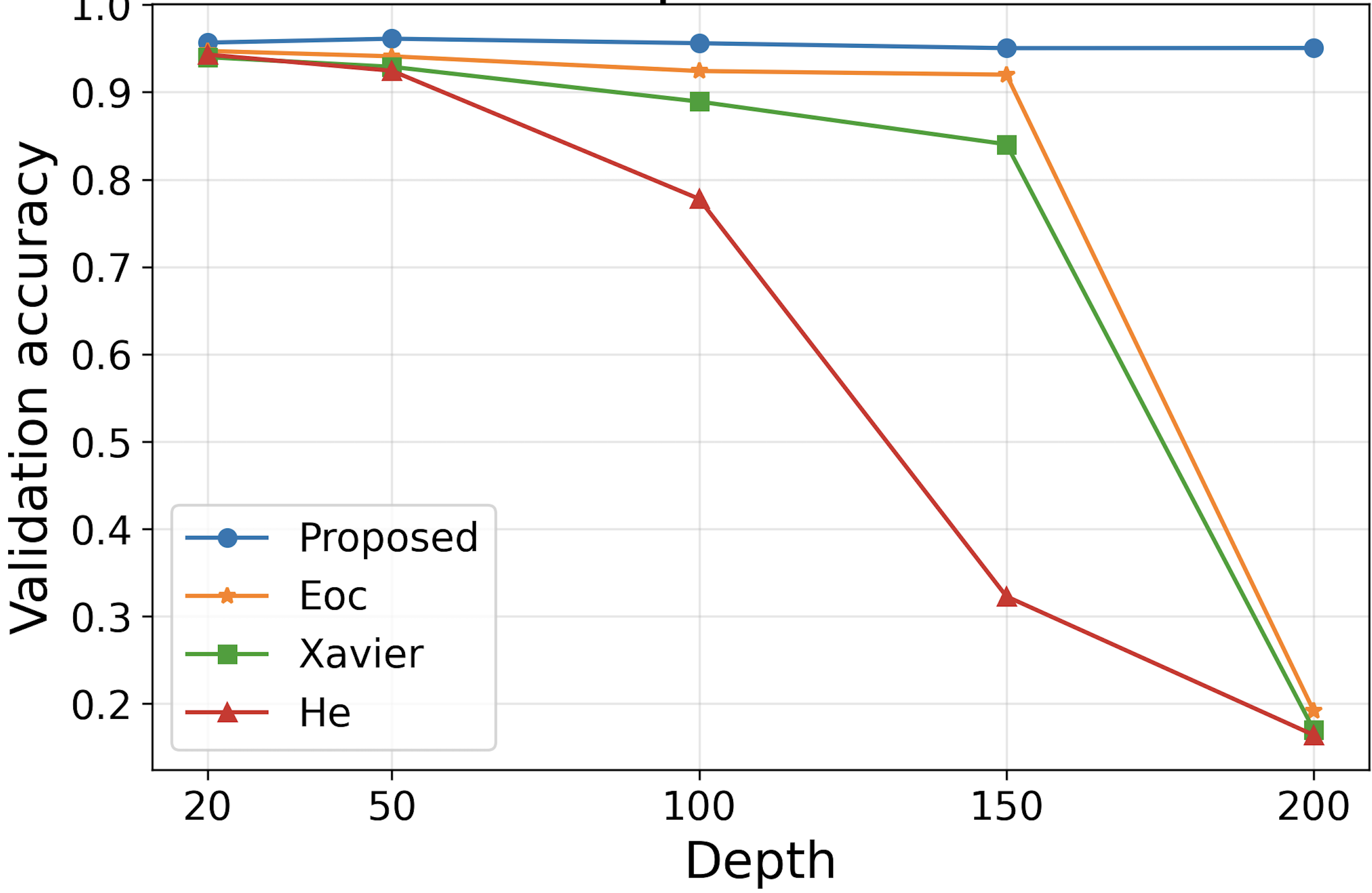}
    \caption{$\tanh$}
\end{subfigure} &
\begin{subfigure}[b]{0.30\textwidth}
    \centering
    \includegraphics[width=\textwidth]{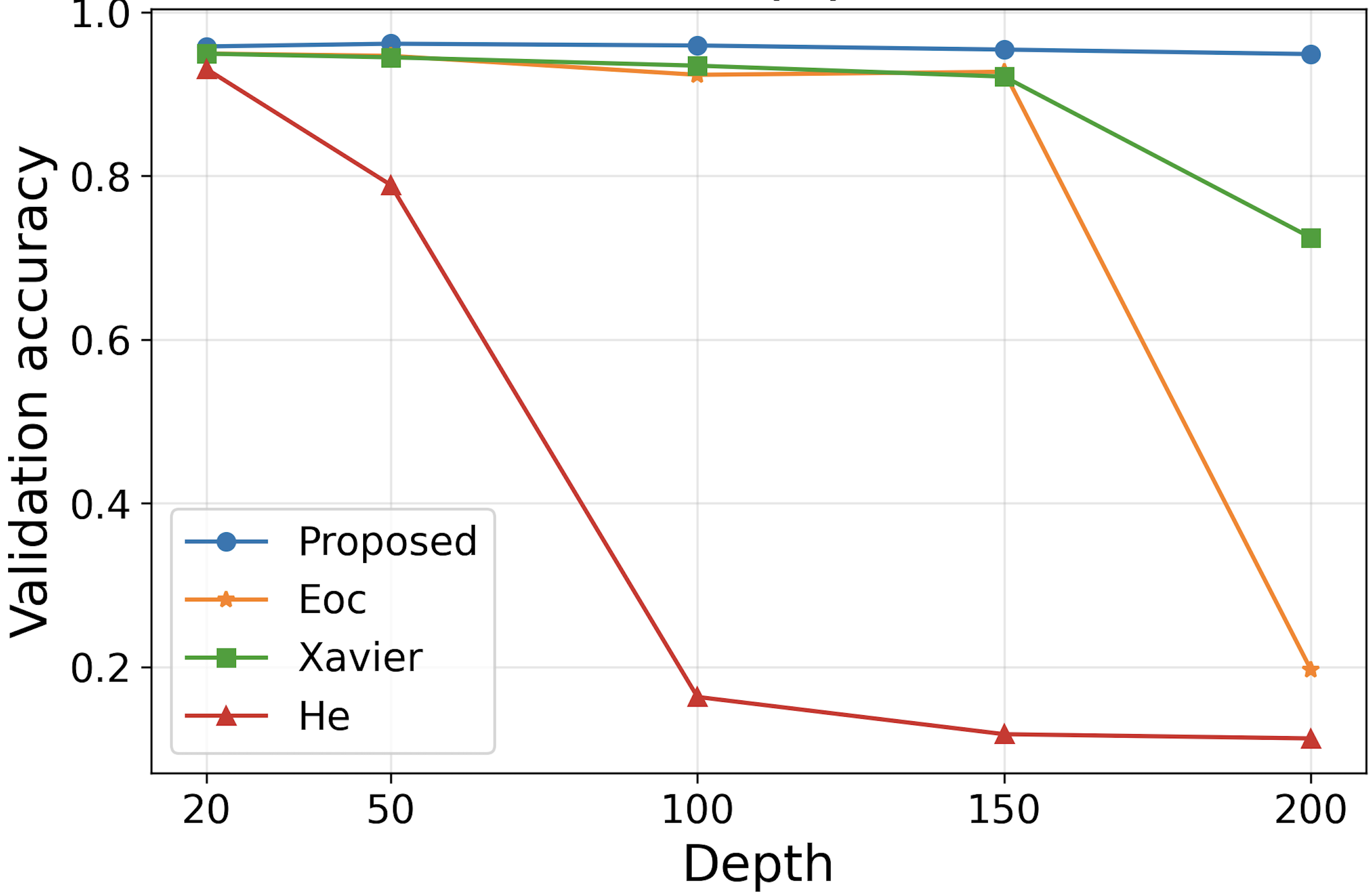}
    \caption{$\operatorname{erf}$}
\end{subfigure} &
\begin{subfigure}[b]{0.30\textwidth}
    \centering
    \includegraphics[width=\textwidth]{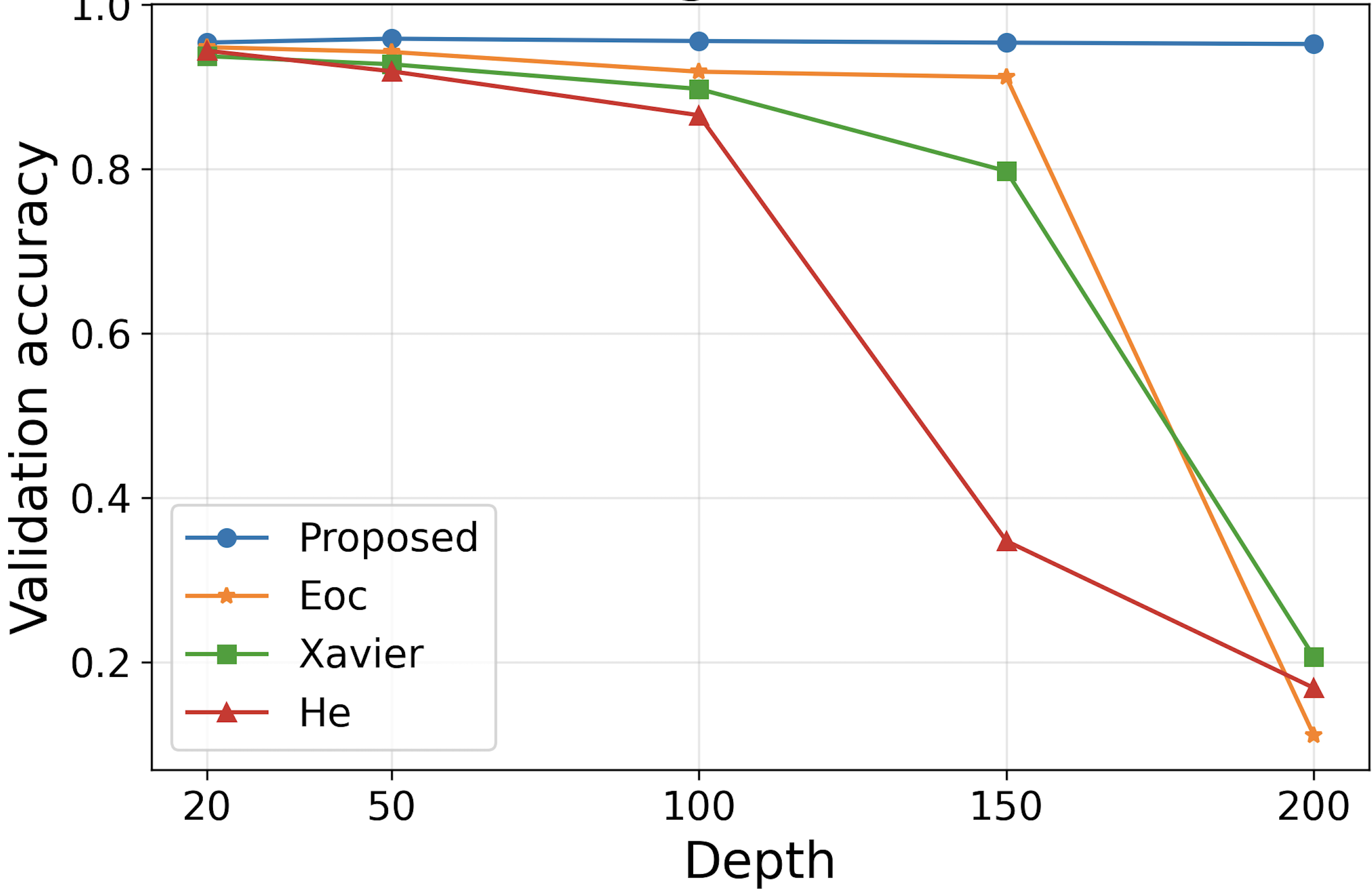}
    \caption{$\operatorname{gd}$}
\end{subfigure} \\
\begin{subfigure}[b]{0.30\textwidth}
    \centering
    \includegraphics[width=\textwidth]{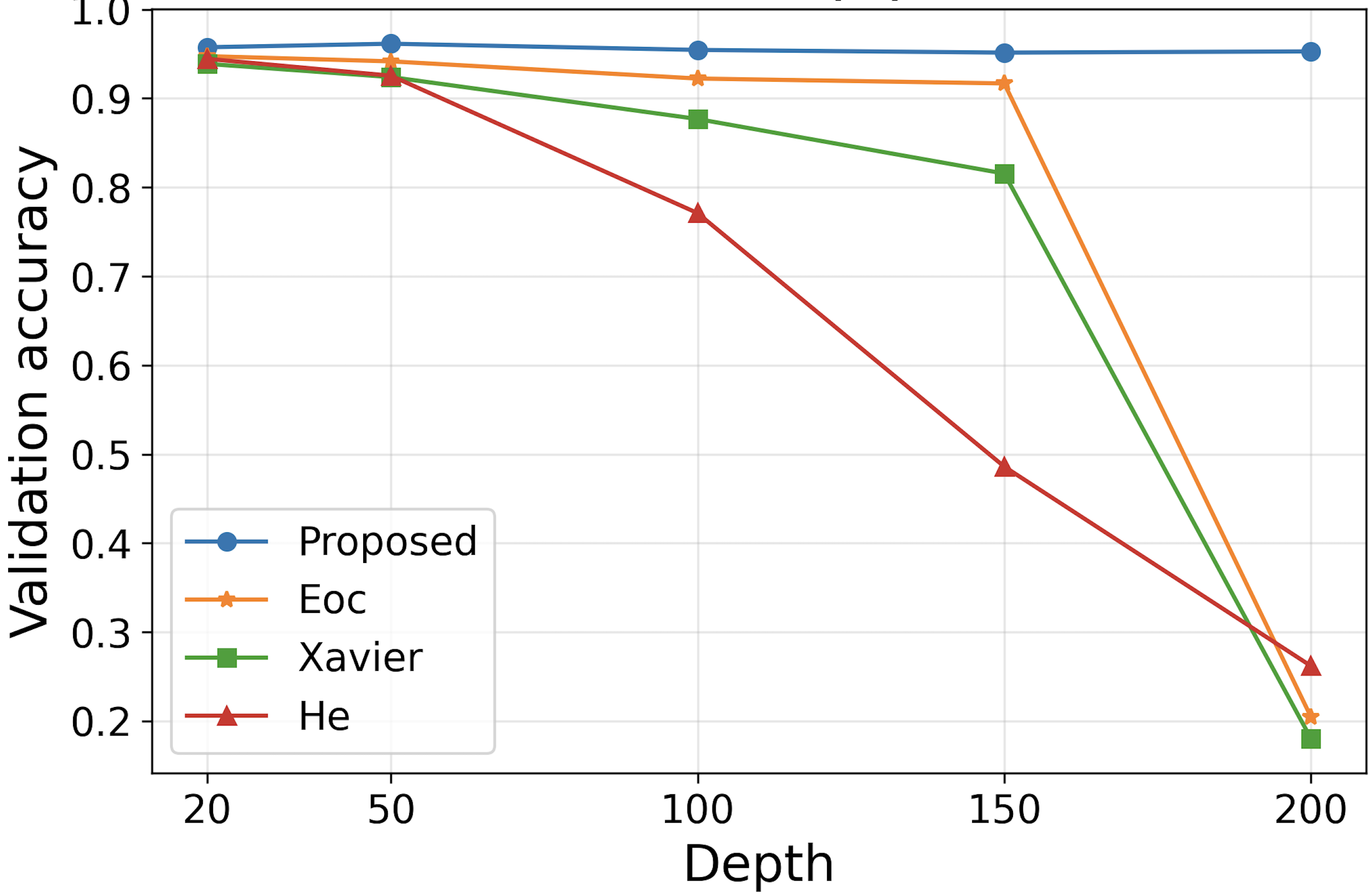}
    \caption{$\operatorname{arctan}$}
\end{subfigure} &
\begin{subfigure}[b]{0.30\textwidth}
    \centering
    \includegraphics[width=\textwidth]{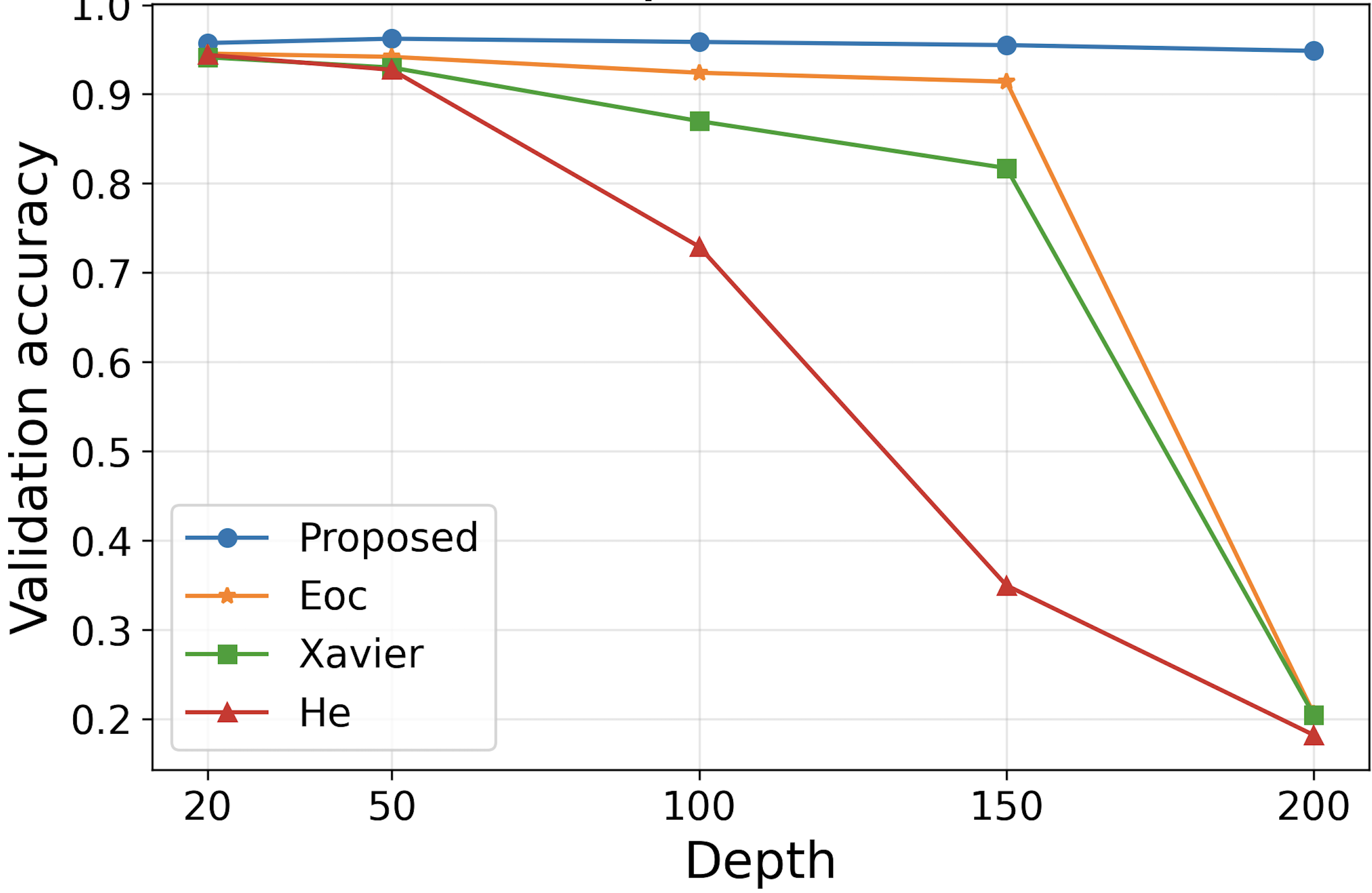}
    \caption{$\operatorname{softsign_2}$}
\end{subfigure} &
\begin{subfigure}[b]{0.30\textwidth}
    \centering
    \includegraphics[width=\textwidth]{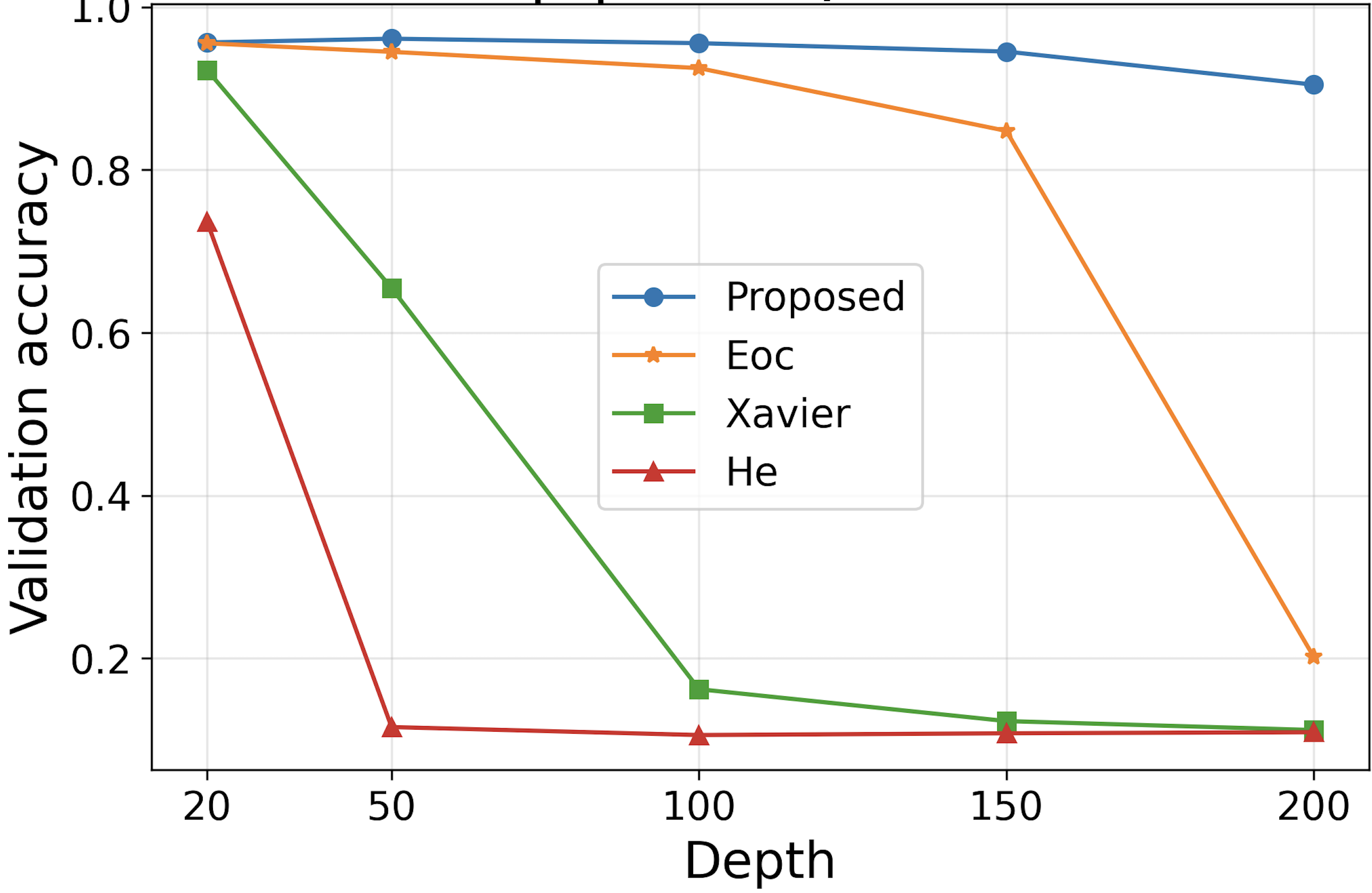}
    \caption{$\operatorname{softsign_1+softsign_2}$}
\end{subfigure} 
\end{tabular} 
\caption{MNIST validation accuracy versus depth for FFNNs~(width 64) with odd-sigmoid activations and four initializations~(Proposed, EOC, Xavier, He). Each panel fixes one activation and shows the best validation accuracy over 10 epochs for depths \(L \in \{20,50,100,150,200\}\).}
\label{dp1}
\end{figure}

\begin{figure}[h!]
\centering 
\begin{tabular}{ccc}
\begin{subfigure}[b]{0.30\textwidth}
    \centering
    \includegraphics[width=\textwidth]{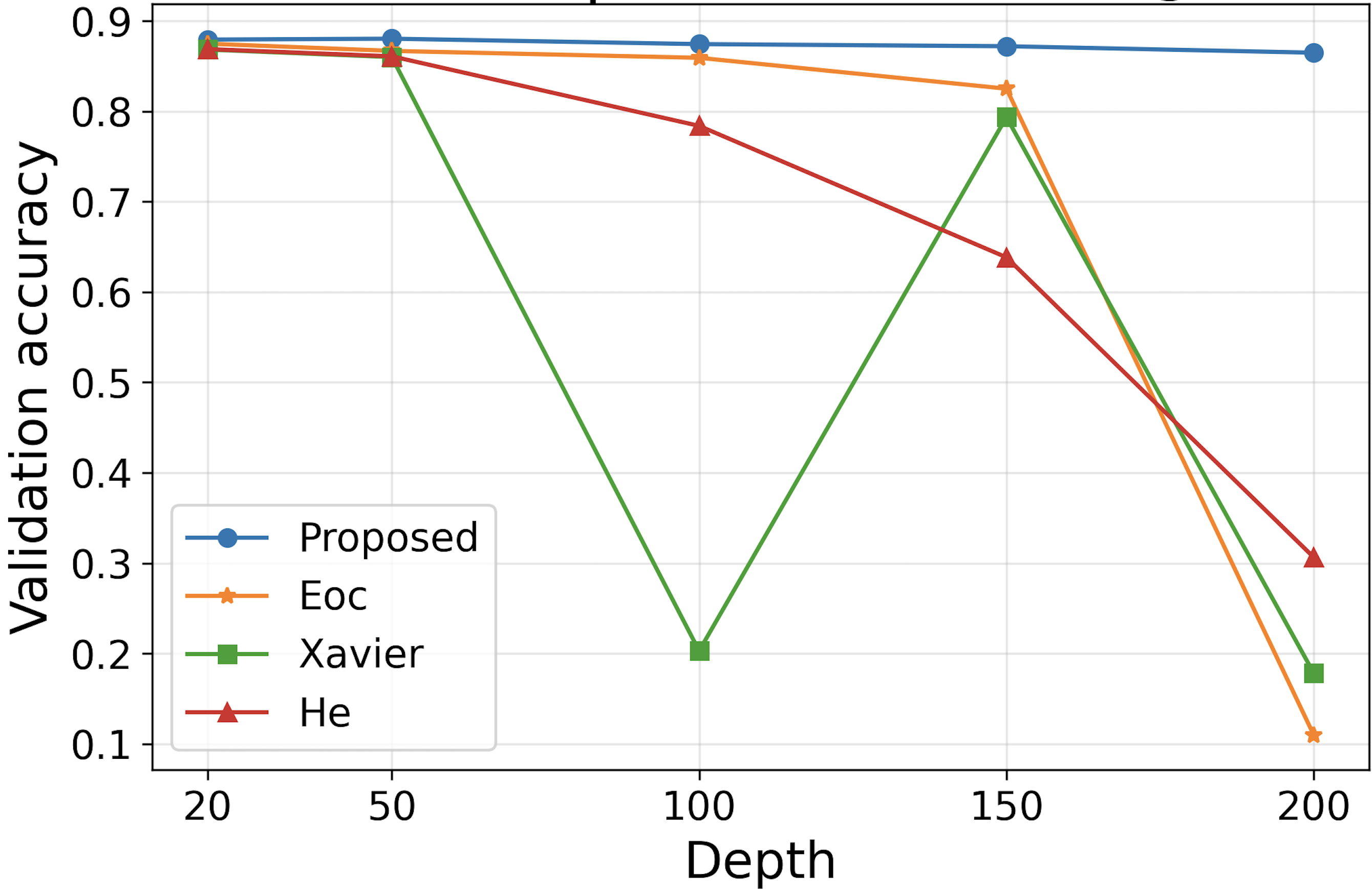}
    \caption{$\tanh$}
\end{subfigure} &
\begin{subfigure}[b]{0.30\textwidth}
    \centering
    \includegraphics[width=\textwidth]{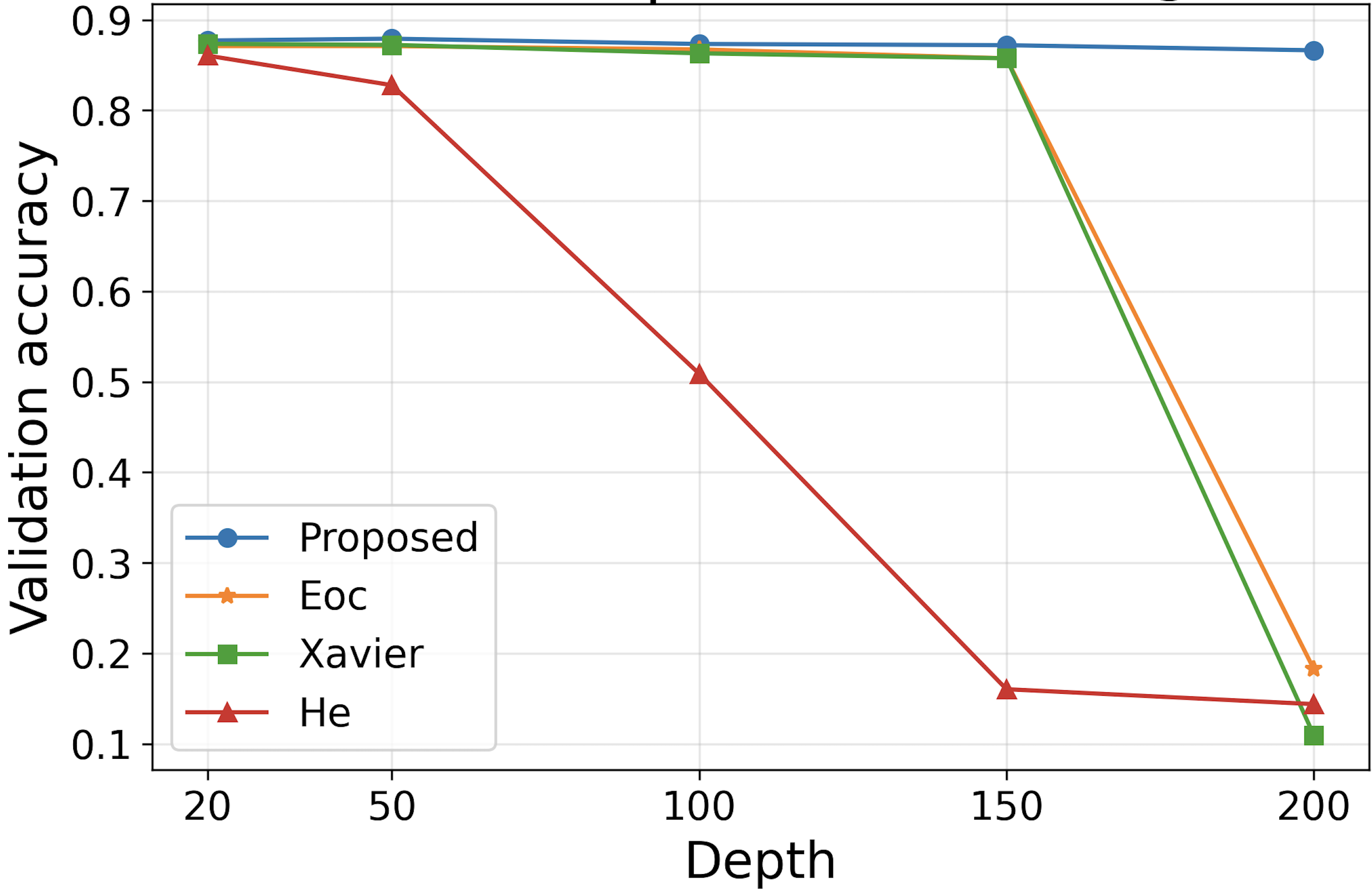}
    \caption{$\operatorname{erf}$}
\end{subfigure} &
\begin{subfigure}[b]{0.30\textwidth}
    \centering
    \includegraphics[width=\textwidth]{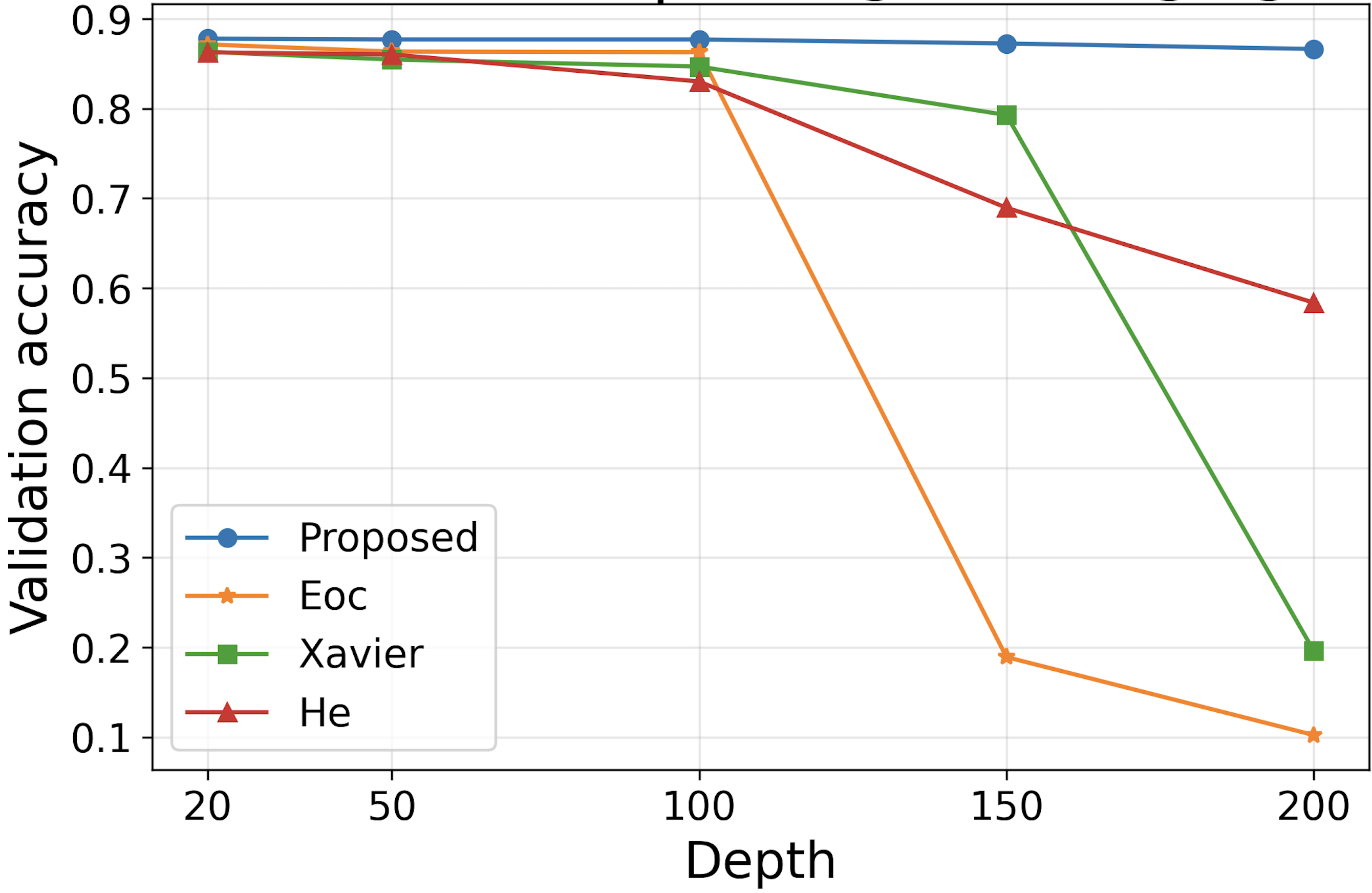}
    \caption{$\operatorname{gd}$}
\end{subfigure} \\
\begin{subfigure}[b]{0.30\textwidth}
    \centering
    \includegraphics[width=\textwidth]{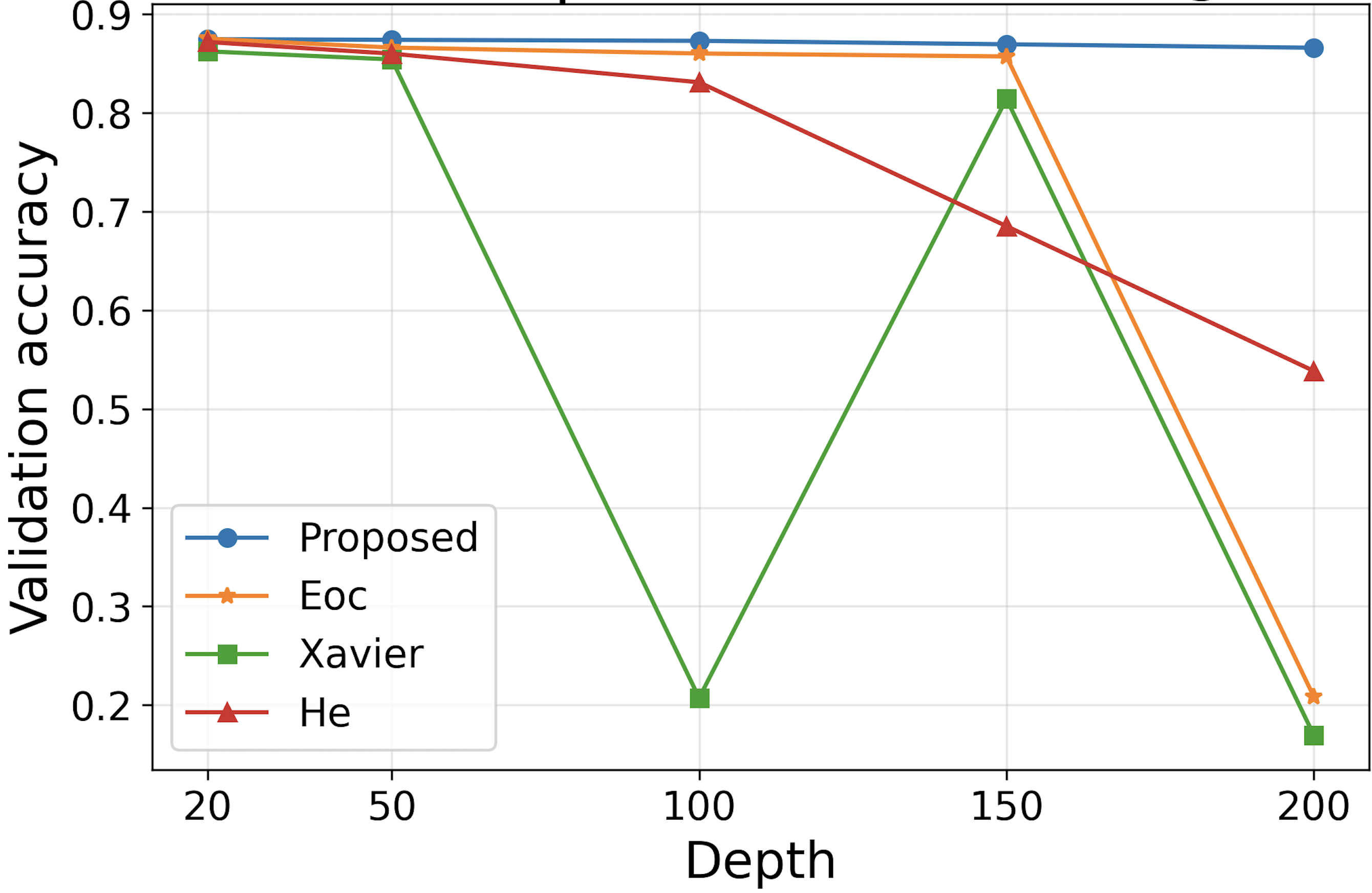}
    \caption{$\operatorname{arctan}$}
\end{subfigure} &
\begin{subfigure}[b]{0.30\textwidth}
    \centering
    \includegraphics[width=\textwidth]{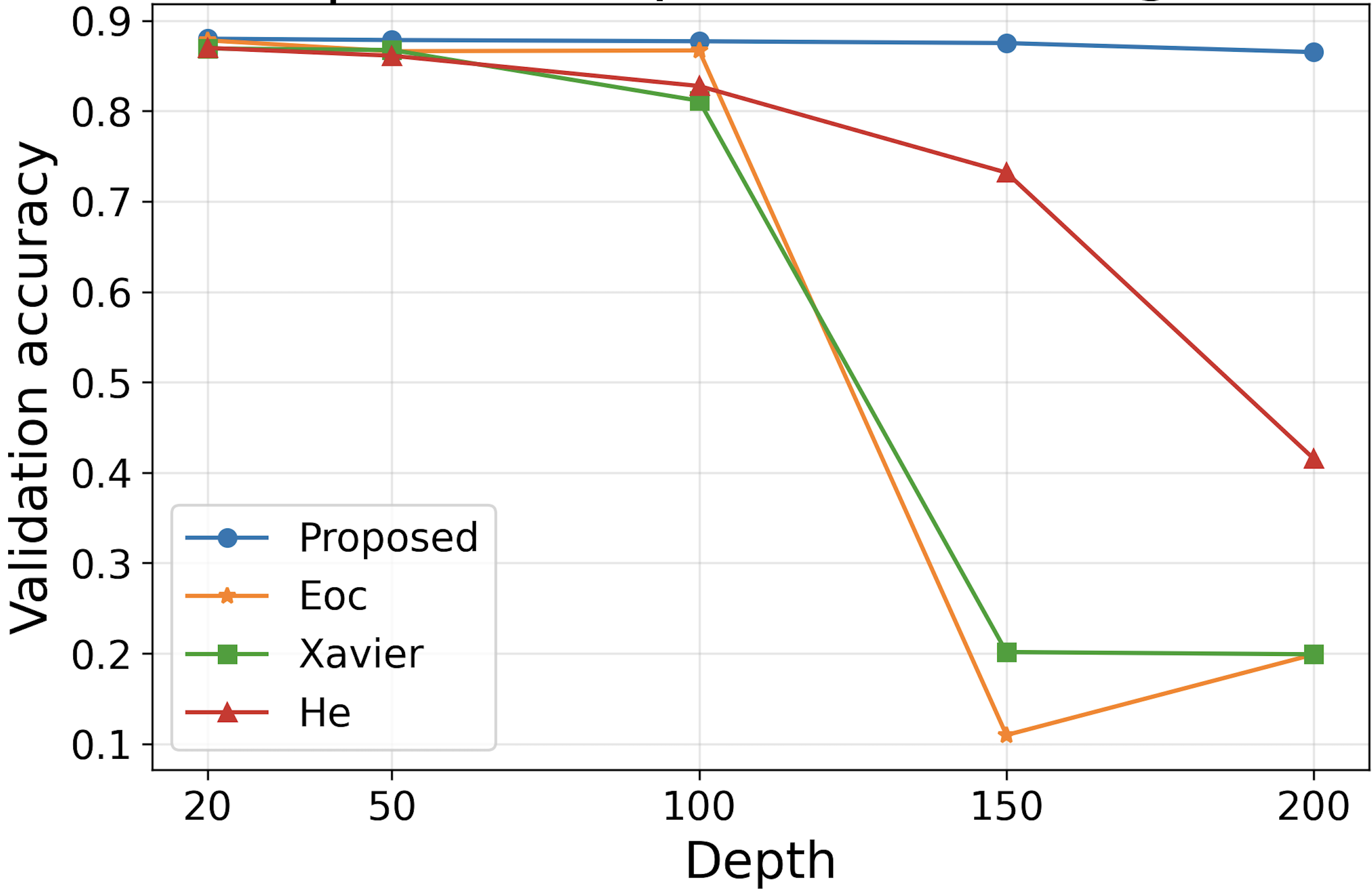}
    \caption{$\operatorname{softsign_2}$}
\end{subfigure} &
\begin{subfigure}[b]{0.30\textwidth}
    \centering
    \includegraphics[width=\textwidth]{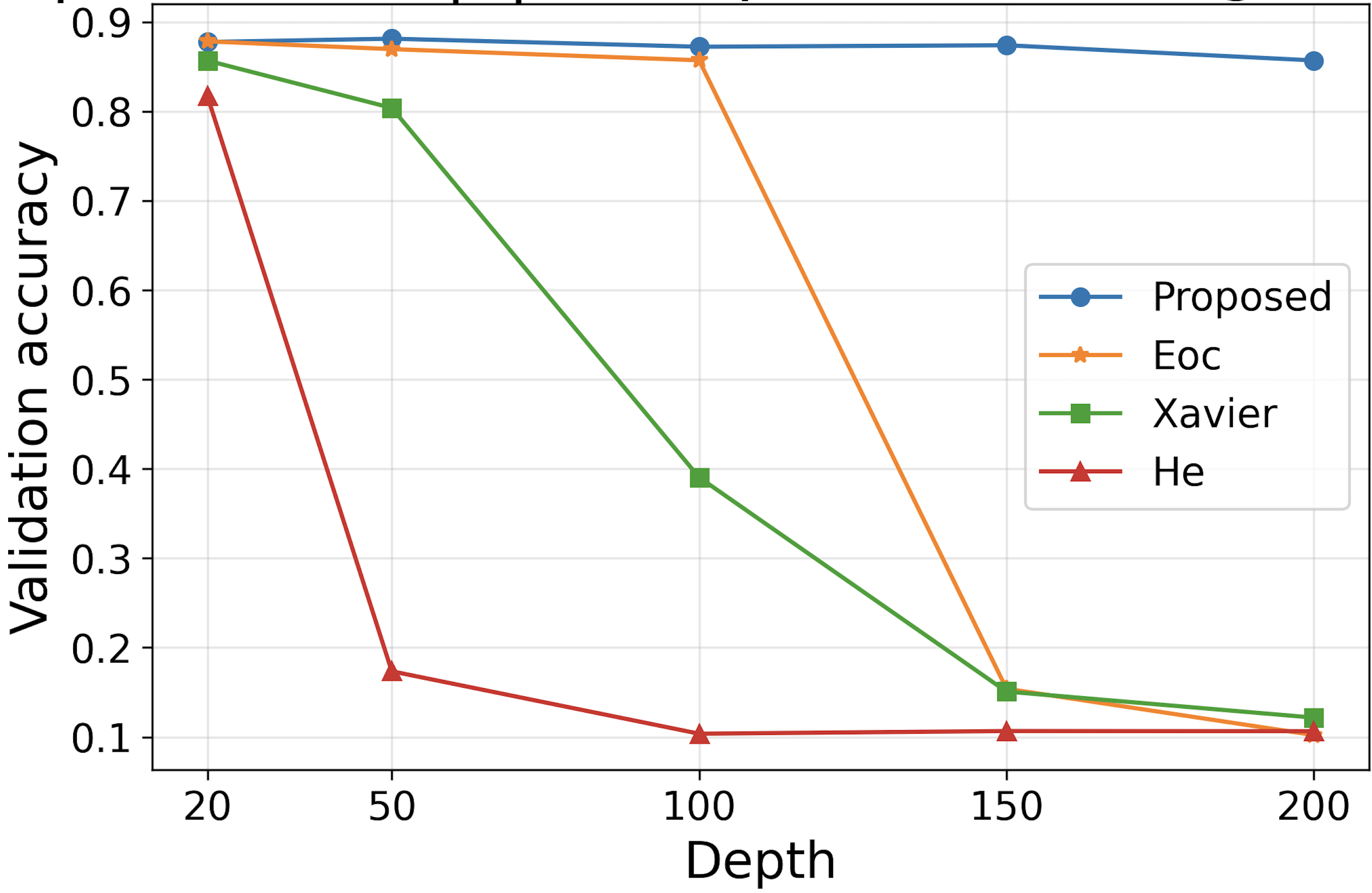}
    \caption{$\operatorname{softsign_1+softsign_2}$}
\end{subfigure} 
\end{tabular} 
\caption{Fashion MNIST validation accuracy versus depth for FFNNs~(width 64) with odd-sigmoid activations and four initializations~(Proposed, EOC, Xavier, He). Each panel fixes one activation and shows the best validation accuracy over 10 epochs for depths \(L \in \{20,50,100,150,200\}\).}
\label{dp2}
\end{figure}
\clearpage
\begin{figure}[h!]
\centering 
\begin{tabular}{ccc}
\begin{subfigure}[b]{0.30\textwidth}
    \centering
    \includegraphics[width=\textwidth]{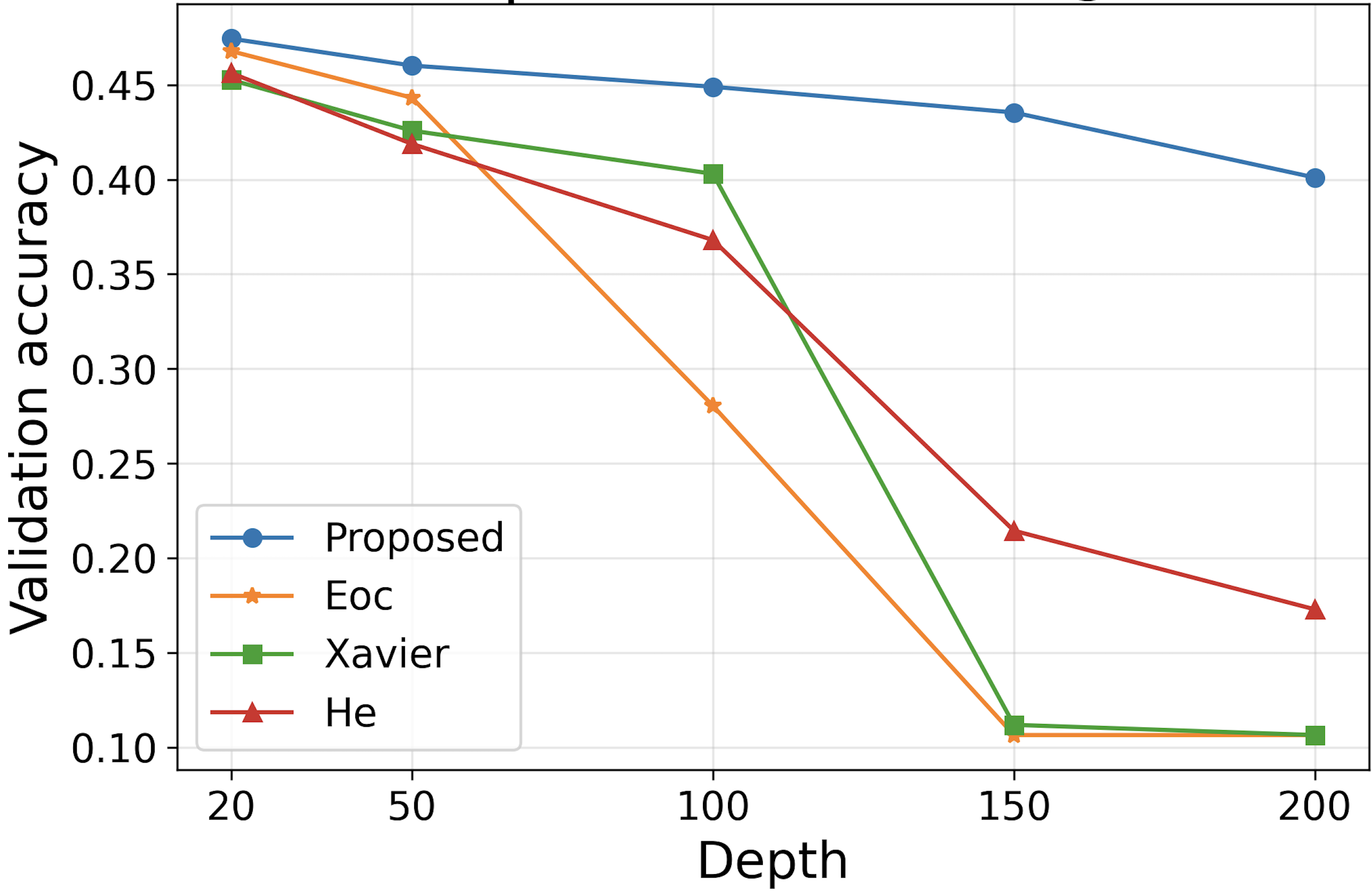}
    \caption{$\tanh$}
\end{subfigure} &
\begin{subfigure}[b]{0.30\textwidth}
    \centering
    \includegraphics[width=\textwidth]{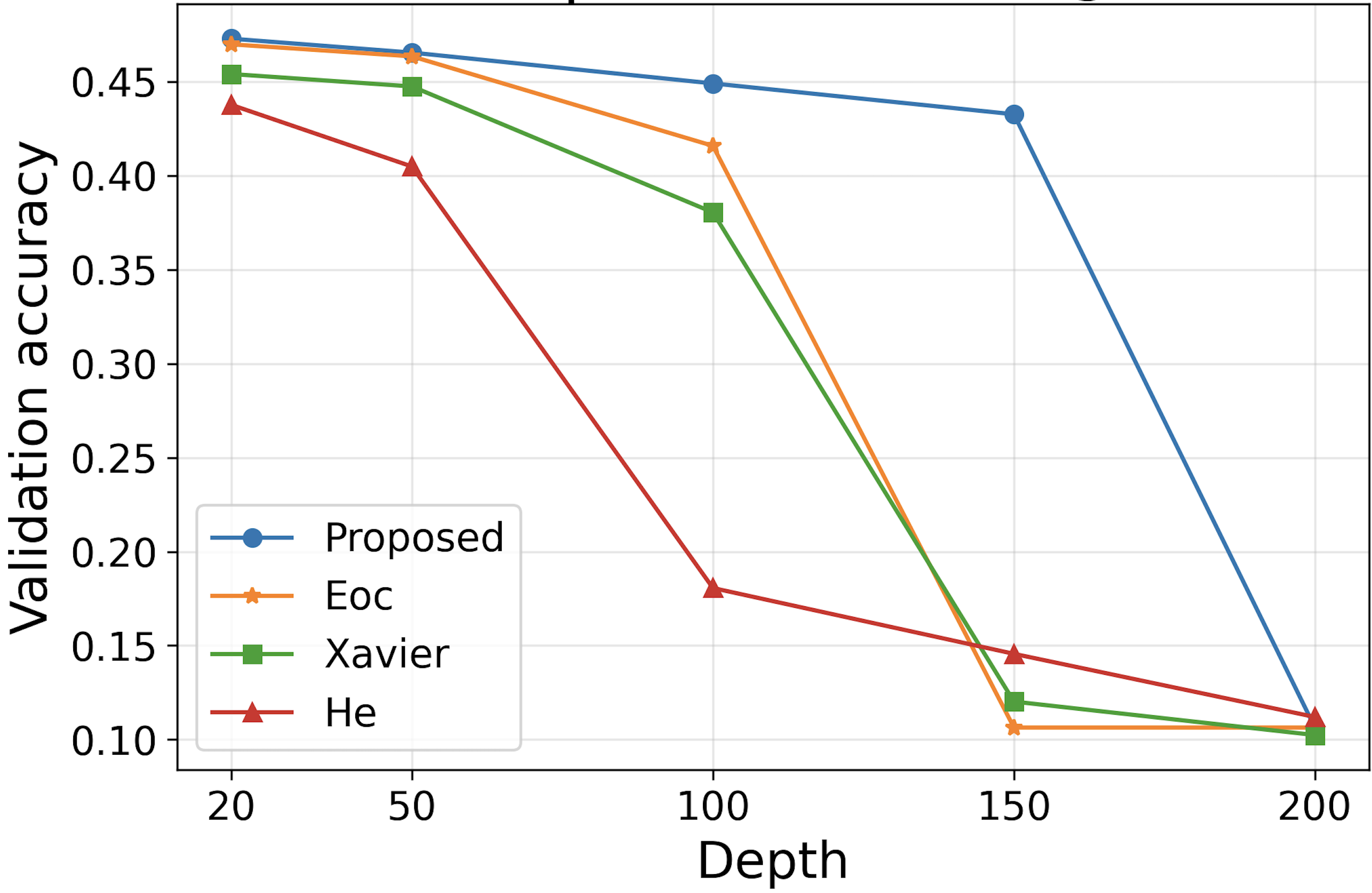}
    \caption{$\operatorname{erf}$}
\end{subfigure} &
\begin{subfigure}[b]{0.30\textwidth}
    \centering
    \includegraphics[width=\textwidth]{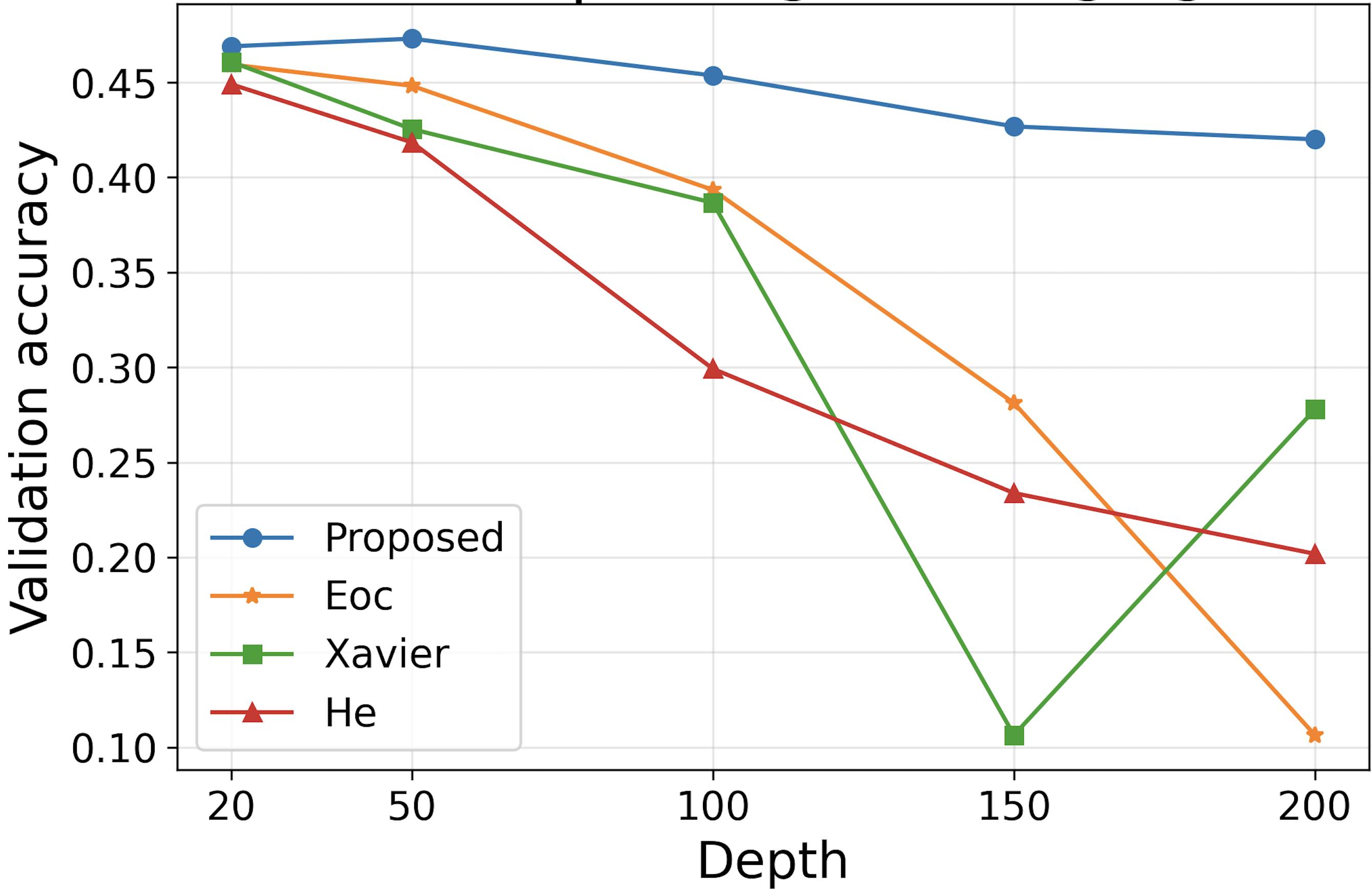}
    \caption{$\operatorname{gd}$}
\end{subfigure} \\
\begin{subfigure}[b]{0.30\textwidth}
    \centering
    \includegraphics[width=\textwidth]{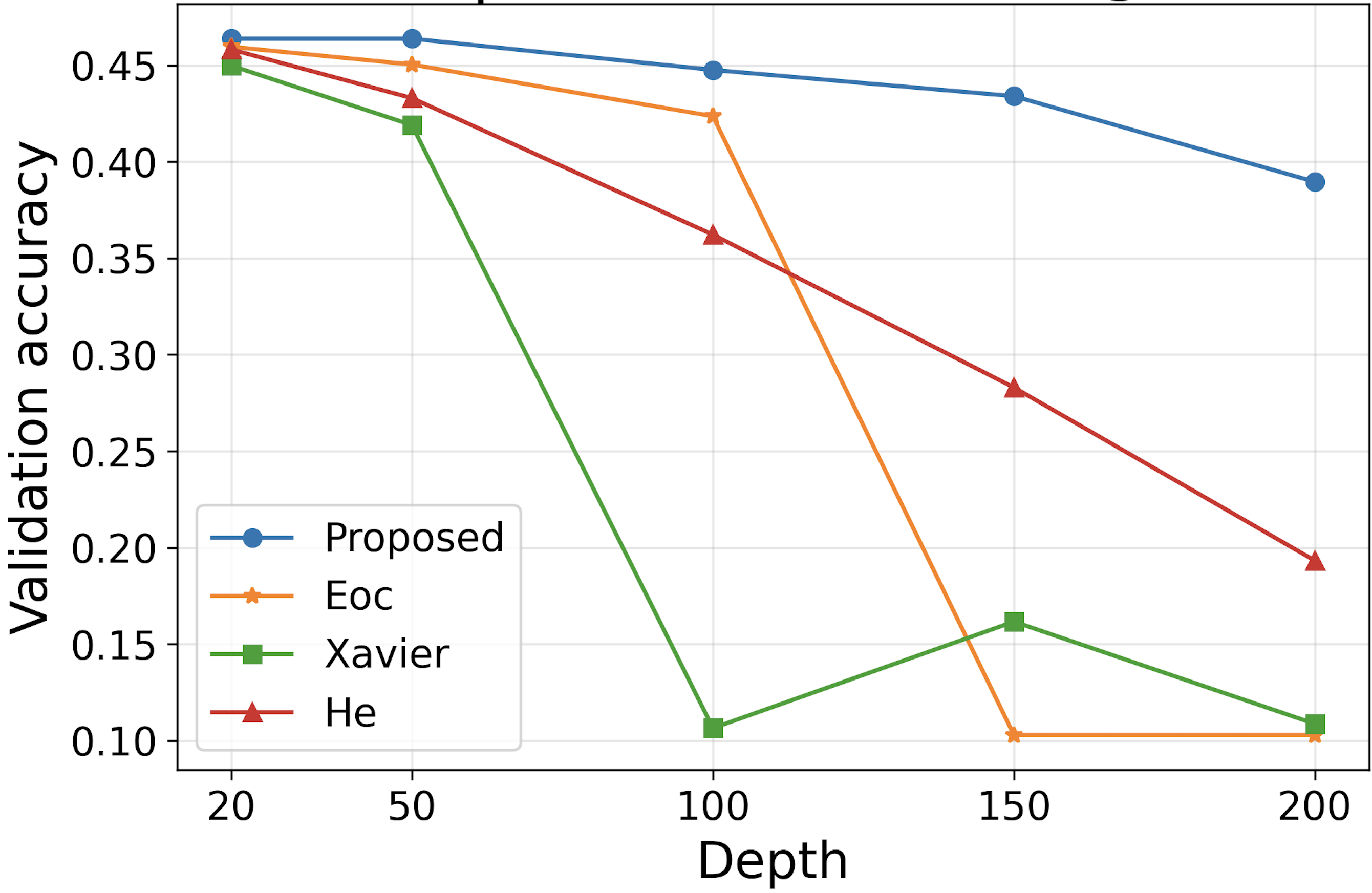}
    \caption{$\operatorname{arctan}$}
\end{subfigure} &
\begin{subfigure}[b]{0.30\textwidth}
    \centering
    \includegraphics[width=\textwidth]{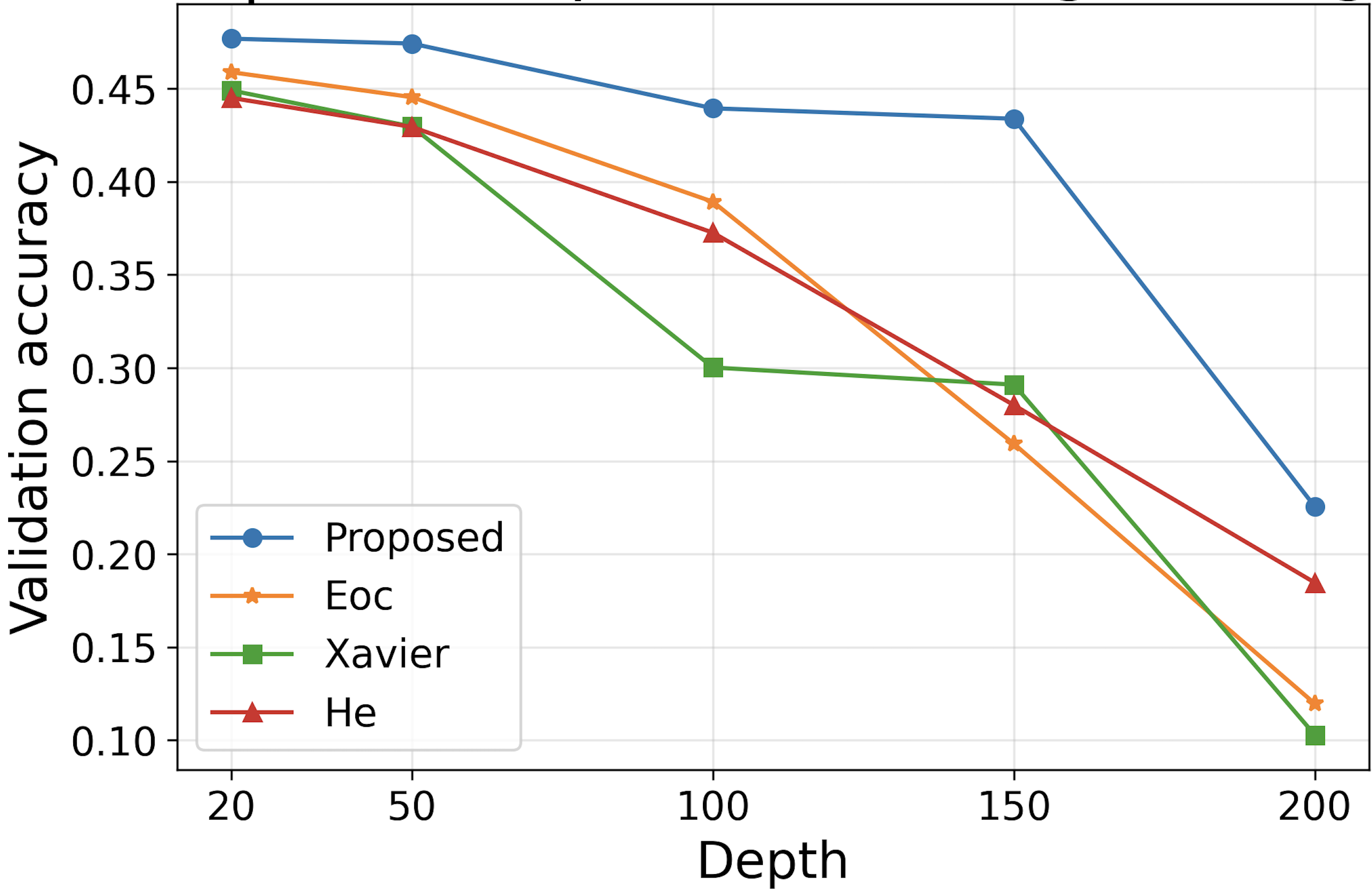}
    \caption{$\operatorname{softsign_2}$}
\end{subfigure} &
\begin{subfigure}[b]{0.30\textwidth}
    \centering
    \includegraphics[width=\textwidth]{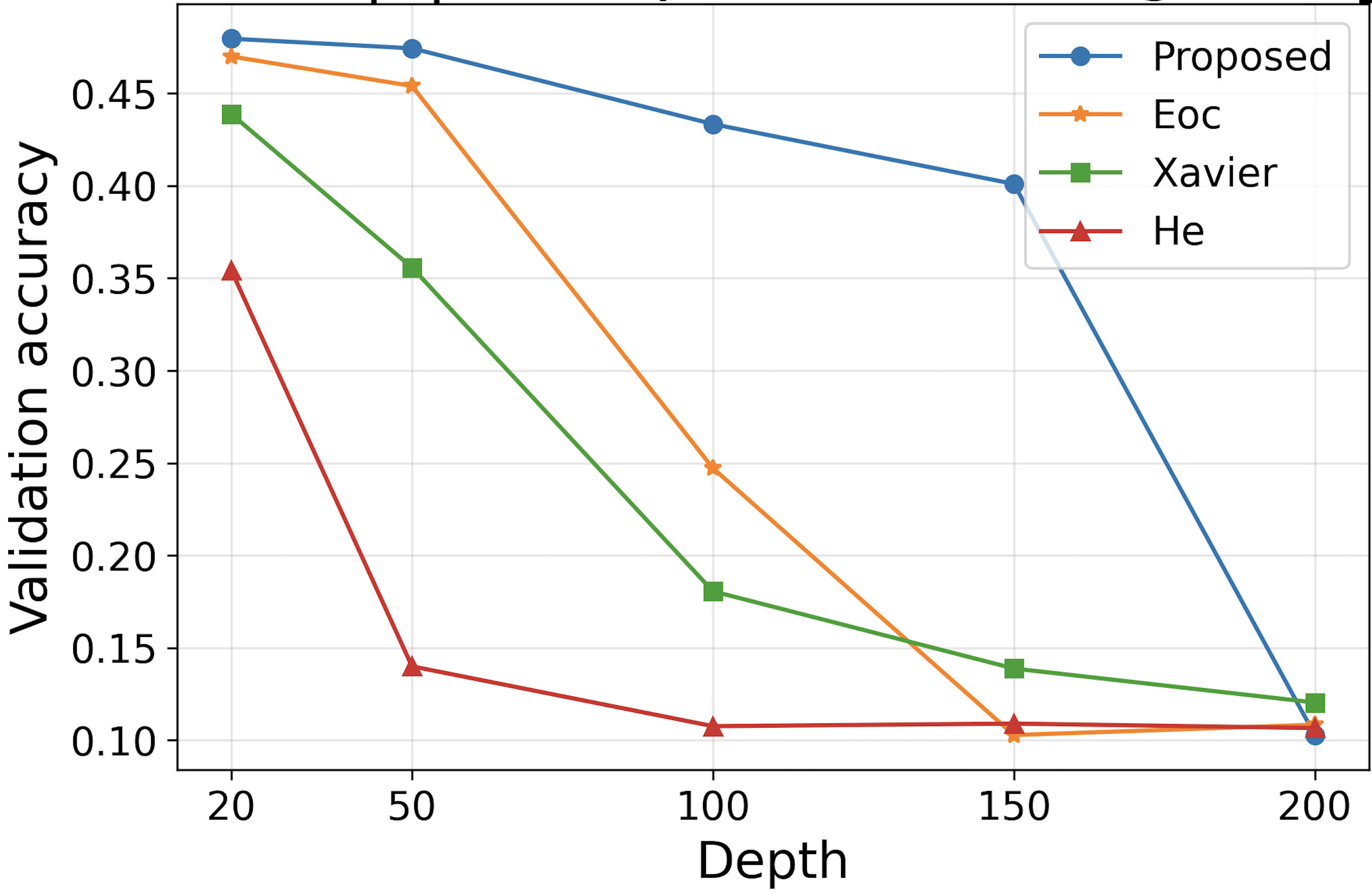}
    \caption{$\operatorname{softsign_1+softsign_2}$}
\end{subfigure} 
\end{tabular} 
\caption{CIFAR 10 validation accuracy versus depth for FFNNs~(width 64) with odd-sigmoid activations and four initializations~(Proposed, EOC, Xavier, He). Each panel fixes one activation and shows the best validation accuracy over 10 epochs for depths \(L \in \{20,50,100,150,200\}\).}
\label{dp3}
\end{figure}

\begin{figure}[h!]
\centering 
\begin{tabular}{ccc}
\begin{subfigure}[b]{0.30\textwidth}
    \centering
    \includegraphics[width=\textwidth]{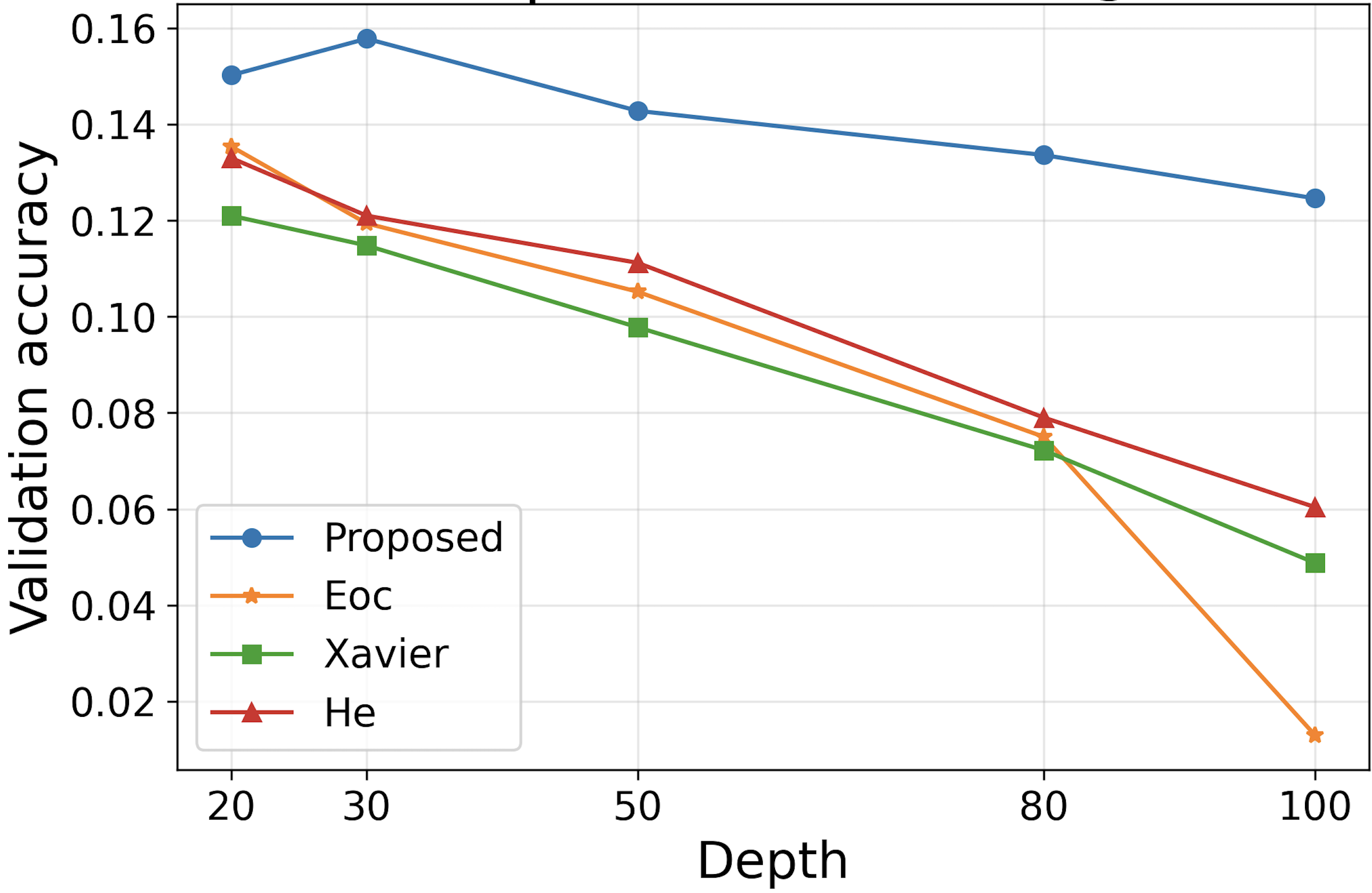}
    \caption{$\tanh$}
\end{subfigure} &
\begin{subfigure}[b]{0.30\textwidth}
    \centering
    \includegraphics[width=\textwidth]{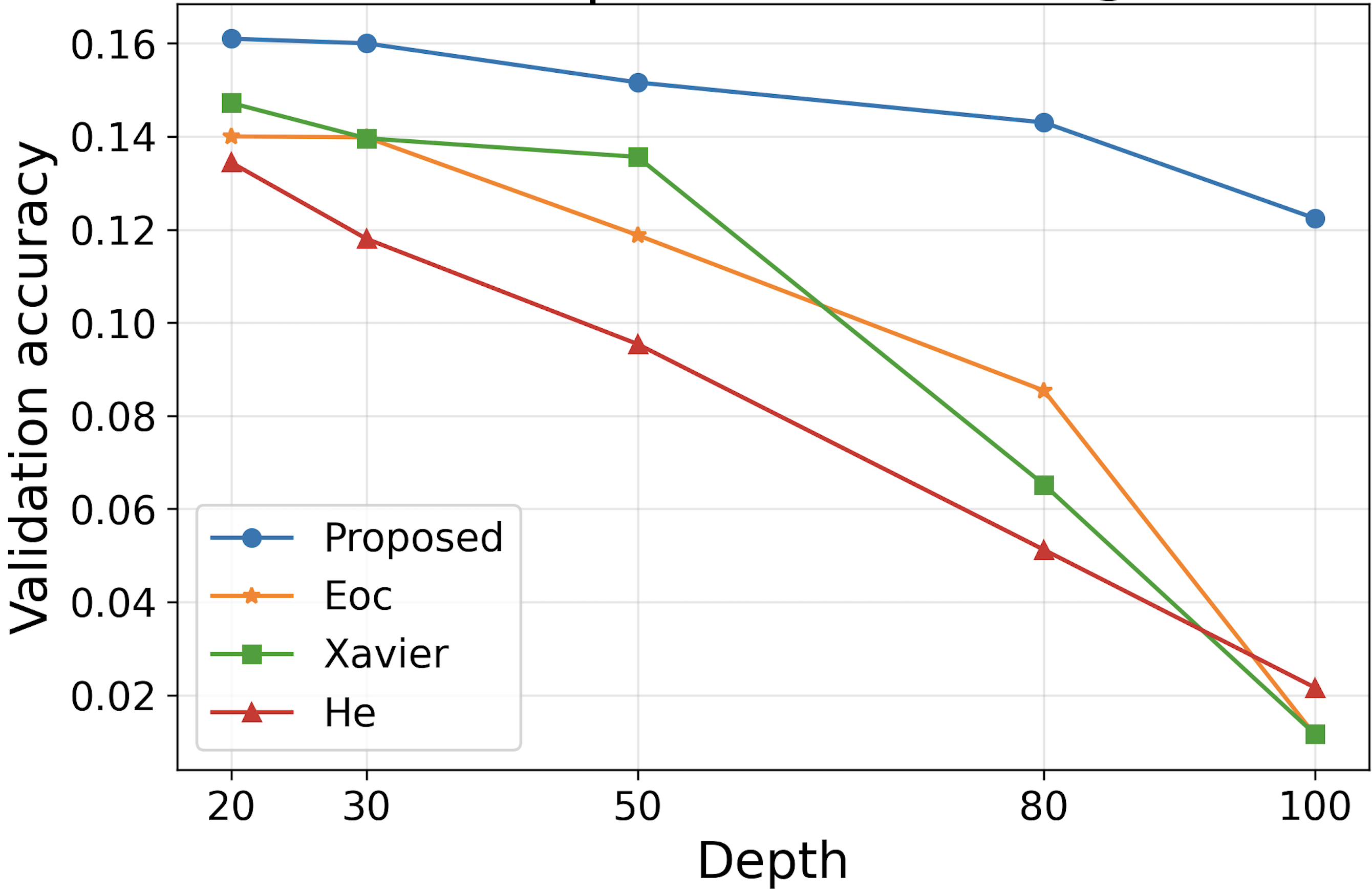}
    \caption{$\operatorname{erf}$}
\end{subfigure} &
\begin{subfigure}[b]{0.30\textwidth}
    \centering
    \includegraphics[width=\textwidth]{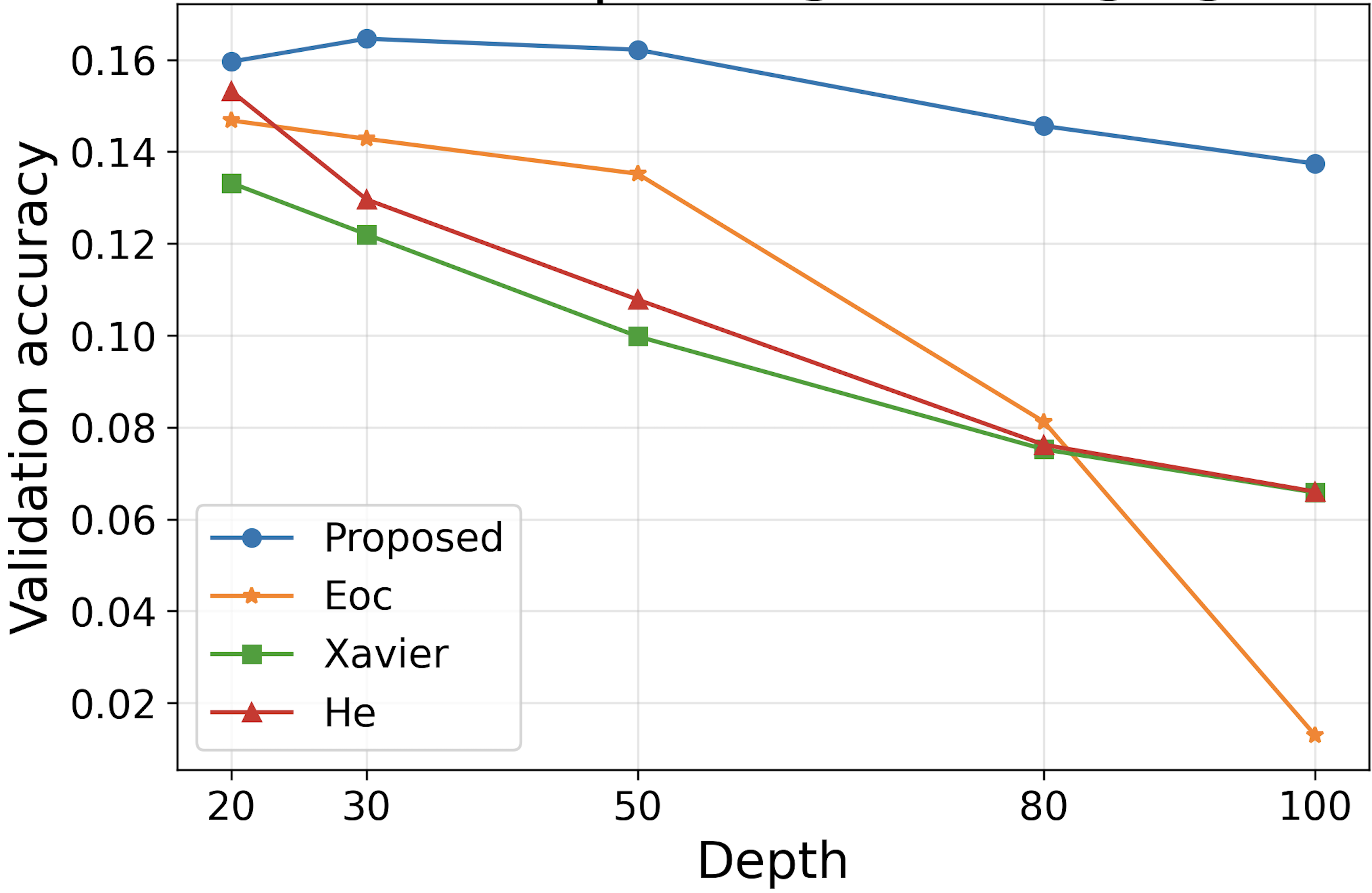}
    \caption{$\operatorname{gd}$}
\end{subfigure} \\
\begin{subfigure}[b]{0.30\textwidth}
    \centering
    \includegraphics[width=\textwidth]{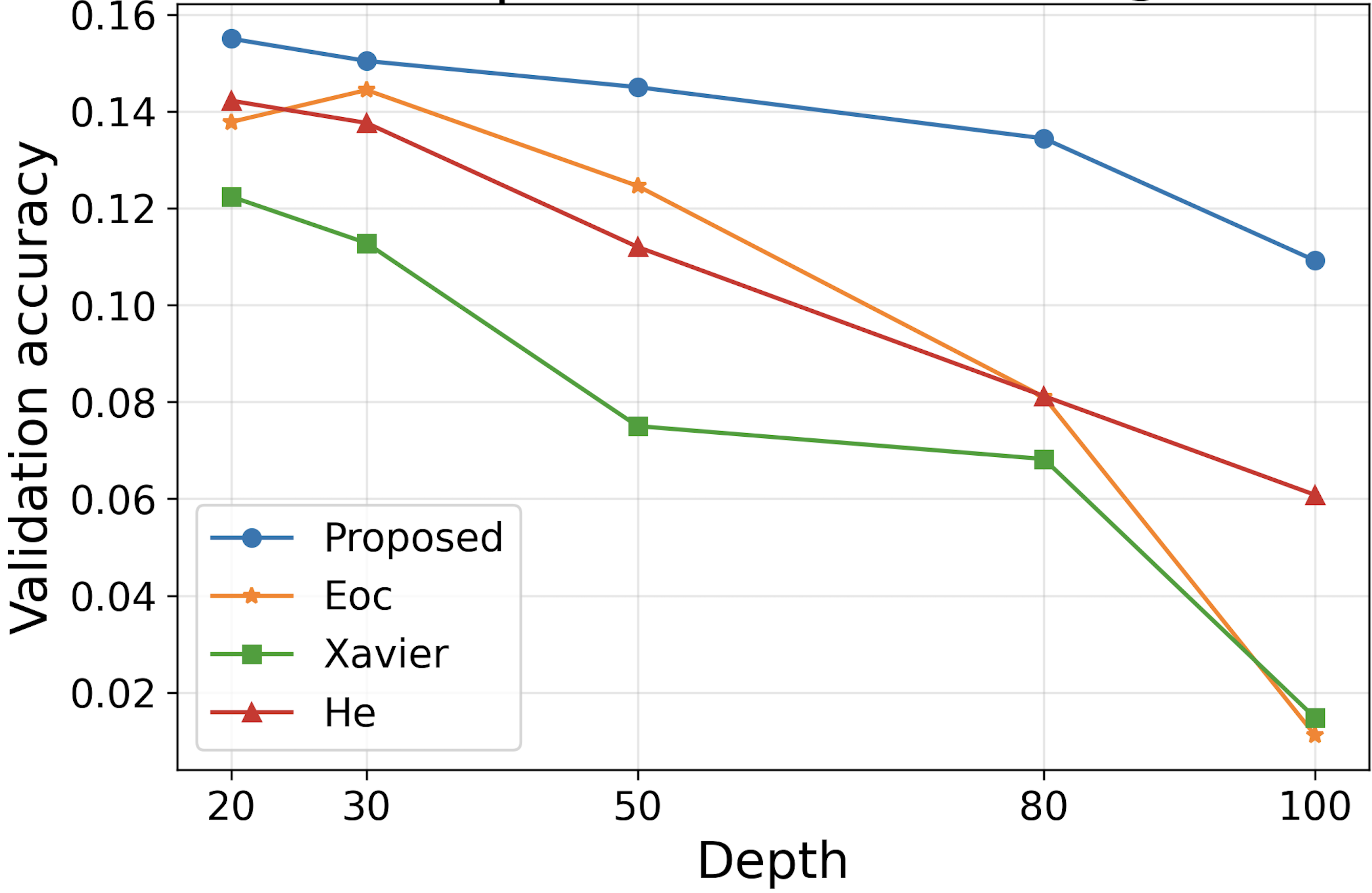}
    \caption{$\operatorname{arctan}$}
\end{subfigure} &
\begin{subfigure}[b]{0.30\textwidth}
    \centering
    \includegraphics[width=\textwidth]{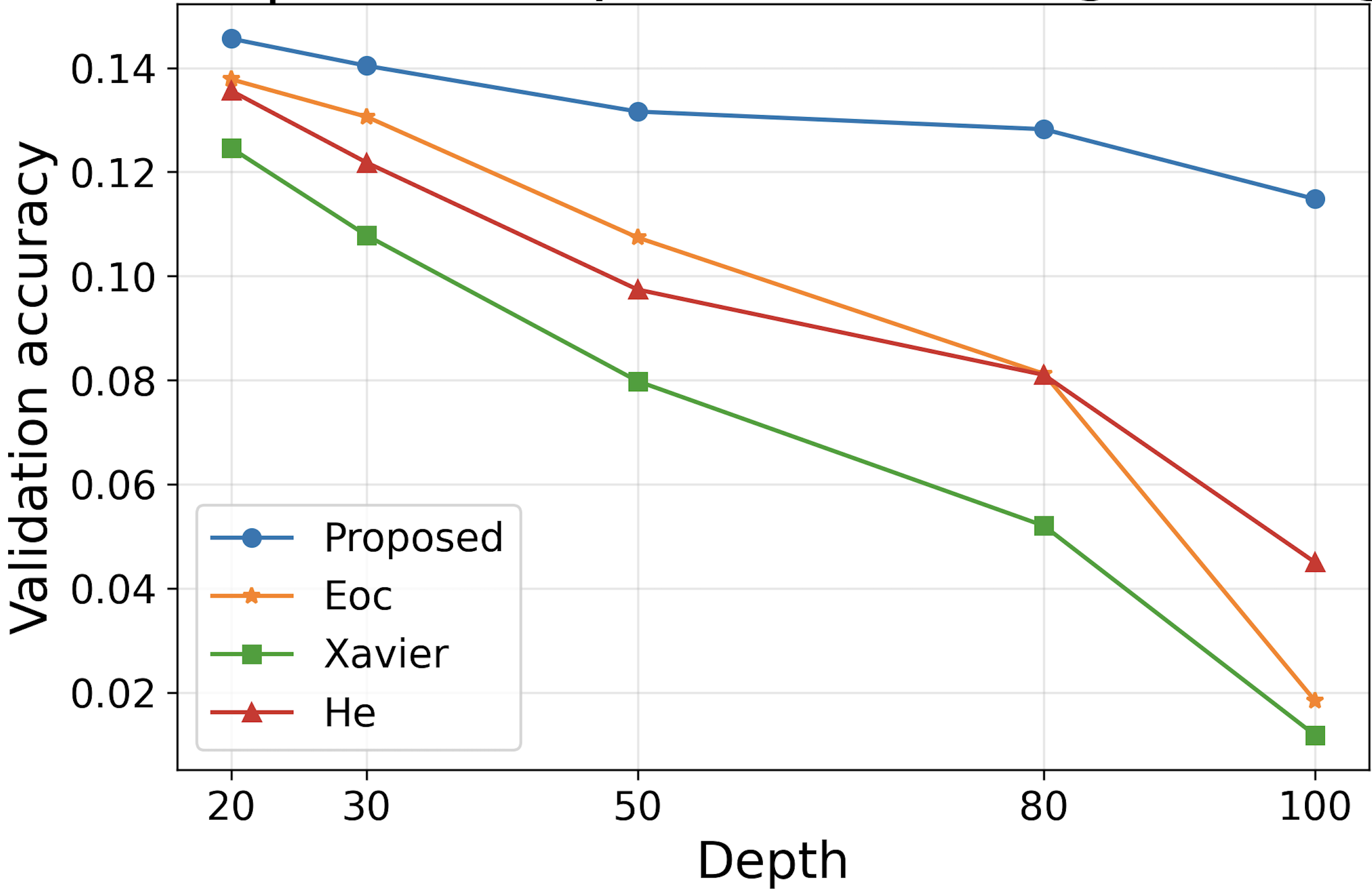}
    \caption{$\operatorname{softsign_2}$}
\end{subfigure} &
\begin{subfigure}[b]{0.30\textwidth}
    \centering
    \includegraphics[width=\textwidth]{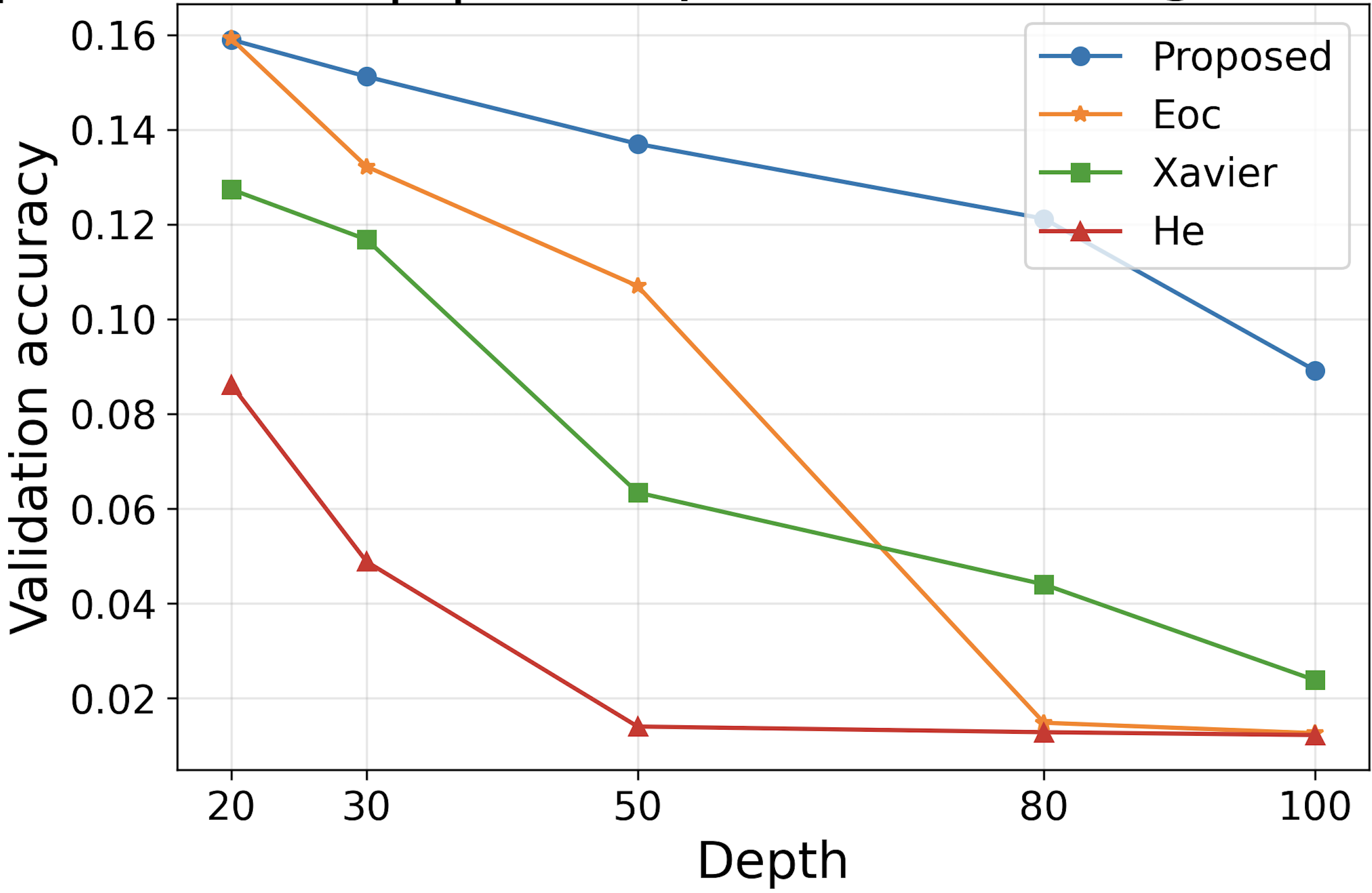}
    \caption{$\operatorname{softsign_1+softsign_2}$}
\end{subfigure} 
\end{tabular} 
\caption{CIFAR 100 validation accuracy versus depth for FFNNs~(width 64) with odd-sigmoid activations and four initializations~(Proposed, EOC, Xavier, He). Each panel fixes one activation and shows the best validation accuracy over 10 epochs for depths \(L \in \{20,50,100,150,200\}\).}
\label{dp4}
\end{figure}

\medskip

Figures~\ref{dp1}, \ref{dp2}, \ref{dp3}, and \ref{dp4} report the validation accuracy as a function of depth for networks with six different activation functions. On the more complex CIFAR-10 and CIFAR-100 datasets, even the proposed initialization exhibits some change in performance once the depth becomes sufficiently large, whereas on MNIST and Fashion-MNIST the validation accuracy is essentially preserved across all depths considered. These results indicate that, across diverse datasets and for odd–sigmoid activations with effective gain parameter $\omega$ close to one, the proposed initialization behaves substantially more depth invariant than standard Gaussian initializations.

\clearpage

\subsection{Batch Normalization Free Training}
Batch normalization~(BN) has made training deep networks substantially easier, but it also incurs a significant computational overhead. This raises the question of whether our initialization can match or surpass BN-based methods while avoiding this overhead. Figure~\ref{tanh_batch} evaluates $\tanh$ networks and shows that, across all datasets, the proposed initialization without BN outperforms or matches competing schemes that rely on BN. Figures~\ref{batch_norm} and~\ref{800layers} extend this comparison to activations with $\omega = 1/f'(0)$ far from $1$, including an 800 layer network with width $32$. In this challenging deep and narrow regime, our method still trains reliably without BN, whereas Gaussian initializations~(with or without BN) either fail to converge or achieve worse accuracy. These results suggest that the proposed initialization can substitute for BN in many settings, enabling stable training of very deep and narrow networks without the cost of normalization layers.

\medskip

\begin{figure}[h!]
\centering 
\begin{tabular}{ccc}
\begin{subfigure}[b]{0.31\textwidth}
    \centering
    \includegraphics[width=\textwidth]{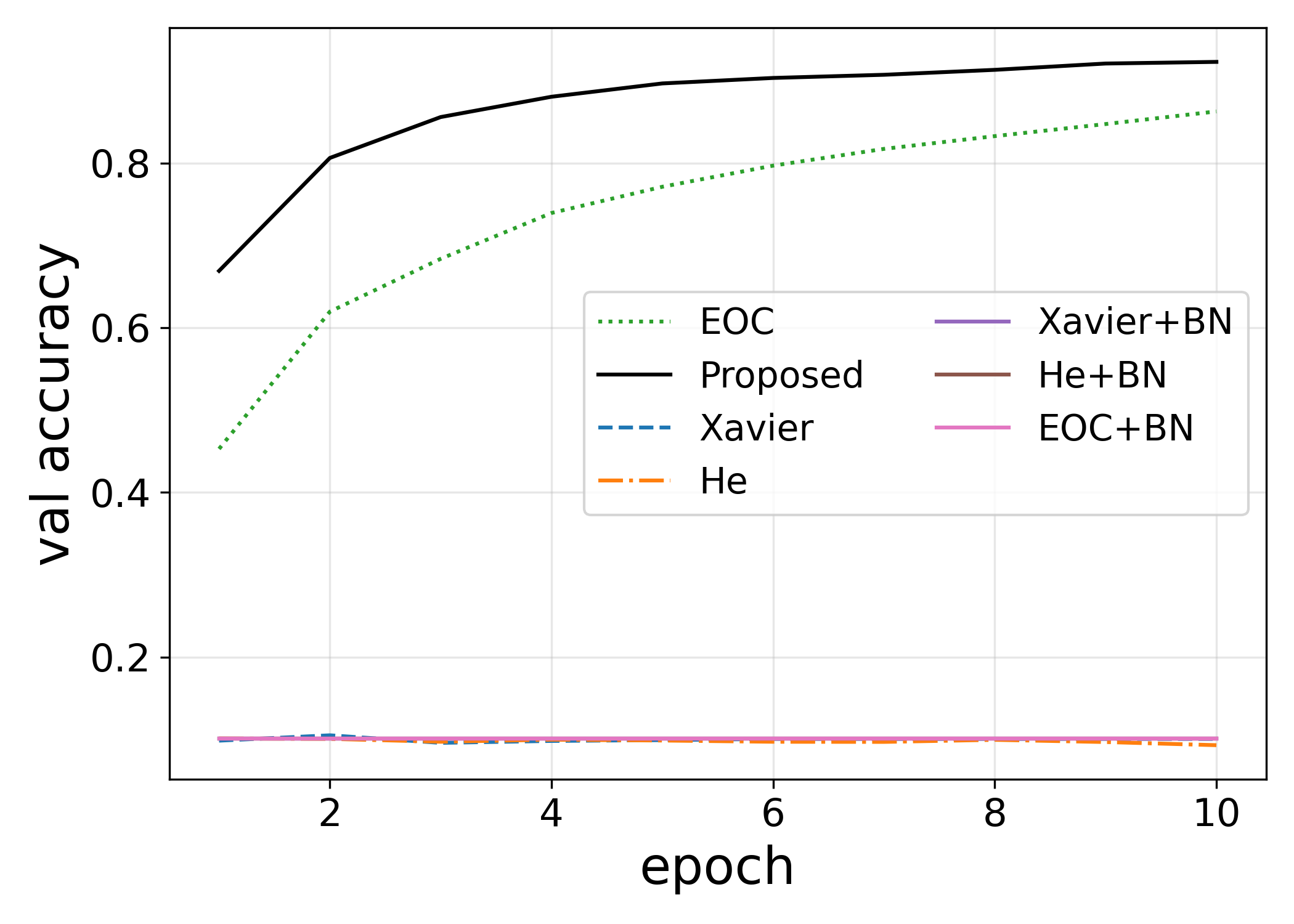}
    \caption{MNIST}
\end{subfigure} &
\begin{subfigure}[b]{0.31\textwidth}
    \centering
    \includegraphics[width=\textwidth]{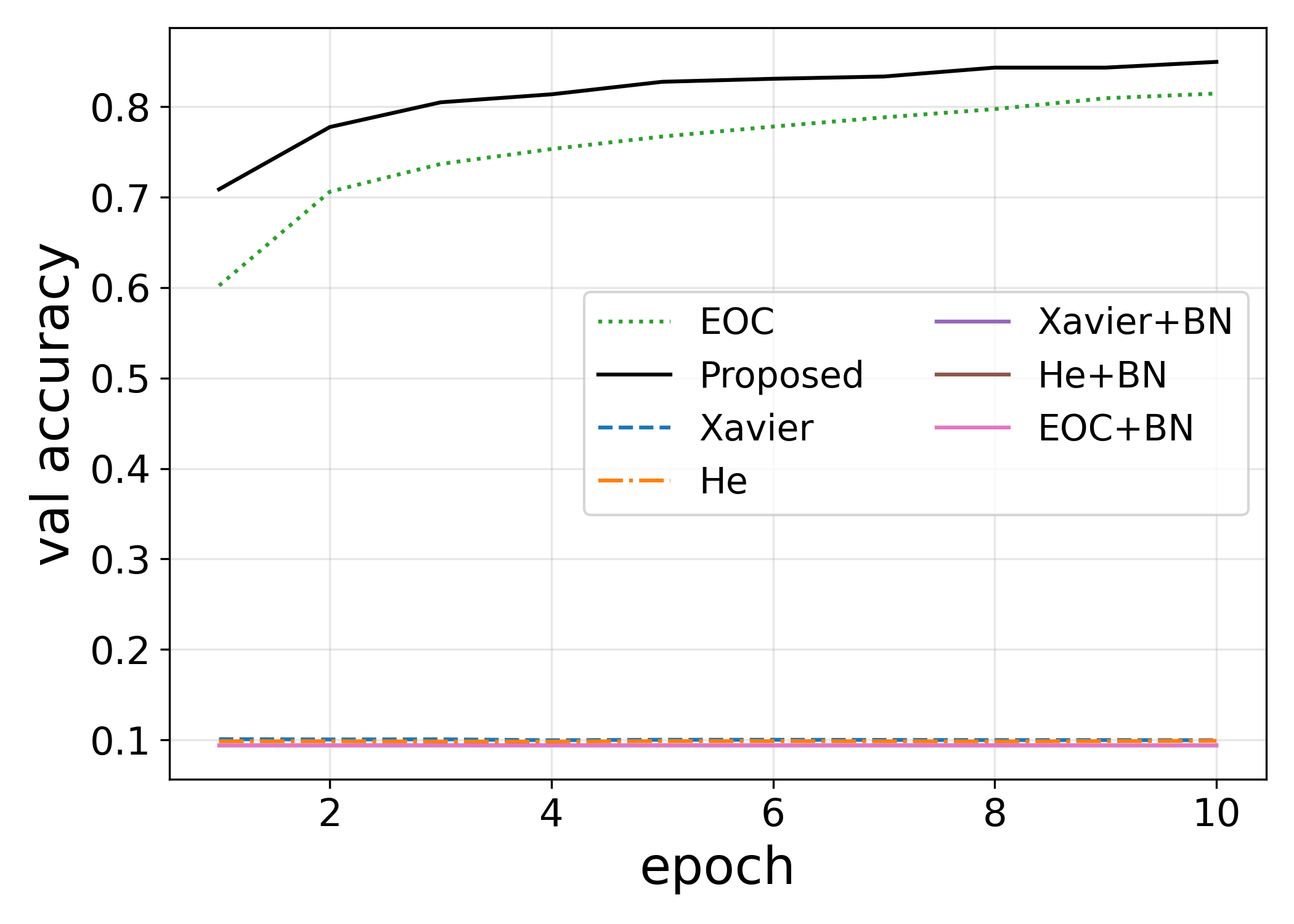}
    \caption{FMNIST}
\end{subfigure} &
\begin{subfigure}[b]{0.31\textwidth}
    \centering
    \includegraphics[width=\textwidth]{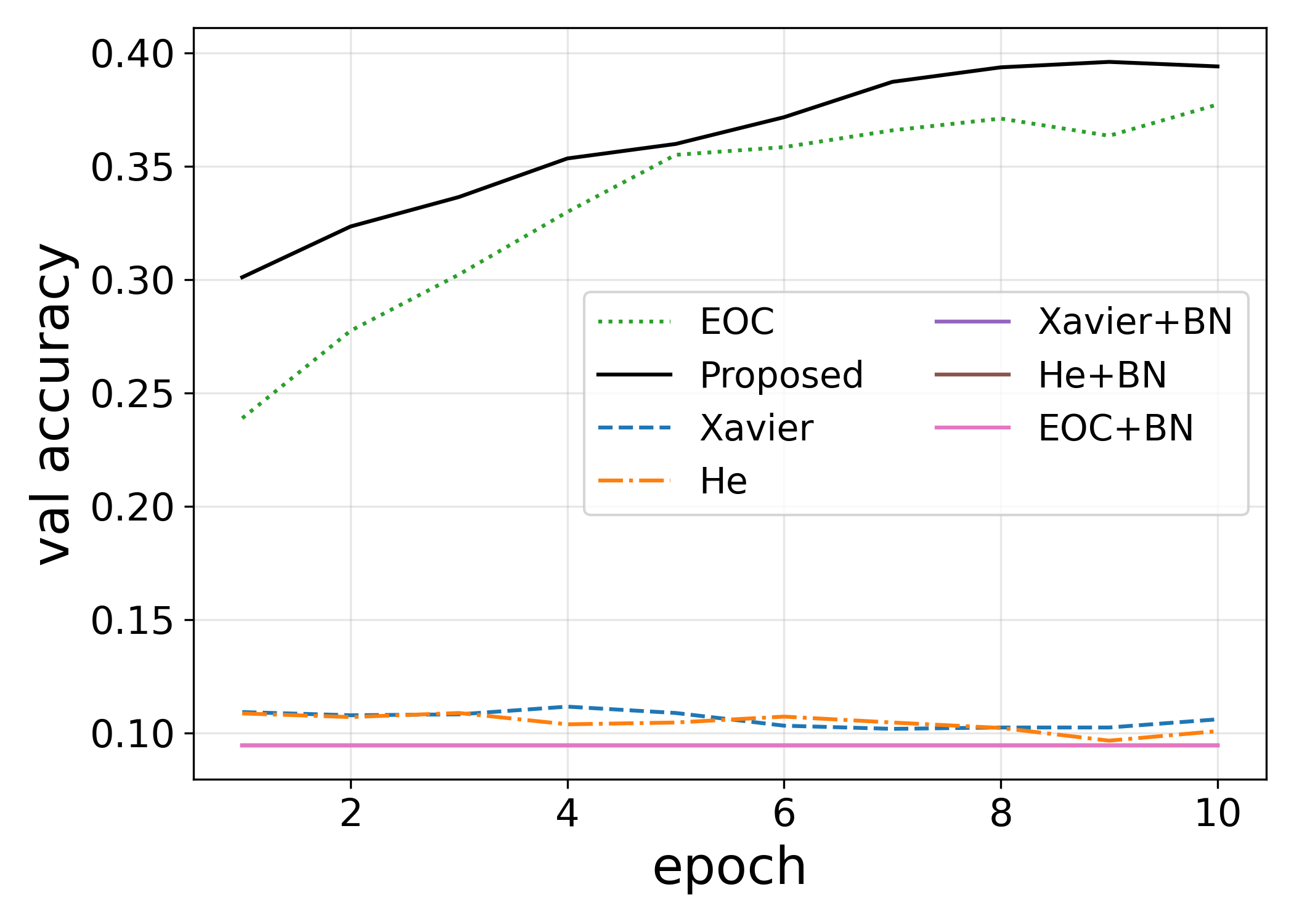}
    \caption{CIFAR10}
\end{subfigure} 
\end{tabular}
\caption{Validation accuracy for the
\(f(x) = \tanh(a x) + \mathrm{erf}(b x) + x/(1+|c x|) + \mathrm{gd}(d x)\)
in FFNN with 100 hidden layers of width 64.
We compare seven strategies: Proposed, Xavier, He, EOC, and their BN variants.
Each panel uses a different dataset and a different choice of coefficients \((a,b,c,d)\):
\textbf{(a)} (10,1000,10,1), \textbf{(b)} (100,1000,10,1000)), \textbf{(c)}(1000,100,0.1,0.01).
}

\label{batch_norm}
\end{figure}

\medskip

\begin{figure}[h!]
\centering 
\begin{tabular}{cccc}
% \begin{subfigure}[b]{0.24\textwidth}
%     \centering
%     \includegraphics[width=\textwidth]{figure/width_mnist2.png}
%     \caption{MNIST}
% \end{subfigure} &
\begin{subfigure}[b]{0.31\textwidth}
    \centering
    \includegraphics[width=\textwidth]{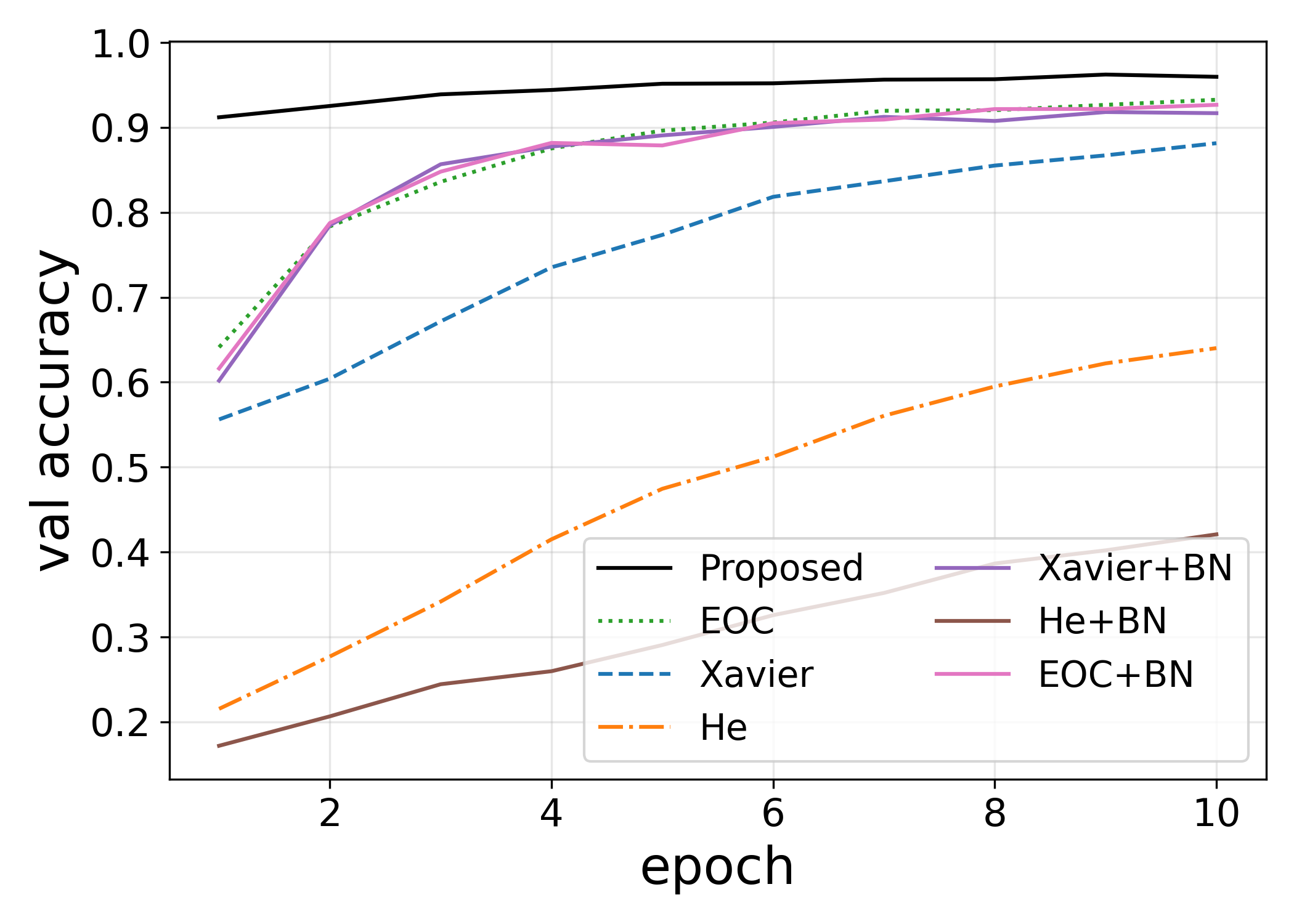}
    \caption{MNIST}
\end{subfigure} &
\begin{subfigure}[b]{0.31\textwidth}
    \centering
    \includegraphics[width=\textwidth]{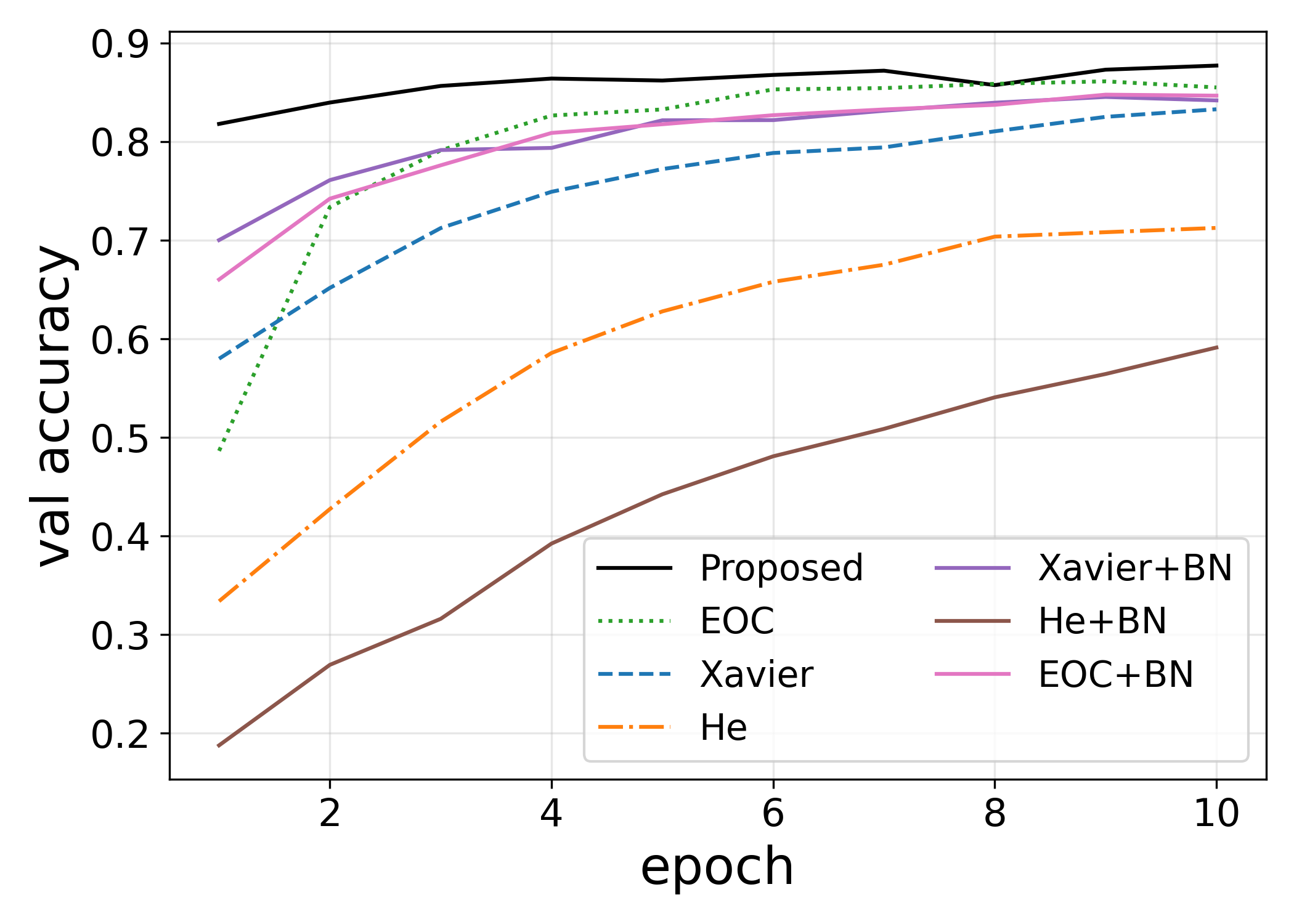}
    \caption{FMNIST}
\end{subfigure} &
\begin{subfigure}[b]{0.31\textwidth}
    \centering
    \includegraphics[width=\textwidth]{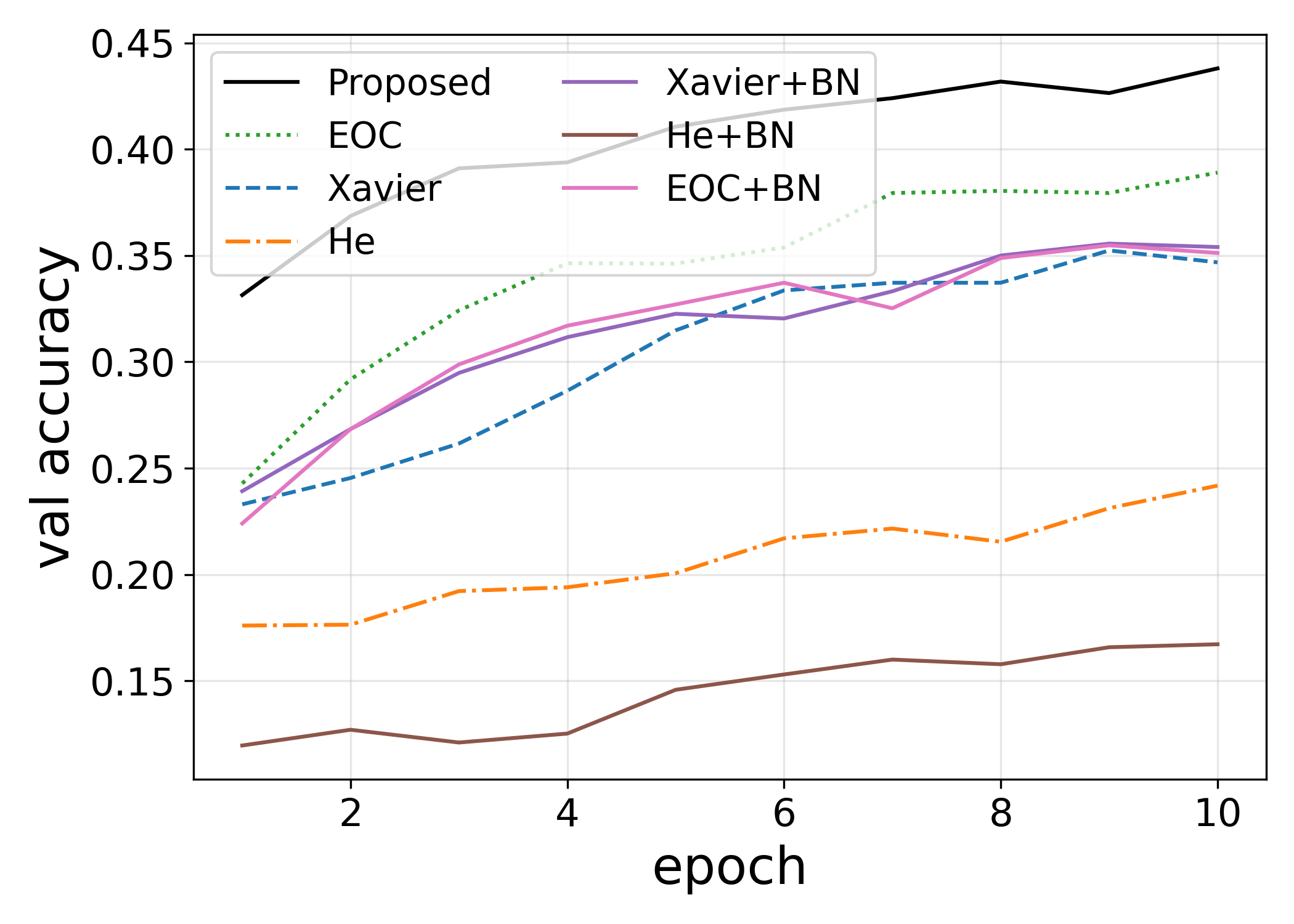}
    \caption{CIFAR 10}
\end{subfigure} 
\end{tabular} 
\caption{Validation accuracy over 10 epochs for fully connected 100 layer
networks with $tanh$ activations and width $64$. The curves compare the
Proposed, EOC, Xavier, and He initializations, with and without batch
normalization.}
\label{tanh_batch}
\end{figure}

\clearpage

\begin{figure}[h!]
\centering 
\includegraphics[width=0.5\textwidth]{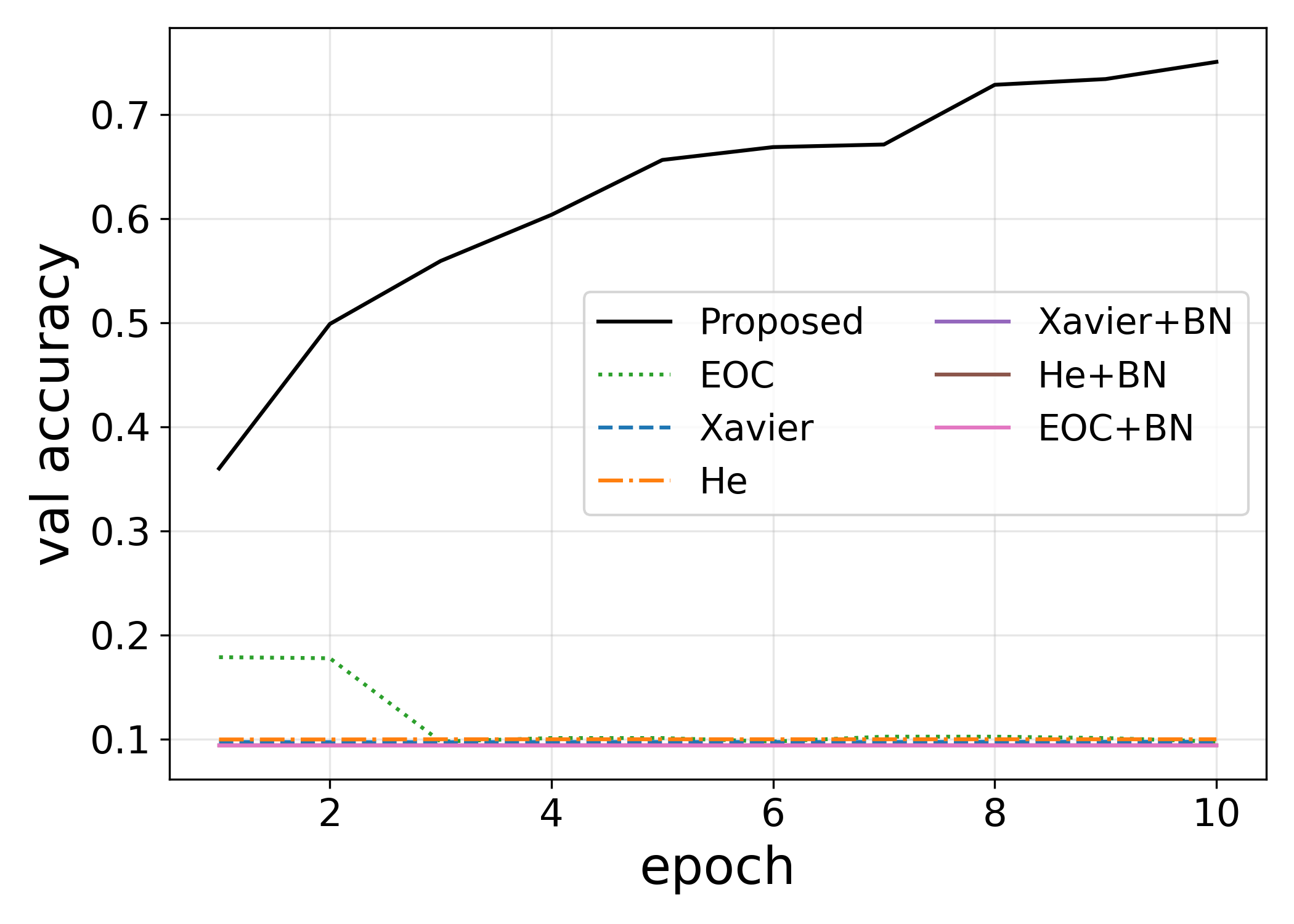}
\caption{Validation accuracy on Fashion MNIST for a fully connected network with
800 hidden layers of width 32 and
\(f(x) = \tanh(a x) + \mathrm{erf}(b x) + x/(1+|c x|) + \mathrm{gd}(d x)\),
using coefficients \((a,b,c,d) = (100,1000,10,1000)\).
}
\label{800layers}
\end{figure}

\begin{figure}[h!]
\centering 
\includegraphics[width=0.5\textwidth]{figure/width_mnist2}
\caption{Best validation accuracy on MNIST versus width $\{10, 48, 128, 150, 200, 512\}$
for 100 layer $\tanh$ FFNN.
Each curve compares the Proposed, EOC, Xavier, and He initialization schemes,
and each point reports the best validation accuracy over 20 training epochs.
}
\label{wd2}
\end{figure}

\clearpage
\subsection{Learnable Learning Rate}
Our proposed initialization depends explicitly on the
$\omega = 1/f'(0)$, so rescaling the activation also rescales the initialized weights. In particular, when we replace $f$ by a scaled activation
$\alpha f(x)$ or $f(\alpha x)$, the corresponding value of
$\omega$ changes and the scale of the initial weight matrix is adjusted
accordingly. As a consequence, the learning rate that yields comparable
gradient updates should also depend on $\omega$. In our experiments we
therefore choose $\eta$ from an $\omega$–scaled band (e.g.,
$\eta \in [10^{-5}\omega,10^{-3}\omega]$) when using the proposed
initialization.  Across all settings, the proposed initializer remains trainable over a wider interval of $\eta$ than Gaussian initializations, indicating that it is more
robust to the choice of learning rate even when the activation scale varies
significantly.

\begin{figure}[h!]\label{fig2}
\centering 
\begin{tabular}{ccc}
\begin{subfigure}[b]{0.30\textwidth}
    \centering
    \includegraphics[width=\textwidth]{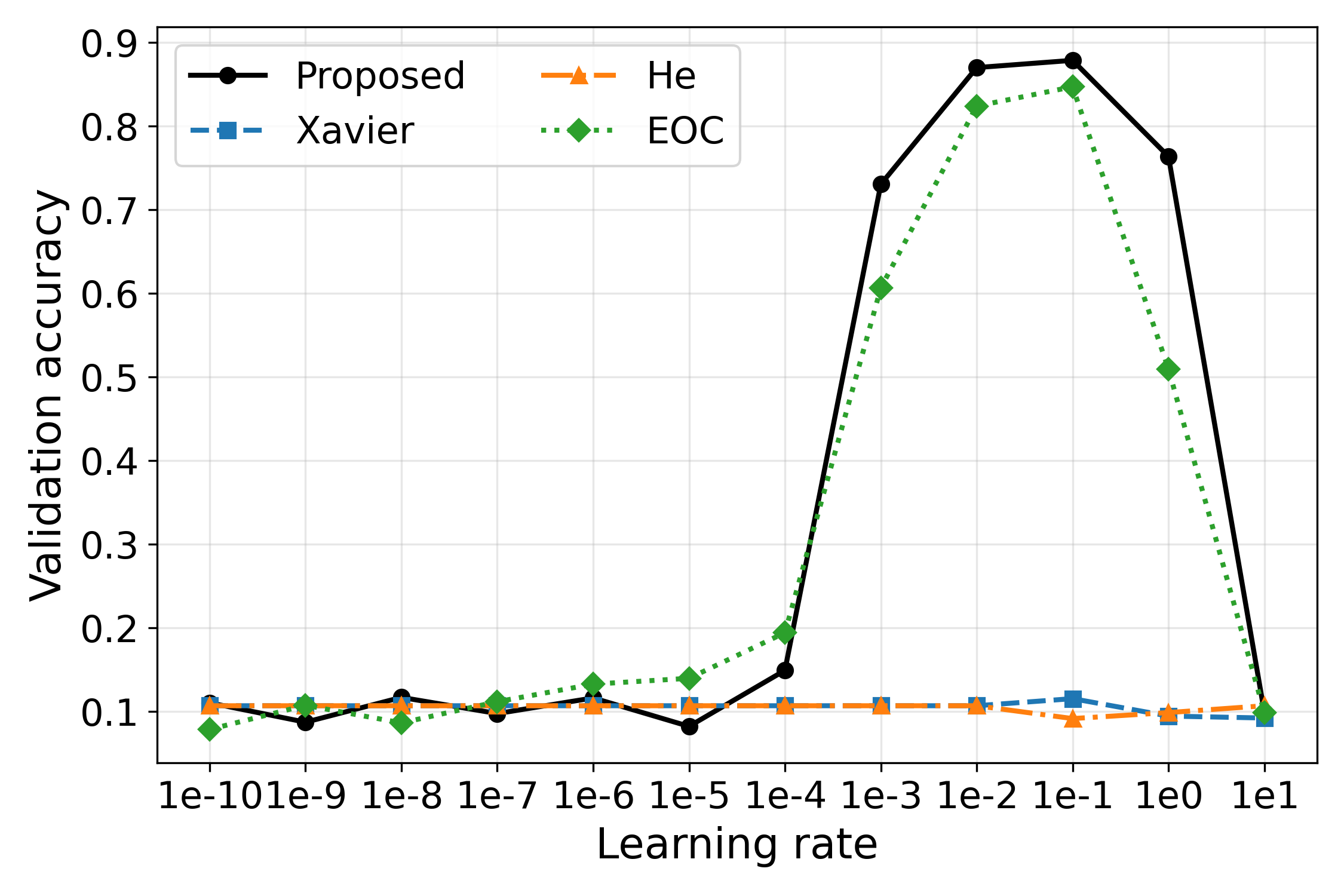}
    \caption{$2/\pi\arctan(0.001x)$}
\end{subfigure} &
\begin{subfigure}[b]{0.30\textwidth}
    \centering
    \includegraphics[width=\textwidth]{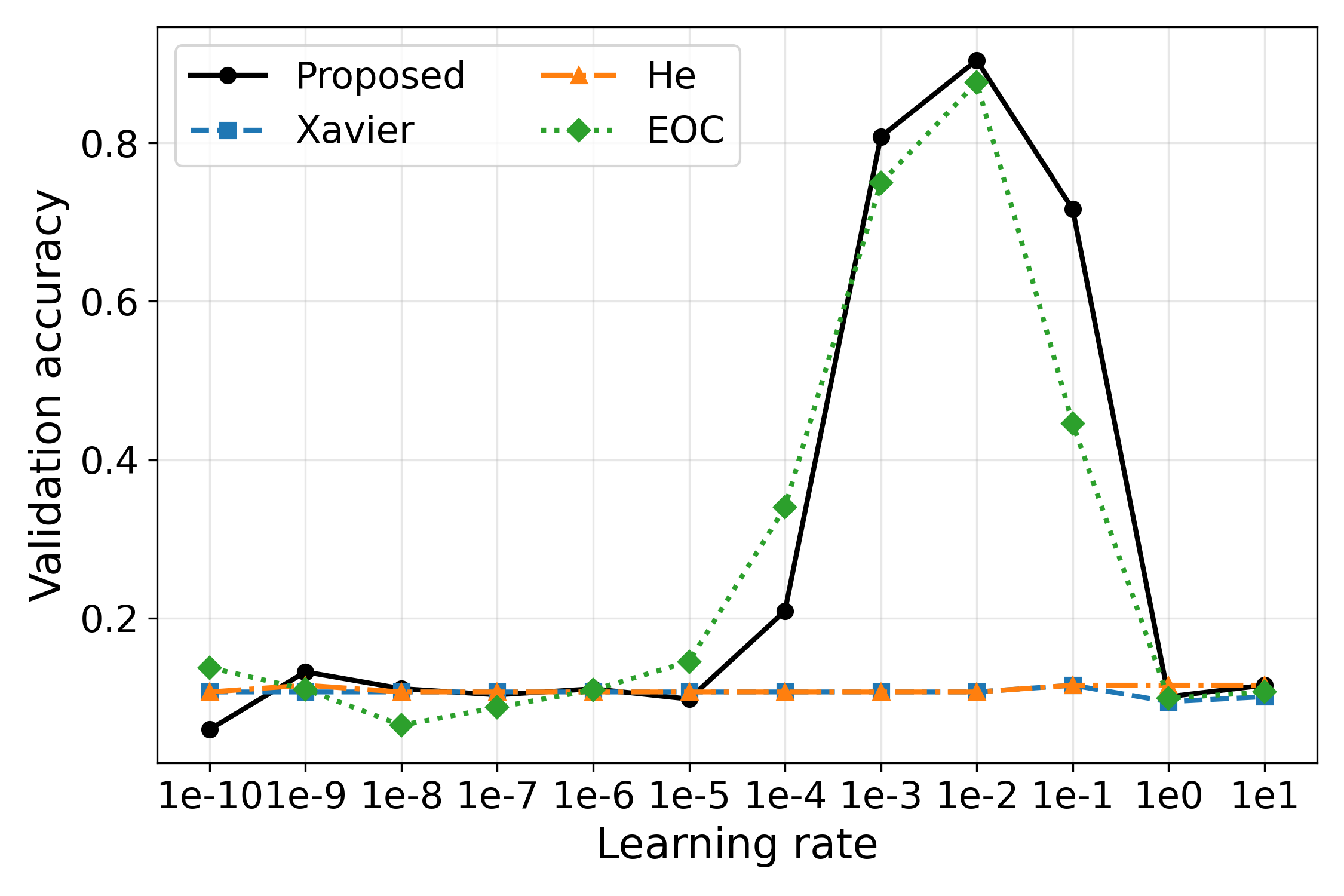}
    \caption{$2/\pi\arctan(0.01x)$}
\end{subfigure} &
\begin{subfigure}[b]{0.30\textwidth}
    \centering
    \includegraphics[width=\textwidth]{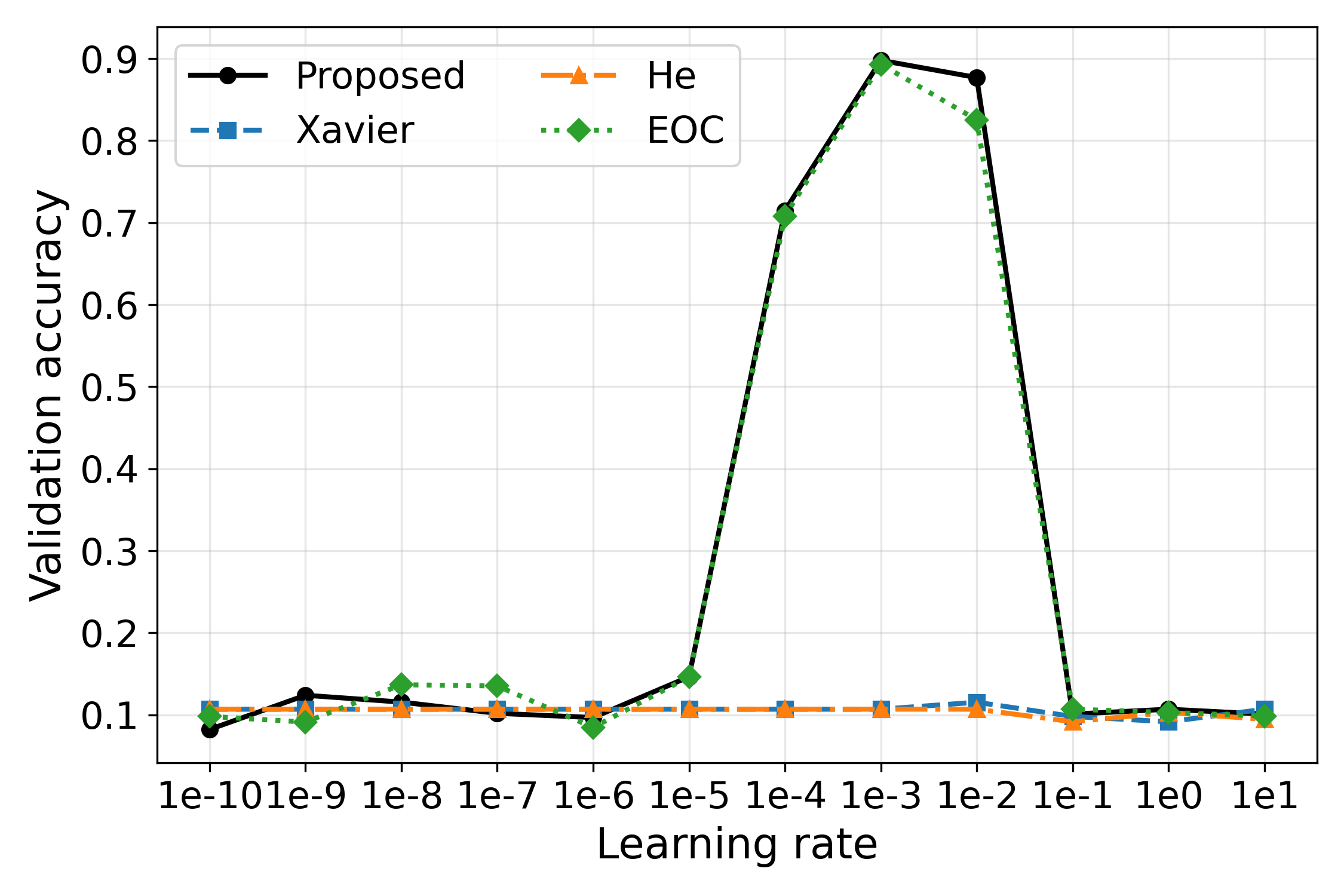}
    \caption{$2/\pi\arctan(0.1x)$}
\end{subfigure} \\
\begin{subfigure}[b]{0.30\textwidth}
    \centering
    \includegraphics[width=\textwidth]{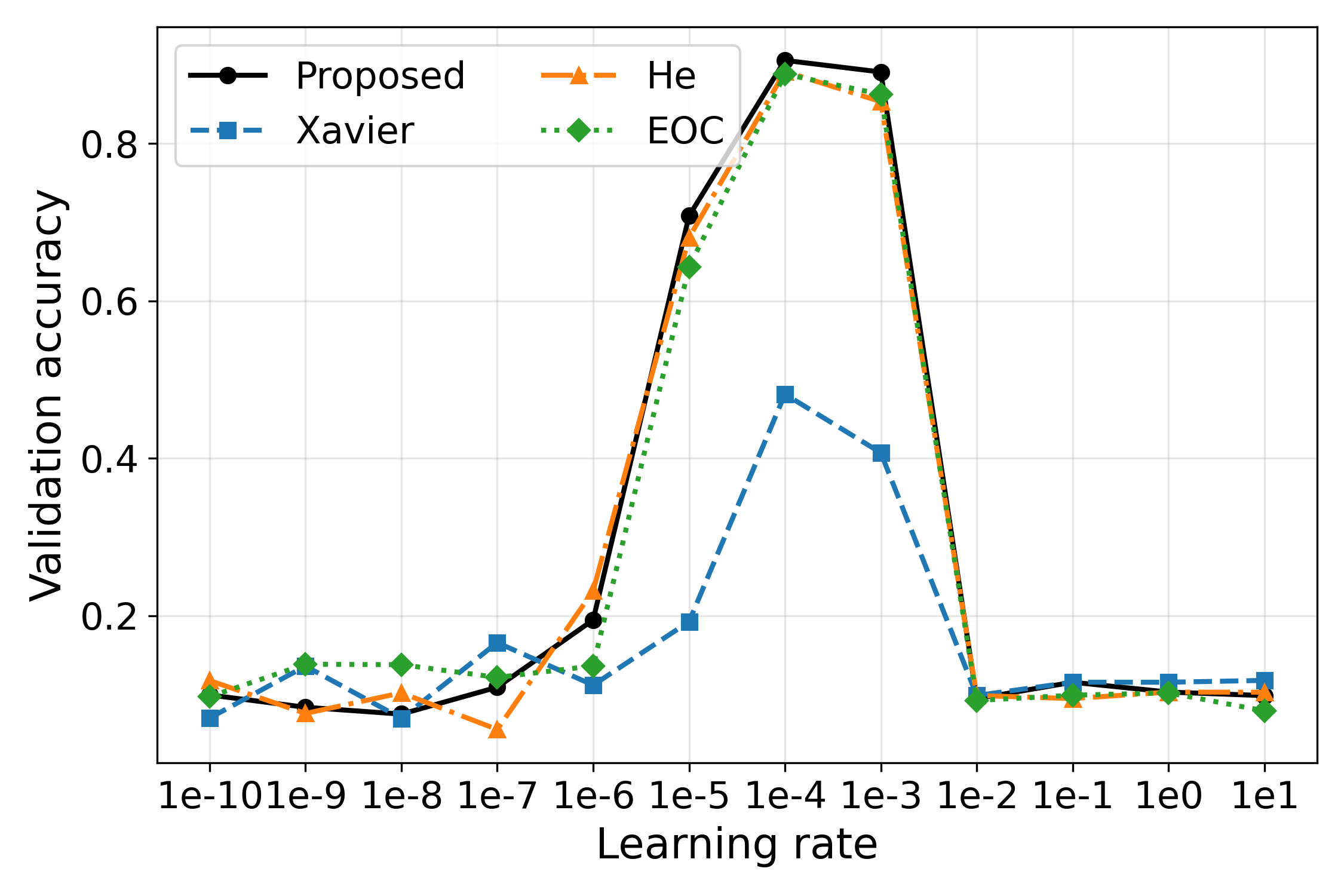}
    \caption{$2/\pi\arctan(1x)$}
\end{subfigure} &
\begin{subfigure}[b]{0.30\textwidth}
    \centering
    \includegraphics[width=\textwidth]{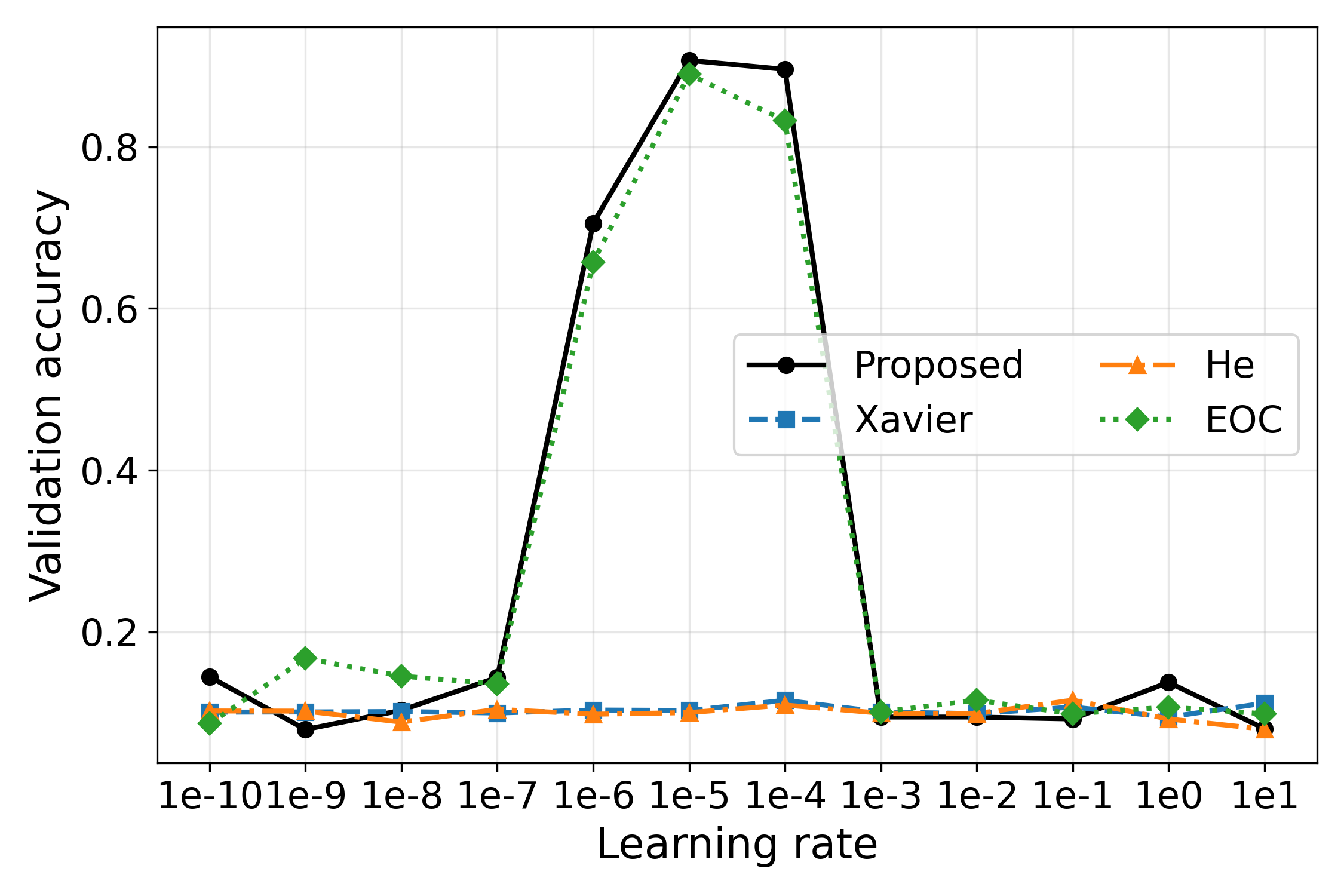}
    \caption{$2/\pi\arctan(10x)$}
\end{subfigure} &
\begin{subfigure}[b]{0.30\textwidth}
    \centering
    \includegraphics[width=\textwidth]{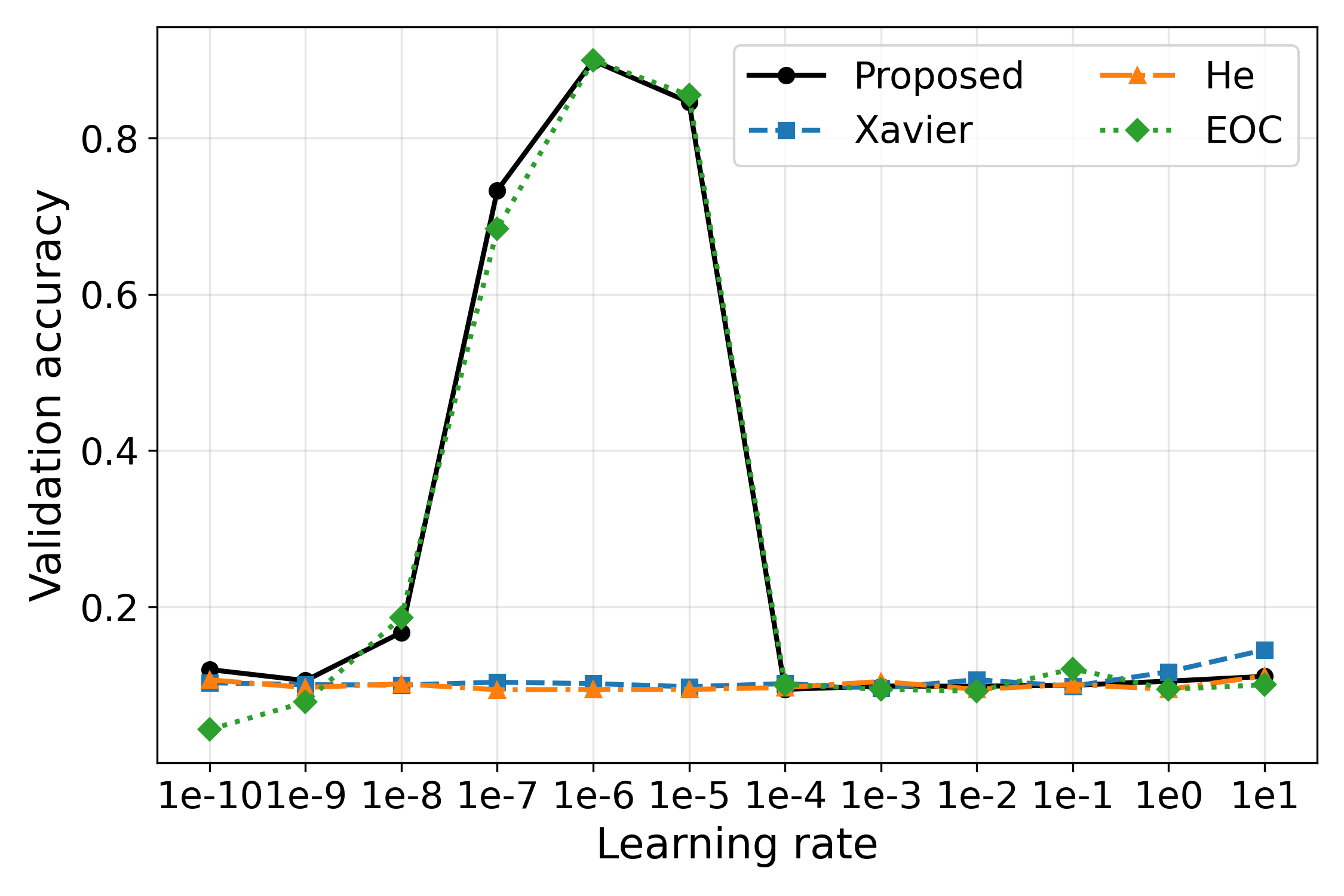}
    \caption{$2/\pi\arctan(100x)$}
\end{subfigure} 
\end{tabular} 
  \caption{Learning rate accuracy curves on MNIST for a 20 layer, width 512 feedforward network
    with activation $f(x) = \tfrac{2}{\pi}\arctan(\alpha x)$.
    Each panel corresponds to a different activation scale
    $\alpha \in \{10^2, 10^1, 1, 10^{-1}, 10^{-2}, 10^{-3}\}$.
    For each learning rate, we train for 200 iterations on a 10k training subset
    and report the validation accuracy.}
\label{lrst1}
\end{figure}

\begin{figure}[h!]\label{fig2}
\centering 
\begin{tabular}{ccc}
\begin{subfigure}[b]{0.30\textwidth}
    \centering
    \includegraphics[width=\textwidth]{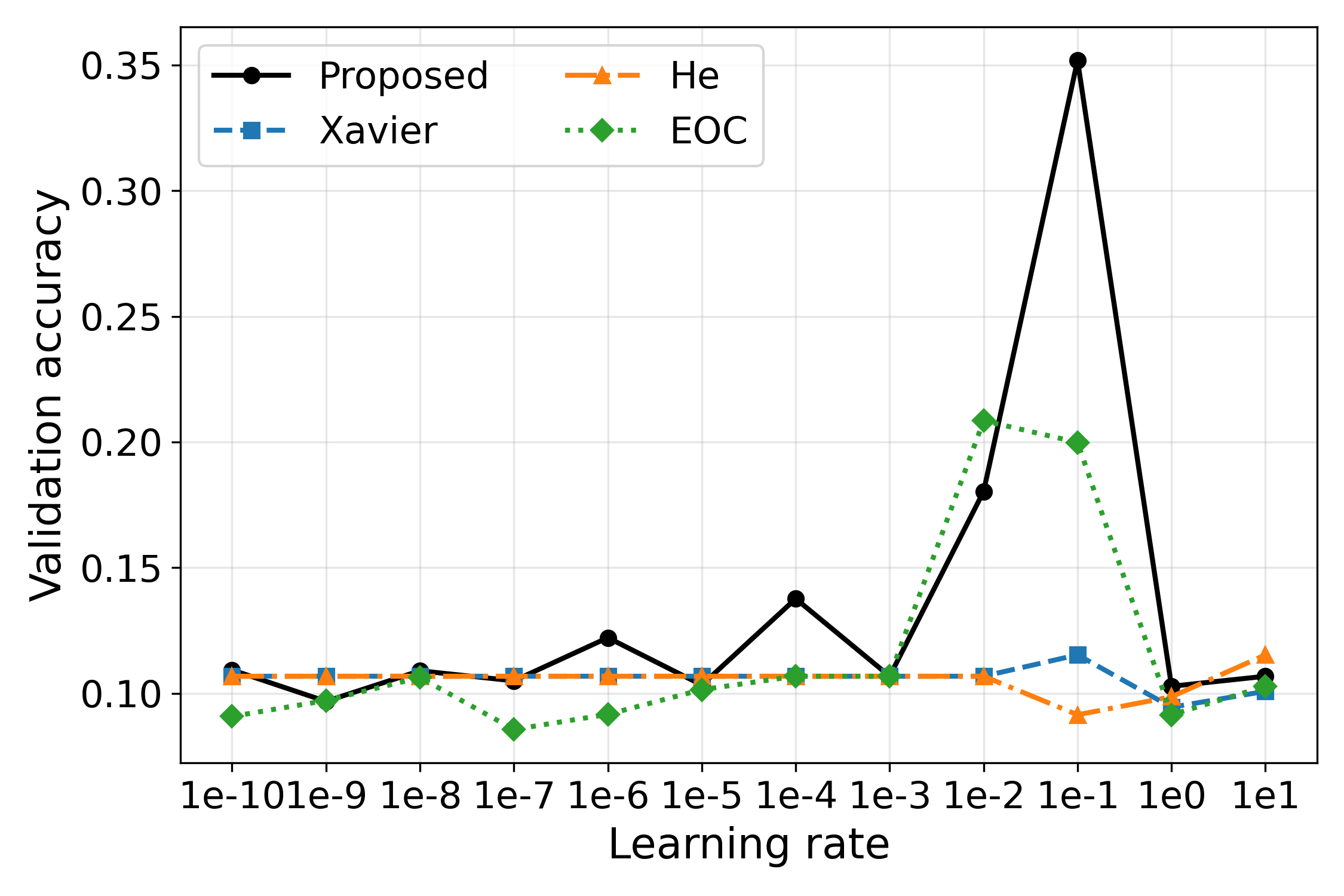}
    \caption{$\tanh(0.001x)$}
\end{subfigure} &
\begin{subfigure}[b]{0.30\textwidth}
    \centering
    \includegraphics[width=\textwidth]{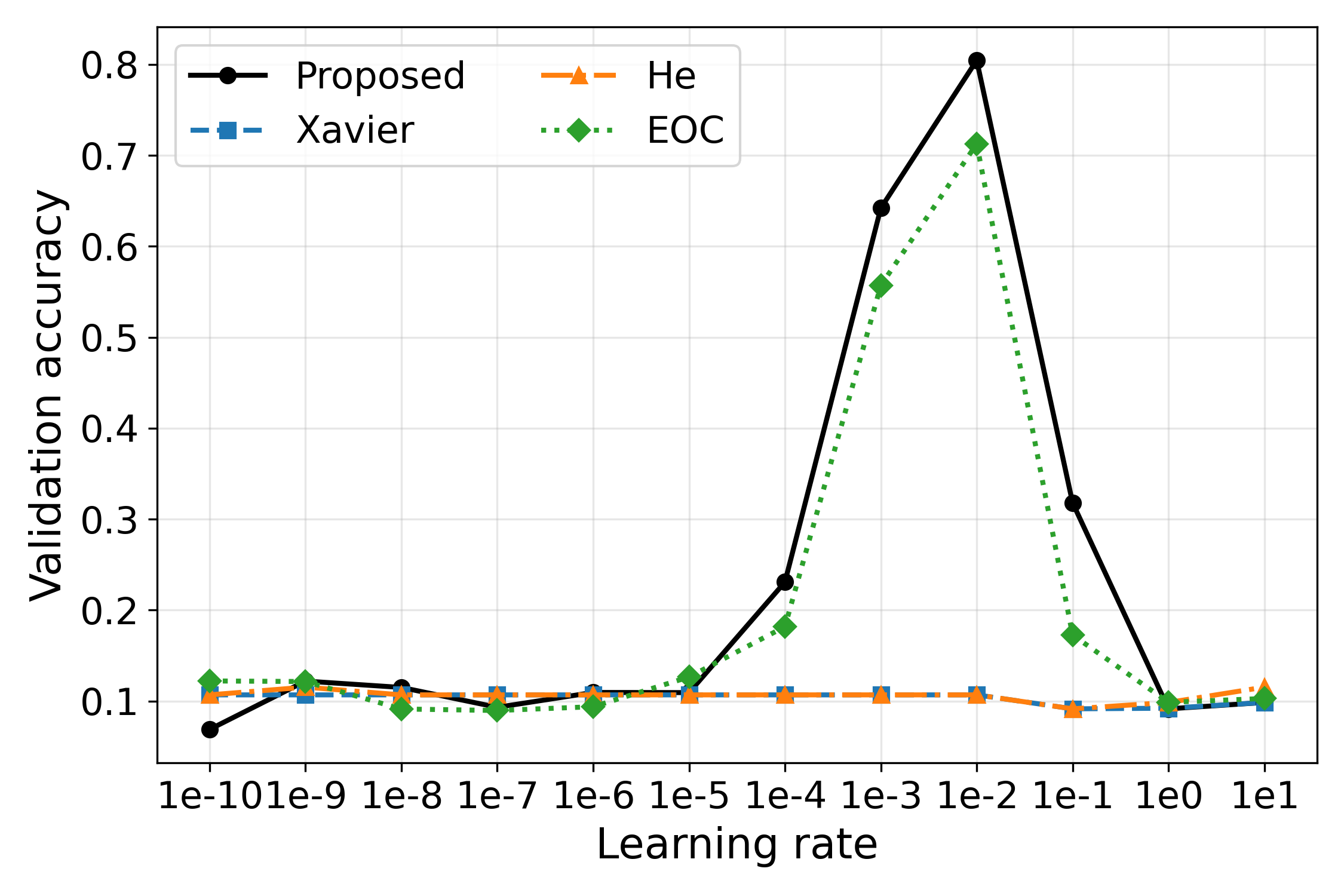}
    \caption{$\tanh(0.01x)$}
\end{subfigure} &
\begin{subfigure}[b]{0.30\textwidth}
    \centering
    \includegraphics[width=\textwidth]{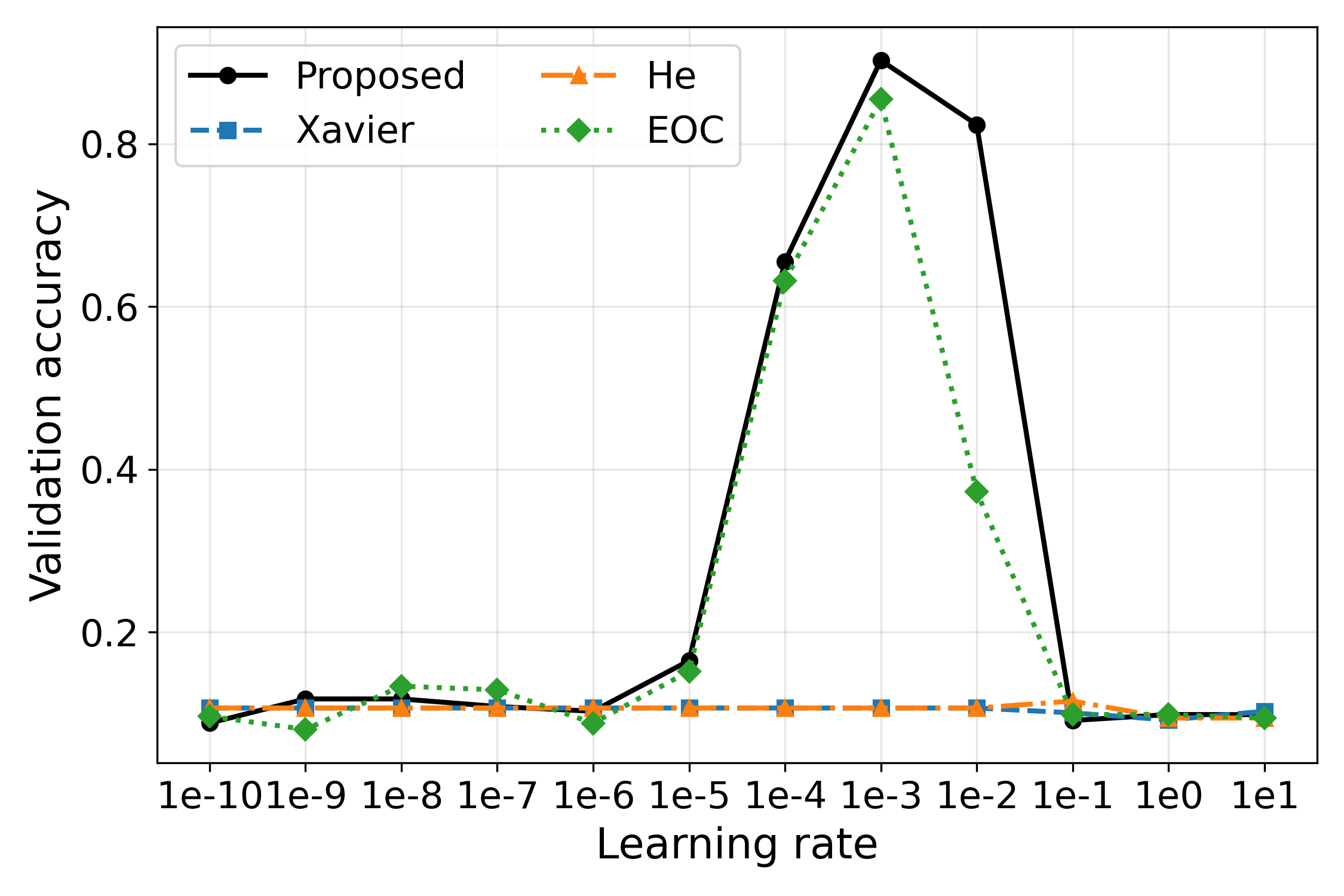}
    \caption{$\tanh(0.1x)$}
\end{subfigure} \\
\begin{subfigure}[b]{0.30\textwidth}
    \centering
    \includegraphics[width=\textwidth]{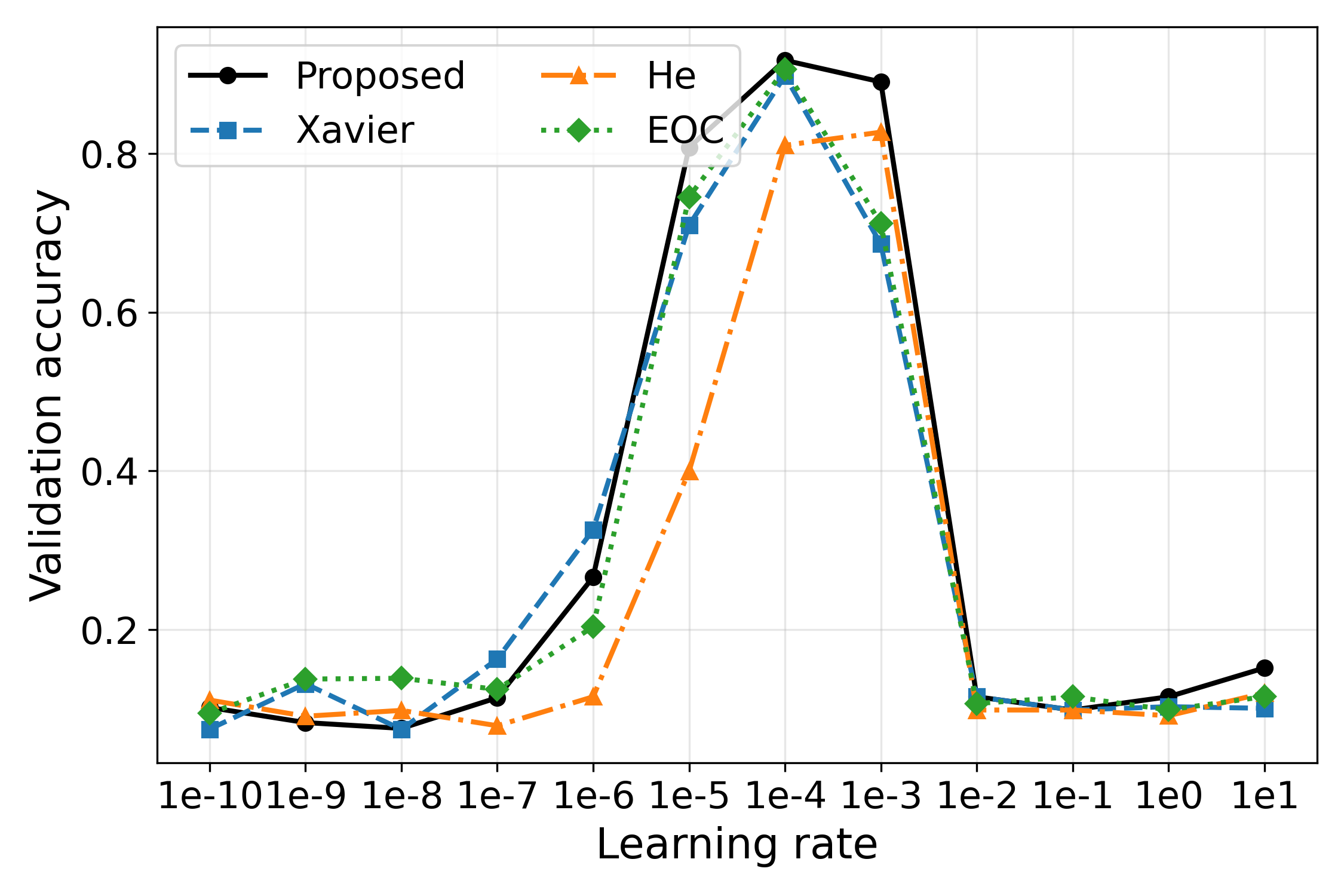}
    \caption{$\tanh(1x)$}
\end{subfigure} &
\begin{subfigure}[b]{0.30\textwidth}
    \centering
    \includegraphics[width=\textwidth]{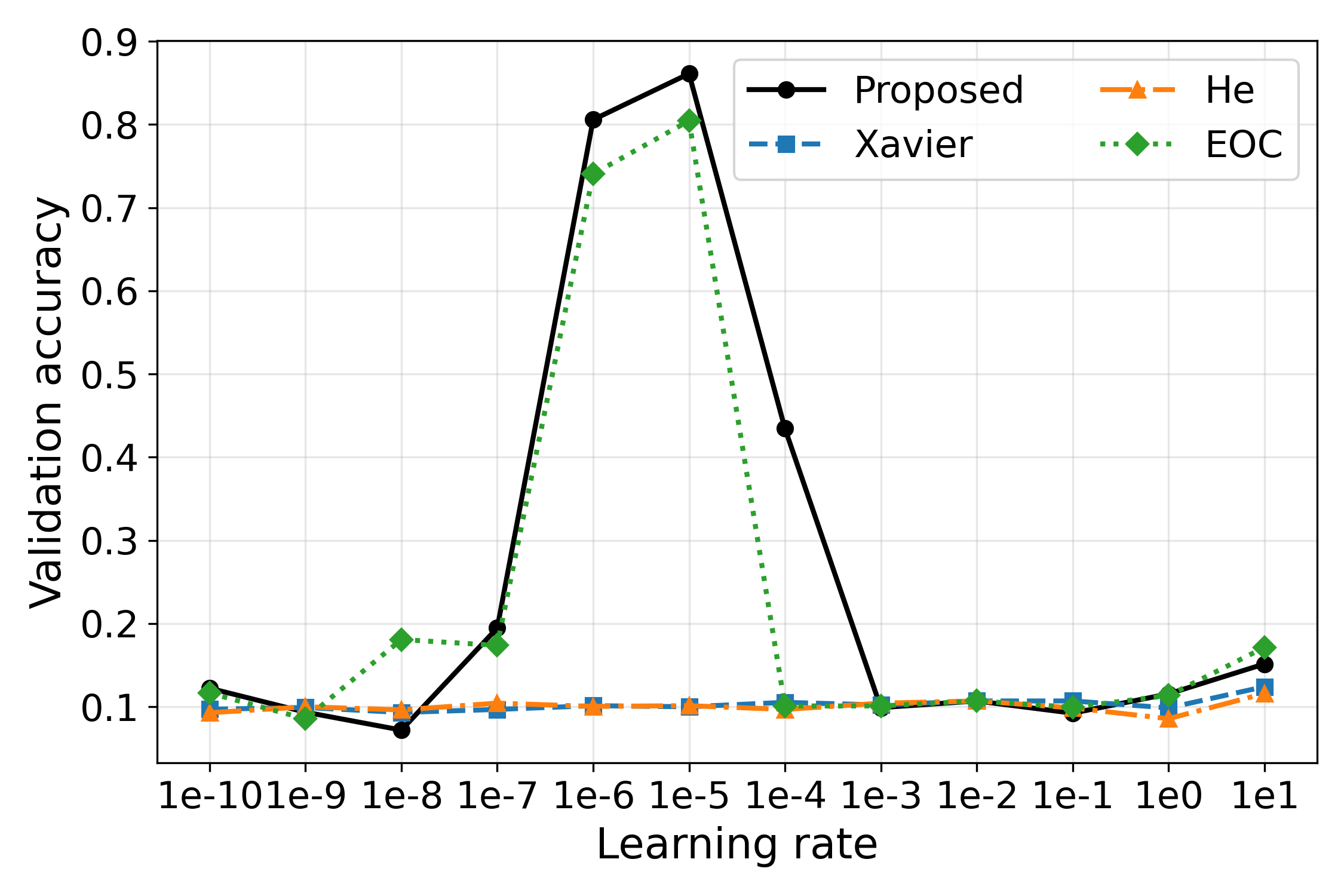}
    \caption{$\tanh(10x)$}
\end{subfigure} &
\begin{subfigure}[b]{0.30\textwidth}
    \centering
    \includegraphics[width=\textwidth]{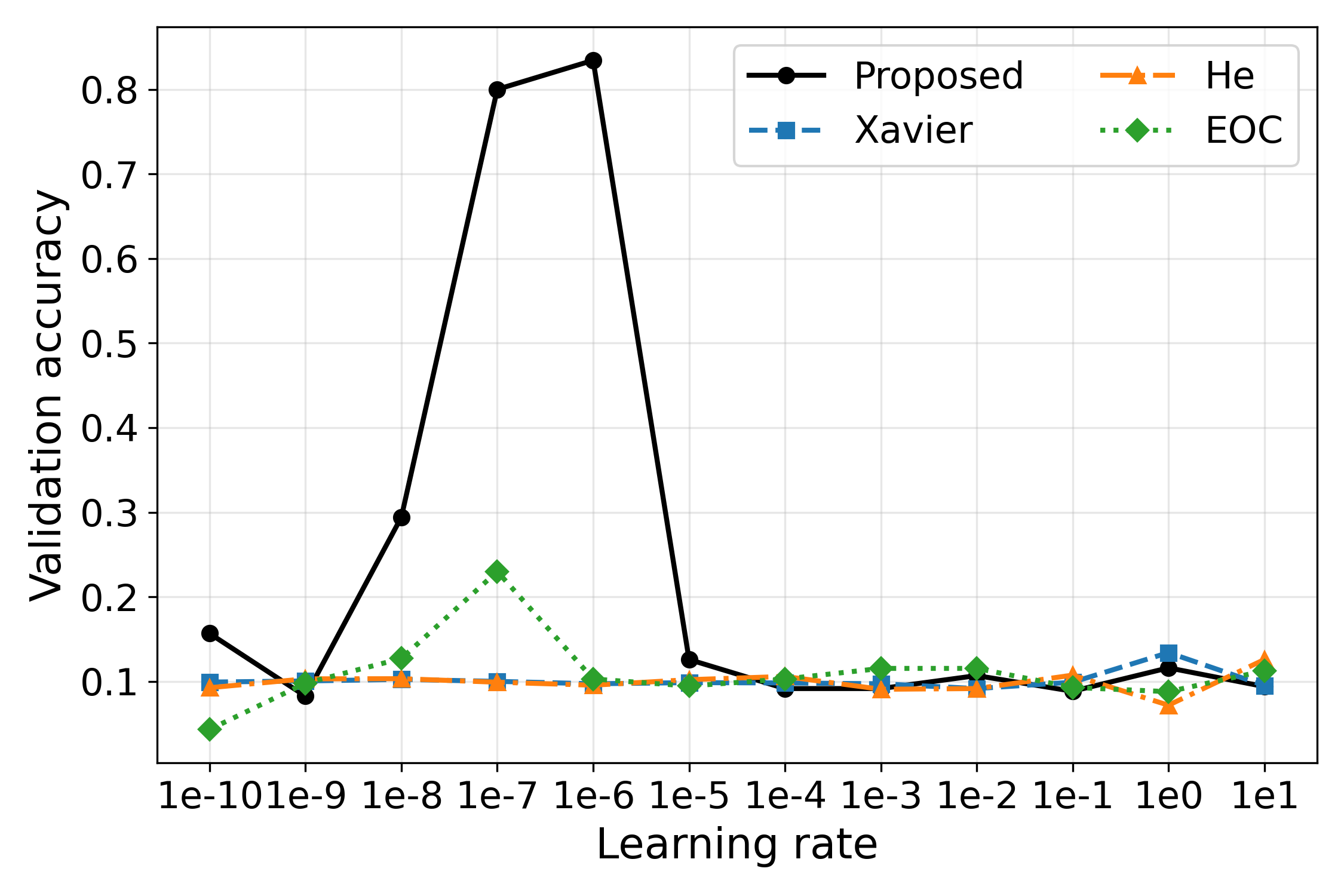}
    \caption{$\tanh(100x)$}
\end{subfigure} 
\end{tabular} 
  \caption{Learning rate accuracy curves on MNIST for a 20 layer, width 512 feedforward network
    with activation $f(x) =\tanh(\alpha x)$.
    Each panel corresponds to a different activation scale
    $\alpha \in \{10^2, 10^1, 1, 10^{-1}, 10^{-2}, 10^{-3}\}$.
    For each learning rate, we train for 200 iterations on a 1k training subset
    and report the validation accuracy.}
\label{lrst2}
\end{figure}

\begin{figure}[h!]\label{fig2}
\centering 
\begin{tabular}{ccc}
\begin{subfigure}[b]{0.30\textwidth}
    \centering
    \includegraphics[width=\textwidth]{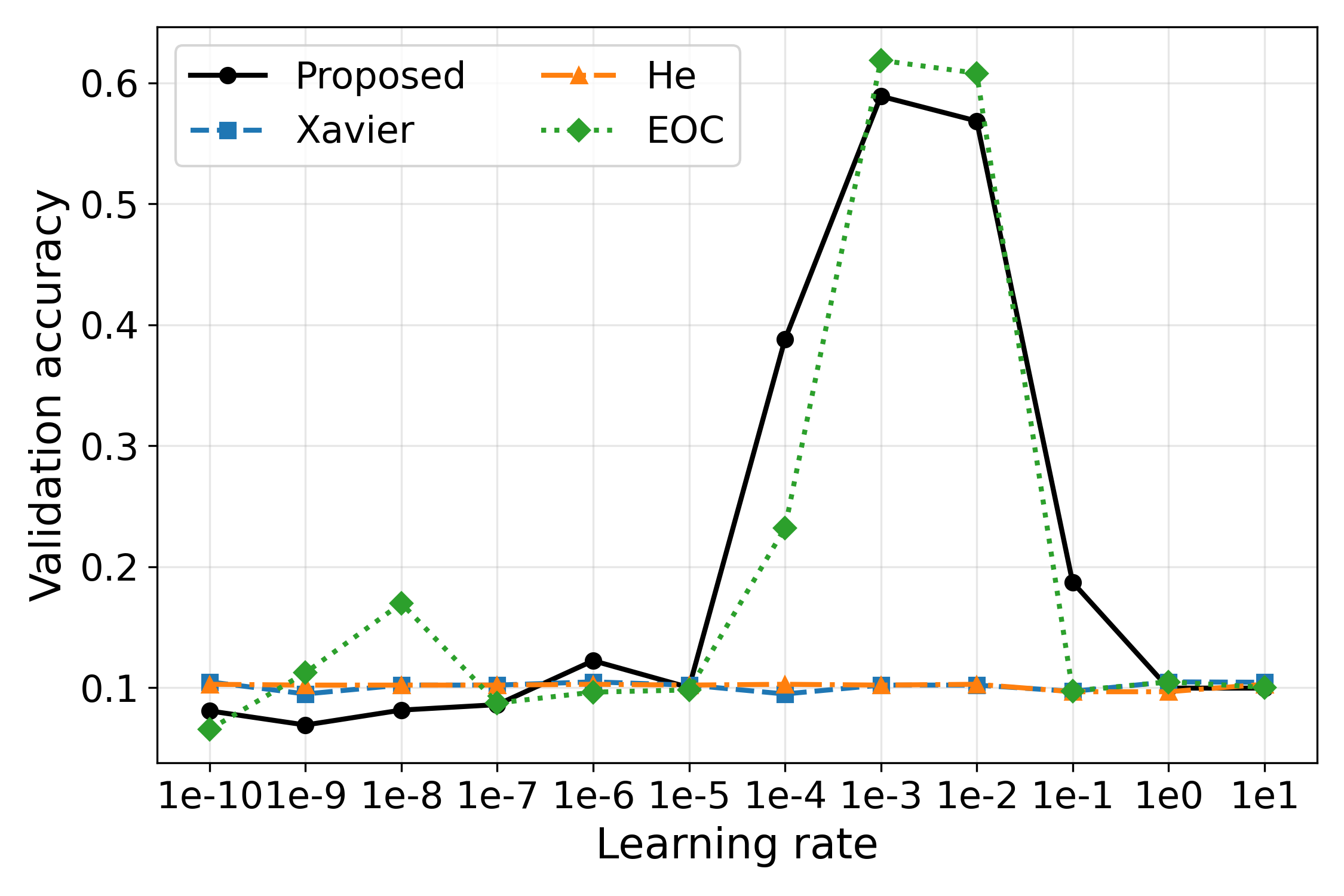}
    \caption{$\tanh(0.01x)$}
\end{subfigure} &
\begin{subfigure}[b]{0.30\textwidth}
    \centering
    \includegraphics[width=\textwidth]{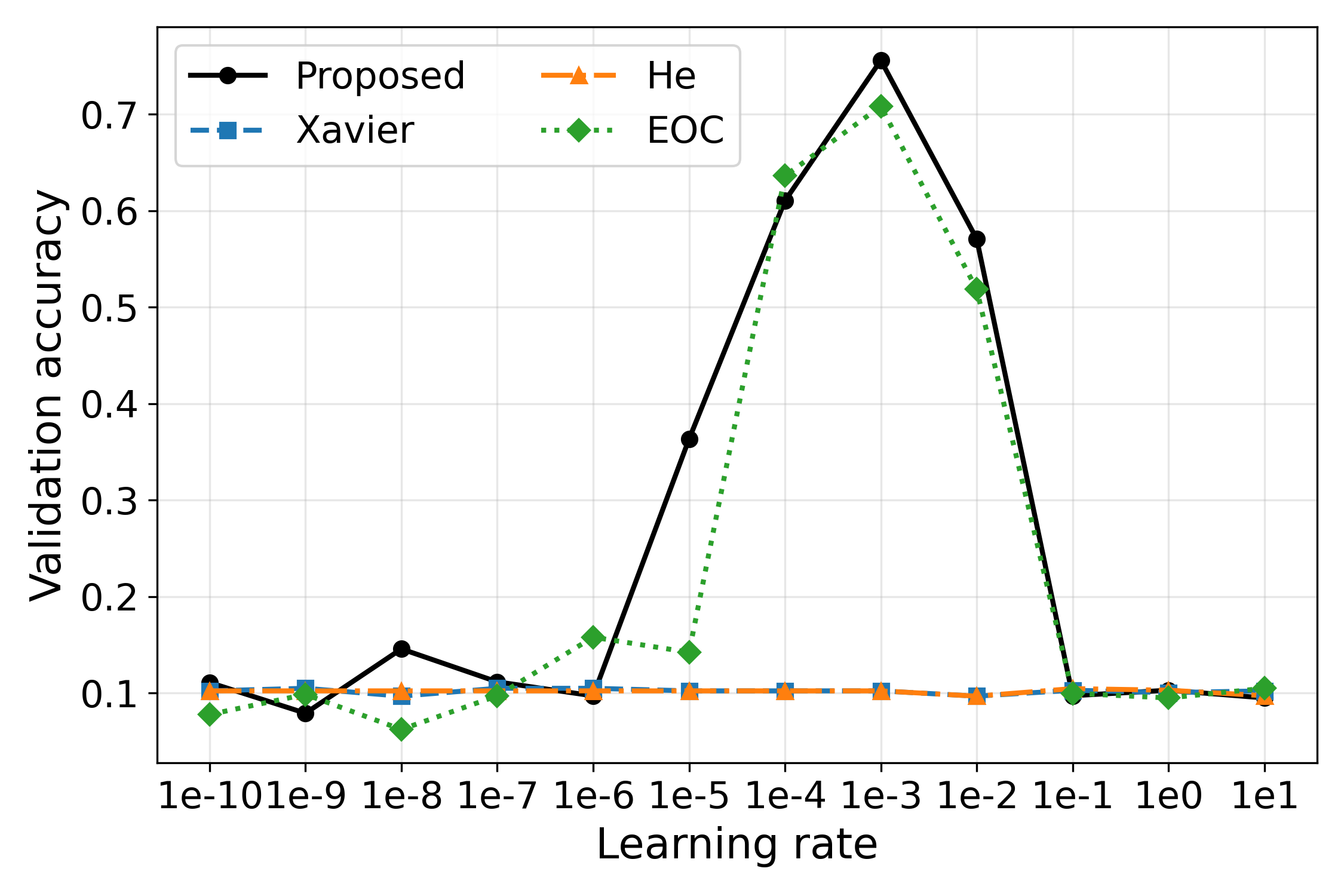}
    \caption{$\tanh(0.1x)$}
\end{subfigure} &
\begin{subfigure}[b]{0.30\textwidth}
    \centering
    \includegraphics[width=\textwidth]{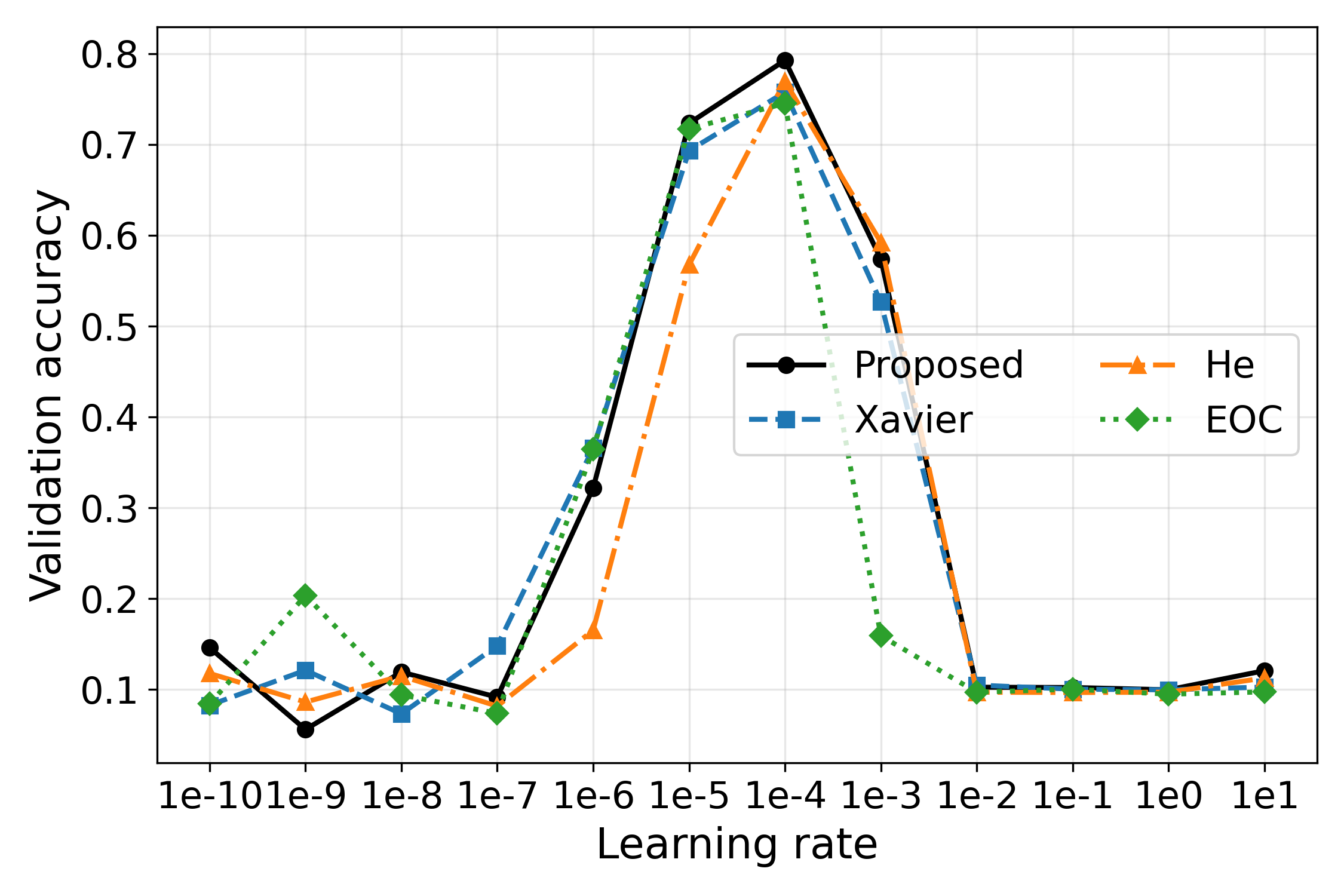}
    \caption{$\tanh(x)$}
\end{subfigure} \\
\begin{subfigure}[b]{0.30\textwidth}
    \centering
    \includegraphics[width=\textwidth]{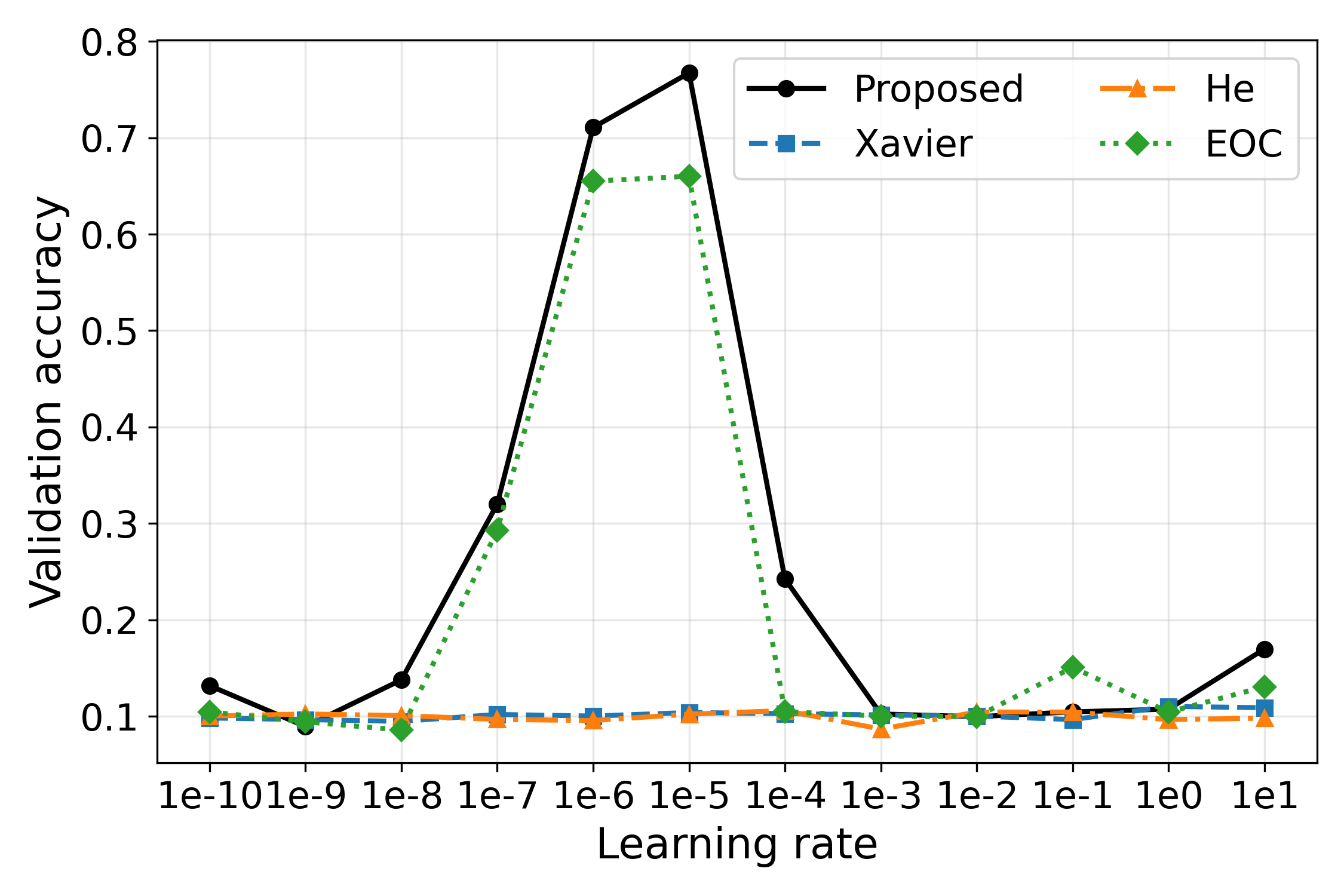}
    \caption{$\tanh(10x)$}
\end{subfigure} &
\begin{subfigure}[b]{0.30\textwidth}
    \centering
    \includegraphics[width=\textwidth]{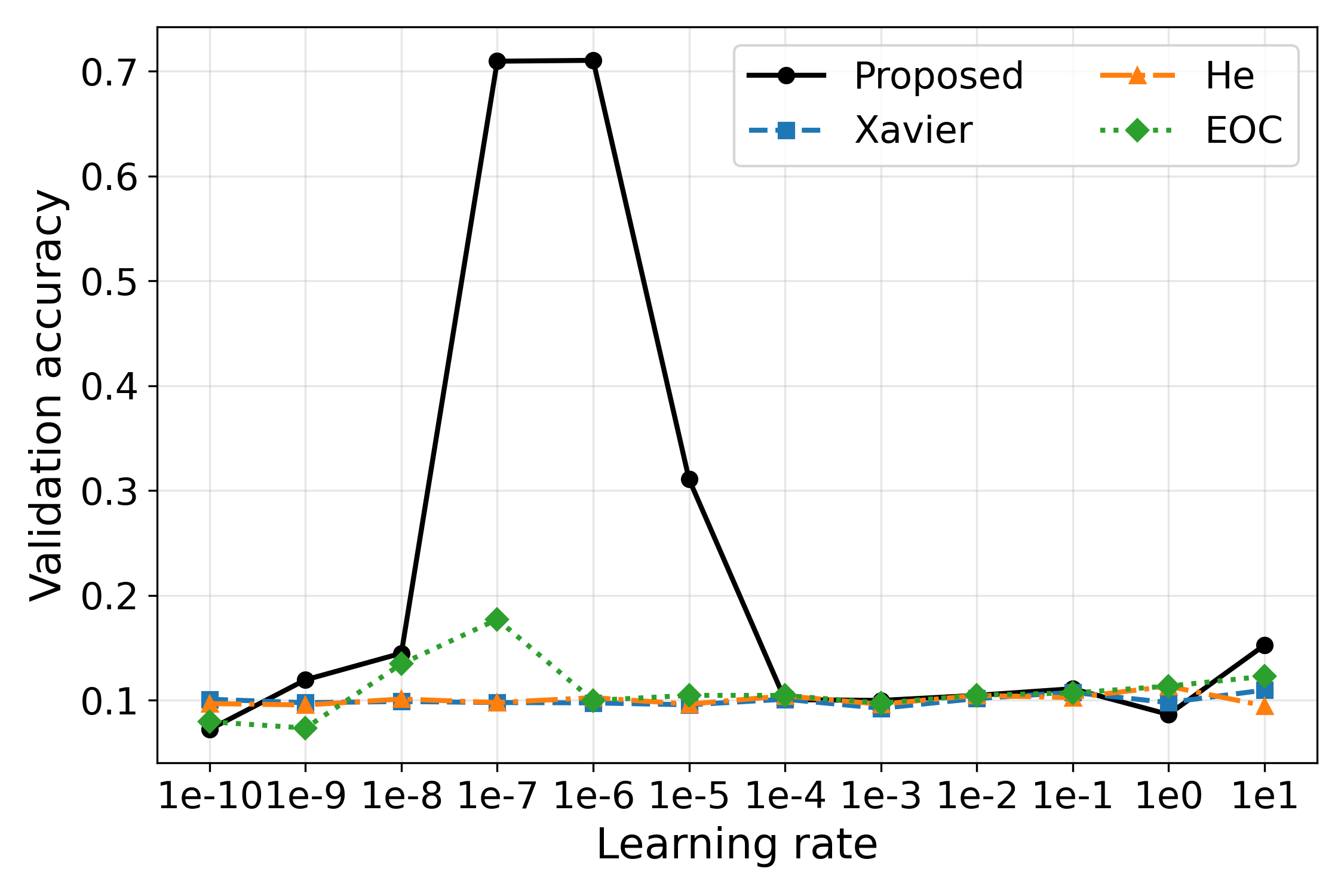}
    \caption{$\tanh(100x)$}
\end{subfigure} &
\begin{subfigure}[b]{0.30\textwidth}
    \centering
    \includegraphics[width=\textwidth]{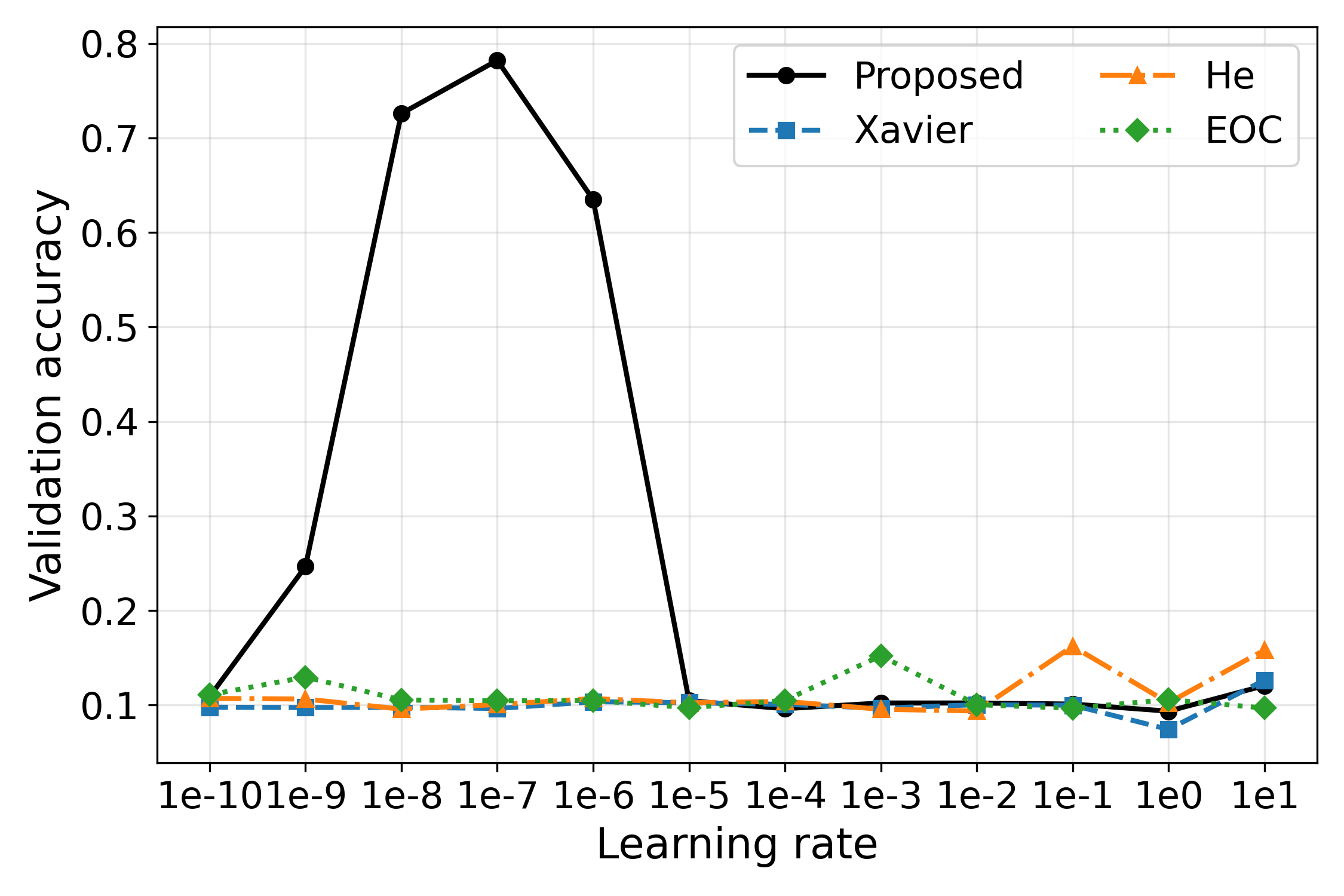}
    \caption{$\tanh(1000x)$}
\end{subfigure} 
\end{tabular} 
  \caption{Learning rate accuracy curves on Fashion MNIST for a 20 layer, width 512 feedforward network
    with activation $f(x) =\tanh(\alpha x)$.
    Each panel corresponds to a different activation scale
    $\alpha \in \{10^3, 10^2, 10^1, 1, 10^{-1}, 10^{-2}\}$.
    For each learning rate, we train for 200 iterations on a 10k training subset
    and report the validation accuracy.
    Curves compare four initializations: Proposed, Xavier, He, and EOC.
  }
\label{lrst3}
\end{figure}

\begin{figure}[h!]\label{fig2}
\centering 
\begin{tabular}{ccc}
\begin{subfigure}[b]{0.30\textwidth}
    \centering
    \includegraphics[width=\textwidth]{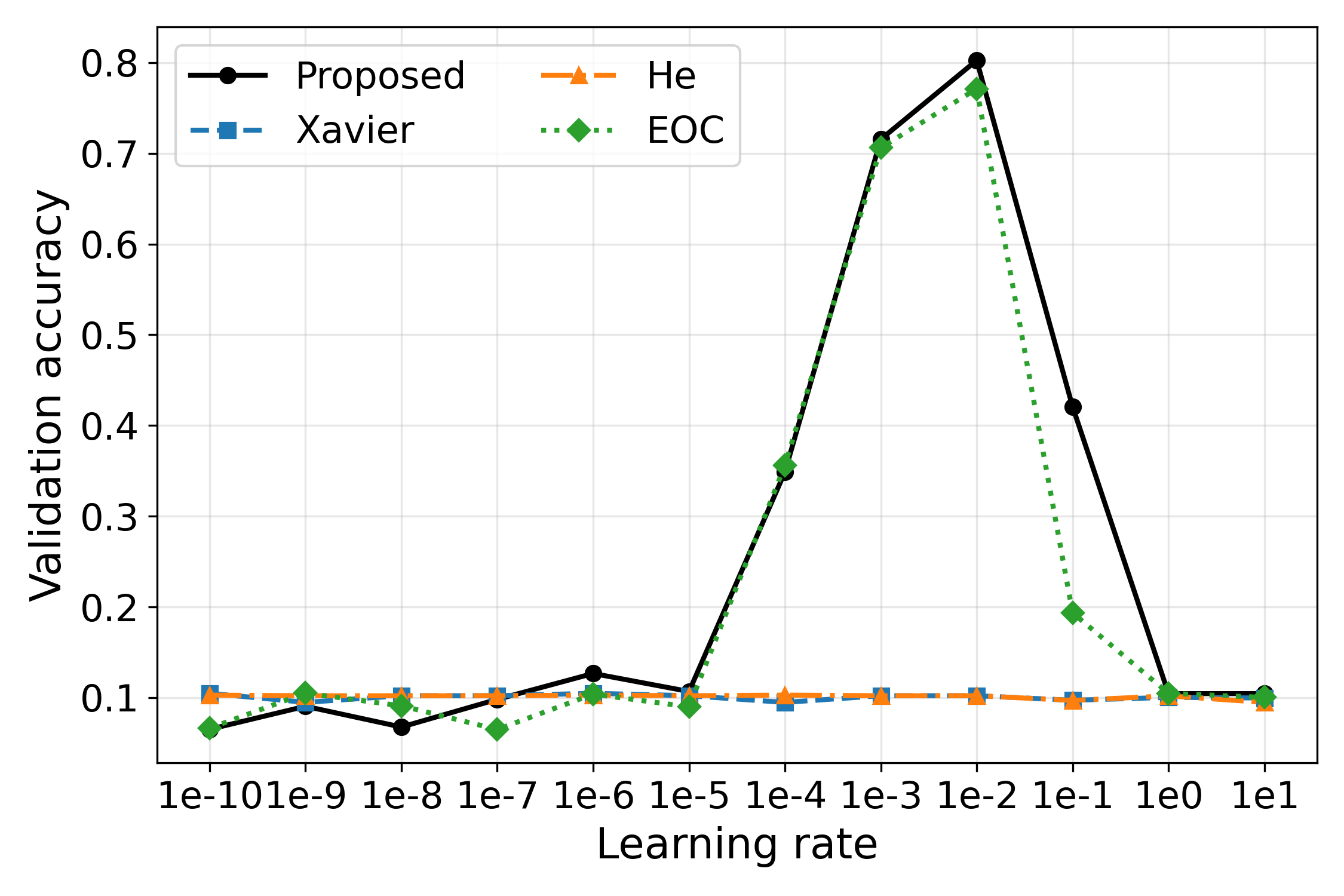}
    \caption{$2/\pi\arctan(0.01x)$}
\end{subfigure} &
\begin{subfigure}[b]{0.30\textwidth}
    \centering
    \includegraphics[width=\textwidth]{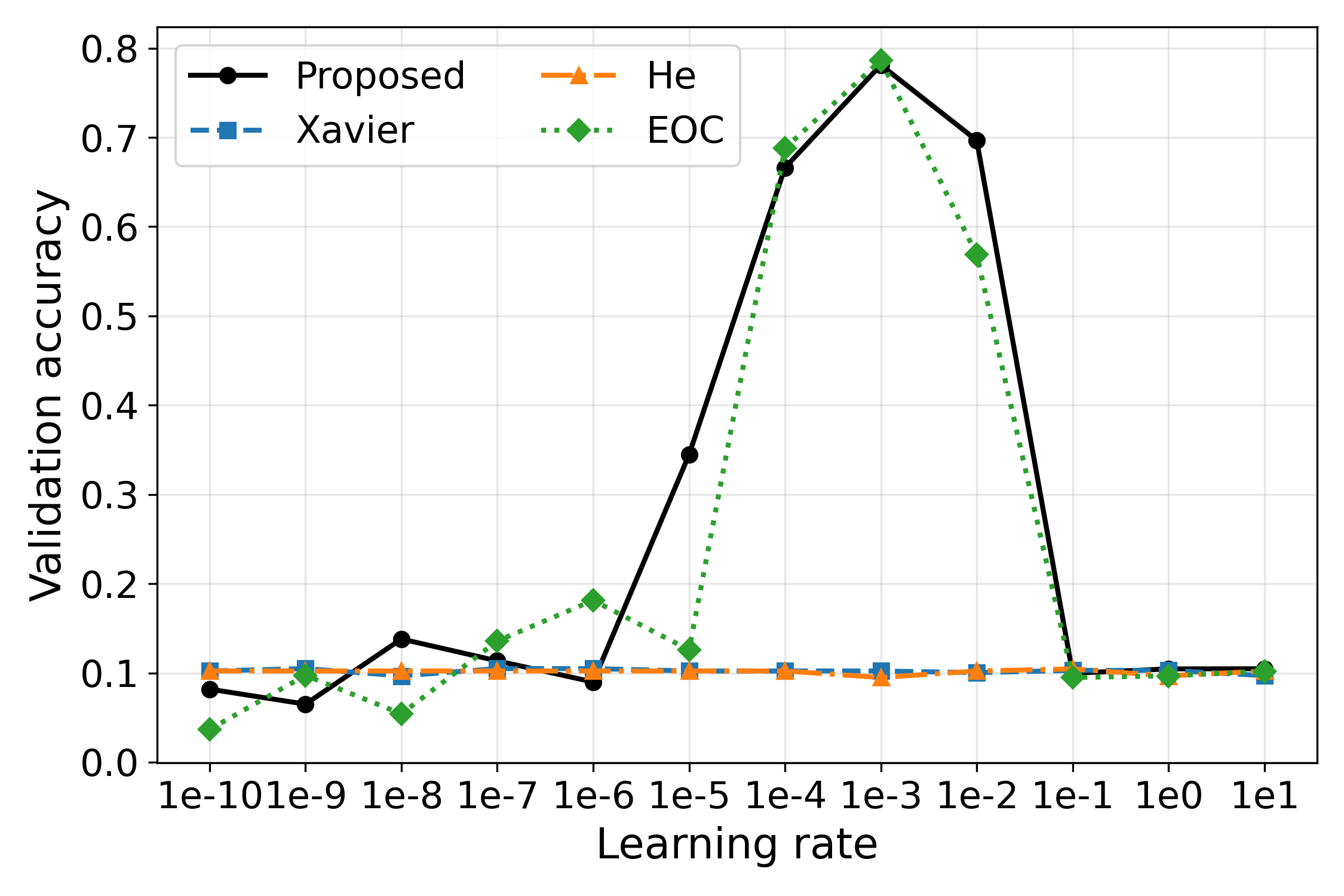}
    \caption{$2/\pi\arctan(0.1x)$}
\end{subfigure} &
\begin{subfigure}[b]{0.30\textwidth}
    \centering
    \includegraphics[width=\textwidth]{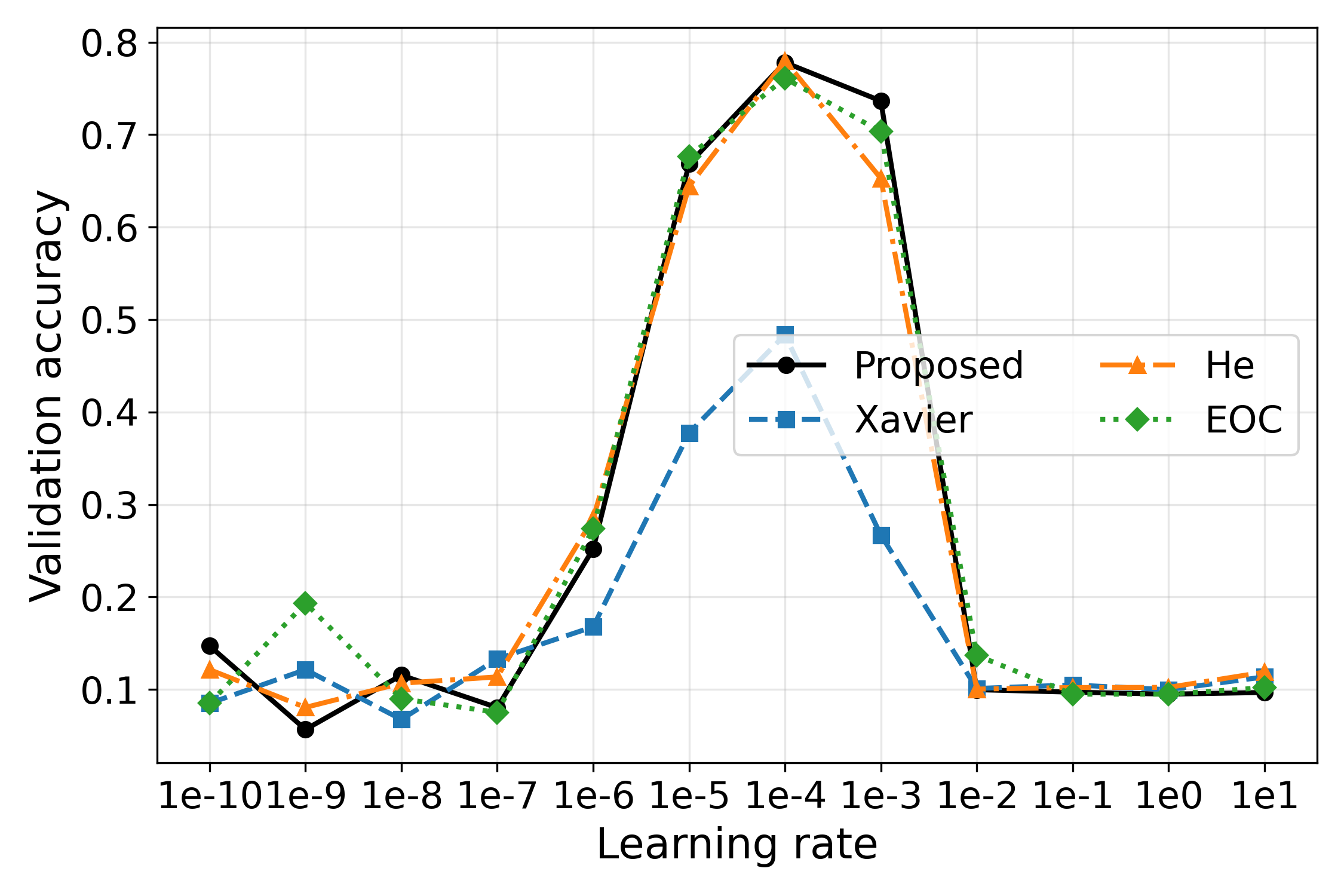}
    \caption{$2/\pi\arctan(x)$}
\end{subfigure} \\
\begin{subfigure}[b]{0.30\textwidth}
    \centering
    \includegraphics[width=\textwidth]{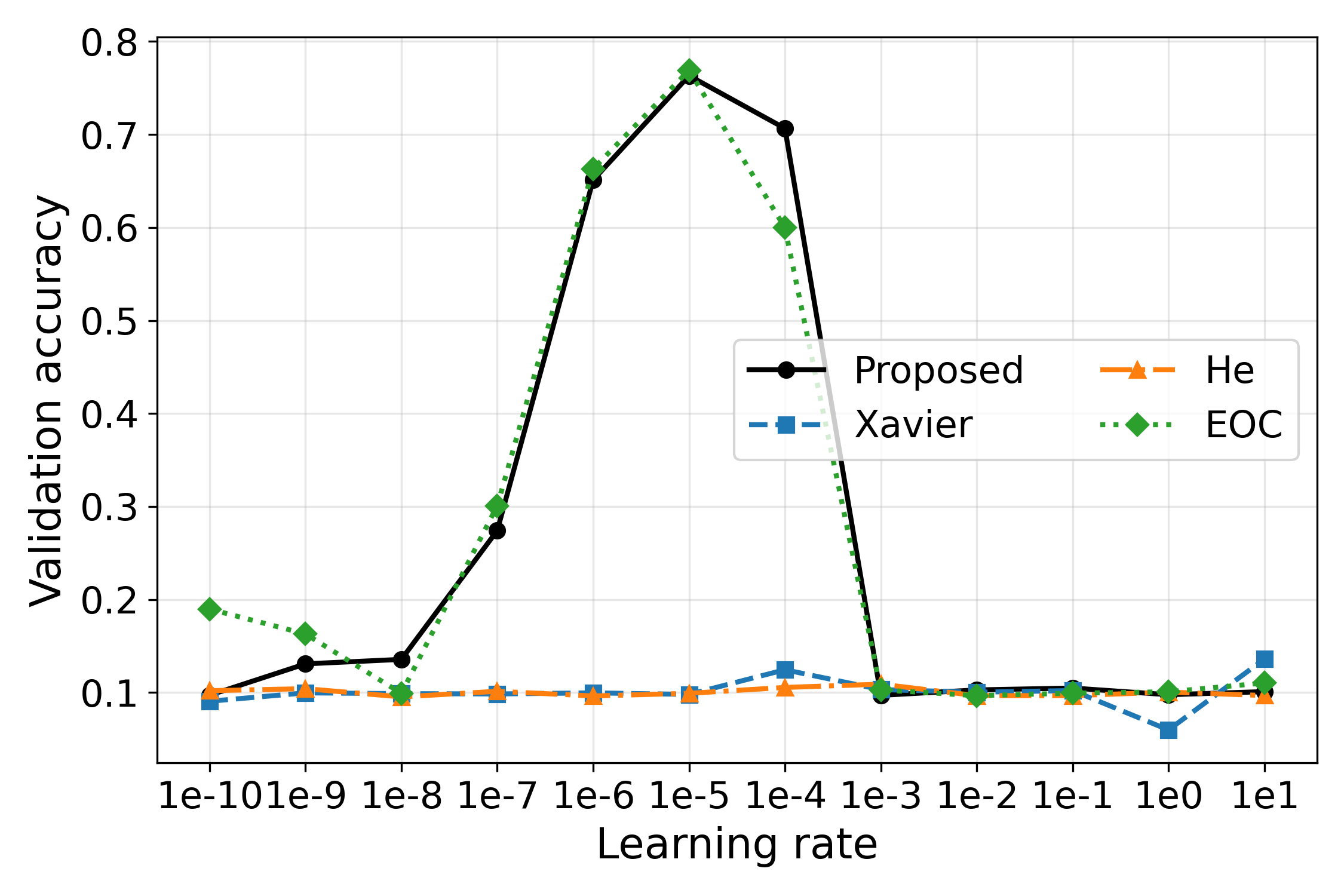}
    \caption{$2/\pi\arctan(10x)$}
\end{subfigure} &
\begin{subfigure}[b]{0.30\textwidth}
    \centering
    \includegraphics[width=\textwidth]{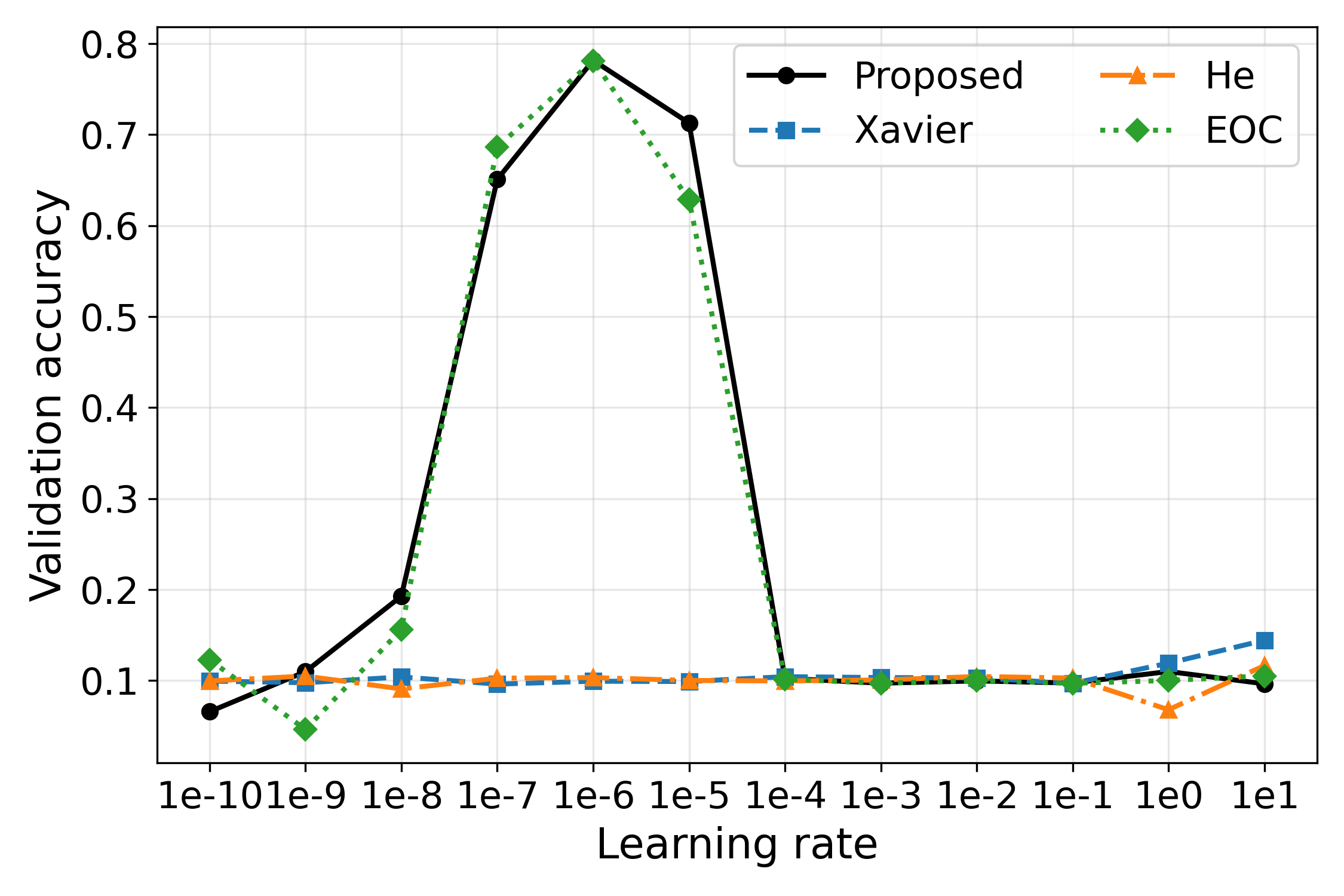}
    \caption{$2/\pi\arctan(100x)$}
\end{subfigure} &
\begin{subfigure}[b]{0.30\textwidth}
    \centering
    \includegraphics[width=\textwidth]{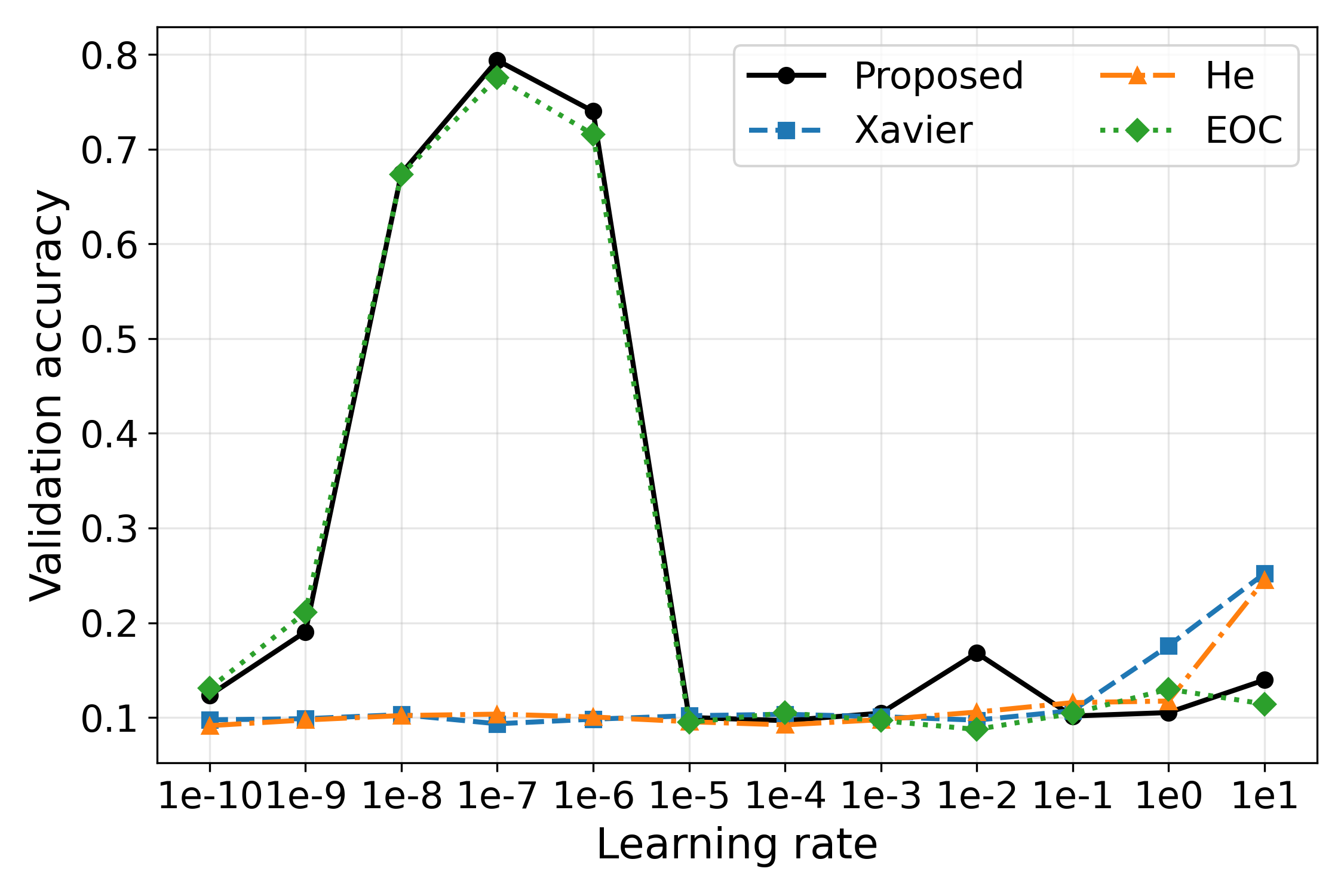}
    \caption{$2/\pi\arctan(1000x)$}
\end{subfigure} 
\end{tabular} 
  \caption{Learning rate accuracy curves on Fashion MNIST for a 20 layer, width 512 feedforward network
    with activation $f(x) =2/\pi\arctan(\alpha x)$.
    Each panel corresponds to a different activation scale
    $\alpha \in \{10^3, 10^2, 10^1, 1, 10^{-1}, 10^{-2}\}$.
    For each learning rate, we train for 200 iterations on a 10k training subset
    and report the validation accuracy.
    Curves compare four initializations: Proposed, Xavier, He, and EOC.
  }
\label{lrst4}
\end{figure}

\clearpage
\subsection{Scale Preserving Odd–Sigmoid Activations.}\label{scale_pinn}
Using a scaled activation function amounts to adjusting the effective range of both the input and output axes of the nonlinearity. For example, $\tanh(x)$ has output range $[-1,1]$, whereas $\alpha\tanh(x)$ has range $[-\alpha,\alpha]$. With our initialization we therefore ask whether signals remain well propagated over the full interval $[-\alpha,\alpha]$ when the activation is scaled. As shown in Figure~\ref{activations_range}, for several values of $\alpha$ the proposed initialization keeps the last layer activations of $\alpha\tanh(x)$ spread over $[-\alpha,\alpha]$ even in very deep networks (up to $L=10^5$), while Gaussian i.i.d.\ initialization rapidly drives the activations to saturate near zero. 
In all $\alpha$-scale experiments (Tables \ref{scale_table1}–\ref{scale_table6}), each initializer was given its own LR grid search. Even under this per initializer tuning, Gaussian and EOC schemes fail to train for large $\alpha$, while our method remains stable.

Since the proposed initialization lets us target the output scale via the choice of $\alpha$, it is natural to expect benefits on regression tasks where the range of the target $y$ matters. To test this, we evaluate our method in physics informed neural networks (PINNs), using scaled odd–sigmoid activations for Burgers’ equation and the Black–Scholes equation. The PINN setup and the precise PDE formulations are described in the following subsections.

\medskip

\begin{figure}[h!]
\centering 
\begin{tabular}{cccc}
\begin{subfigure}[b]{0.47\textwidth}
    \centering
    \includegraphics[width=\textwidth]{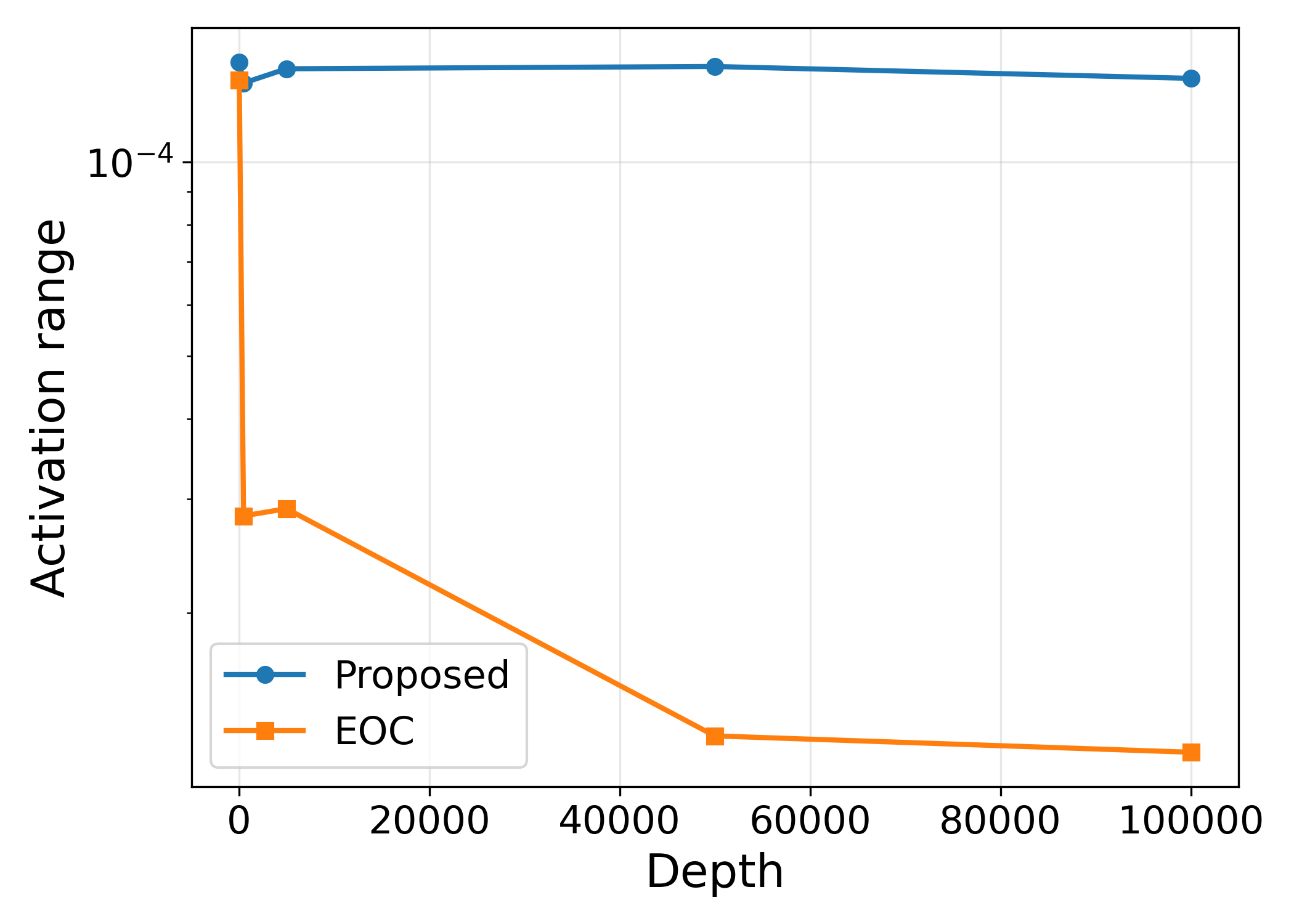}
    \caption{$\alpha=0.0001$}
\end{subfigure} &
\begin{subfigure}[b]{0.47\textwidth}
    \centering
    \includegraphics[width=\textwidth]{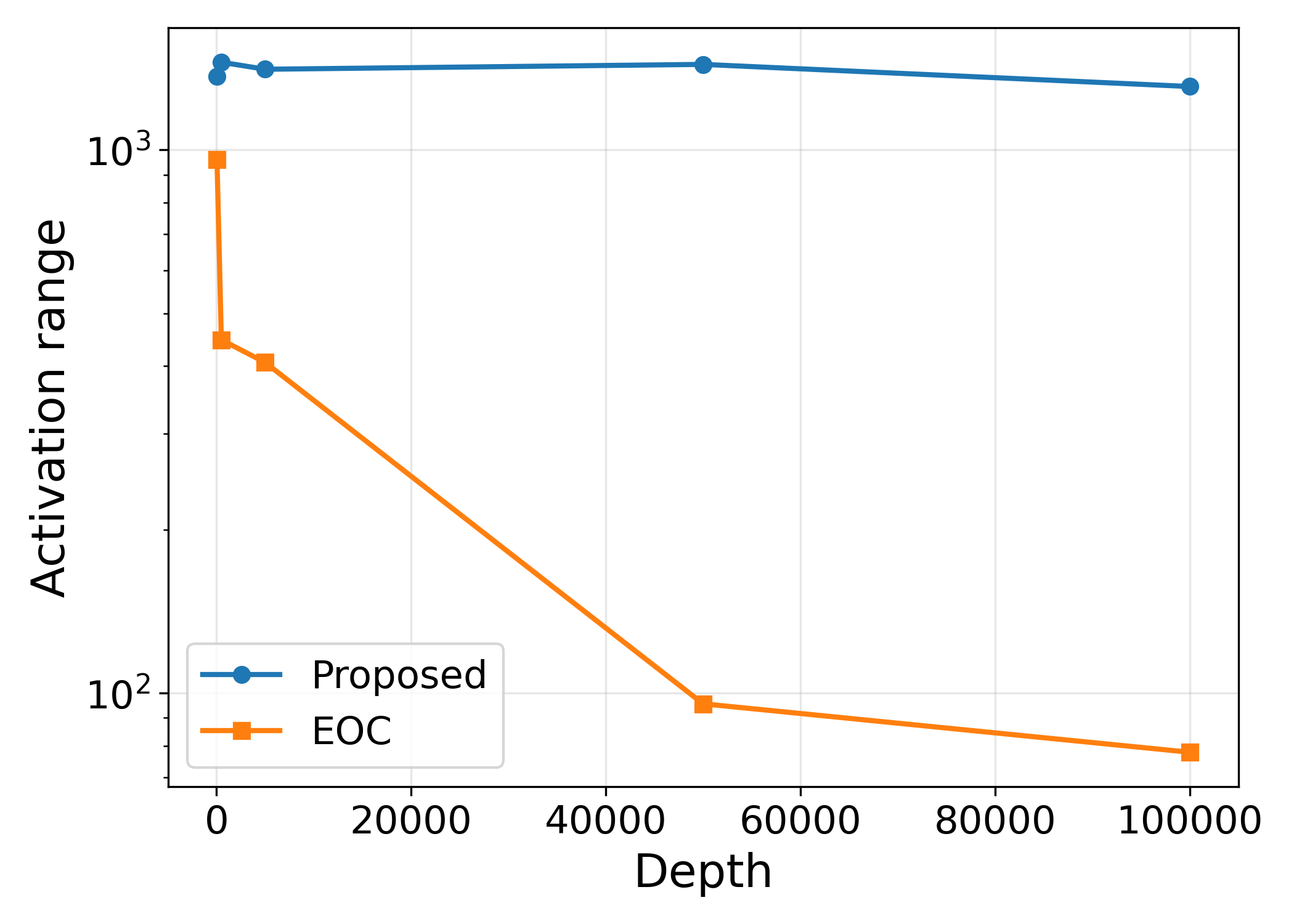}
    \caption{$\alpha=1000$}
\end{subfigure} 
\end{tabular} 
\caption{Activation range as a function of depth $L \in \{50, 500, 5{,}000, 50{,}000, 10^{5}\}$ for fully connected width 64 networks with activation $\alpha\tanh(x)$ under the Proposed and EOC initializations. Panel (a) uses $\alpha = 10^{-4}$ and panel (b) uses $\alpha = 10^{3}$.}
\label{activations_range}
\end{figure}

\begin{table}[h!]
  \centering
  \caption{Validation accuracy on MNIST~(left) and Fashion MNIST~(right) for a
  50 layer, width 128 fully connected neural network with activation $a\tanh(x)$.
  Each row corresponds to a different activation scale $a$, and for every $a$
  the learning rate is set to $\eta = 10^{-4}/a$ for both initializations.}  \label{tab:a_tanh_accuracy1}
  \small
  \setlength{\tabcolsep}{6pt}
  \renewcommand{\arraystretch}{1.2}

  % 첫 번째 열만 폭 고정 + 완전 중앙
  \begin{tabular}{|C{1.5cm}|c|c|}
    \hline
    $a$ & MNIST (accuracy vs.\ epoch) & Fashion-MNIST (accuracy vs.\ epoch) \\
    \hline
    $10^{-2}$ 
    & \includegraphics[valign=m,width=0.28\textwidth]{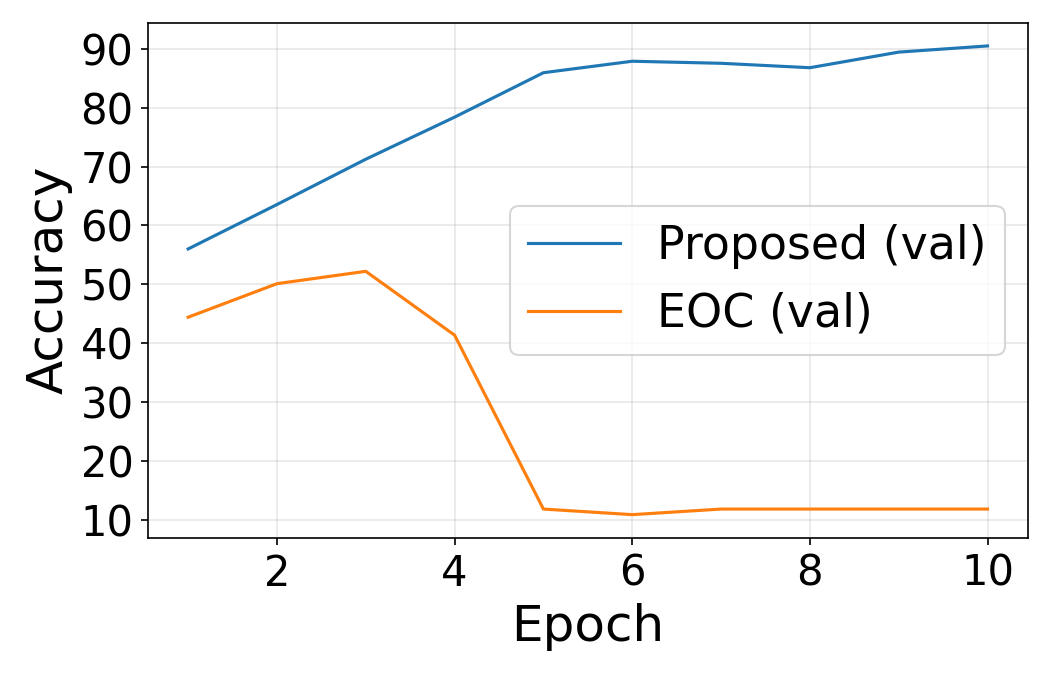}
    & \includegraphics[valign=m,width=0.28\textwidth]{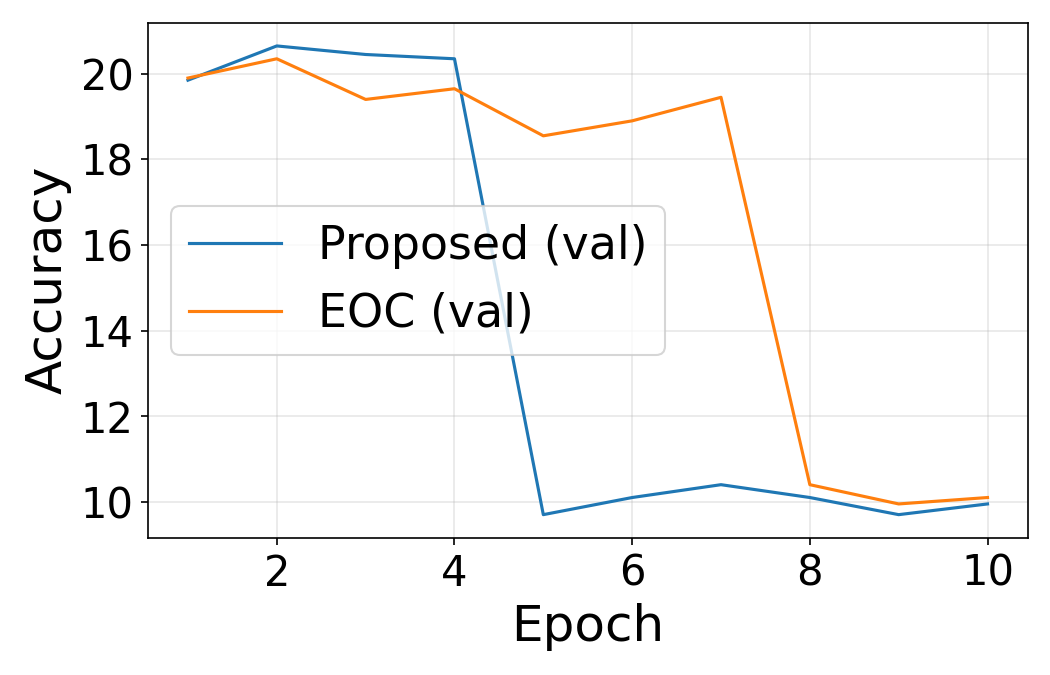}\\
    \hline
    $10^{-1}$
    & \includegraphics[valign=m,width=0.28\textwidth]{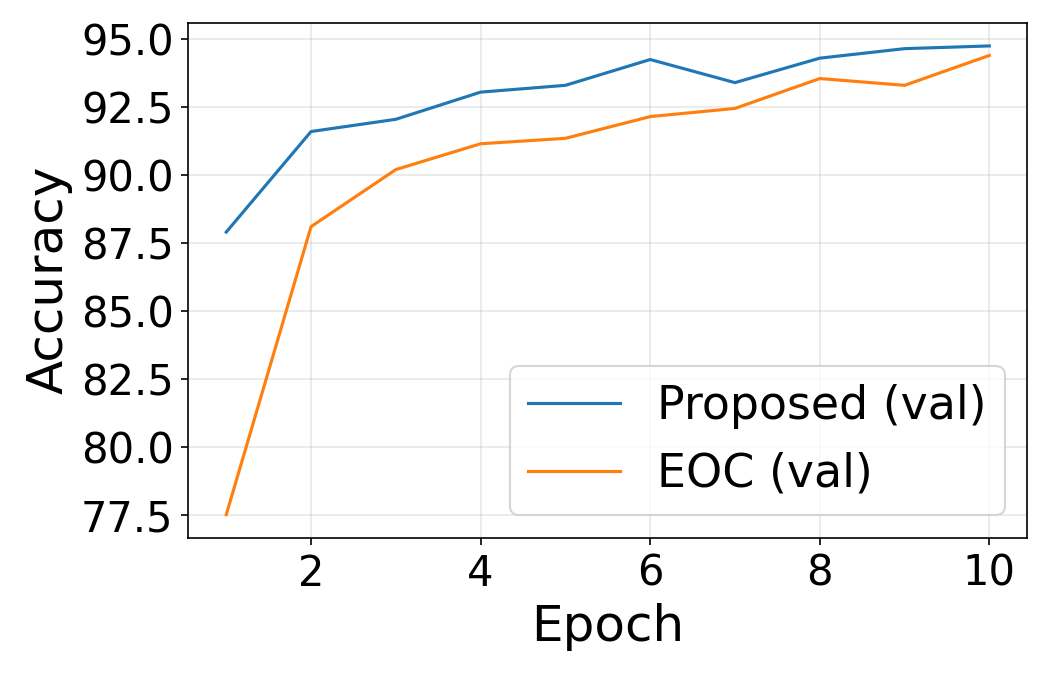}
    & \includegraphics[valign=m,width=0.28\textwidth]{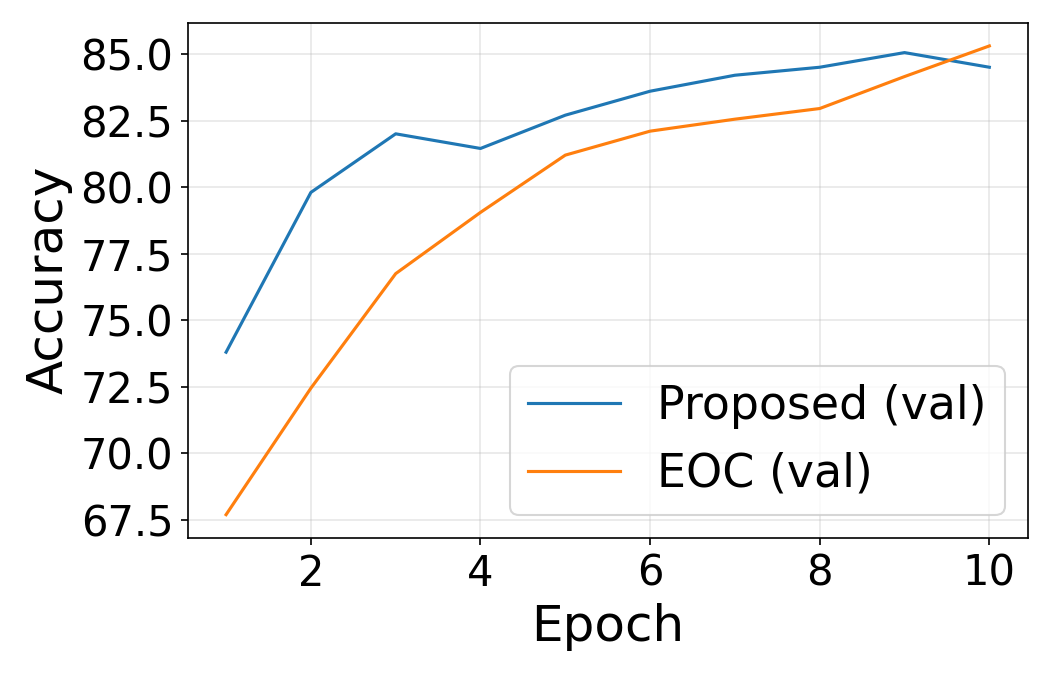}\\
    \hline
    1
    & \includegraphics[valign=m,width=0.28\textwidth]{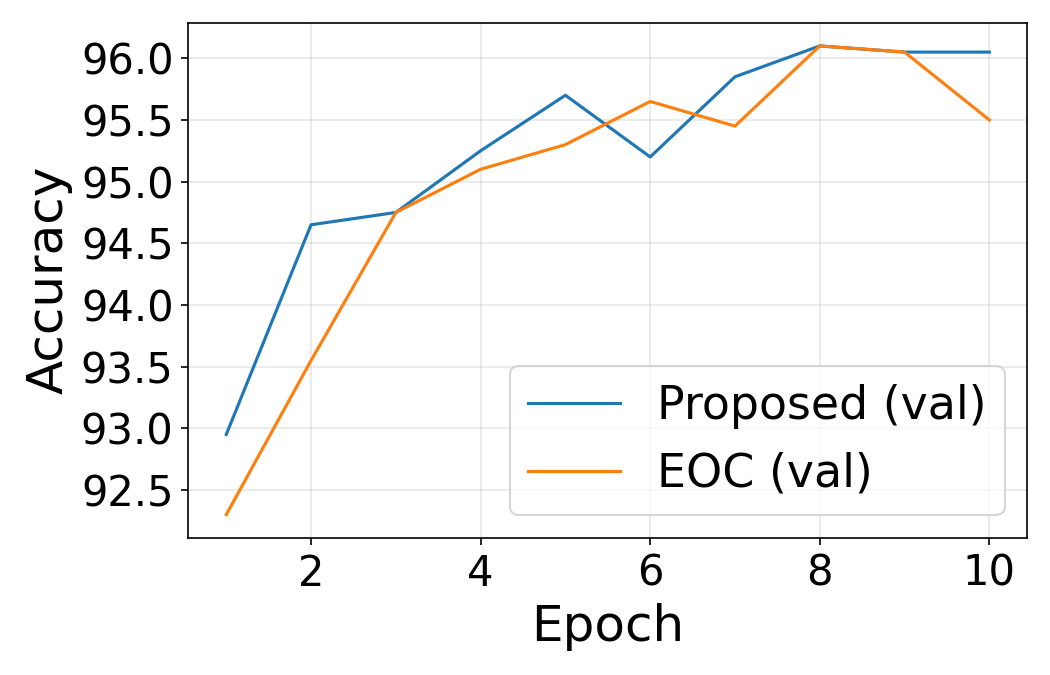}
    & \includegraphics[valign=m,width=0.28\textwidth]{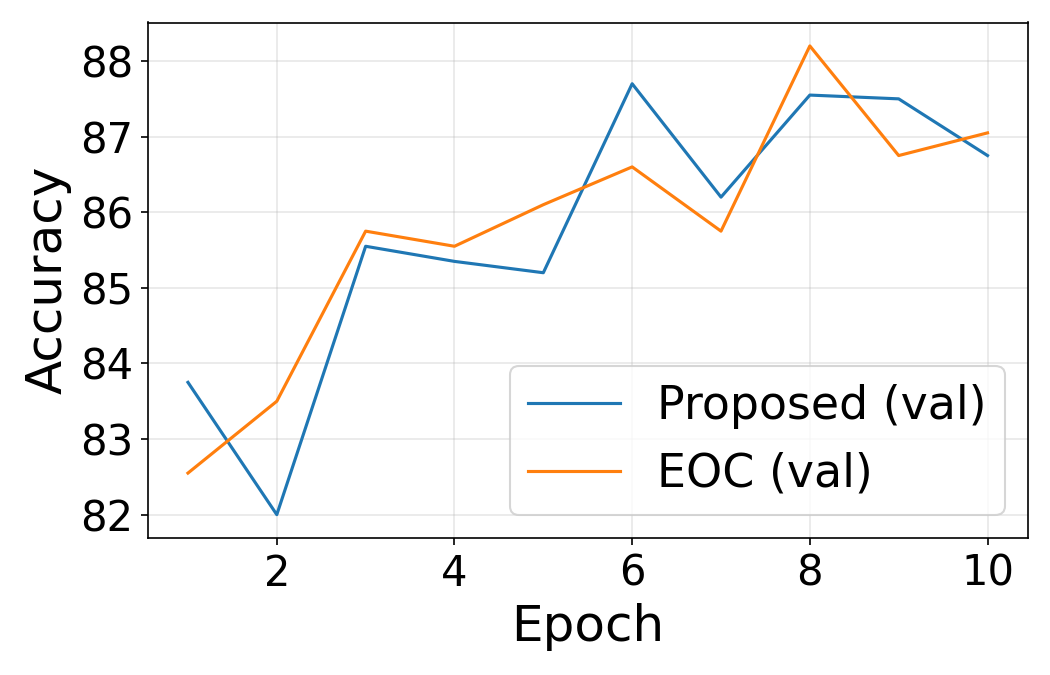}\\
    \hline
    $10^{1}$
    & \includegraphics[valign=m,width=0.28\textwidth]{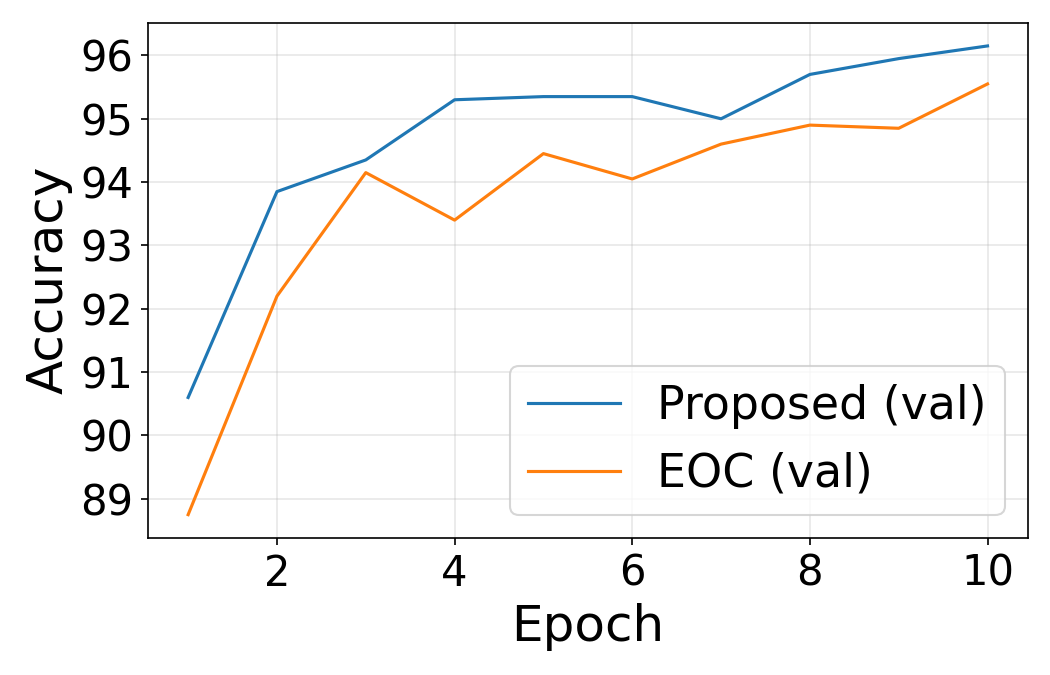}
    & \includegraphics[valign=m,width=0.28\textwidth]{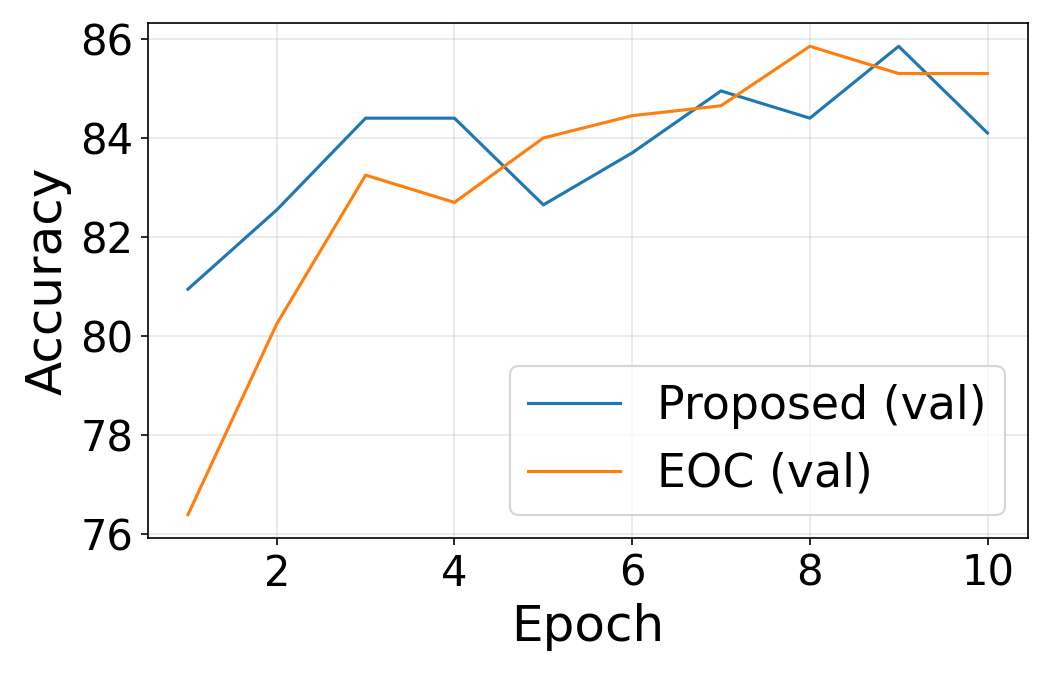}\\
    \hline
    $10^{2}$
    & \includegraphics[valign=m,width=0.28\textwidth]{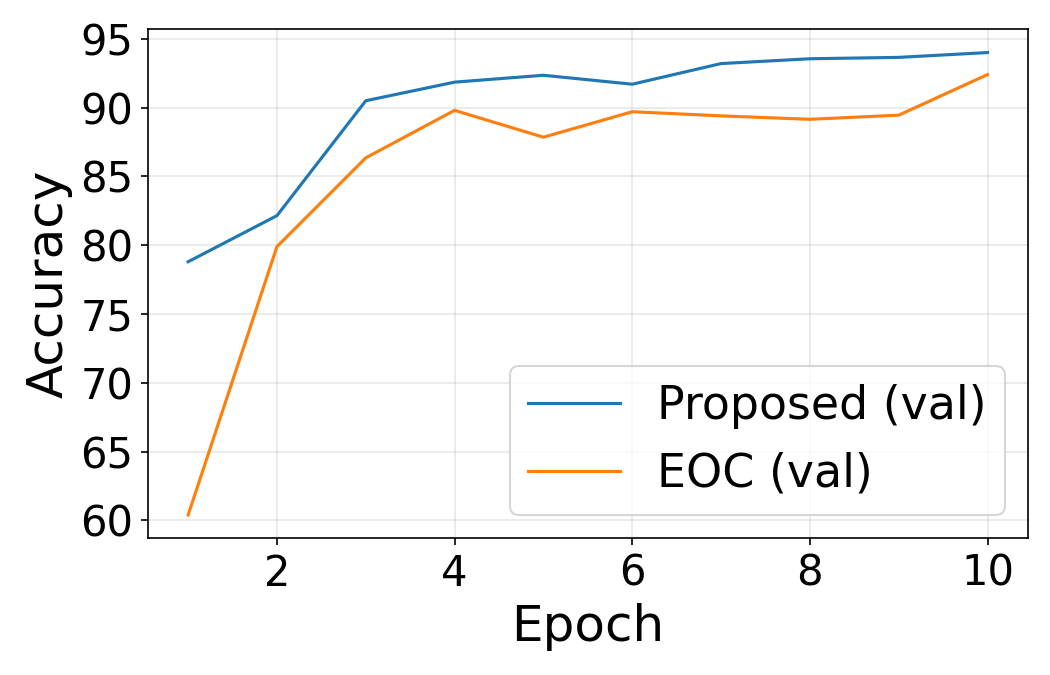}
    & \includegraphics[valign=m,width=0.28\textwidth]{figure/fMNIST_a1e+01_LR1e-05_W128_L50.png}\\
    \hline
    $10^{3}$
    & \includegraphics[valign=m,width=0.28\textwidth]{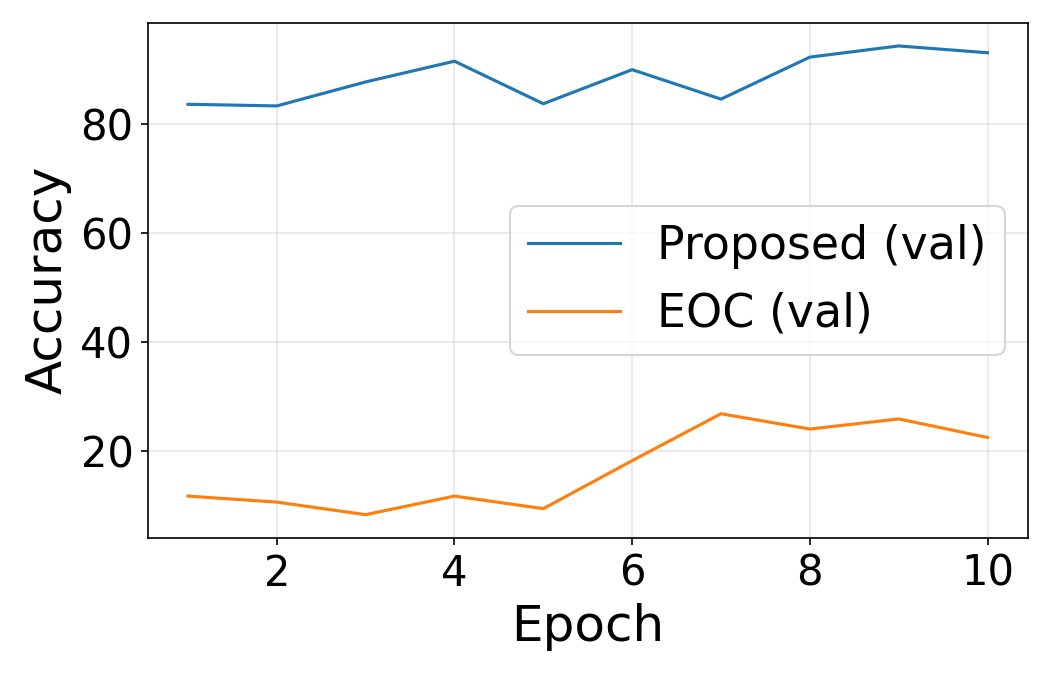}
    & \includegraphics[valign=m,width=0.28\textwidth]{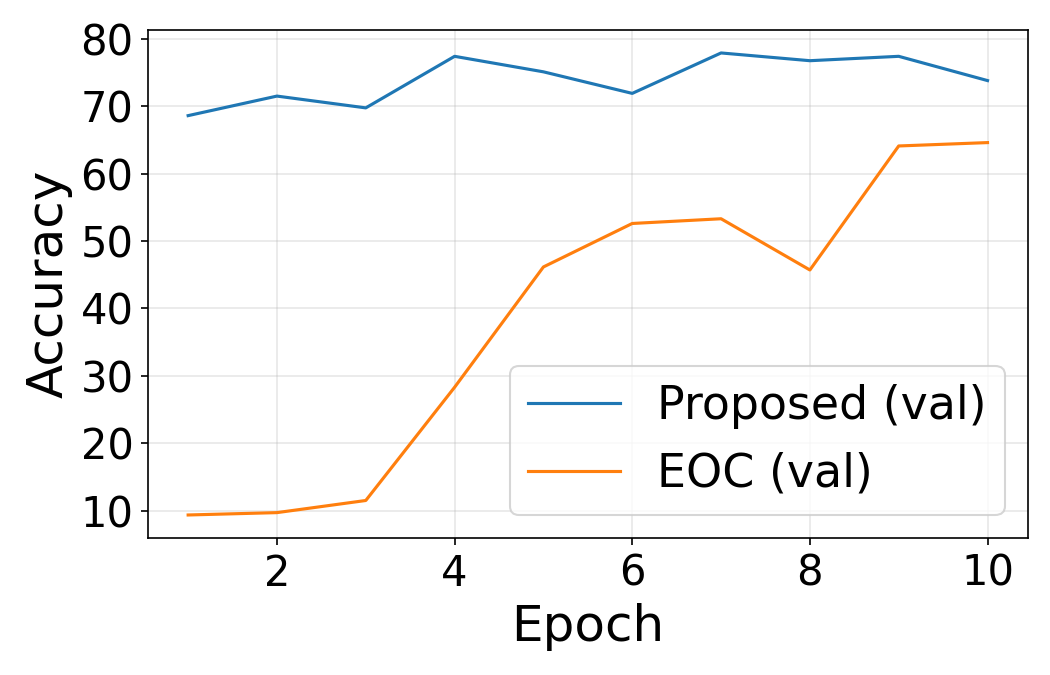}\\
    \hline
  \end{tabular}
\label{scale_table1}
\end{table}

\begin{table}[t]
  \centering
  \caption{Validation accuracy on MNIST~(left) and Fashion MNIST~(right) for a
  50 layer, width 128 fully connected neural network with activation $a\tanh(x)$.
  Each row corresponds to a different activation scale $a$, and for every $a$
  the learning rate is set to $\eta = 10^{-4}/a$ for both initializations.} \label{tab:a_tanh_accuracy2}

  \small
  \setlength{\tabcolsep}{6pt}
  \renewcommand{\arraystretch}{1.2}

  % 첫 번째 열만 폭 고정 + 완전 중앙
  \begin{tabular}{|C{1.5cm}|c|c|}
    \hline
    $a$ & MNIST (accuracy vs.\ epoch) & Fashion-MNIST (accuracy vs.\ epoch) \\
    \hline
    $10^{4}$
    & \includegraphics[valign=m,width=0.28\textwidth]{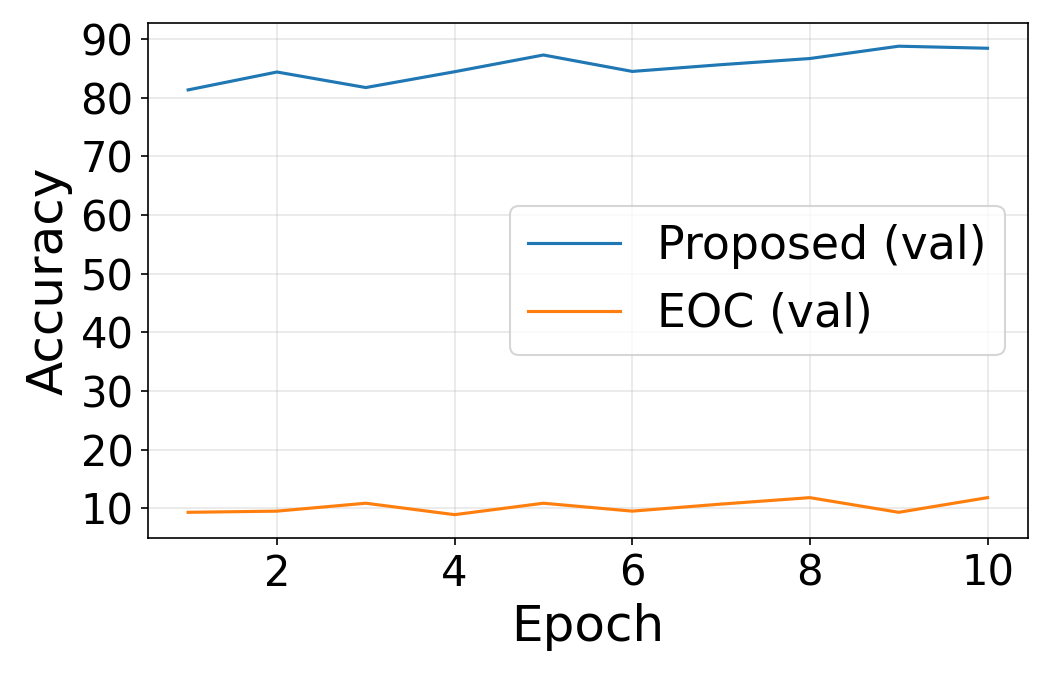}
    & \includegraphics[valign=m,width=0.28\textwidth]{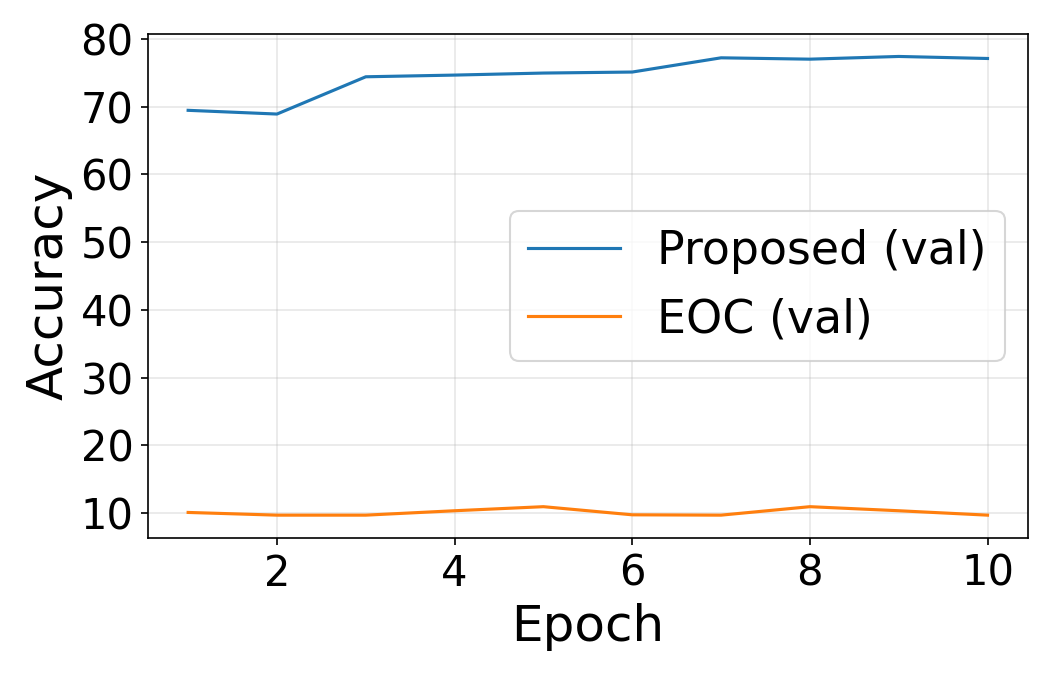}\\
    \hline
    $10^{5}$ 
    & \includegraphics[valign=m,width=0.28\textwidth]{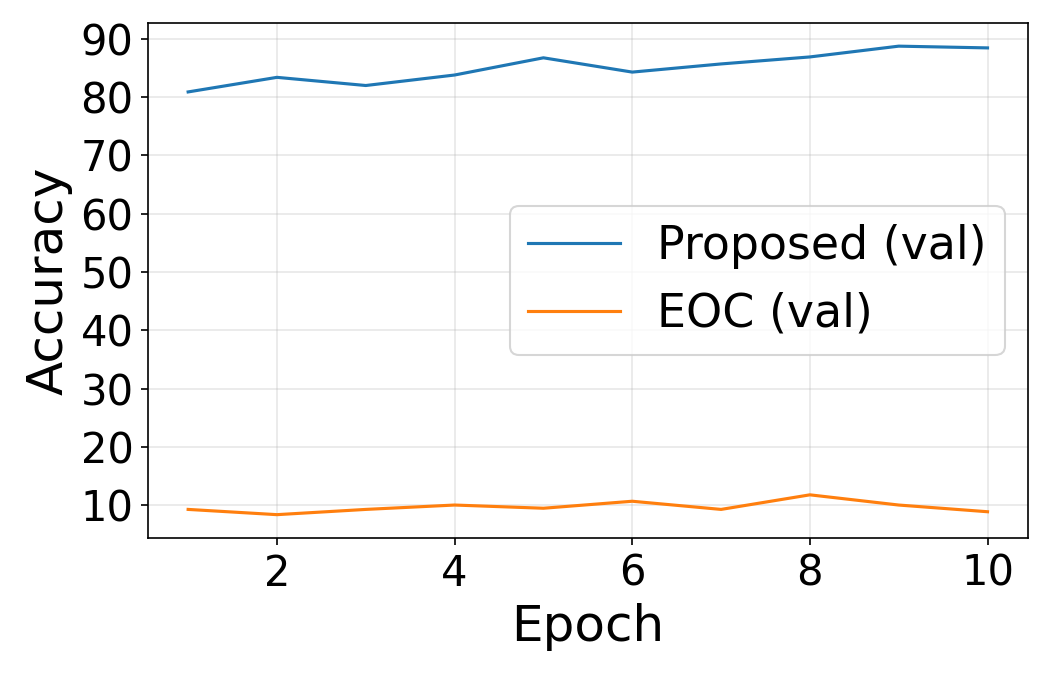}
    & \includegraphics[valign=m,width=0.28\textwidth]{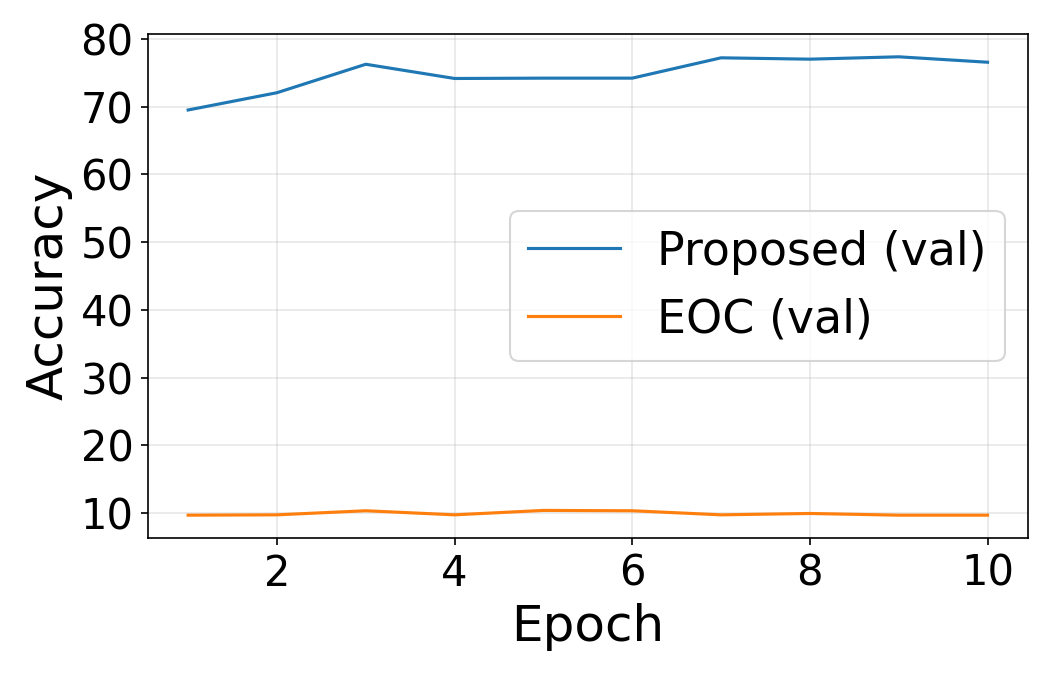}\\
    \hline
    $10^{6}$
    & \includegraphics[valign=m,width=0.28\textwidth]{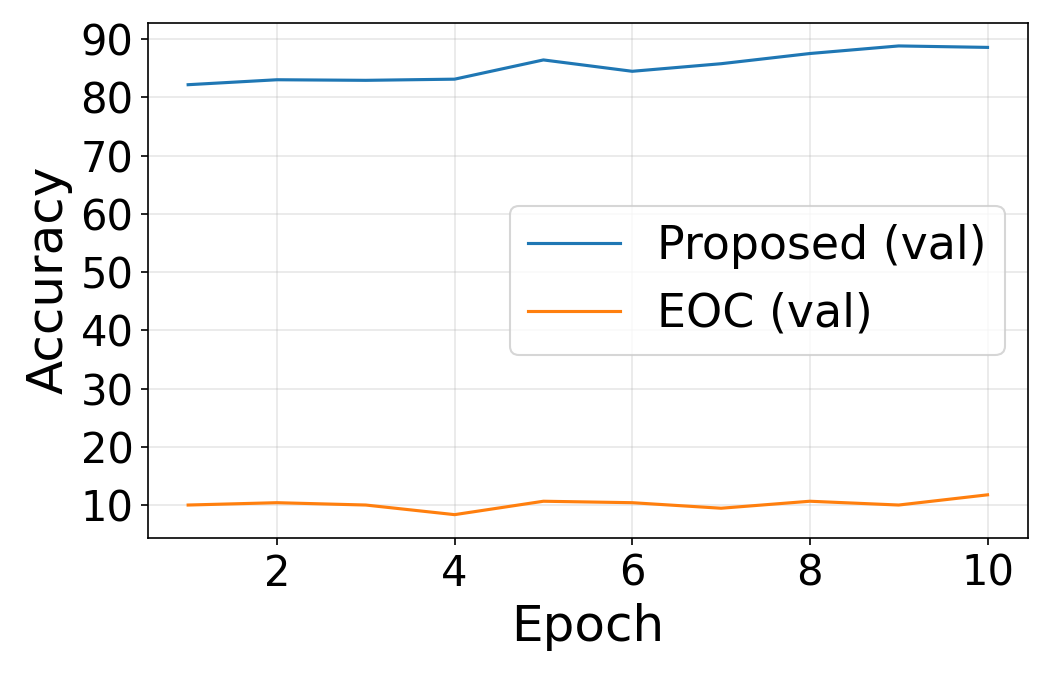}
    & \includegraphics[valign=m,width=0.28\textwidth]{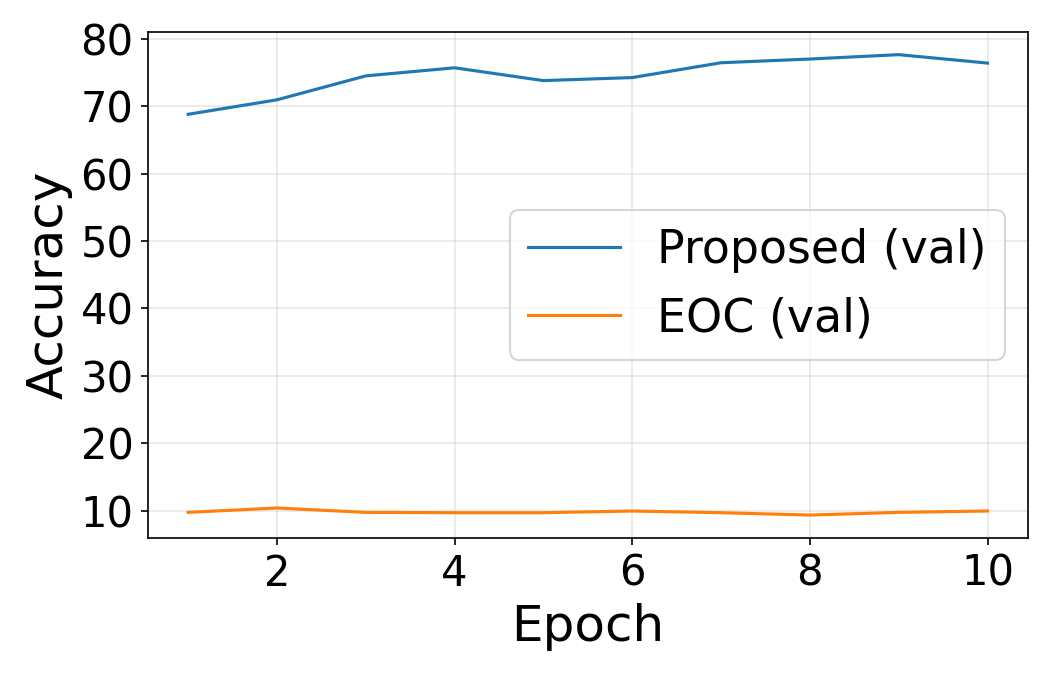}\\
    \hline
    $10^{7}$
    & \includegraphics[valign=m,width=0.28\textwidth]{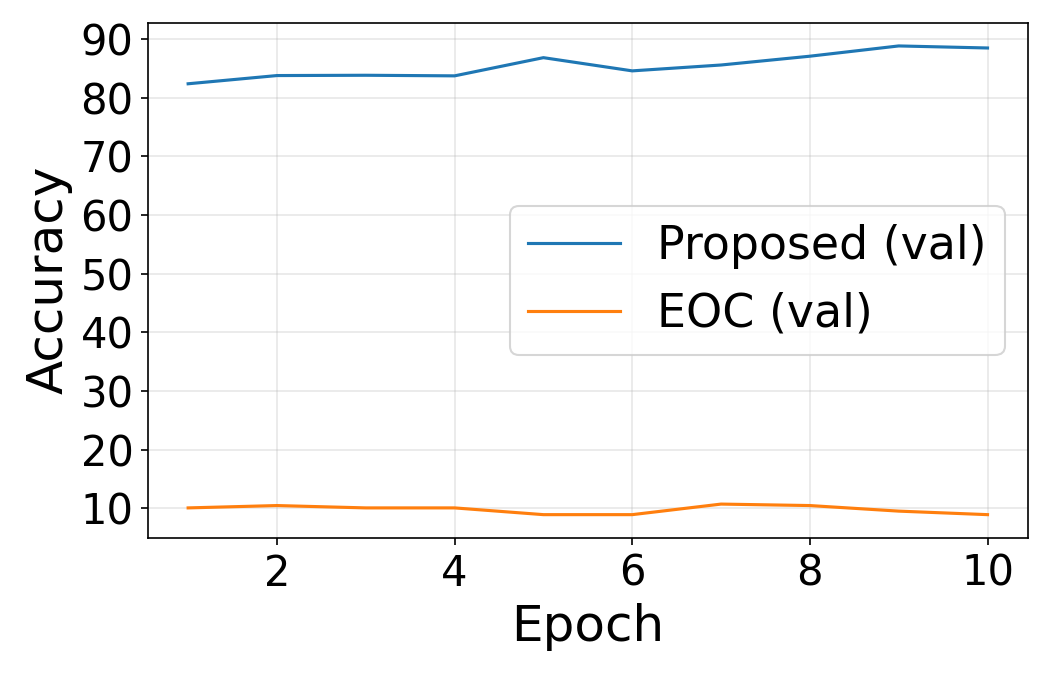}
    & \includegraphics[valign=m,width=0.28\textwidth]{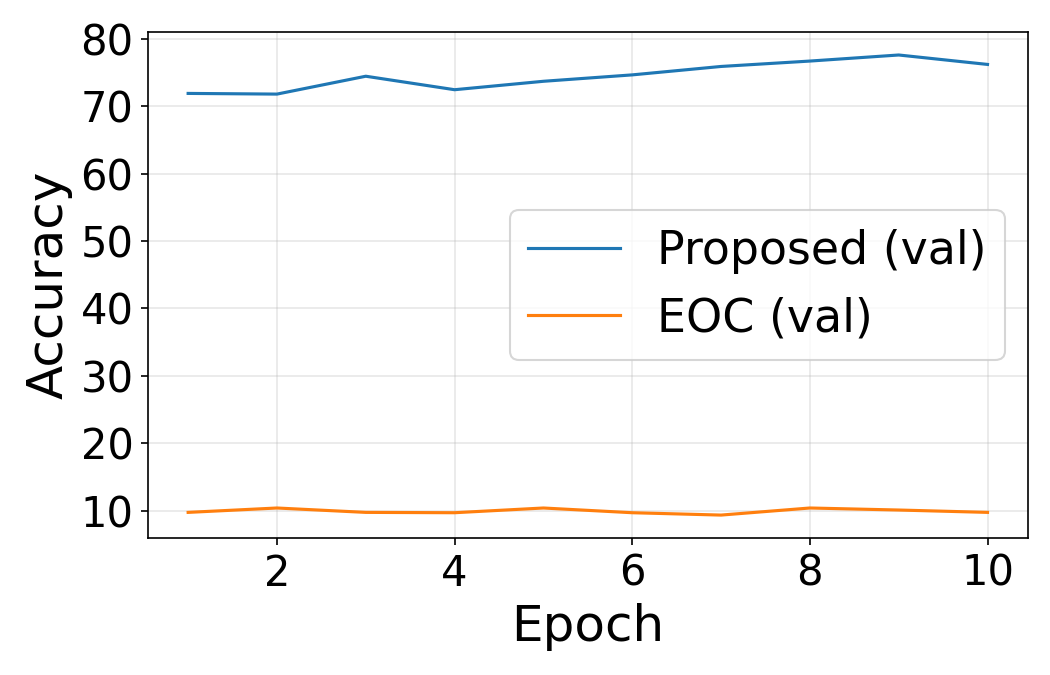}\\
    \hline
    $10^{8}$
    & \includegraphics[valign=m,width=0.28\textwidth]{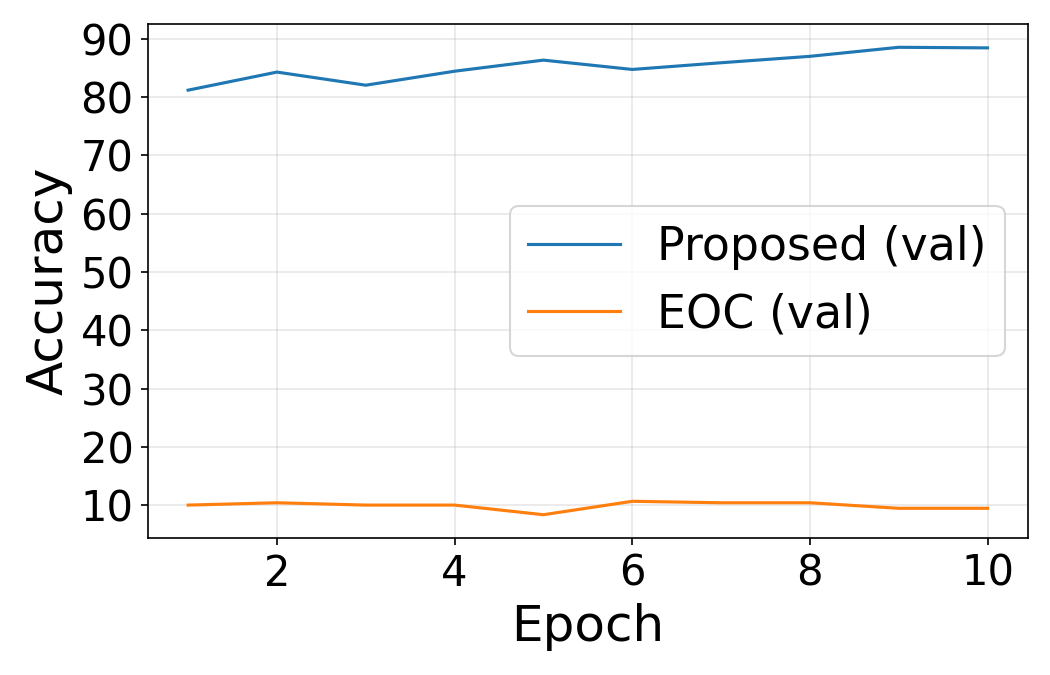}
    & \includegraphics[valign=m,width=0.28\textwidth]{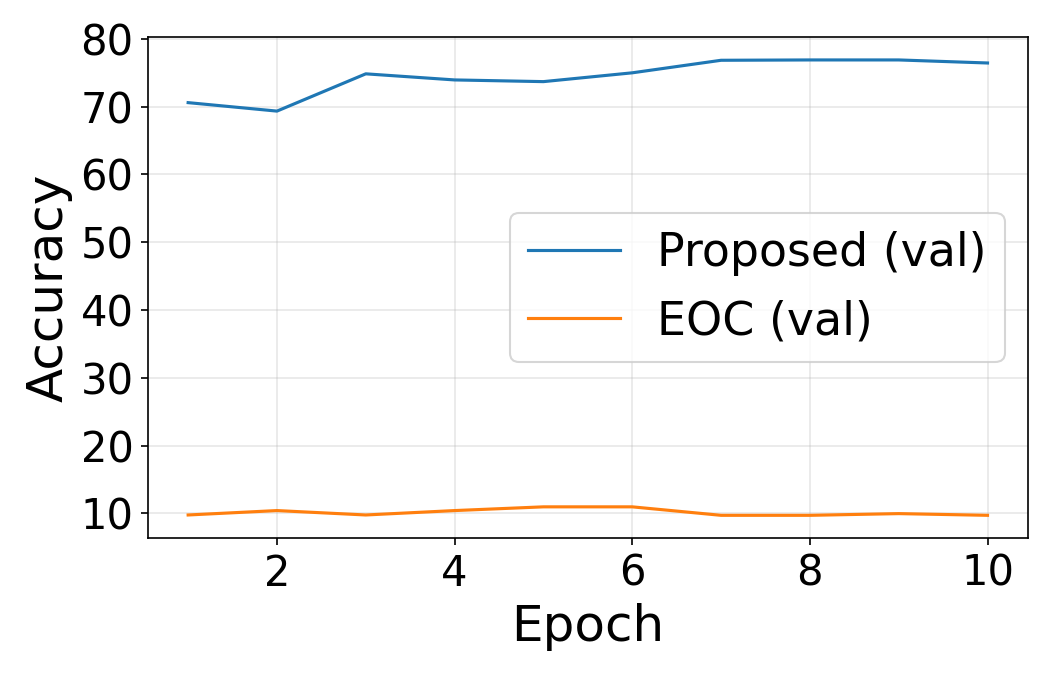}\\
    \hline
    $10^{9}$
    & \includegraphics[valign=m,width=0.28\textwidth]{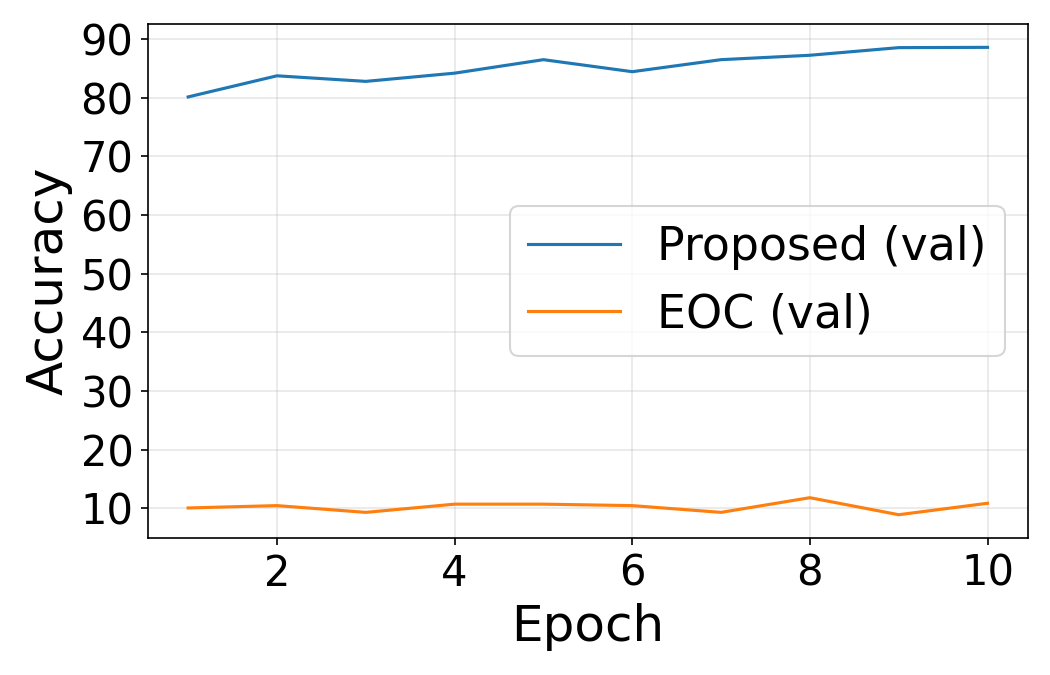}
    & \includegraphics[valign=m,width=0.28\textwidth]{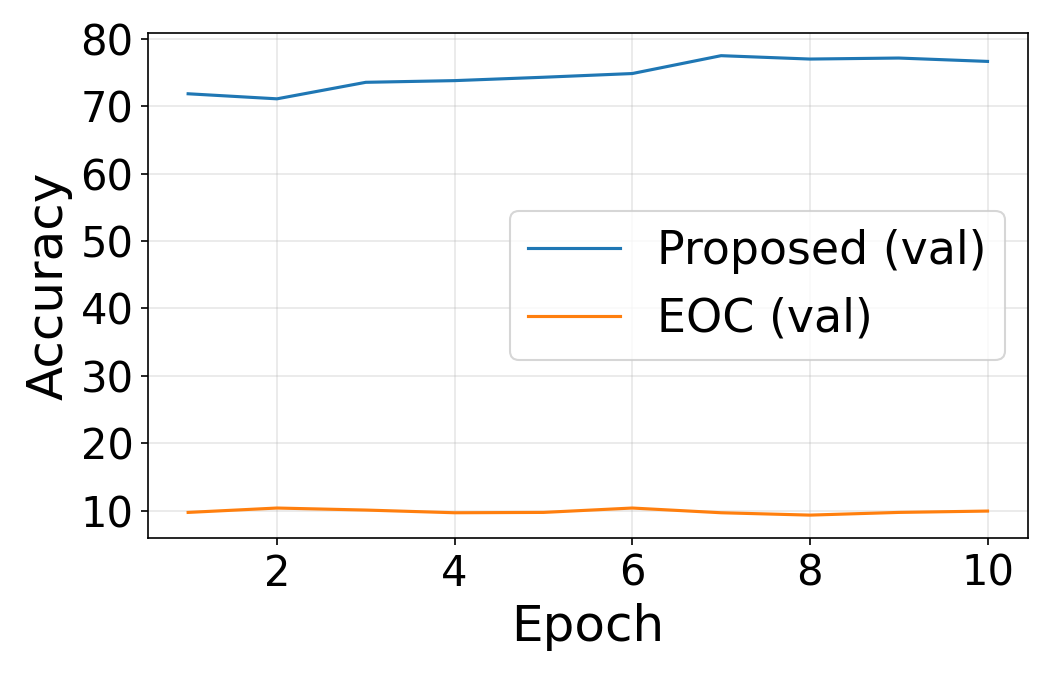}\\

    \hline
  \end{tabular}
  \label{scale_table2}

\end{table}

\begin{table}[t]
  \centering
  \caption{Validation accuracy on MNIST~(left) and Fashion MNIST~(right) for a
  50 layer, width 128 fully connected neural network with activation $a\arctan(x)$.
  Each row corresponds to a different activation scale $a$, and for every $a$
  the learning rate is set to $\eta = 10^{-4}/a$ for both initializations.}  \label{tab:a_atanh_accuracy1}
  \small
  \setlength{\tabcolsep}{6pt}
  \renewcommand{\arraystretch}{1.2}

  % 첫 번째 열만 폭 고정 + 완전 중앙
  \begin{tabular}{|C{1.5cm}|c|c|}
    \hline
    $a$ & MNIST (accuracy vs.\ epoch) & Fashion-MNIST (accuracy vs.\ epoch) \\
    \hline
    $10^{-2}$
    & \includegraphics[valign=m,width=0.28\textwidth]{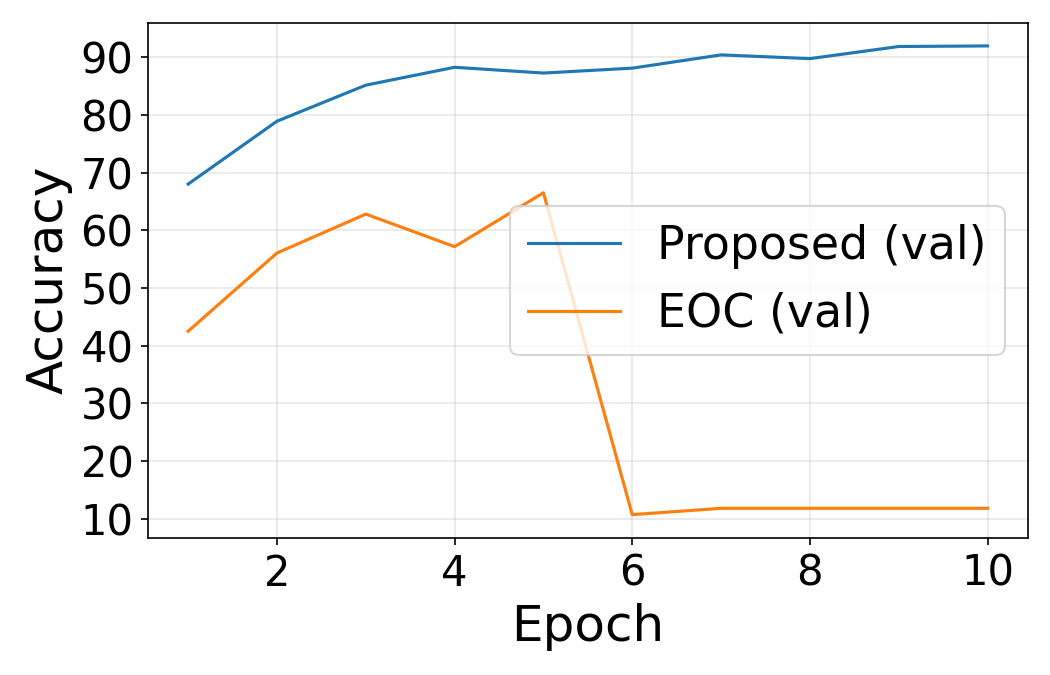}
    & \includegraphics[valign=m,width=0.28\textwidth]{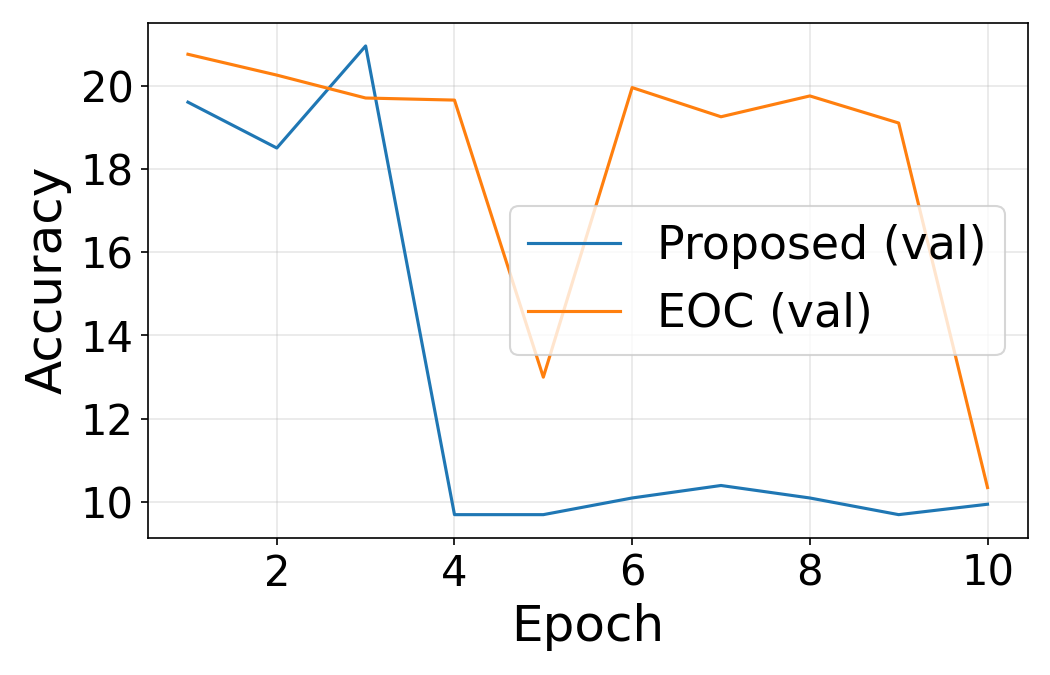}\\
    \hline
    $10^{-1}$
    & \includegraphics[valign=m,width=0.28\textwidth]{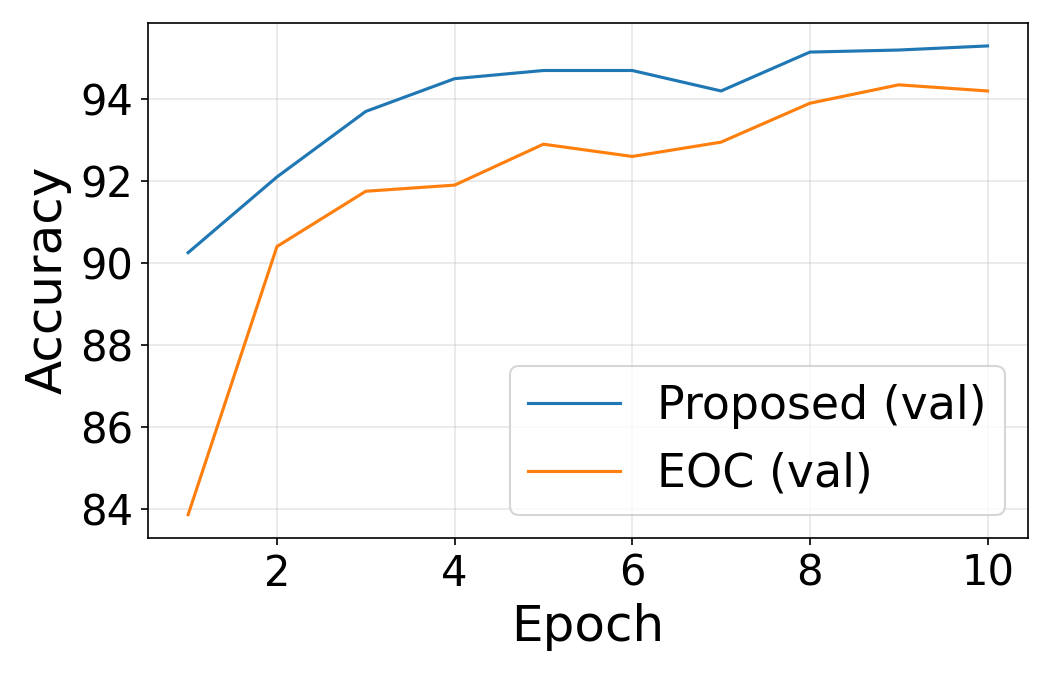}
    & \includegraphics[valign=m,width=0.28\textwidth]{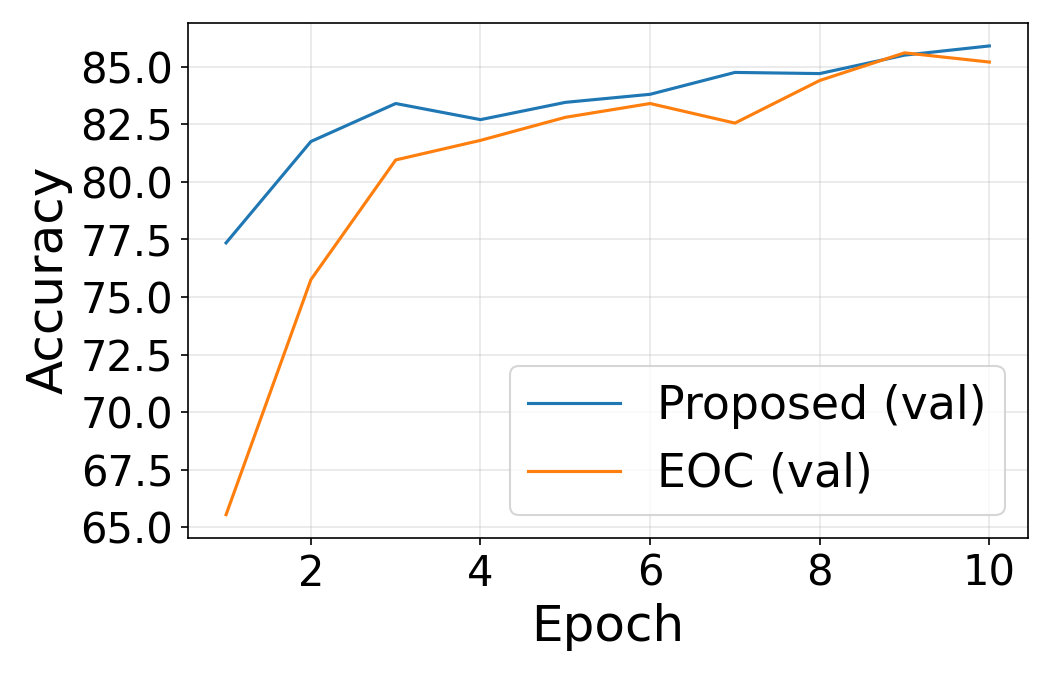}\\
    \hline
    1
    & \includegraphics[valign=m,width=0.28\textwidth]{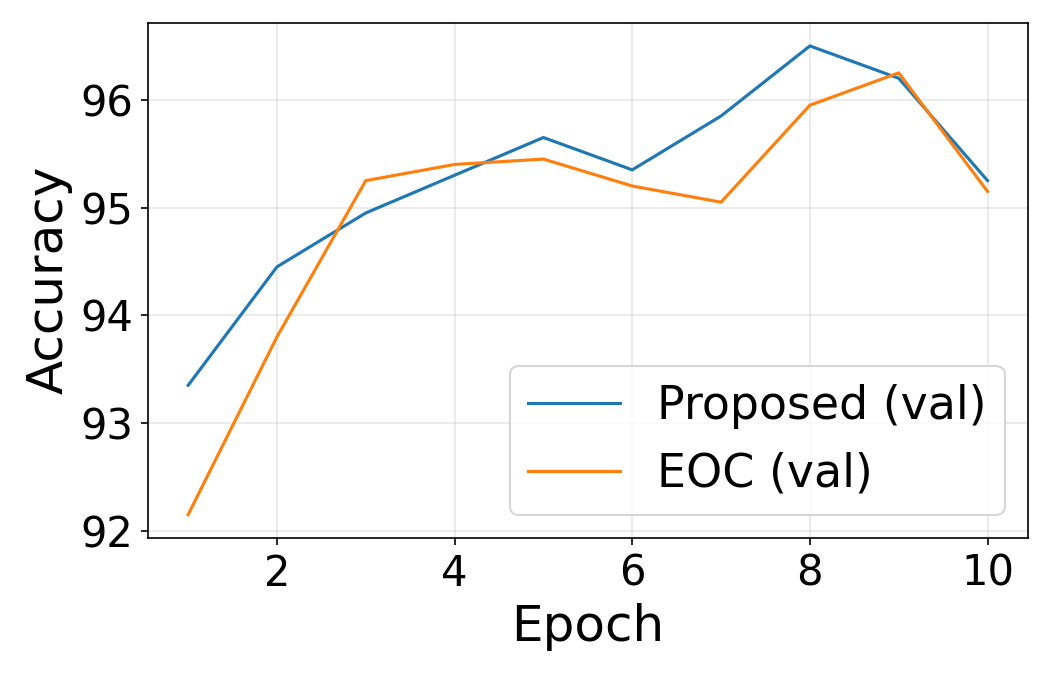}
    & \includegraphics[valign=m,width=0.28\textwidth]{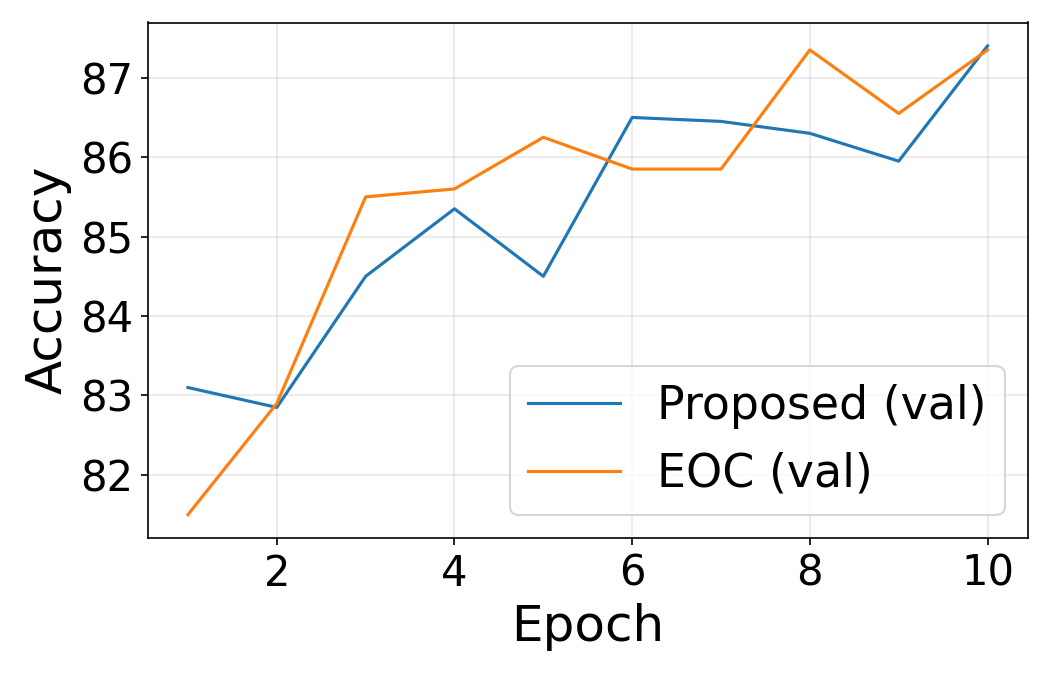}\\
    \hline
    $10^{1}$
    & \includegraphics[valign=m,width=0.28\textwidth]{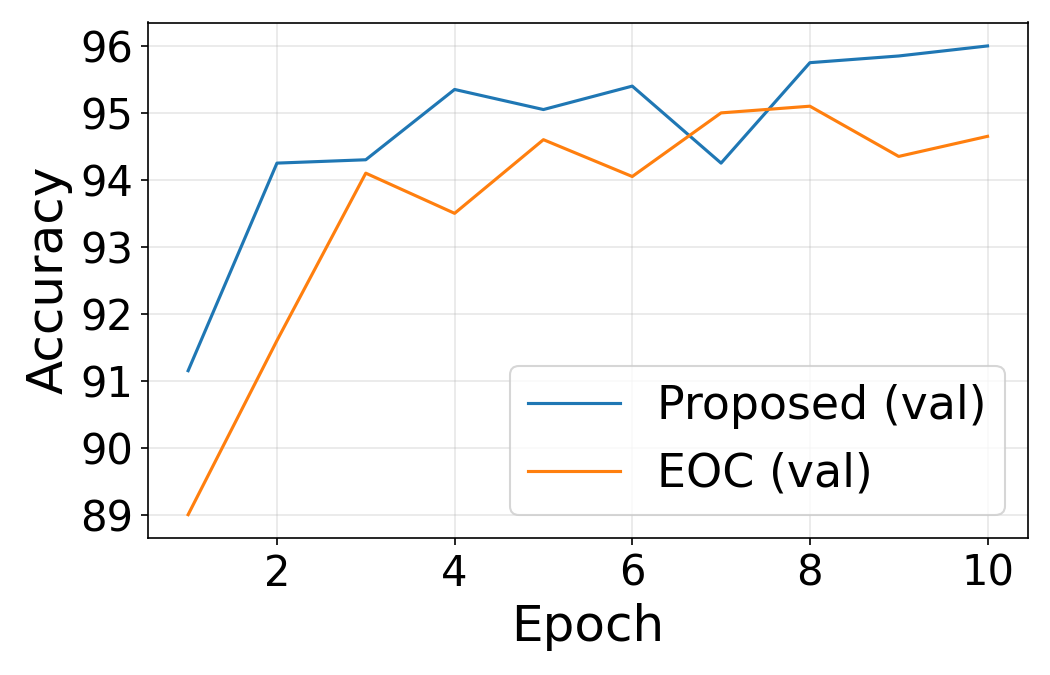}
    & \includegraphics[valign=m,width=0.28\textwidth]{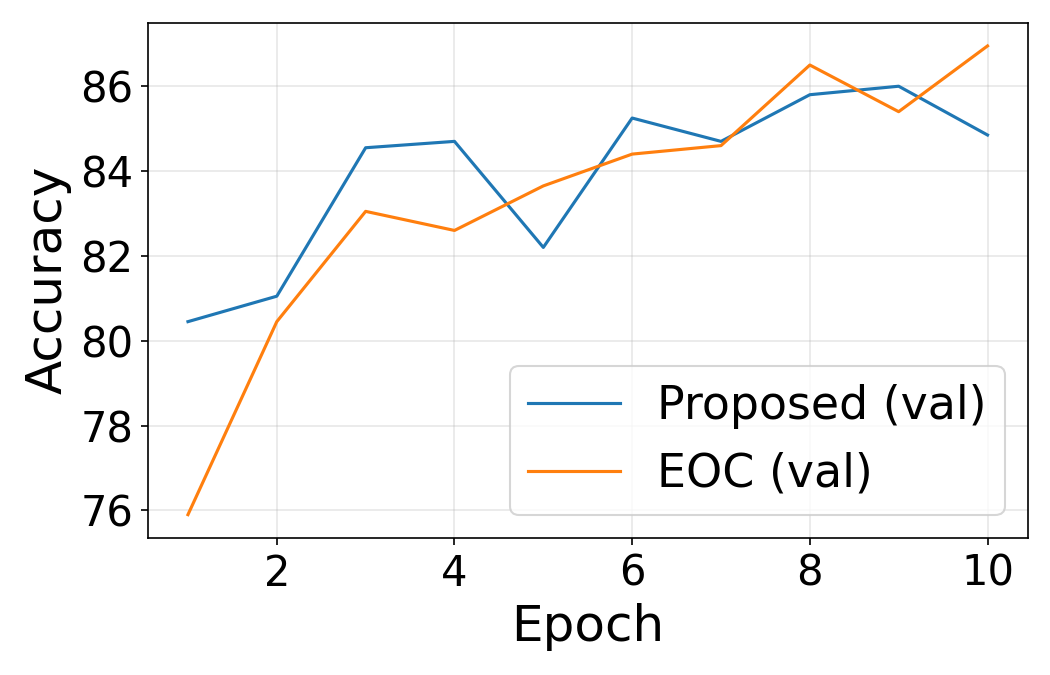}\\
    \hline
    $10^{2}$
    & \includegraphics[valign=m,width=0.28\textwidth]{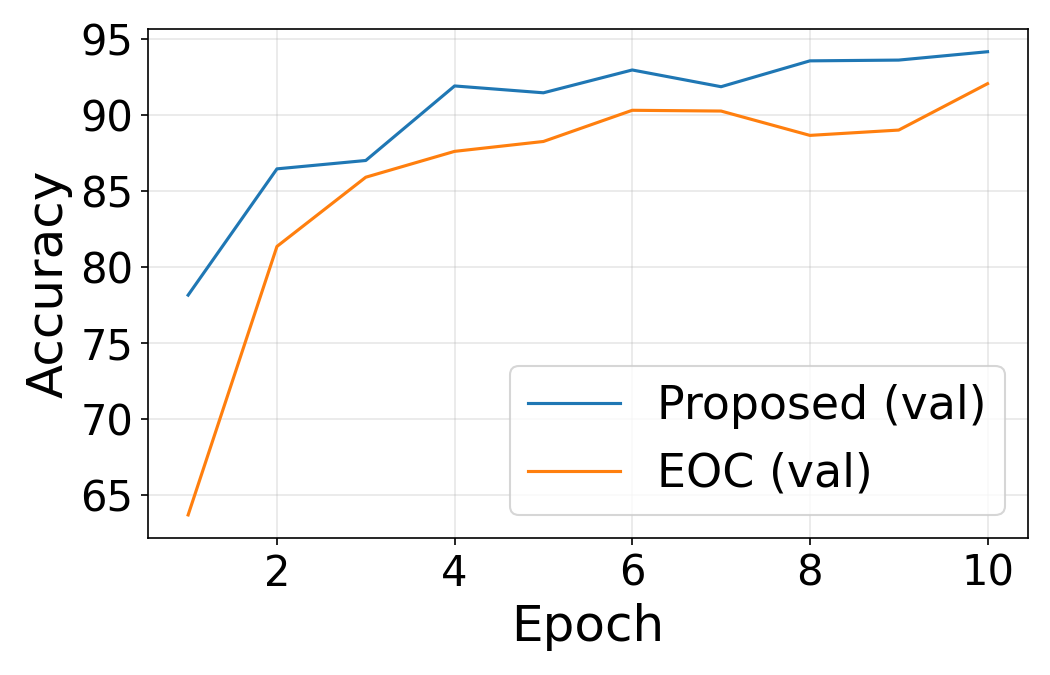}
    & \includegraphics[valign=m,width=0.28\textwidth]{figure/afMNIST_a1e+01_LR1e-05_W128_L50.png}\\
    \hline
    $10^{3}$
    & \includegraphics[valign=m,width=0.28\textwidth]{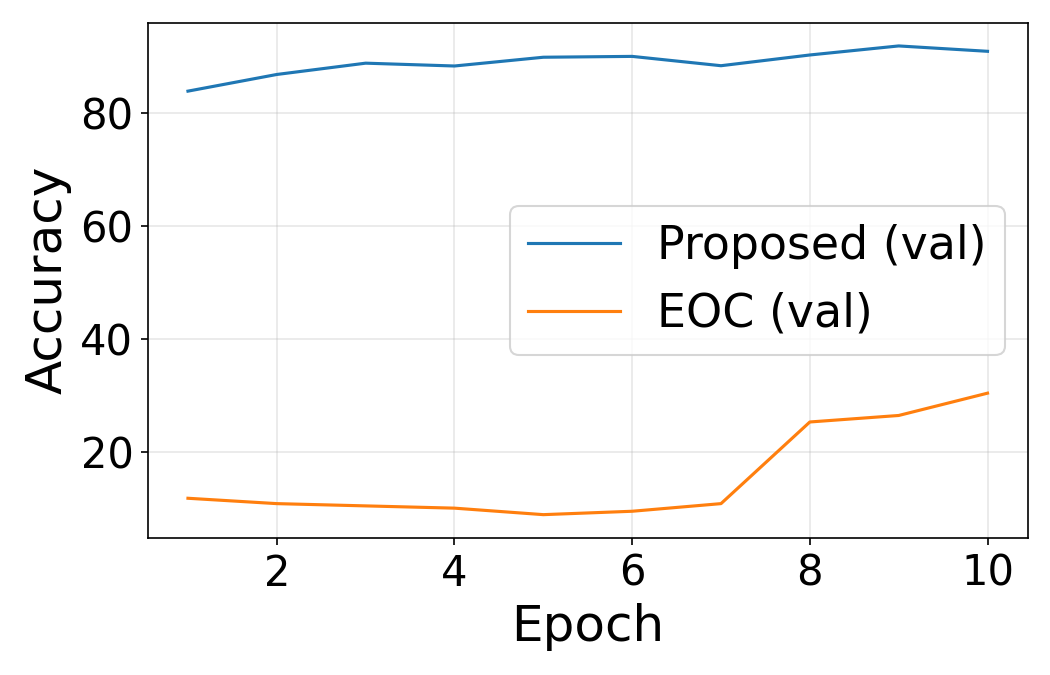}
    & \includegraphics[valign=m,width=0.28\textwidth]{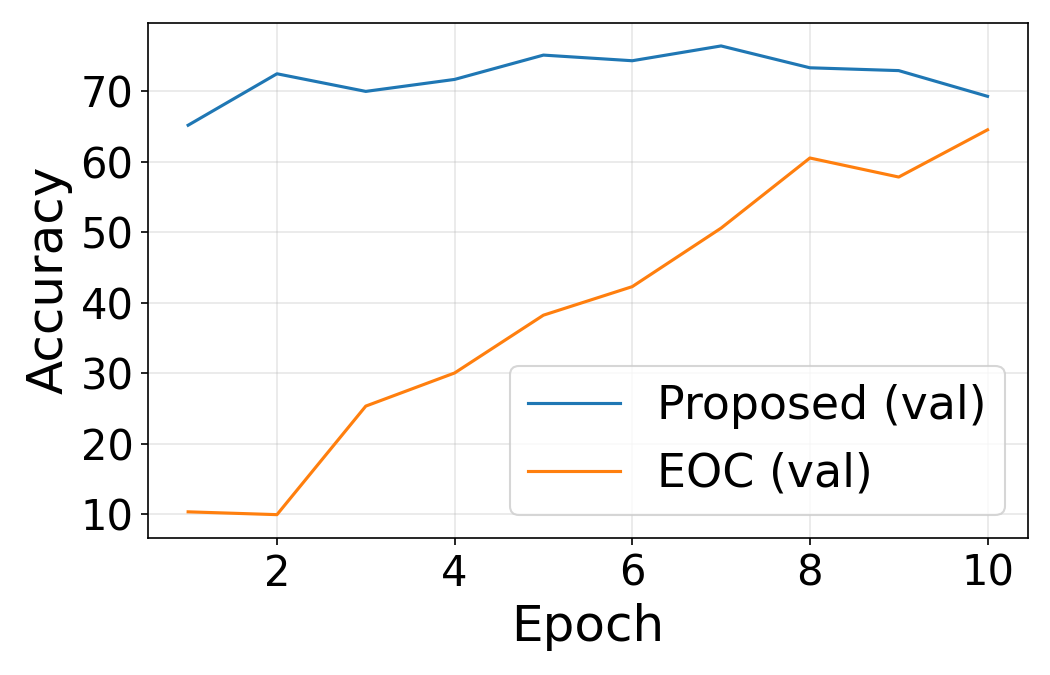}\\
    \hline
  \end{tabular}
  \label{scale_table3}

\end{table}

\begin{table}[t]
  \centering
  \caption{Validation accuracy on MNIST~(left) and Fashion MNIST~(right) for a
  50 layer, width 128 fully connected neural network with activation $a\arctan(x)$.
  Each row corresponds to a different activation scale $a$, and for every $a$
  the learning rate is set to $\eta = 10^{-4}/a$ for both initializations.}\label{tab:a_atanh_accuracy2}

  \small
  \setlength{\tabcolsep}{6pt}
  \renewcommand{\arraystretch}{1.2}

  % 첫 번째 열만 폭 고정 + 완전 중앙
  \begin{tabular}{|C{1.5cm}|c|c|}
    \hline
    $a$ & MNIST (accuracy vs.\ epoch) & Fashion-MNIST (accuracy vs.\ epoch) \\
    \hline
    $10^{4}$
    & \includegraphics[valign=m,width=0.28\textwidth]{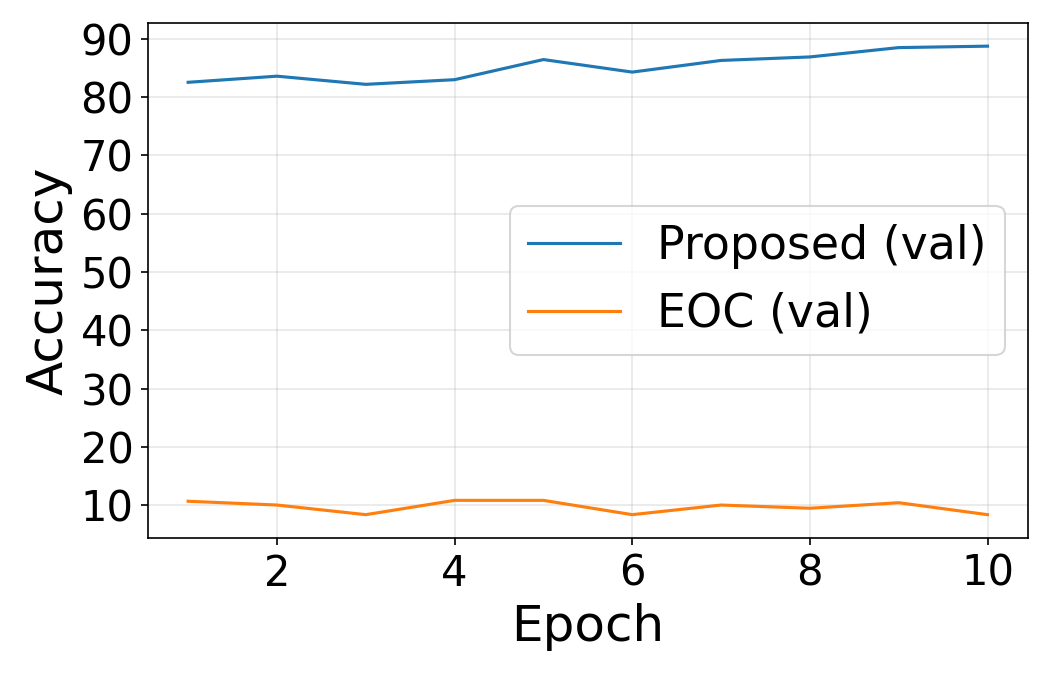}
    & \includegraphics[valign=m,width=0.28\textwidth]{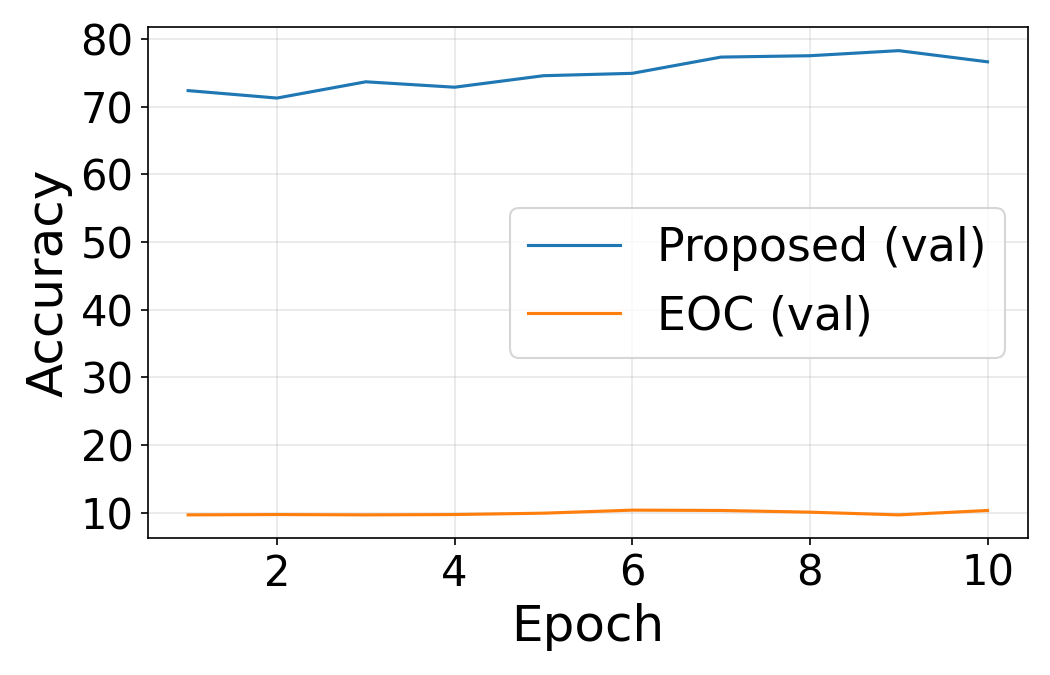}\\
    \hline
    $10^{5}$ 
    & \includegraphics[valign=m,width=0.28\textwidth]{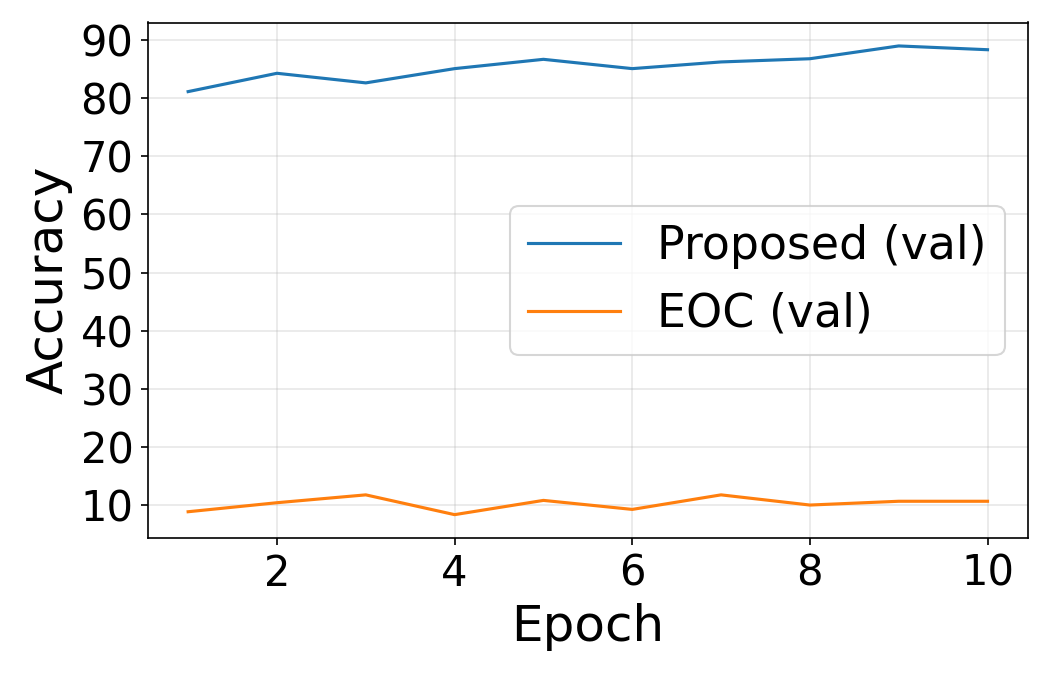}
    & \includegraphics[valign=m,width=0.28\textwidth]{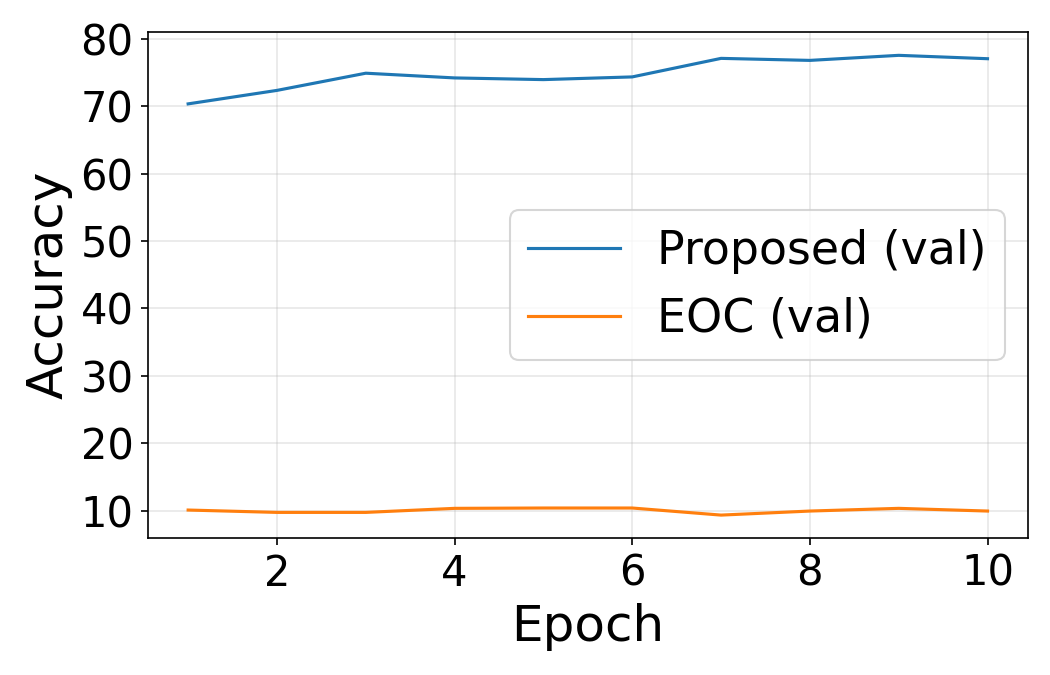}\\
    \hline
    $10^{6}$
    & \includegraphics[valign=m,width=0.28\textwidth]{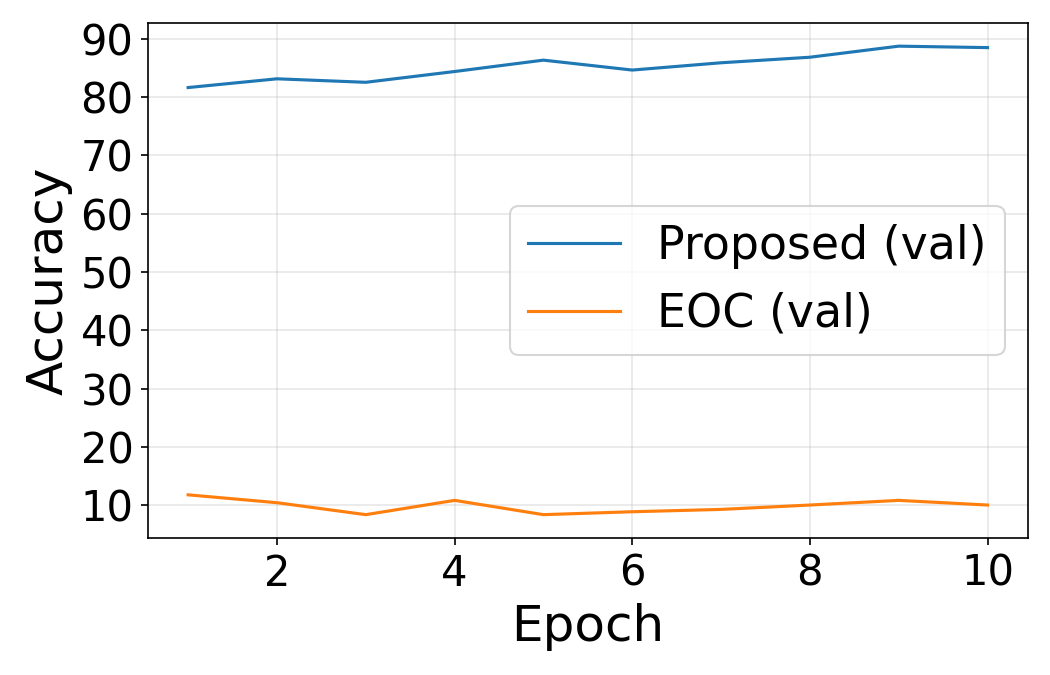}
    & \includegraphics[valign=m,width=0.28\textwidth]{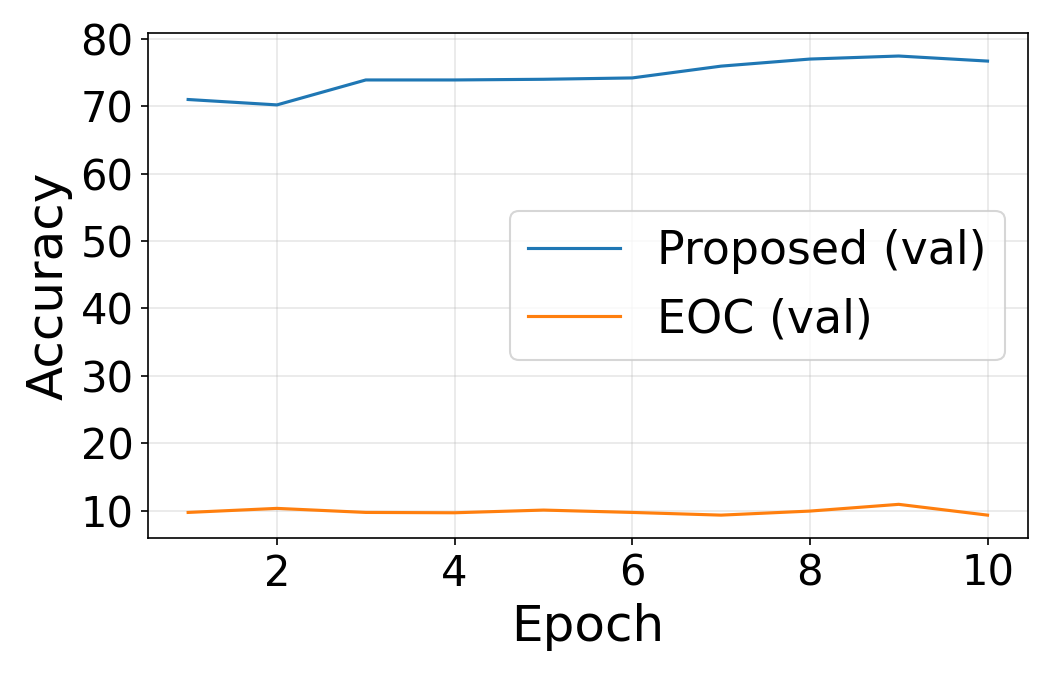}\\
    \hline
    $10^{7}$
    & \includegraphics[valign=m,width=0.28\textwidth]{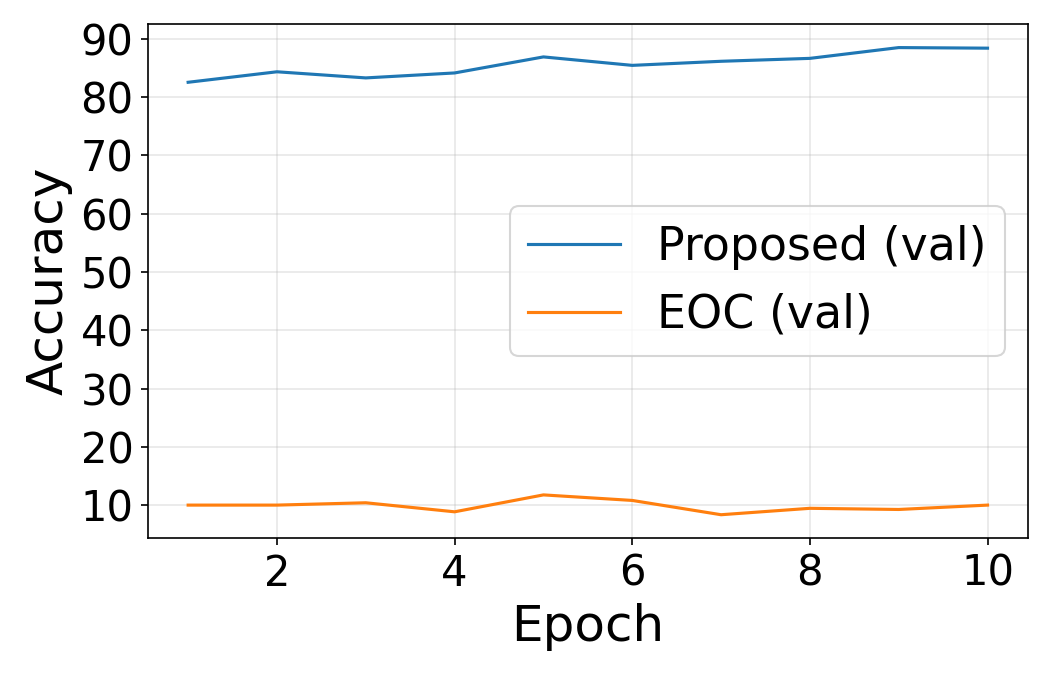}
    & \includegraphics[valign=m,width=0.28\textwidth]{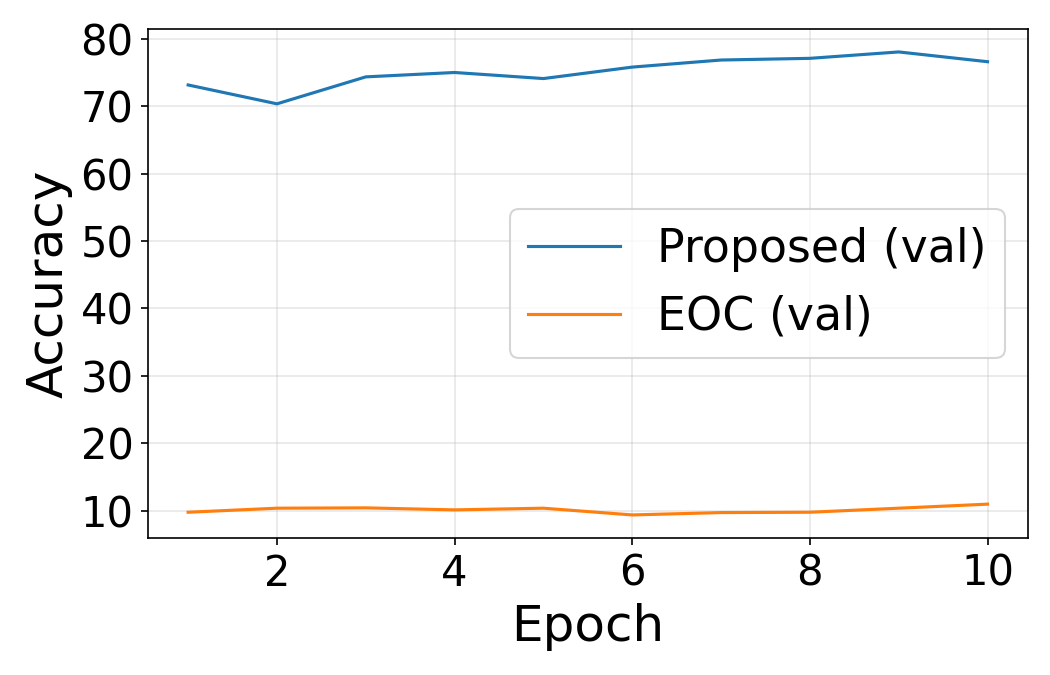}\\
    \hline
    $10^{8}$
    & \includegraphics[valign=m,width=0.28\textwidth]{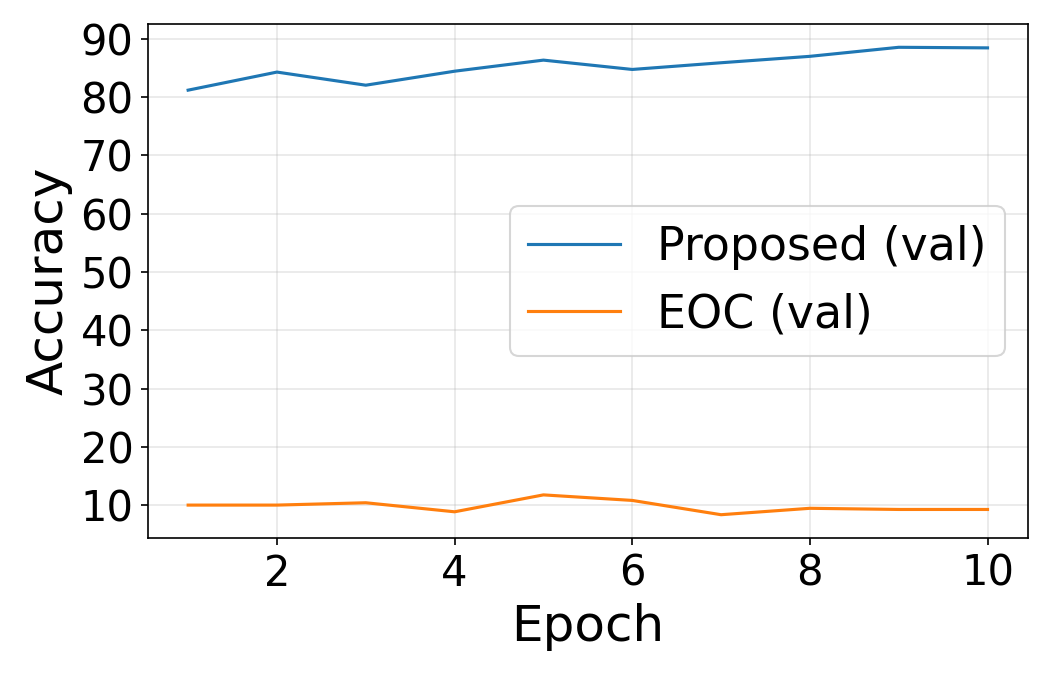}
    & \includegraphics[valign=m,width=0.28\textwidth]{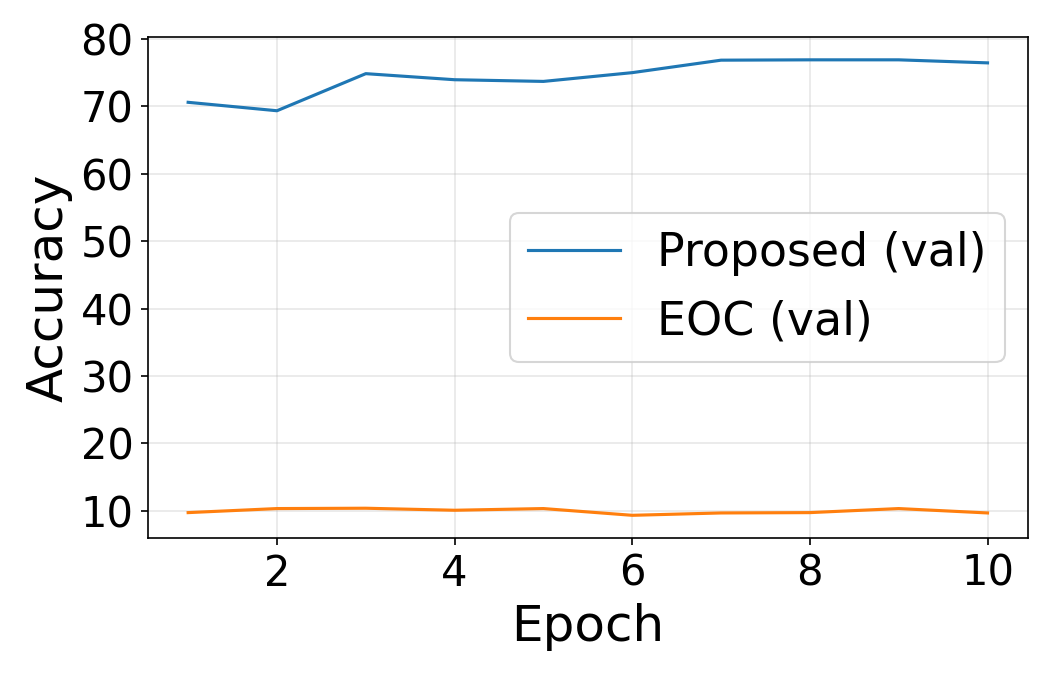}\\
    \hline
    $10^{9}$
    & \includegraphics[valign=m,width=0.28\textwidth]{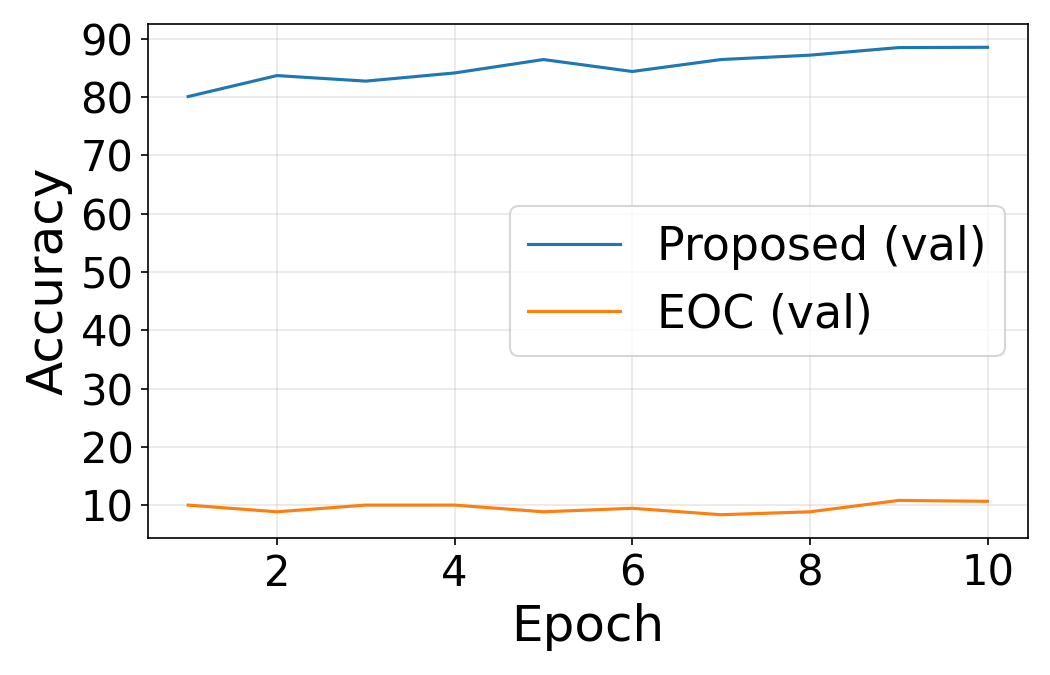}
    & \includegraphics[valign=m,width=0.28\textwidth]{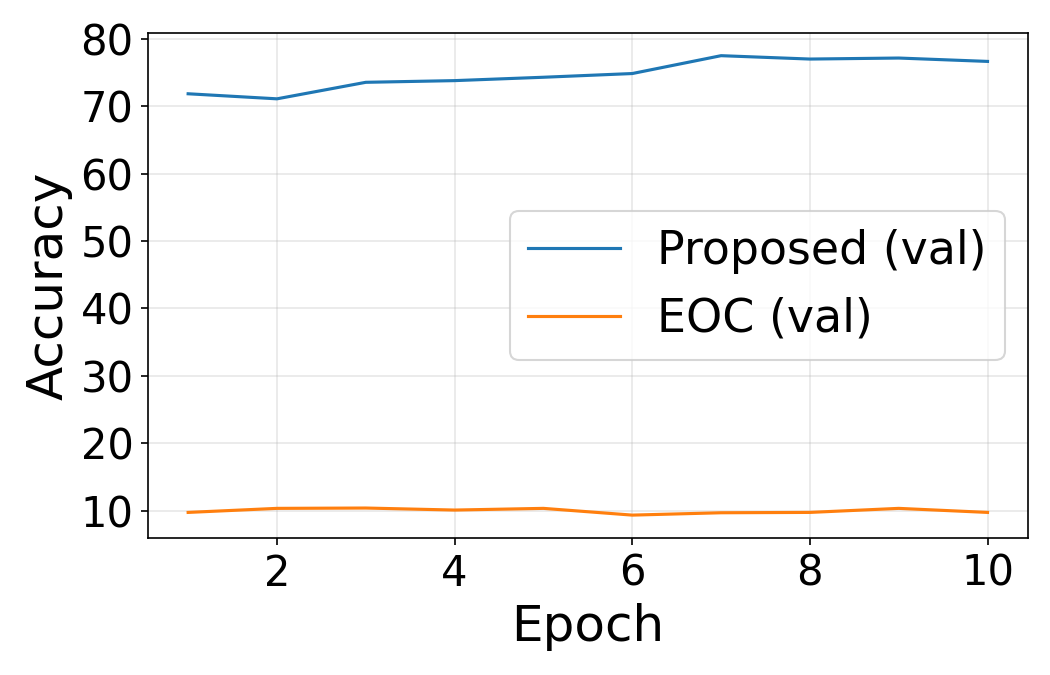}\\

    \hline
  \end{tabular}
\label{scale_table4}

\end{table}

\begin{table}[t]
  \centering
  \caption{Validation accuracy on MNIST~(left) and Fashion MNIST~(right) for a
  50 layer, width 128 fully connected neural network with activation $a\operatorname{softsign_2}(x)$.
  Each row corresponds to a different activation scale $a$, and for every $a$
  the learning rate is set to $\eta = 10^{-4}/a$ for both initializations.}\label{tab:a_soft_accuracy1}
  \small
  \setlength{\tabcolsep}{6pt}
  \renewcommand{\arraystretch}{1.2}

  % 첫 번째 열만 폭 고정 + 완전 중앙
  \begin{tabular}{|C{1.5cm}|c|c|}
    \hline
    $a$ & MNIST (accuracy vs.\ epoch) & Fashion-MNIST (accuracy vs.\ epoch) \\
    \hline
    $10^{-2}$
    & \includegraphics[valign=m,width=0.28\textwidth]{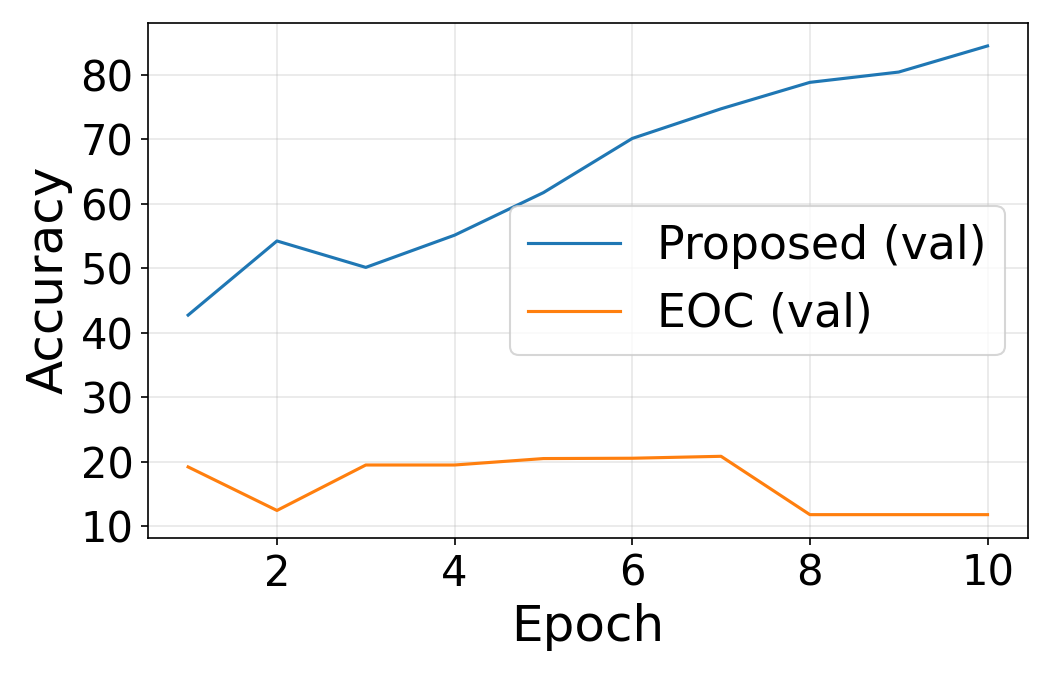}
    & \includegraphics[valign=m,width=0.28\textwidth]{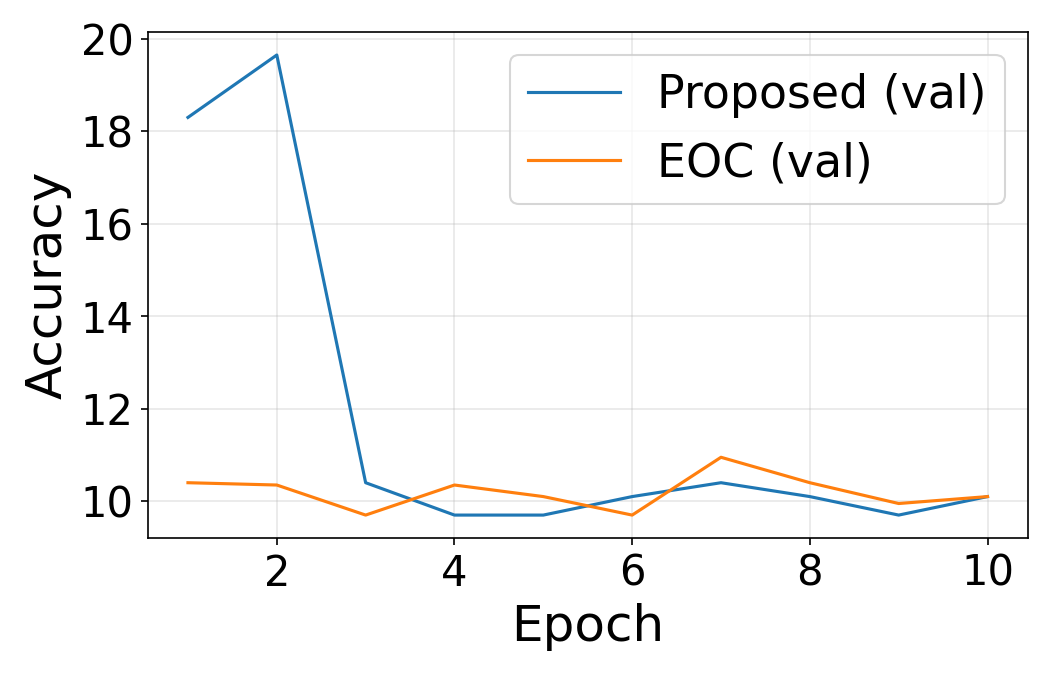}\\
    \hline
    $10^{-1}$
    & \includegraphics[valign=m,width=0.28\textwidth]{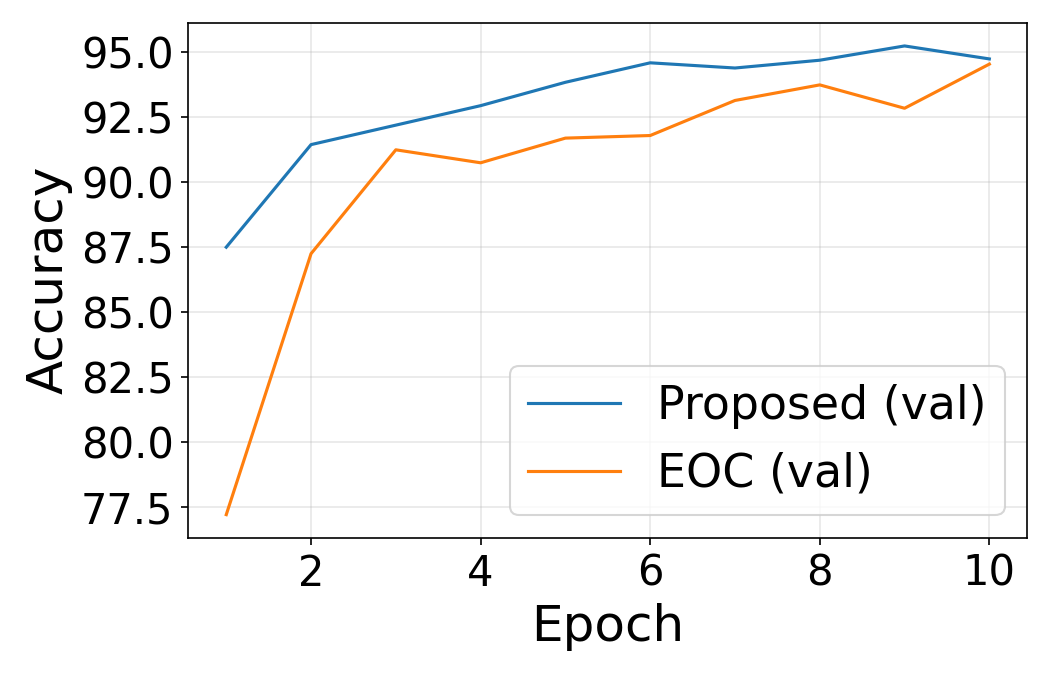}
    & \includegraphics[valign=m,width=0.28\textwidth]{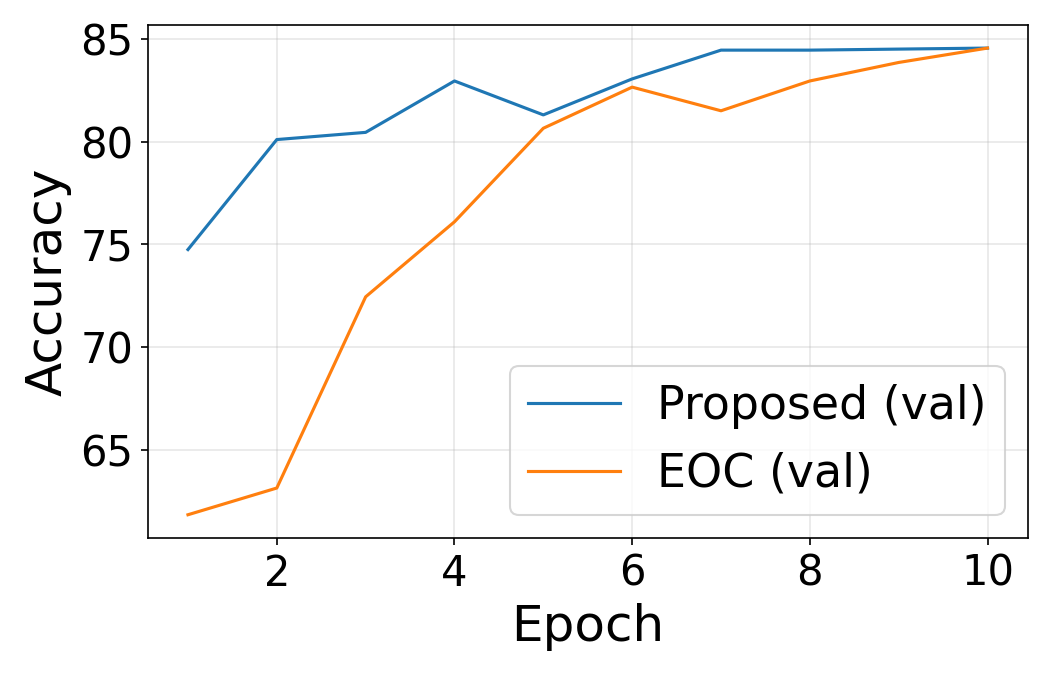}\\
    \hline
    1
    & \includegraphics[valign=m,width=0.28\textwidth]{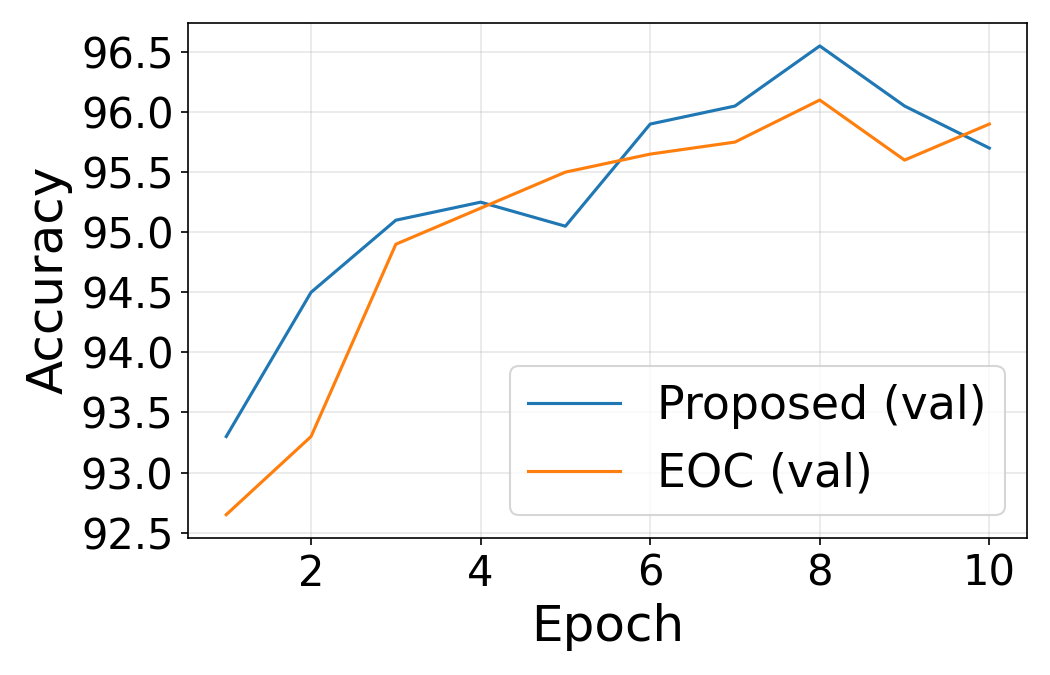}
    & \includegraphics[valign=m,width=0.28\textwidth]{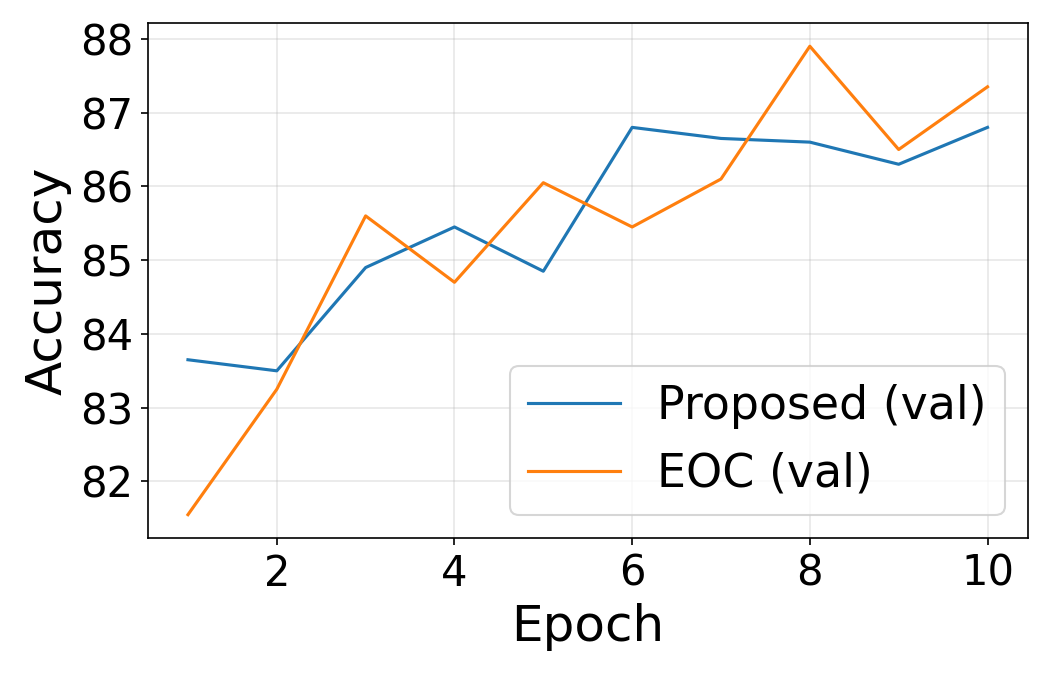}\\
    \hline
    $10^{1}$
    & \includegraphics[valign=m,width=0.28\textwidth]{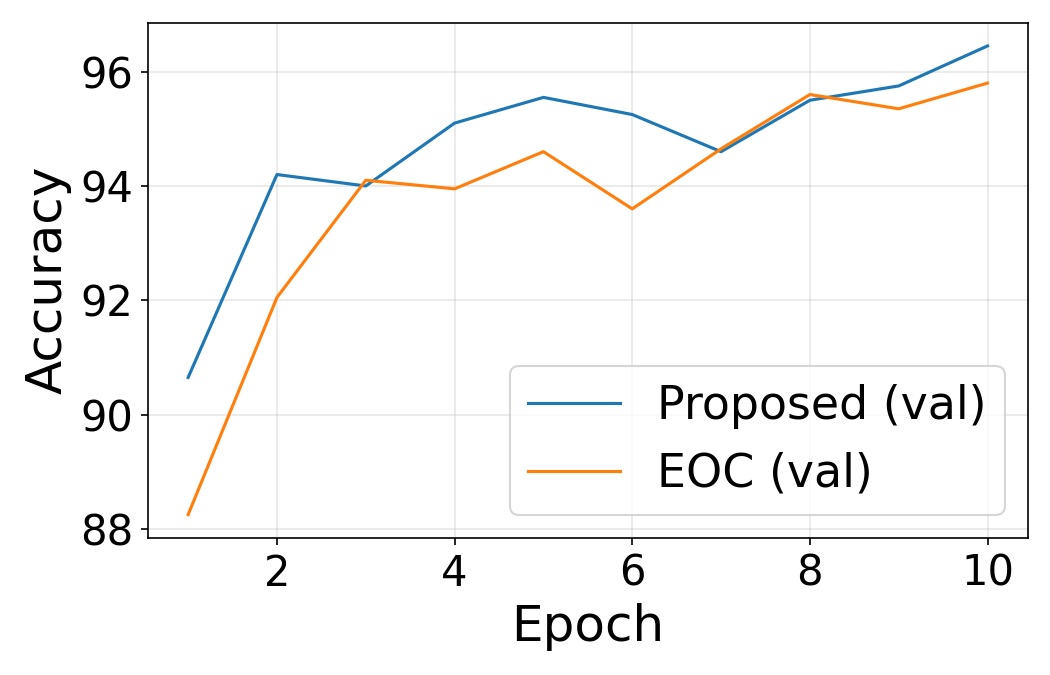}
    & \includegraphics[valign=m,width=0.28\textwidth]{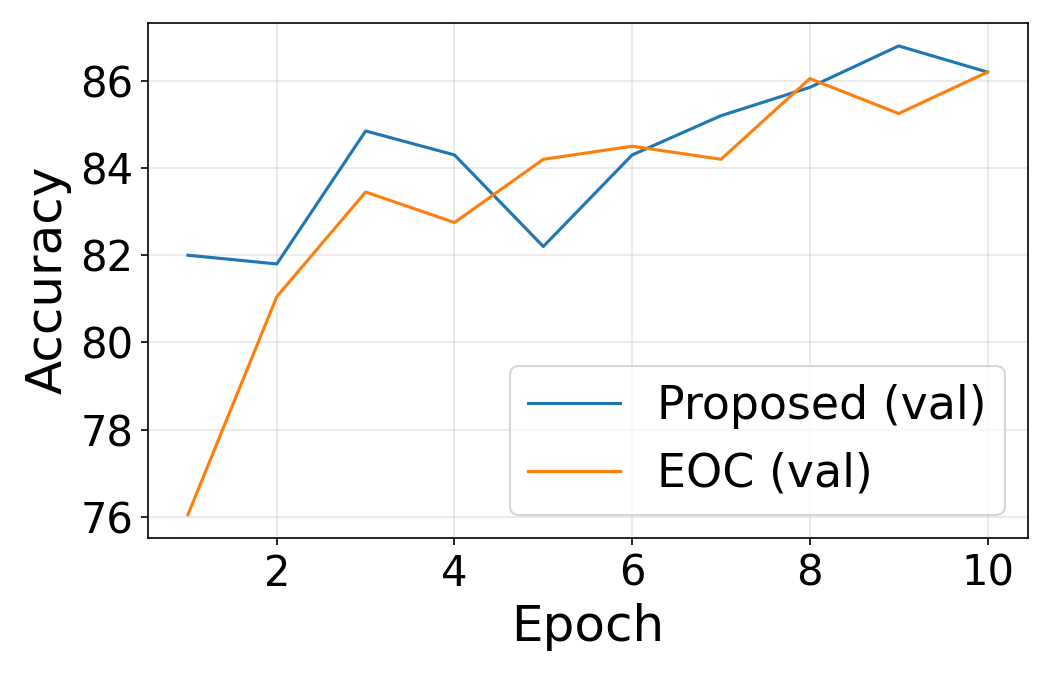}\\
    \hline
    $10^{2}$
    & \includegraphics[valign=m,width=0.28\textwidth]{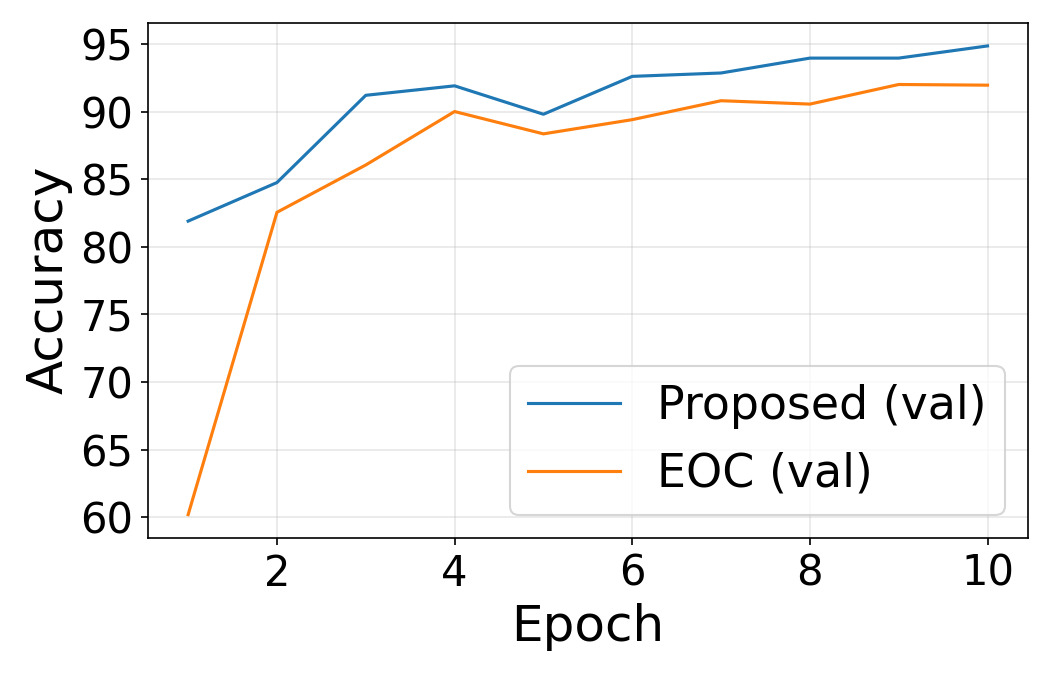}
    & \includegraphics[valign=m,width=0.28\textwidth]{figure/sfMNIST_a1e+01_LR1e-05_W128_L50.png}\\
    \hline
    $10^{3}$
    & \includegraphics[valign=m,width=0.28\textwidth]{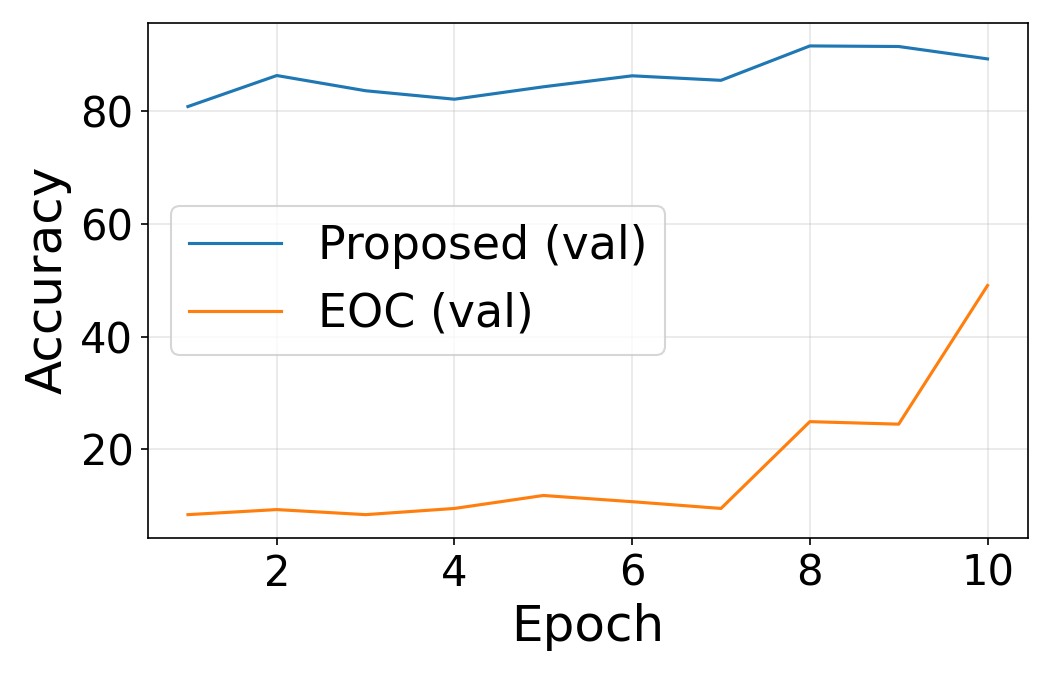}
    & \includegraphics[valign=m,width=0.28\textwidth]{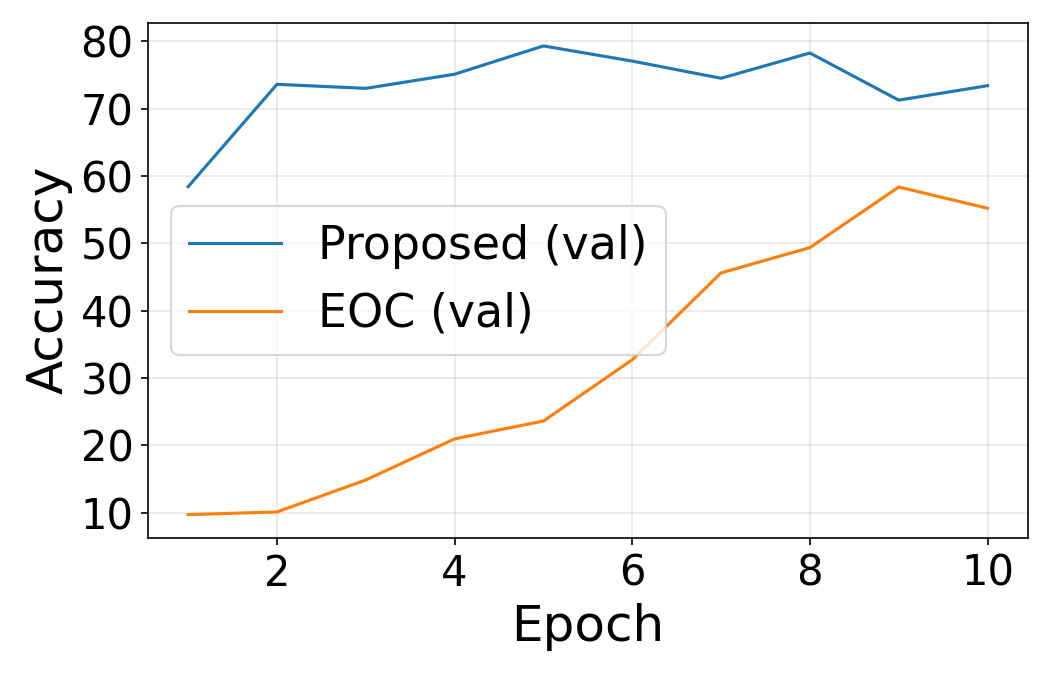}\\
    \hline
  \end{tabular}
\label{scale_table5}

\end{table}

\begin{table}[t]
  \centering
  \caption{Validation accuracy on MNIST~(left) and Fashion MNIST~(right) for a
  50 layer, width 128 fully connected neural network with activation $a\operatorname{softsign_2}(x)$.
  Each row corresponds to a different activation scale $a$, and for every $a$
  the learning rate is set to $\eta = 10^{-4}/a$ for both initializations.} \label{tab:a_soft_accuracy2}

  \small
  \setlength{\tabcolsep}{6pt}
  \renewcommand{\arraystretch}{1.2}

  % 첫 번째 열만 폭 고정 + 완전 중앙
  \begin{tabular}{|C{1.5cm}|c|c|}
    \hline
    $a$ & MNIST (accuracy vs.\ epoch) & Fashion-MNIST (accuracy vs.\ epoch) \\
    \hline
    $10^{4}$
    & \includegraphics[valign=m,width=0.28\textwidth]{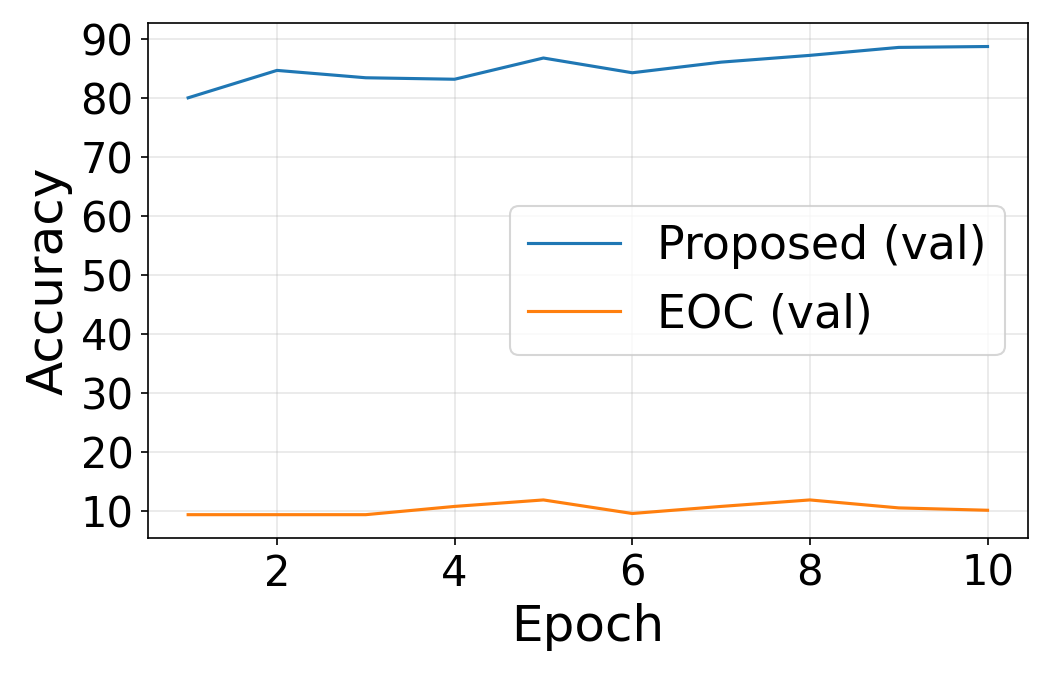}
    & \includegraphics[valign=m,width=0.28\textwidth]{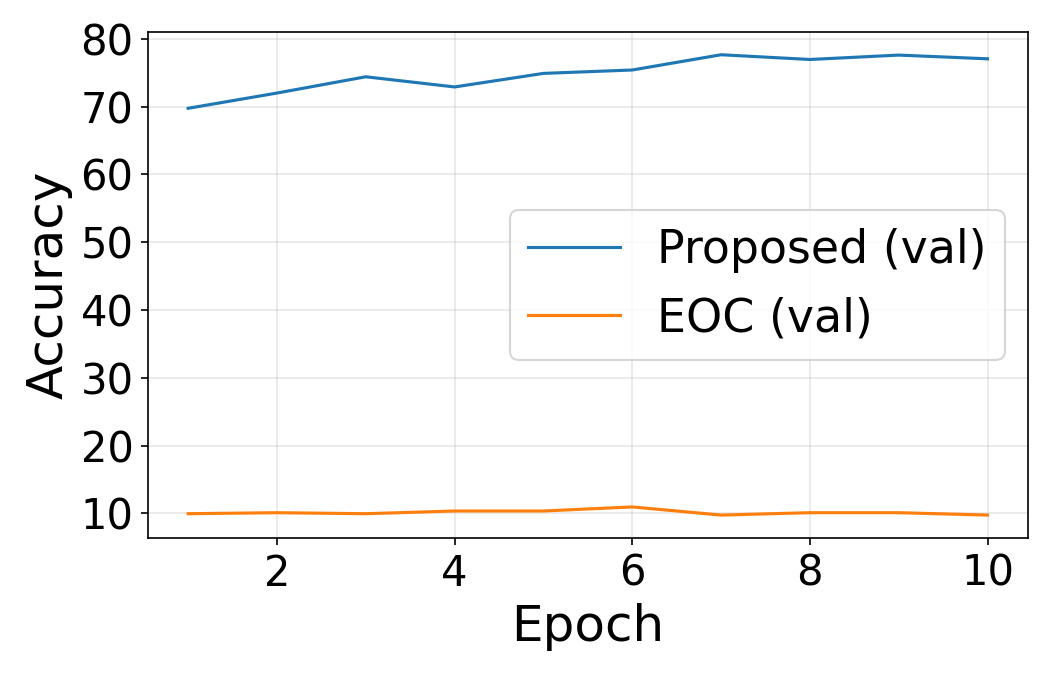}\\
    \hline
    $10^{5}$ 
    & \includegraphics[valign=m,width=0.28\textwidth]{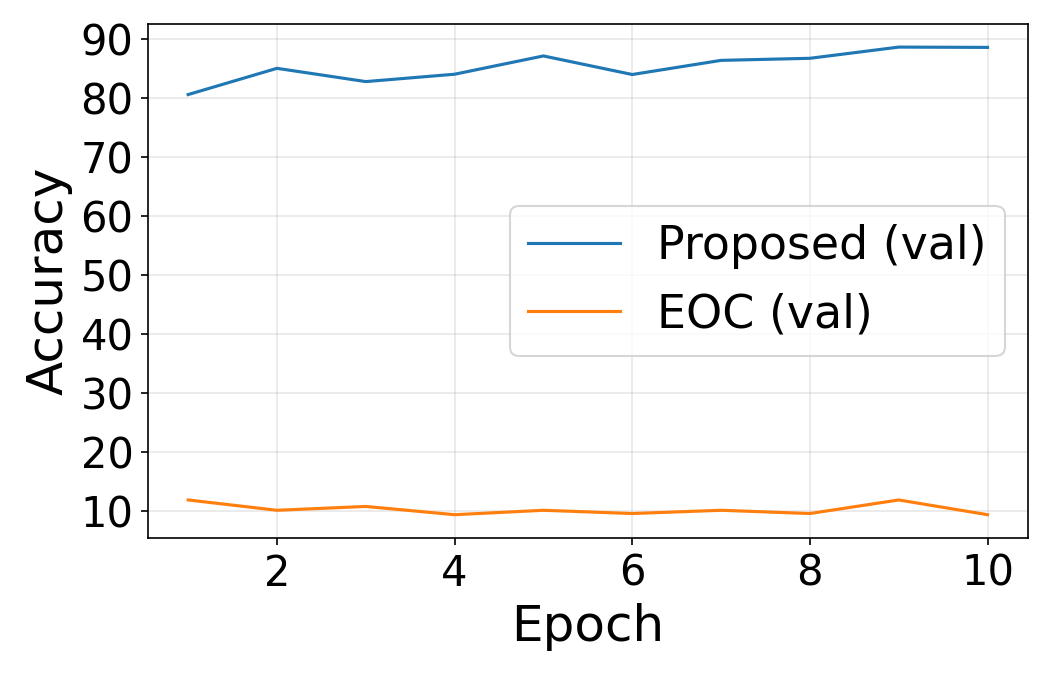}
    & \includegraphics[valign=m,width=0.28\textwidth]{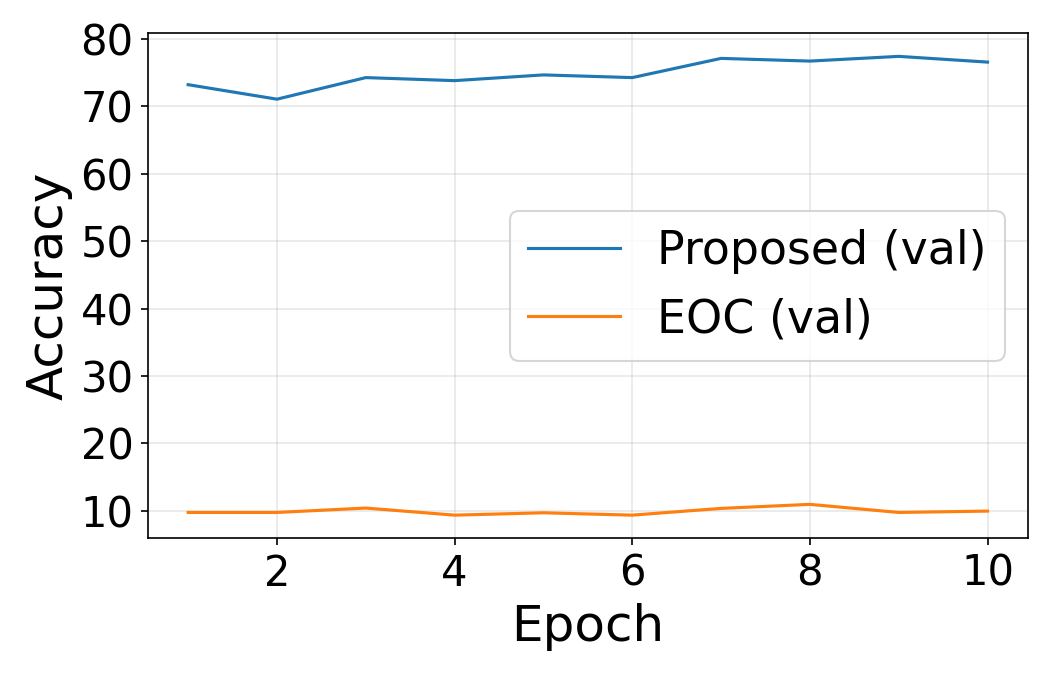}\\
    \hline
    $10^{6}$
    & \includegraphics[valign=m,width=0.28\textwidth]{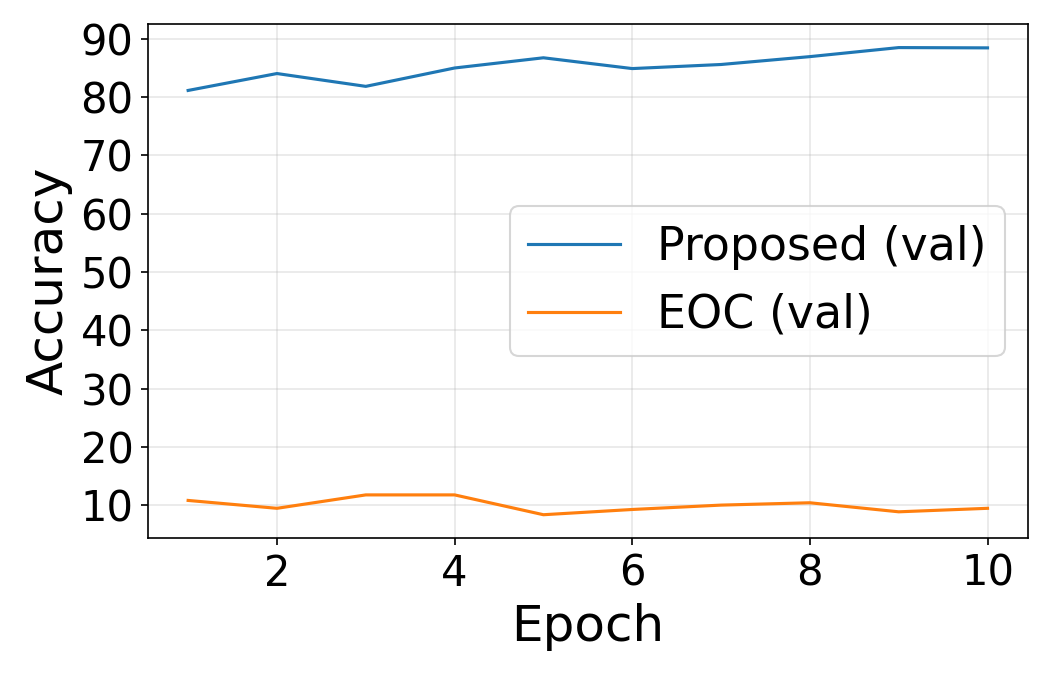}
    & \includegraphics[valign=m,width=0.28\textwidth]{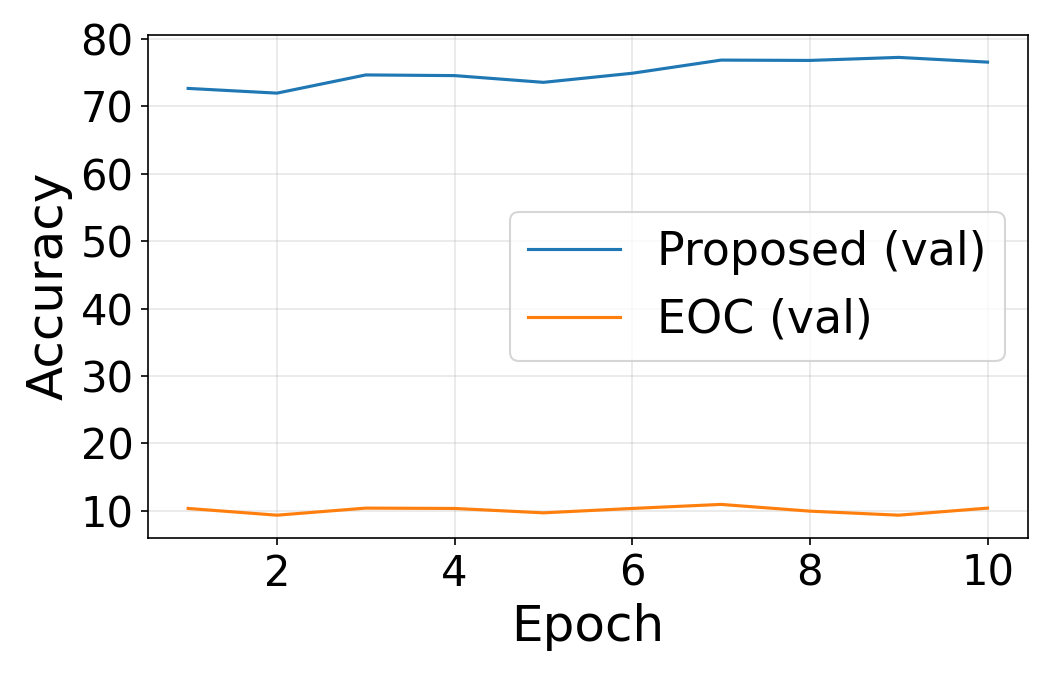}\\
    \hline
    $10^{7}$
    & \includegraphics[valign=m,width=0.28\textwidth]{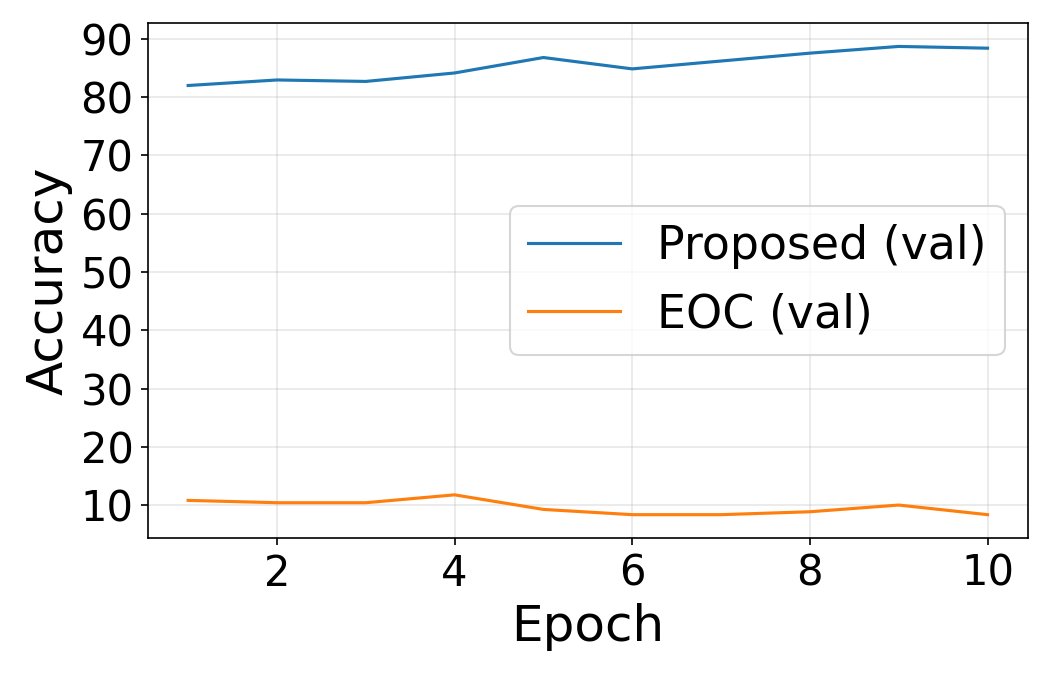}
    & \includegraphics[valign=m,width=0.28\textwidth]{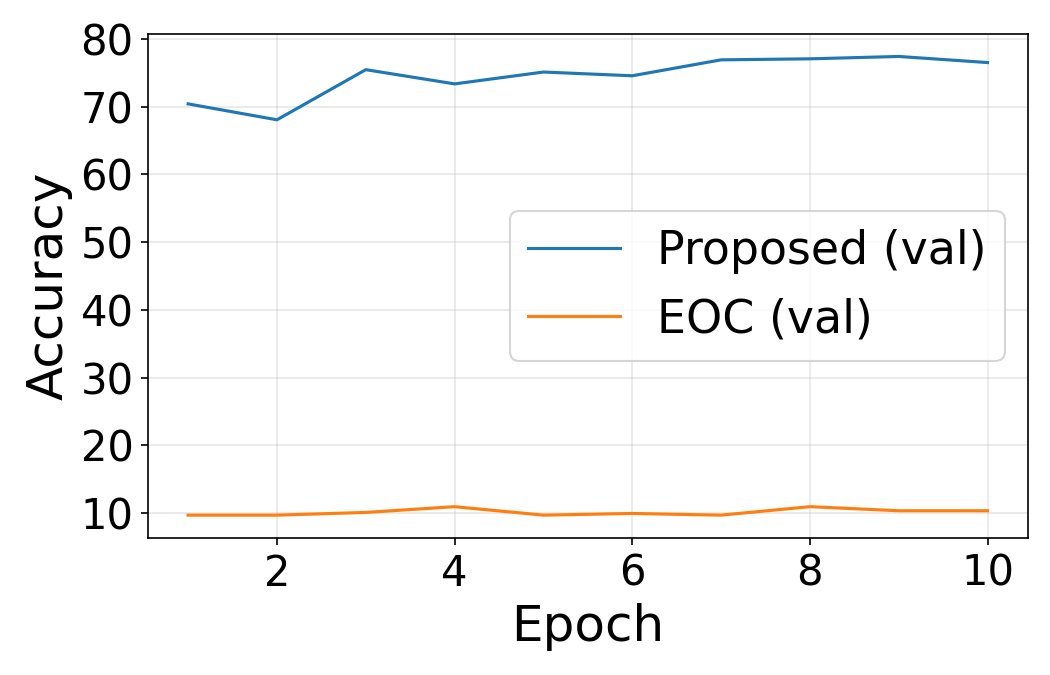}\\
    \hline
    $10^{8}$
    & \includegraphics[valign=m,width=0.28\textwidth]{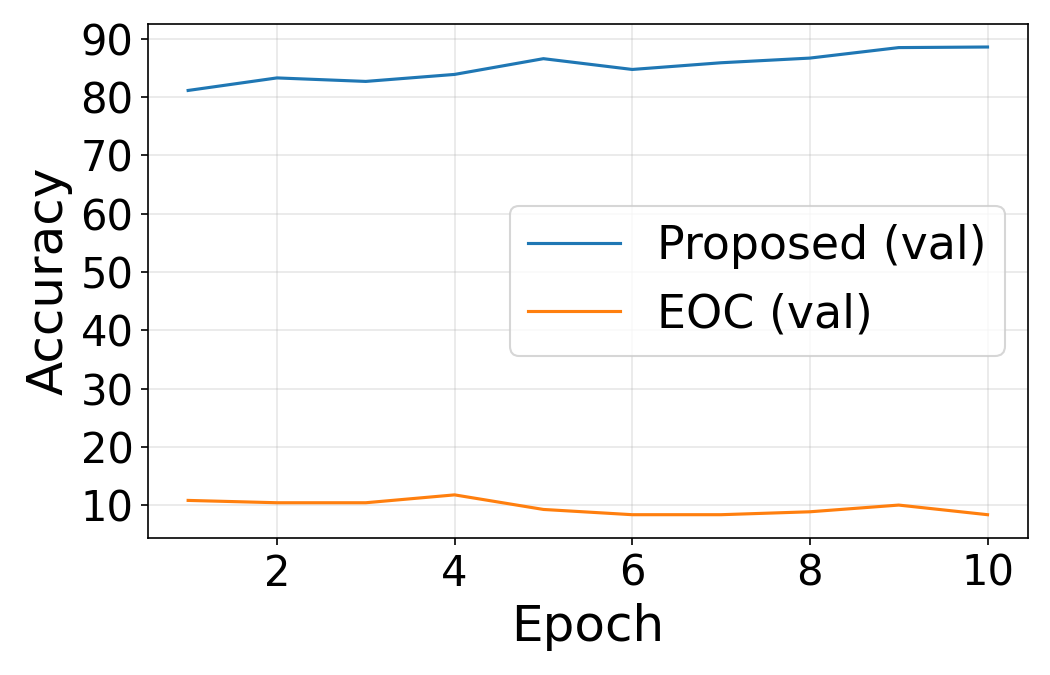}
    & \includegraphics[valign=m,width=0.28\textwidth]{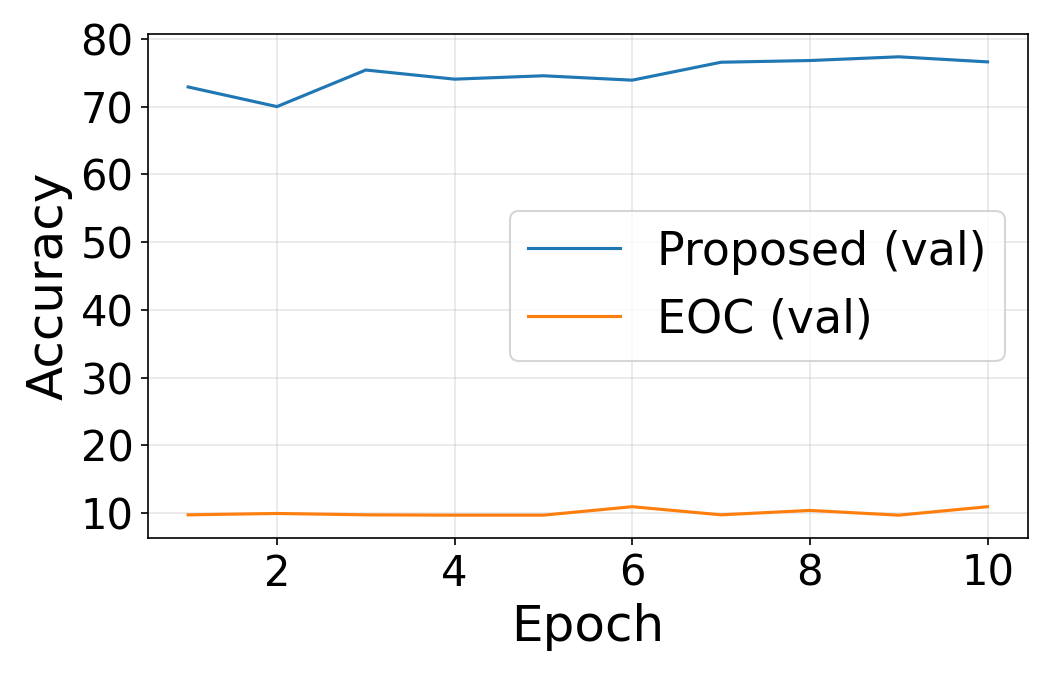}\\
    \hline
    $10^{9}$
    & \includegraphics[valign=m,width=0.28\textwidth]{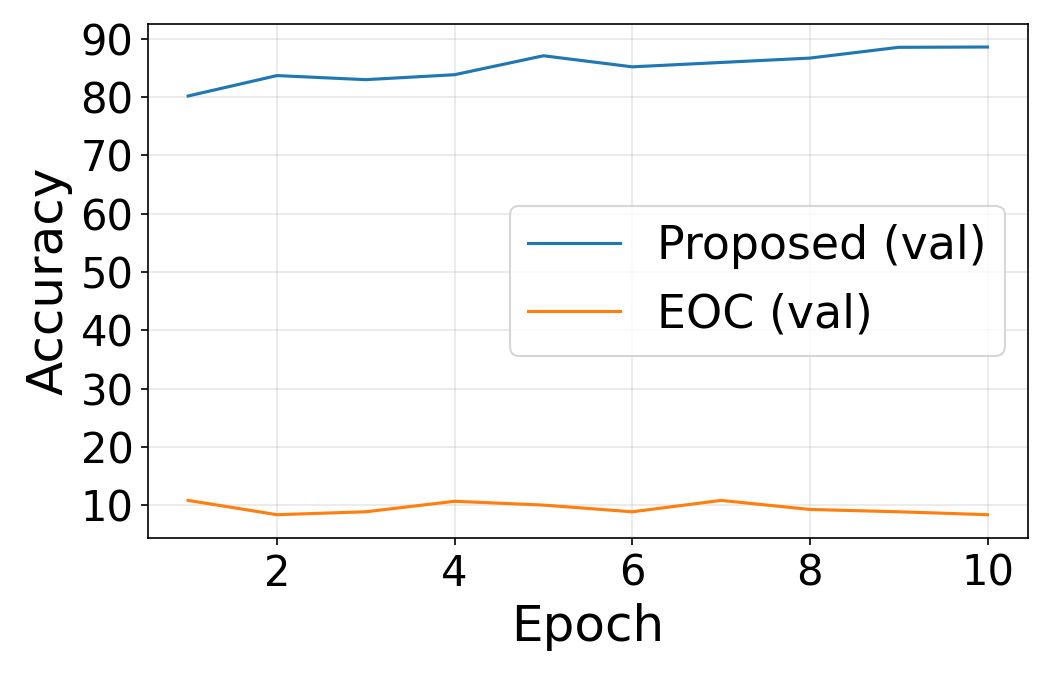}
    & \includegraphics[valign=m,width=0.28\textwidth]{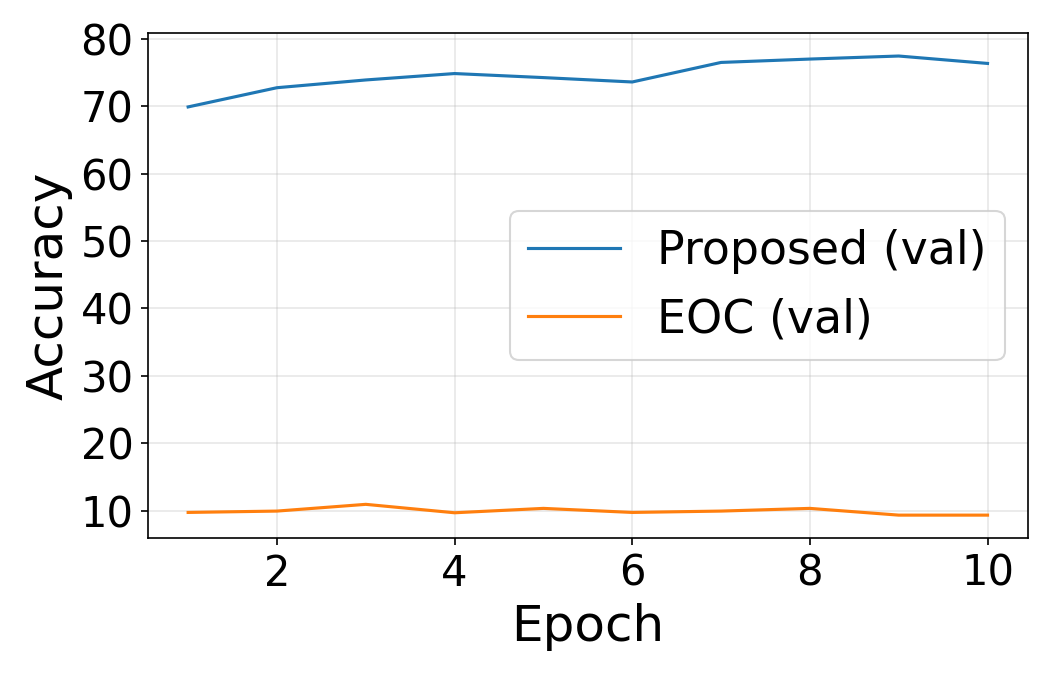}\\

    \hline
  \end{tabular}
\label{scale_table6}

\end{table}

\clearpage 

\paragraph{Common PINN configuration.}
For both PDE benchmarks we use fully connected PINNs with input $(\xi_1,\xi_2)$
corresponding to either $(S,\tau)$ or $(x,t)$, $50$ hidden layers, and width
$64$ in every hidden layer. The hidden activation is either $a\tanh(x)$ or
$b\,\mathrm{softsign}_2(x)=b\,x/\sqrt{1+x^2}$, and we compare the proposed
initialization against the EOC initialization for scales
$a,b \in \{0.1, 0.3, 0.5, 0.8, 1.0, 1.3, 1.5, 1.8, 2.0\}$. 
For the proposed scheme we set the structured weights using the negative-rate
calibration with target $p=0.49$ at depth $L=50$, and for EOC we compute
$(\sigma_b,\sigma_w)$ using the Gauss–Hermite fixed point equations for the
corresponding activation.
In all experiments we train with Adam for iterations
($600$ for Black--Scholes, $600$ for Burgers) followed by L-BFGS for iterations ($1000$ and $1000$, respectively).
The learning rate is scaled by the local gain of the activation via
\(\eta = 10^{-4} / f'(0)\).
% The total PINN loss is
% \[
% \mathcal{L}_{\mathrm{PINN}}
% = W_{\mathrm{PDE}}\,\mathcal{L}_{\mathrm{PDE}}
% + W_{\mathrm{IC}}\,\mathcal{L}_{\mathrm{IC}}
% + W_{\mathrm{SB}}\,\mathcal{L}_{\mathrm{SB}},
% \]
% with task-specific weights $(W_{\mathrm{PDE}},W_{\mathrm{IC}},W_{\mathrm{SB}})$
% given below.

\paragraph{Black--Scholes equation.}
We consider the Black--Scholes PDE in time-to-maturity form
\[
 -V_\tau + \tfrac12 \sigma^2 S^2 V_{SS} + r S V_S - r V = 0,
\]
with volatility $\sigma = 0.2$, interest rate $r = 0.05$,
asset price $S \in [S_{\min},S_{\max}] = [0,1]$,
and time-to-maturity $\tau \in [0, T_{\max}] = [0,1]$.
The initial condition at $\tau=0$ (maturity) is 
\[
V(S,0) = \max(S-K,0), \qquad K = 0.5,
\]
and the spatial boundary conditions are
\[
V(0,\tau) = 0,\qquad
V(S_{\max},\tau) = S_{\max} - K e^{-r\tau}.
\]
We sample $30,000$ collocation points $(S,\tau)$ uniformly
from $[0,1]\times[0,1]$, $n_{\mathrm{IC}} = 4{,}000$ points along $\tau=0$, and
$n_{\mathrm{SB}} = 4{,}000$ boundary points along $S=0$ and $S=S_{\max}$.
% The loss weights are $(W_{\mathrm{PDE}},W_{\mathrm{IC}},W_{\mathrm{SB}}) =
% (1,10,5)$.

\paragraph{Burgers’ equation.}
For Burgers’ equation we solve
\[
u_t + u\,u_x - \nu u_{xx} = 0,
\]
with viscosity $\nu = 0.01/\pi$, spatial domain $x \in [X_{\min},X_{\max}] =
[-1,1]$, and time interval $t \in [0,T_{\max}] = [0,1]$.
The initial condition is
\[
u(x,0) = -\sin(\pi x),
\]
and we impose homogeneous Dirichlet boundary conditions
\[
u(X_{\min},t) = 0, \qquad u(X_{\max},t) = 0.
\]
We draw $20,000$ collocation points $(x,t)$ uniformly from
$[-1,1]\times[0,1]$, use $n_{\mathrm{IC}} = 1{,}000$ points along $t=0$, and
$n_{\mathrm{SB}} = 1{,}000$ boundary points along $x=X_{\min}$ and
$x=X_{\max}$. 
% For this problem we set the loss weights to
% $(W_{\mathrm{PDE}},W_{\mathrm{IC}},W_{\mathrm{SB}}) = (1,1,1)$.

\begin{figure}[h!]
\centering 
\includegraphics[width=1\textwidth]{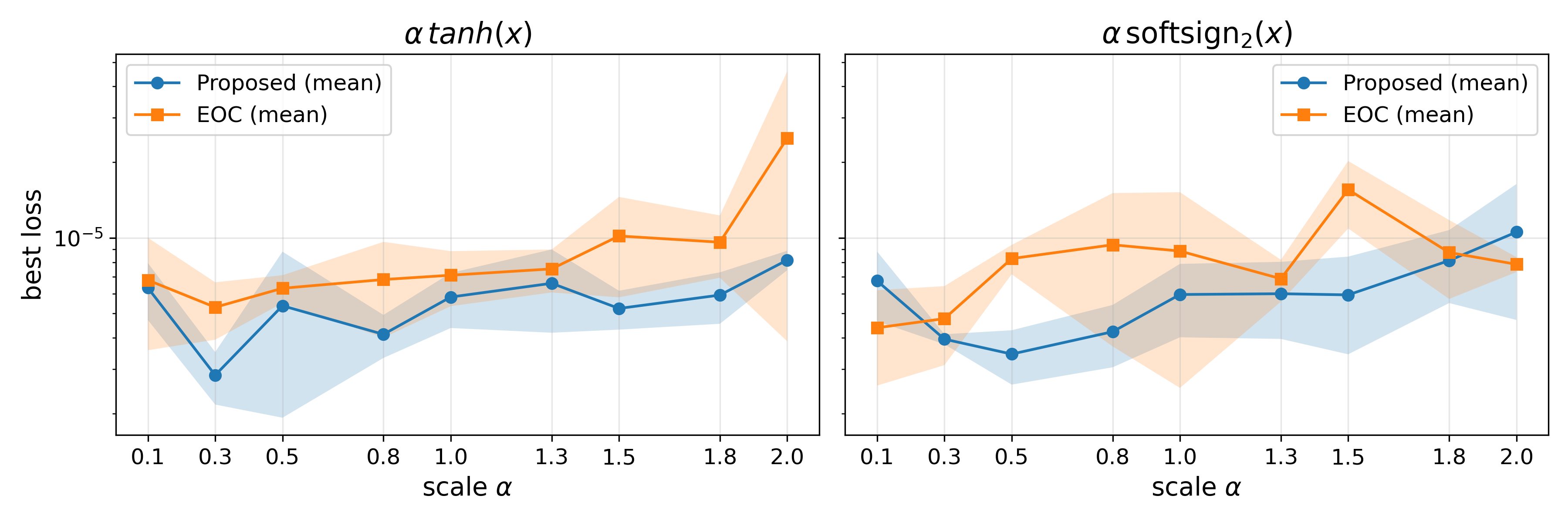}
\caption{
Best PINN loss versus activation scale for the
Burgers PINN (depth $50$, width $64$), comparing the proposed and EOC
initializations. Left: $a\tanh(x)$ with $a \in \{0.1,0.3,0.5,0.8,1.0,1.3,1.5,1.8,2.0\}$.
Right: $b\,\operatorname{softsign}_2(x)$ with the same set of scales.
}
\label{burgers}
\end{figure}

\end{document}